\def\nev{\sc NEV} 
\def\nes{\sc NES} 
\author{
  Aurélien Delage\\
  CITI\\
  INSA Lyon \\
  Villeurbanne\\
  \texttt{aurelien.delage@insa-lyon.fr} \\
   \And
  Olivier Buffet \\
  INRIA - CNRS \\
  Université de Lorraine \\
  Villers-lès-Nancy\\
  \texttt{olivier.buffet@inria.fr} \\
  \And
  Jilles S. Dibangoye \\
  CITI \\
  INSA Lyon \\
  Villeurbanne\\
  \texttt{jilles-steeve.dibangoye@inria.fr} \\
  \And
  Abdallah Saffidine \\
  University of New South Wales \\
  Sidney \\
  \texttt{abdallahs@cse.unsw.edu.au}
}
\newcommand{\ifextended}[2]{\ifdefined\extended{#1}\else{#2}\fi}
\newcommand{\ifArxiv}[2]{\ifdefined\Arxiv{#1}\else{#2}\fi}
\def\Arxiv{1}
\newcommand{\labelT}[1]{\label{#1}} 
\newcommand{\extref}[1]{\ref{#1}}
\newcommand{\extCshref}[1]{\Cshref{#1}}
\newcommand{\extref}[1]{\ref{ext-#1}}
\newcommand{\extCshref}[1]{\Cshref{ext-#1}}
\newcommand\Crefpage[1]{\Cref{#1}, p.\,\pageref{#1}}
\newcommand\cshrefpage[1]{\cshref{#1}, p.\,\pageref{#1}}
\pgfplotsset{compat=1.7}
\DeclareSIUnit{\minute}{\text{m}}
\newcommand{\mybfseries}{\fontseries{b}\selectfont}
\newrobustcmd{\BS}{\mybfseries}
\DeclareRobustCommand*\cal{\@fontswitch\relax\mathcal}
\newsavebox\CBox
\newcommand\hcancel[2][.5pt]{%
  \ifmmode\sbox\CBox{$#2$}\else\sbox\CBox{#2}\fi%
  \makebox[0pt][l]{\usebox\CBox}%
  \rule[0.2em-#1/2]{\wd\CBox}{#1}}
\@nx\else[{#1}]\fi}
\def\nonumber{}
\DeclareMathOperator*{\argmin}{arg\,min}
\DeclareMathOperator*{\argmax}{arg\,max}
\DeclareMathOperator{\E}{\mathbb{E}}
\DeclarePairedDelimiter\ceil{\lceil}{\rceil}
\newcommand{\twodots}{\mathinner {\ldotp \ldotp}}
\def\ie{{\em i.e.}\@\xspace}
\def\eg{{\em e.g.}\@\xspace}
\def\cf{{\em cf.}\@\xspace}
\def\reals{{\mathbb R}}
\def\hmm{\text{\sc hmm}\xspace}
\def\cS{{\cal S}}
\def\cA{{\cal A}}
\def\cB{{\cal B}}
\def\cZ{{\cal Z}}
\def\occ{\sigma}
\def\os{{\sc os}\xspace}
\def\aoh{{\sc aoh}\xspace}
\def\Occ{{\cal O}^\sigma}
\newcommand{\PP}[4]{P_{#2}^{#4}(#3|#1)}
\def\cI{{\cal I}}
\def\cJ{{\cal J}}
\def\l{\lambda}
\providecommand{\lt}{}
\renewcommand\lt[1]{\lambda_{#1}} 
\def\p{1} 
\def\WUL{W} 
\def\va{{\boldsymbol{a}}}
\def\vz{{\boldsymbol{z}}}
\def\vth{{\boldsymbol{\theta}}}
\def\vTh{{\boldsymbol{\Theta}}}
\def\vpi{{\boldsymbol{\pi}}}
\def\vbeta{{\boldsymbol{\beta}}}
\def\dr{{\sc dr}\xspace}
\def\vmu{{\boldsymbol{\mu}}}
\def\cM{{\cal M}}
\def\vx{{\boldsymbol{x}}}
\def\vy{{\boldsymbol{y}}}
\def\vv{{\boldsymbol{v}}}
\def\xx{x}
\def\yy{y}
\def\zz{z}
\def\nev{NEV} 
\def\nes{NES} 
\newcommand{\lp}[1]{\textsc{lp}#1}
\newcommand{\dlp}[1]{\textsc{dlp}#1}
\def\width{\mathit{width}}
\def\depth{\tau}
\def\supp{\mathit{Supp}}
\def\radius{\rho}
\def\thr{thr}
\def\indep{\perp \!\!\! \perp}
\def\l{\lambda}
\def\thr{thr}
\newcommand{\Tm}[1]{T^{#1}_m}
\newcommand{\Tc}[1]{T^{#1}_c}
\def\vpi{\boldsymbol \pi}
\def\vr{r}
\def\Lin{Lin}
\def\zsomg{zs-oMG\xspace}
\newcommand{\upbW}[2]{\overline{W}_{#1}}
\newcommand{\lobW}[2]{\underline{W}_{#1}}
\def\omgHSVIlccc{{\sc omg}-HSVI}
\newtheorem{lemma}{Lemma}
\newcommand{\eqdef}     {\stackrel{{\textrm{\rm\tiny def}}}{=}}
\newcommand{\poubelle}[1]{}
\newcommand{\temporallyHidden}[1]{}
\newcommand\upb[1]{\overline{#1}} 
\newcommand\lob[1]{\underline{#1}} 
\newcommand{\h}[3]{h_{#2}} 
\newcommand{\pushright}[1]{\ifmeasuring@#1\else\omit\hfill$\displaystyle#1$\fi\ignorespaces}
\newcommand{\pushleft}[1]{\ifmeasuring@#1\else\omit$\displaystyle#1$\hfill\fi\ignorespaces}
\newcommand{\specialcell}[1]{\ifmeasuring@#1\else\omit$\displaystyle#1$\ignorespaces\fi}
\DeclarePairedDelimiter{\abs}{\lvert}{\rvert}%
\DeclarePairedDelimiter{\norm}{\lVert}{\rVert}
\newcommand\vabs[1]{\vv{\abs{#1}}}
\newcommand\vnorm[1]{\vv{\norm{#1}}}
\def\vleq{\ \vec\leq\ }
\def\player{}
\def\vmu{\boldsymbol \mu}
\tikzset{
    bus/.style={draw, circle, minimum size=2em, inner sep=0pt},
    pienode/.style n args={5}{
    #3, minimum size=#1, 
    draw=red, text=red,
    inner sep=0pt,
    path picture={
        \fill[#4] (path picture bounding box.center) -- ++ (0:#1) arc[start angle=0, end angle=3.6*#2, radius=#1]
        --cycle;
        \fill[#5] (path picture bounding box.center) -- ++ (3.6*#2:#1) arc[start angle=3.6*#2, end angle=360, radius=#1]
        --cycle;}}}
\tikzset{
    double color fill third/.code 2 args={
        \pgfdeclareverticalshading[%
            tikz@axis@top,tikz@axis@middle,tikz@axis@bottom%
        ]{raimbow}{100bp}{%
            color(0bp)=(tikz@axis@bottom);
            color(50bp)=(tikz@axis@bottom);
            color(0.01mm)=(tikz@axis@middle);
            color(50bp)=(tikz@axis@top);
            color(100bp)=(tikz@axis@top)
        }
        \tikzset{shade, left color=#1, right color=#2, shading=raimbow}
    }
}
\tikzset{
    double color fill second/.code 2 args={
        \pgfdeclareverticalshading[%
            tikz@axis@top,tikz@axis@middle,tikz@axis@bottom%
        ]{raimbow}{100bp}{%
            color(0bp)=(tikz@axis@bottom);
            color(33 bp)=(tikz@axis@bottom);
            color(0.01mm)=(tikz@axis@middle);
            color(33bp)=(tikz@axis@top);
            color(100bp)=(tikz@axis@top)
        }
        \tikzset{shade, left color=#1, right color=#2, shading=raimbow}
    }
}
\tikzset{
    double color fill first/.code 2 args={
        \pgfdeclareverticalshading[%
            tikz@axis@top,tikz@axis@middle,tikz@axis@bottom%
        ]{raimbow}{100bp}{%
            color(0bp)=(tikz@axis@bottom);
            color(100bp)=(tikz@axis@bottom)
        }
        \tikzset{shade, left color=#1, right color=#2, shading=raimbow}
    }
}
\tikzset{
    bus/.style={draw, circle, minimum size=2em, inner sep=0pt},
    pienode/.style n args={5}{
    #3, minimum size=#1, 
    draw=red, text=red,
    inner sep=0pt,
    path picture={
        \fill[#4] (path picture bounding box.center) -- ++ (0:#1) arc[start angle=0, end angle=3.6*#2, radius=#1]
        --cycle;
        \fill[#5] (path picture bounding box.center) -- ++ (3.6*#2:#1) arc[start angle=3.6*#2, end angle=360, radius=#1]
        --cycle;}}}
\tikzset{
    pics/circle vertically split/.style 2 args = {
       code = {
         \node[inner sep=3pt,left] (-left) {#1};
         \node[inner sep=3pt,right] (-right) {#2};
         \node[inner sep=3pt,right] (-top) {$e$};
         \path let
              \p1 = ($(-left.north west) - (-left.east)$),
              \p2 = ($(-right.west) - (-right.south east)$),
              \p3 = ($(-top.west)-(-top.east)$),
              \n1 = {max(veclen(\p1), veclen(\p2))*2}
           in node[minimum size=\n1, circle, draw] (-shape) at (0,0) {};
         \draw (-shape.west) -- (-shape.east);
         \draw (-shape.center) -- (-shape.north);

       }
    }
}
\definecolor{pink}{rgb}{0.858, 0.188, 0.478}
\newcommand{\persComment}[3]{
  \ifmmode
  \text{\textcolor{#3}{[#2] #1}}
  \else
  \textcolor{#3}{[#2] \em #1}
  \fi
}
\newcommand{\Olivier}[1]{\persComment{#1}{ob}{teal}} 
\newcommand{\Aurelien}[1]{\persComment{#1}{ad}{pink}} 
\newcommand{\olivier}[1]{\Olivier{#1}} 
\newcommand{\aurelien}[1]{\Aurelien{#1}} 
\DeclareRobustCommand{\abbrevcrefs}{%
\Crefname{appendix}{App.}{Apps.}%
\Crefname{section}{Sec.}{Secs.}%
\Crefname{equation}{Eq.}{Eqs.}%
\Crefname{figure}{Fig.}{Figs.}%
\Crefname{algorithm}{Alg.}{Algs.}%
\Crefname{tabular}{Tab.}{Tabs.}%
\Crefname{table}{Tab.}{Tabs.}%
\Crefname{lemma}{Lem.}{Lems.}%
\Crefname{corollary}{Cor.}{Cors.}%
\Crefname{theorem}{Thm.}{Thms.}%
\Crefname{proposition}{Prop.}{Props.}%
\Crefname{line}{L.}{Ls.}%
%
\crefname{appendix}{app.}{apps.}%
\crefname{section}{sec.}{secs.}%
\crefname{equation}{eq.}{eqs.}%
\crefname{figure}{fig.}{figs.}%
\crefname{algorithm}{alg.}{algs.}%
\crefname{tabular}{tab.}{tabs.}%
\crefname{table}{tab.}{tabs.}%
\crefname{lemma}{lem.}{lems.}%
\crefname{corollary}{cor.}{cors.}%
\crefname{theorem}{thm.}{thms.}%
\crefname{proposition}{prop.}{props.}%
\crefname{line}{l.}{ls.}%
}
\DeclareRobustCommand{\cshref}[1]{{\abbrevcrefs\cref{#1}}}
\DeclareRobustCommand{\Cshref}[1]{{\abbrevcrefs\Cref{#1}}}
\def\CFR{CFR+}
\def\tree{\psi}
\theoremstyle{plain}
\newtheorem{theorem}{Theorem}[section]
\newtheorem{proposition}[theorem]{Proposition}
\theoremstyle{definition}
\newtheorem{definition}[theorem]{Definition}
\theoremstyle{remark}
\newtheorem{remark}[theorem]{Remark}
\newtheorem{example}{Example}%
\title{HSVI can solve\\  zero-sum Partially Observable Stochastic Games}
\begin{document}
\maketitle

\begin{abstract}
State-of-the-art methods for solving 2-player zero-sum imperfect information games rely on linear programming or regret minimization, though not on dynamic programming (DP) or heuristic search (HS), while the latter are often at the core of state-of-the-art solvers for other sequential decision-making problems. In partially observable or collaborative settings (\eg, POMDPs and Dec-POMDPs), DP and HS require introducing an appropriate statistic that induces a fully observable problem as well as bounding (convex) approximators of the optimal value function.
This approach has succeeded in some subclasses of 2-player zero-sum partially observable stochastic games (zs-POSGs) as well, but how to apply it in the general case still remains an open question.
  %
  We answer it by (i) rigorously defining an equivalent game to work with, (ii) proving mathematical properties of the optimal value function that allow deriving bounds that come with solution strategies, (iii) proposing for the first time an HSVI-like solver that provably converges to an $\epsilon$-optimal solution in finite time, and (iv) empirically analyzing it.
  This opens the door to a novel family of promising approaches
  complementing those relying on linear programming or iterative
  methods.
\end{abstract}




\section{Introduction}
\label{sec|introduction}

Solving imperfect information sequential games is a challenging field with many applications from playing Poker \citep{Kuhn-ctg50} to security games \citep{BasNitGat-aaai16}.
We focus on finite-horizon 2-player zero-sum partially observable stochastic games (zs-POSGs), an important class of games that is equivalent to that of zero-sum extensive-form games (zs-EFGs) \citep{OliVla-tr06}\footnote{Note: POSGs are equivalent to the large class of ``well-behaved'' EFGs as defined by \citet{KovSBBL-corr20}.}.
From the viewpoint of (maximizing) player~$1$, we aim at finding a strategy with a worst-case expected return (\ie, whatever player~$2$'s strategy) within $\epsilon$ of the Nash equilibrium value (NEV).

A first approach to solving a zs-POSG is to turn it into a zs-EFG 
addressed as a {\em sequence form} linear program  (SFLP) \citep{KolMegSte-geb96,Stengel-geb96,BosEtAl-jair14}, giving rise to an exact algorithm.
A second approach is to use an iterative game solver, \ie, %
either a counterfactual-regret-based method (CFR) \citep{ZinJohBowPic-nips07,BroSan-science18}, %
or a first-order method \citep{HodGilPenSan-mor10,KroWauKKSan-mp20},
both coming with asymptotic convergence properties.
CFR-based approaches now incorporate deep reinforcement learning and search, some of them winning against top human players at heads-up no limit hold’em poker \citep{MorEtAl-science17,BroSan-science18,BroBakLerQon-nips20}.
A third approach, proposed by \citet{Wiggers-msc15}, is to use two parallel searches in strategy space, one per player, so that the gap between both strategies' security levels (\ie, the values of their opponent's best responses) bounds the distance to the NEV. 

In contrast, dynamic programming and heuristic search have not been
applied to general zs-POSGs, while often at the core of
state-of-the-art solvers in other problem classes that involve
Markovian dynamics, partial observability and multiple agents (POMDP
\citep{Astrom-jmaa65,Smith-phd07}, Dec-POMDP
\citep{SzeChaZil-uai05,DibAmaBufCha-jair16}, or subclasses of
zs-POSGs with simplifying observability assumptions
\citep{GhoMcDSin-jota04,ChaDoy-tcl14,BasSte-jco15,HorBosPec-aaai17,ColKoc-jet01,HorBos-aaai19}).
They all rely on some statistic that induces a fully observable problem whose value function ($V^*$) exhibits continuity properties that allow deriving bounding approximations.
\citet{WigOliRoi-ecai16,WigOliRoi-corr16} progress in this direction for zs-POSGs by demonstrating an important continuity property of the optimal value function, and proposing a reformulation as a particular equivalent game.
We work in a similar direction,
\begin{enumerate*}
\item using a game with different observability hypotheses, %
\item proving theoretical results they implicitly rely on, and
\item building on some of their results to derive an HSVI-like algorithm solving the zs-POSG.
\end{enumerate*}

\Cref{sec|background} presents some necessary background, including the concept of {\em occupancy state} \citep{DibAmaBufCha-jair16,WigOliRoi-corr16}
(\ie, the probability distribution over the players' past action-observation histories), and properties that rely on it.
Then, \Cref{sec|theory} describes theoretical contributions.
First, we rigorously reformulate the problem as a non-observable game, and demonstrate that the Nash equilibrium value can be expressed with a recursive formula, which is a required tool for DP and HS
(\Cshref{sec|oMGdpp}).
Second, we exhibit novel continuity properties of optimal value functions and derive bounding approximators, a second tool made necessary due to the continuous state space of the new game, before showing that these approximators come with valid solution strategies for the zs-POSG
(\Cshref{sec|solvingOMGs}).
Third, we adapt \citeauthor{SmiSim-uai05}' \citep{SmiSim-uai05} HSVI's algorithmic scheme to $\epsilon$-optimally solve the problem in finitely many iterations 
(\Cshref{sec|HSVI}).
\Cref{sec|XPs} presents an empirical analysis of the approach.
\Cref{sec|CFR} discusses similarities and differences of our work with CFR-based continual resolving methods before concluding.

\section{Background}
\labelT{sec|background}

Here, we first give basic definitions about zs-POSGs, including
the solution concept at hand.
Then we introduce an equivalent game where a state corresponds to a statistic summarizing past behaviors, which leads to some important properties of the game's optimal value.

\subsection{zs-POSGs}

\begin{definition}[zs-POSGs]
As illustrated through a dynamic influence diagram in \Cref{fig:influenceDiagram},
a (2-player) zero-sum partially observable stochastic game (zs-POSG) is defined by a tuple
$\langle \cS, \cA^1, \cA^2, \cZ^1, \cZ^2, P, r, H, \gamma, b_0 \rangle$, where
\begin{itemize} \item %
  $\cS$ is a finite set of states;
  \item %
  $\cA^i$ is (player) $i$'s finite set of actions;
  \item  %
  $\cZ^i$ is \player $i$'s finite set of observations;
  \item %
  $\PP{s}{a^1,a^2}{s'}{z^1,z^2}$ is the probability to transition
  to state $s'$ and receive observations $z^1$ and $z^2$ when actions
  $a^1$ and $a^2$ are performed while in state $s$;
  \item %
  $r(s,a^1,a^2)$ is a (scalar) reward function;
  \item %
  $H \in \mathbb{N}$ is a (finite) temporal horizon;
  \item %
  $\gamma\in [0,1]$ is a discount factor; and %
  \item %
  $b_0$ is the initial belief state, \ie, a probability distribution over states at $t=0$.
  \end{itemize}%
\end{definition}

\begin{figure}[ht]
\centering
\adjustbox{width = 1.0\linewidth}{
\begin{tikzpicture}[->,>=triangle 45,shorten >=2pt,auto,node distance=4cm,semithick]
  \tikzstyle{every state}=[draw=black, text=black, inner color=white, outer color=white, draw=black, text=black]
  \tikzstyle{place circle blue}=[circle,thick,draw=blue!50!black,fill=blue!20,minimum size=6mm]
  \tikzstyle{place circle red}=[circle,thick,draw=red!50!black,fill=blue!20,minimum size=6mm]
  \tikzstyle{red place}=[place,circle,draw=red!50!black,fill=red!20]
  \tikzstyle{blue place}=[place,regular polygon,regular polygon sides=4,draw=blue!50!black,fill=green!20]
  \tikzstyle{red place}=[place,regular polygon,regular polygon sides=4,draw=red!50!black,fill=green!20]
  
  \draw[rounded corners, red, fill=red!10] (-2,-1) rectangle (12,1);
  
  \node[initial,state,place,fill=gray!20] (S0)               {$\ S_t\ $};
  \node[state,place,fill=gray!20]         (S1) [right of=S0] {$S_{t+1}$};
  \node[state,place,fill=gray!20]         (S2) [right of=S1] {$S_{t+2}$};
  \node[]         (S3) [ right of=S2] {};

  \node[state,place circle red, scale=1]         (O0) [above  of=S0,node distance=2.3cm] {$\ Z^1_t\ $};
  \node[state,place circle red, scale=1]         (O1) [above  of=S1,node distance=2.3cm] {$Z^1_{t+1}$};
  \node[state,place circle red, scale=1]         (O2) [above  of=S2,node distance=2.3cm] {$Z^1_{t+2}$};
  \node[state,place circle blue, scale=1]         (O3) [below  of=S0,node distance=2.3cm] {$\ Z^2_t\ $};
  \node[state,place circle blue, scale=1]         (O4) [below  of=S1,node distance=2.3cm] {$Z^2_{t+1}$};
  \node[state,place circle blue, scale=1]         (O5) [below  of=S2,node distance=2.3cm] {$Z^2_{t+2}$};

 \node[state,red place]         (A0) [above right of=O0,node distance=2cm] {$\ A^1_t\ $};
 \node[state,red place]         (A1) [right of=A0]       {$A^1_{t+1}$};
 \node[]         (A2) [right of=A1]       {};
 \node[state,blue place]         (A3) [below right of=O3,node distance=2cm] {$\ A^2_t\ $};
 \node[state,blue place]         (A4) [right of=A3]       {$A^2_{t+1}$};
 \node[]         (A5) [right of=A4]       {};

 \node[]         (Time) at (-1,-5) {Time};
 \node[]         (T0) [below  of=A3,node distance=1.25cm] {$t$};
 \node[]         (T1) [below  of=A4,node distance=1.25cm] {$t+1$};
 \node[]         (T2) [below  of=A5,node distance=1.25cm] {$t+2$};

 \node[]         (N0) at (3,+4.75) {};
 \node[]         (N1) at (3,-5.75) {};
 \draw[-,dotted] (N0)-- (N1);
 \node[fill=red!10,text=black,draw=none,scale=.7]  at (1.5,+.25) {$r(s,a^1,a^2)$};
 \node[fill=red!10,text=black,draw=none,scale=.7]  at (1.7,-.25) {$P^{z^1,z^2}_{a^1,a^2}(s'|s)$};
 \draw[-,dotted] (-2,-1.1)--(12,-1.1);
 \draw[-,dotted] (-2,1.1)--(12,1.1);
 \node[]         (N2) at (7,+4.75) {};
 \node[]         (N3) at (7,-5.75) {};
 \draw[-,dotted] (N2)--(N3);
 \node[fill=red!10,text=black,draw=none,scale=.7]  at (5.5,+.25) {$r(s,a^1,a^2)$};
 \node[fill=red!10,text=black,draw=none,scale=.7]  at (5.7,-.25) {$P^{z^1,z^2}_{a^1,a^2}(s'|s)$};
 \node[fill=red!10,text=black,draw=none,scale=.8]  at (11,-.5) {Hidden};
 \node[fill=white,text=black,draw=none,scale=1]  at (10.75,+2.5) {$1$'s viewpoint};
 \node[fill=white,text=black,draw=none,scale=1]  at (10.75,-2.5) {$2$'s viewpoint};
  
  \path (S0) edge              node {} (S1)
            edge    node {} (O0)
            edge    node {} (O3)
        (S1) edge node {} (S2)
            edge    node {} (O1)
            edge    node {} (O4)
        (S2) edge              node {} (S3)
            edge    node {} (O2)
            edge    node {} (O5)
        (A0) edge [out=-90, in=-180]  node {} (O1)
            edge  [out=-90, in=-205] node {} (S1)
        (A1) edge [out=-90, in=-180]  node {} (O2)
            edge  [out=-90, in=-205] node {} (S2)
        (A3) edge [out=90, in=-180]  node {} (O4)
            edge  [out=90, in=-155] node {} (S1)
        (A4) edge [out=90, in=-180]  node {} (O5)
            edge  [out=90, in=-155] node {} (S2);
\end{tikzpicture}
}
\caption{Dynamic influence diagram representing the evolution of a zs-POSG}\label{fig:influenceDiagram}
\end{figure}
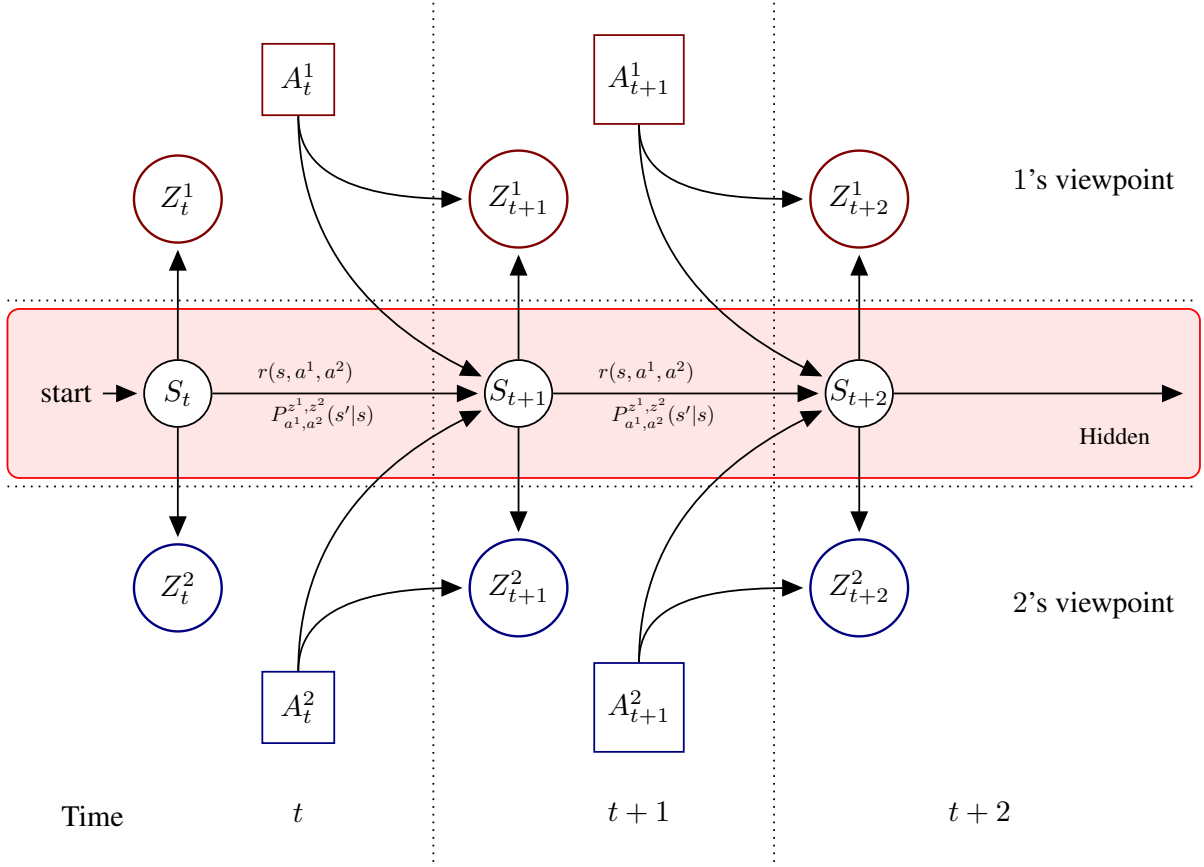

From the Dec-POMDP, POSG and EFG literature, we use the
following concepts and definitions: 
\begin{description}[leftmargin=!,labelwidth=\widthof{$\vbeta_{\depth:\depth'}$},labelsep=.25em] 
\item[$\theta^i_\depth $]
  $ = (a^i_0, z^i_1, \dots , a^i_{\depth-1}, z^i_\depth) $ 
  is a length-$\depth$ {\em action-observation history} (\aoh) for \player $i$. We note $\Theta_\depth^i$ the set of all \aoh{}s for player $i$ at horizon $\depth$ such that any \aoh $\theta_\depth^i$ is in $\cup_{t=0}^{H-1} \Theta^i_t$.
\item[$\beta^i_\depth$] is a {\em (behavioral) decision rule} (\dr) at
  $\depth$ for \player $i$, \ie, a mapping from private \aoh{}s in $\Theta^i_\depth$ to {\em distributions} over private actions.
  $\beta^i_\depth(\theta^i_\depth,a^i)$ is the probability to pick $a^i$ when facing $\theta^i_\depth$.
\item[$\beta^i_{\depth:\depth'}$]
  $= (\beta^i_\depth, \dots, \beta^i_{\depth'})$ is a {\em behavioral strategy} for \player $i$ from time step $\depth$ to $\depth'$ (included).
  \end{description}
   When considering both players, the last 3 concepts become:
  \begin{description}[resume,leftmargin=!,labelwidth=\widthof{$\vbeta_{\depth:\depth'}$},labelsep=.25em]
  \item[$\vth_\depth$] $ =(\theta^1_\depth,\theta^2_\depth)$ %
  ($\in \vTh = \cup_{t=0}^{H-1} \vTh_t$), %
  a {\em joint \aoh} at $\depth$,
\item[$\vbeta_\depth$]
  $= \langle \beta^1_\depth, \beta^2_\depth \rangle$
  ($\in \cB = \cup_{t=0}^{H-1} \cB_t$), a {\em decision rule profile}, and
\item[$\vbeta_{\depth:\depth'}$]
  $= \langle \beta^1_{\depth:\depth'}, \beta^2_{\depth:\depth'} \rangle$, a {\em behavioral strategy profile}.
\end{description}

\subsubsection*{Nash Equilibria}

Here, player~$1$ (respectively $2$) wants to maximize (resp. minimize) the expected return, or {\em value}, of strategy profile $\vbeta_{0:H-1}$, defined as the discounted sum of future rewards, \ie,
    \begin{align*}
      V_0(\vbeta_{0:H-1}) 
      & = E\left[\sum_{t=0}^{H-1} \gamma^t R_t \mid \vbeta_{0:H-1}\right],
    \end{align*}
where $R_t$ is the random variable associated to the instant reward at $t$.
This leads to the solution concept of Nash equilibrium strategy (NES).
\begin{definition}[Nash Equilibrium]
The strategy profile
$\vbeta^*_{0:H-1}=\langle\beta^{1*}_{0:H-1},\beta^{2*}_{0:H-1}\rangle$ is a \nes{} if no player has an
incentive to deviate, which can be written: 
\begin{align*}
    \forall \beta^1_{0:H-1},\ V_0(\beta^{1*}_{0:H-1}, \beta^{2*}_{0:H-1})
    & \geq V_0(\beta^{1}_{0:H-1}, \beta^{2*}_{0:H-1})
    \text{ and } \\
    \forall \beta^2_{0:H-1},\ V_0(\beta^{1*}_{0:H-1}, \beta^{2*}_{0:H-1}) 
    & \leq V_0(\beta^{1*}_{0:H-1}, \beta^{2}_{0:H-1}).
\end{align*}
\end{definition}
In such a game, all NESs have the same Nash-equilibrium value (NEV),
$V^*_0 \eqdef V_0(\beta^{1*}_{0:H-1}, \beta^{2*}_{0:H-1})$.
Our specific objective is to find an $\epsilon$-\nes{}, \ie, a behavioral strategy profile such that any player would gain at most $\epsilon$ by deviating.

\subsubsection*{Why writing a Bellman Optimality Equation is Hard}

Our approach requires writing Bellman optimality equations.
The main obstacle to achieve this is to find an appropriate characterization of a {\em subproblem} that allows
\begin{enumerate}
\item {\em predicting} both the immediate reward and the next possible subproblems given an immediate decision;
\item {\em connecting} a subproblem's solution with solutions of its own (lower-level) subproblems; and
\item {\em prescripting} a solution strategy for the subproblem built on solutions of lower-level subproblems.
\end{enumerate}

In our setting, a player's \aoh{} does not characterize a subproblem since her opponent's strategy is also required to predict the expected reward and the next \aoh{}s.
For their part, joint \aoh{}s allow predicting next joint \aoh{}s given both player's immediate decision rules, but would not be appropriate either, since player $i$ cannot decide how to act when facing some individual \aoh{} $\theta^i_\depth$ without considering all possible \aoh{}s of his opponent $\neg i$. 

Partial behavioral strategy profiles (sequences of behavioral decision rule profiles from $t=0$ to some $\depth$) contain enough information to completely describe the situation at $\depth$, and are thus necessarily {\em predictive}.
We still need to demonstrate that they are {\em connected}, despite decision rules not being public, and {\em prescriptive}, despite the need to address global-consistency issues illustrated in the following example.

\begin{example}
\label{ex|MP}
\def\none{z_n}

Matching pennies is a well-known 2-player zero-sum game in which each player has a penny and secretly chooses one side (head or tail).
Then, both penny's sides are revealed, and player $1$ wins (payoff of $+1$) if both chosen sides match and looses (payoff of $-1$) if not.

We here formalize this game as a zs-POSG (as illustrated in \Cref{fig|matchingPennies}) where player $1$ actually picks his action at $t=0$, and player $2$ at $t=1$.
Hence the tuple $\langle \cS, \cA^1, \cA^2, \cZ^1, \cZ^2, P, r, H, \gamma, b_0 \rangle$ where:
\begin{itemize}
  \item $\cS = \{s_i, s_h, s_t\}$, where $s_i$ is the initial state, and $s_h$ and $s_t$ represent a memory of $1$'s last move: respectively "head" or "tail";
  \item $\cA^1 = \cA^2 = \{a_h,a_t\}$ for playing "head" ($a_h$) or "tail" ($a_t$);
  \item $\cZ^1 = \cZ^2 = \{\none\}$ a "none" trivial observation;
  \item $\PP{s}{\va}{s'}{\vz}=T(s,\va,s')\cdot \mathcal{O}(\va,s',\vz)$, using the next two definitions;
  \item $T$ is deterministic and such that ($\cdot$ is used to denote "for all")
        \begin{itemize}
            \item $T(\cdot,\cdot,a_h) = s_h$,
            \item $T(\cdot,\cdot,a_t) = s_t$;
        \end{itemize}
  \item $\mathcal{O}$ is deterministic and always returns "$\none$";
  \item $r$ is such that 
        \begin{itemize}
        \item $r(s_i,\cdot,\cdot) = 0$,
        \item $r(s_t,\cdot,a_t) = r(s_h,\cdot,a_h) = +1$,
        \item $r(s_t,\cdot,a_h) = r(s_h,\cdot,a_t) = -1$; 
        \end{itemize}
    \item $H=2$;
    \item $\gamma = 1$;
    \item $b_0$ is such that the initial state is $s_i$ with probability $1$.
\end{itemize}

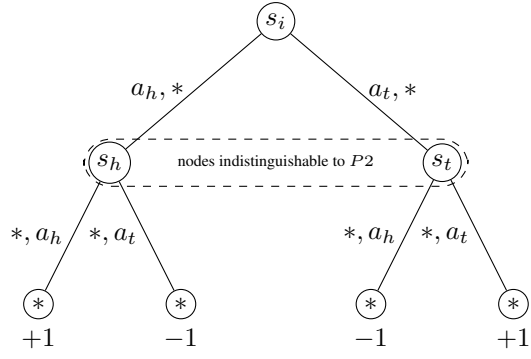
\begin{figure}
    \centering
    \tikzset{
solid node/.style={circle,draw,inner sep=1.5,fill=black},
hollow node/.style={circle,draw,inner sep=1.5}
}

\begin{tikzpicture}[scale=1.25] 
\tikzstyle{level 1}=[level distance=15mm,sibling distance=35mm]
\tikzstyle{level 2}=[level distance=15mm,sibling distance=15mm] 
\node(0)[hollow node]{$s_i$}
child{
    node(1)[hollow node]{$s_h$}
    child{
        node(3)[hollow node,label=below:{$+1$}]{$*$}
        edge from parent node[left,align=center]{$*,a_h$}
    }
    child{
        node(4)[hollow node,label=below:{$-1$}]{$*$}
        edge from parent node[left,align=center]{$*,a_t$}
    }
    edge from parent node[left]{$a_h,*$}
}
child{
    node(2)[hollow node]{$s_t$}
    child{
        node(5)[hollow node,label=below:{$-1$}]{$*$}
        edge from parent node[left,align=center]{$*,a_h$}
    }
    child{
        node(6)[hollow node,label=below:{$+1$}]{$*$}
        edge from parent node[left,align=center]{$*,a_t$}
    }
edge from parent node[right]{$a_t,*$}
    };

\draw[dashed,rounded corners=10]($(1) + (-.27,.25)$)rectangle($(2) +(.27,-.25)$);
\node at ($(1)!.5!(2)$)[font=\tiny] {nodes indistinguishable to $P2$};
\end{tikzpicture}
    \caption{Simplified tree representation of the sequentialized Matching Pennies game.
    Irrelevant actions, noted $*$, allow merging edges with the same action for (i) player $2$ at $t=0$, and (ii) player $1$ at $t=1$.
    Notes:
    (a) Due to irrelevant actions, this game can be seen as an Extensive Form Game, despite players acting simultaneously.
    (b) Players only know about their past action history (in this observation-free game).}
    \label{fig|matchingPennies}
\end{figure}

Let us then assume that both players' \dr{}s at $t=0$ are fixed, with $\beta^1_0$ randomly picking $a_t$ or $a_h$ (\ie, it induces a NES whatever his \dr at $t=1$).
Then, we face a "subgame" at $t=1$ where any strategy profile $\langle \beta^1_{1:1}, \beta^2_{1:1} \rangle$ is a NES profile with Nash equilibrium value $0$.
In particular, $2$ can pick a deterministic strategy $\beta^2_{1:1}$, which will be said to be {\em locally consistent}.
Yet, for $2$, such a NES in the subgame at $\depth=1$ is not necessarily {\em globally consistent}, \ie, it may not be part of a NES for the original game (\ie, starting from $\depth=0$).
Intuitively, in such global-consistency issues \cite{KovEtAl-aij-2022,Schmid-phd21} (also called {\it safety issues} \cite{Burch-aaai-2014}),
the choices made at latter time steps do not account for possible deviations from the opponent at earlier time steps. 
\end{example}

As detailed in the next section,
we will characterize a subproblem not with the raw data of partial strategy profiles, but with a sufficient statistic,
and this characterization will be used as the state of a new dynamic game equivalent to the zs-POSG.


\subsection{Occupancy State and Occupancy Markov Game}

We now introduce an equivalent game, in which trajectories correspond to behavioral strategy profiles, and which we will be able to decompose temporally (and recursively), a first key tool for DP and HS.

To cope with the necessarily continuous nature of its state space, we will set this game in occupancy space, \ie, a statistic that sums up past \dr{} profiles. This will let us derive continuity properties on which to build point-based approximators.

As \citet{WigOliRoi-corr16}, let us formally define an {\em occupancy state} (\os) $\occ_{\vbeta_{0:\depth-1}}$
as the probability distribution over joint \aoh{}s $\vth_\depth$ given partial strategy profile $\vbeta_{0:\depth-1}$.
This statistic exhibits the usual Markov and sufficiency properties:

\begin{restatable}[Adapted from  
\citeauthor{DibAmaBufCha-jair16}
{\citep[Thm.~1]{DibAmaBufCha-jair16}} -- 
Proof in \Cshref{app|occStates}]{proposition}{lemOccSufficient}
  \IfAppendix{
    {\em (originally stated on page~\pageref{lem|occSufficient})}
  }{
      \labelT{lem|occSufficient}
  }
  $\occ_{\vbeta_{0:\depth-1}}$, together with $\vbeta_\depth$, is a sufficient statistic to compute %
  (i) the next \os, $T(\occ_{\vbeta_{0:\depth-1}},\vbeta_\depth) \eqdef \occ_{\vbeta_{0:\depth}}$, and %
  (ii) the expected reward at $\depth$:
  $r(\occ_{\vbeta_{0:\depth-1}},\vbeta_\depth) \eqdef \E\left[ R_\depth \mid \vbeta_{0:\depth-1} \oplus \vbeta_\depth \right]$, where $\oplus$ denotes a concatenation.
\end{restatable}

Writing from now on $\occ_\depth$, as short for $\occ_{\vbeta_{0:\depth-1}}$, the \os{}
associated with some prefix strategy profile $\vbeta_{0:\depth-1}$, the proof essentially relies on deriving the following formulas:
  $\forall \theta^1_\depth, a^1, z^1, \theta^2_\depth, a^2, z^2$,
  \begin{align}
    \label{eq|transition}
    T(\occ_\depth,\vbeta_\depth)((\theta^1_\depth,a^1,z^1),(\theta^2_\depth,a^2,z^2)) 
    \hspace{-2.5cm}
    \\ \nonumber
    & \eqdef Pr((\theta^1_\depth,a^1,z^1),(\theta^2_\depth,a^2,z^2) | \occ_\depth, \vbeta_\depth)
    \\ \nonumber
    & = \beta^1_\depth(\theta^1_\depth,a^1) \beta^2_\depth(\theta^2_\depth,a^2) \occ_\depth (\vth_\depth)
    \sum_{s,s'} P^{\vz}_{\va}(s'|s) b(s|\vth_\depth)
    ,
\intertext{where $b(s|\vth_\depth)$ is a {\em belief state} obtained by Hidden Markov Model filtering;
and}
    r(\occ_\depth,\vbeta_\depth) 
    \label{eq|reward}
    & \eqdef E[r(S,A^1,A^2) | \occ_\depth, \vbeta_\depth ] 
    \\ \nonumber %
    & = \sum_{s,\vth_\depth, \va} \occ_\depth(\vth_\depth) b(s | \vth_\depth)
    \beta^1_\depth(a^1|\theta^1_\depth) \beta^2_\depth(a^2|\theta^2_\depth) r(s,\va).
  \end{align}

We can then derive, from a zs-POSG, a non-observable zero-sum game similar to \citeauthor{WigOliRoi-corr16}'s {\em plan-time NOSG} \cite[Definition~4]{WigOliRoi-corr16}, but without assuming that the players' past strategies are public.

\begin{definition}[zero-sum occupancy Markov Game (zs-oMG)]
A {\em zero-sum occupancy Markov game} (\zsomg)\footnote{We use (i) ``Markov game'' instead of
  ``stochastic game'' because the dynamics are not stochastic, and
  (ii) ``partially observable stochastic game'' to stick with the
  literature.}
is defined by the tuple
$\langle \Occ, \cB, T, r, H, \gamma \rangle$, where:
\begin{itemize} \item %
  $\Occ (= \cup_{t=0}^{H-1} \Occ_t)$
  is the set of \os{}s induced by the zs-POSG;
  \item %
  $\cB$ 
  is the set of \dr profiles of the zs-POSG;
  %
  \item %
  $T$ is the deterministic transition function in \Cshref{eq|transition};
  \item %
  $r$ is the reward function in \Cshref{eq|reward}; and
  \item %
  $H$ and $\gamma$ are as in the zs-POSG
  \end{itemize} %
  ($b_0$ is not in the tuple but serves to define $T$ and $r$).
\end{definition}
In this game, as in the zs-POSG, a player's solution is a behavioral strategy.
Besides, the value of a strategy profile $\vbeta_{0:H-1}$ is the same for both \zsomg and zs-POSG, so that they share the same $\epsilon$-\nev{} and $\epsilon$-\nes{}s.
We can thus work with zs-oMGs as a means to solve zs-POSGs.

The following aims at deriving a recursive expression of $V^*_0$, as well as continuity properties.

\subsubsection*{Bellman Optimality Equation}

Despite the \os{} at $\depth>0$ not being accessible to any player, let us define a {\em subgame} at $\occ_\depth$ as the restriction starting from time step $\depth$ under this particular occupancy state, meaning that we are seeking strategies $\beta^1_{\depth:H-1}$ and $\beta^2_{\depth:H-1}$.
$\occ_\depth$ tells us which \aoh{}s each player could be facing with non-zero probability, and are thus relevant for planning.
We can then define the value function in any
\os $\occ_\depth$ for any strategy profile $\vbeta_{\depth:H-1}$ as follows:
\begin{align}
  \label{eq|subgame}
  V_\depth(\occ_\depth,\vbeta_{\depth:H-1}) 
  & \eqdef E[\sum_{t=\depth}^\infty \gamma^{t-\depth} r(S_t,A_t) | \occ_\depth, \vbeta_{\depth:H-1}]. 
\end{align}

The optimal value of a subgame rooted at $\occ_\depth$, $V^*(\occ_\depth)$, is thus the unique NEV for the previous criterion\footnote{We will come back to the validity of this point in \Cref{sec|oMGdpp}.}. \citeauthor{WigOliRoi-ecai16} then proved key continuity properties of $V^*$ discussed next.

\subsubsection*{Concavity and Convexity Results}

As a preliminary step, \citeauthor{WigOliRoi-corr16} decompose an occupancy state $\occ_\depth$ into a {\em marginal term} $\occ_\depth^{m,1}$ and a {\em conditional term} $\occ_\depth^{c,1}$, where
\begin{itemize}
    \item 
    $\occ_\depth^{m,1}(\theta_\depth^1) = \sum_{\theta_\depth^2} \occ_\depth(\theta_\depth^1,\theta_\depth^2)$
    is the probability of $1$ facing $\theta_\depth^1$ under $\occ_\depth$, and
    \item 
    $\occ_\depth^{c,1}(\theta_\depth^2 | \theta_\depth^1) = \frac{\occ_\depth(\theta_\depth^1,\theta_\depth^2)}{\occ_\depth^{m,1}(\theta_\depth^1)}$
    is the probability of $2$ facing $\theta_\depth^2$ under $\occ_\depth$ given that $1$ faces $\theta_\depth^1$,
\end{itemize}
so that $\occ_\depth(\theta_\depth^1,\theta_\depth^2) = \occ_\depth^{m,1}(\theta_\depth^1) \cdot \occ_\depth^{c,1}(\theta_\depth^2 | \theta_\depth^1)$.
(Symmetric definitions apply by swapping players $1$ and $2$.)
In addition, let us denote $\Tm{1}(\occ_\depth,\vbeta_\depth)$ and $\Tc{1}(\occ_\depth,\vbeta_\depth)$ the marginal and conditional terms associated to $T(\occ_\depth,\vbeta_\depth)$.

Now,
if $1$ %
faces \aoh{} $\theta^1_\depth$, %
knows $2$'s future strategy $\beta^2_{\depth:H-1}$, and %
has access to 
$\occ_\depth^{c,1}(\theta^2_\depth|\theta^1_\depth)$ for any $\theta_\depth^2$, %
then she faces a POMDP whose optimal value we denote 
$ \nu^2_{[\occ_\depth^{c,1},\beta^{2}_{\depth:H-1}]} (\theta^1_\depth)$.
This leads to defining the best-response value vector $\nu^2_{[\occ_\depth^{c,1},\beta^{2}_{\depth:H-1}]}$, which contains one component per \aoh{} $\theta^1_\depth$, and writing the value of $1$'s best response against $\beta^2_{\depth:H-1}$ under $\occ_\depth$ as $\occ_\depth^{m,1} \cdot \nu^2_{[\occ_\depth^{c,1}, \beta_{\depth:H-1}^2]}$. 
But then, because $2$ also knows $\occ_\depth$, she can in fact pick $\beta^2_{\depth:H-1}$ to minimize this value, so that we get the following theorem.

\begin{theorem}[
{\citep[Thm.~2]{WigOliRoi-corr16}}] 
  \labelT{theo|ConvexConcaveV}
  For any $\depth \in \{0\twodots H-1\}$, %
  $V_\depth^*$ is
  (i) concave w.r.t. $\occ_\depth^{m,1}$ for a fixed
  $\occ_\depth^{c,1}$, and %
  (ii) convex w.r.t. $\occ_\depth^{m,2}$ for a fixed
  $\occ_\depth^{c,2}$.
  More precisely,
  \begin{align*}
    V_\depth^*(\occ_\depth)
    & = \min_{\beta_{\depth:H-1}^2}\left[ \occ_\depth^{m,1} \cdot \nu^2_{[\occ_\depth^{c,1}, \beta_{\depth:H-1}^2]} \right]
    = \max_{\beta_{\depth:H-1}^1}\left[ \occ_\depth^{m,2} \cdot \nu^1_{[\occ_\depth^{c,2}, \beta_{\depth:H-1}^1]} \right], \text{where}
  \end{align*}
\begin{adjustbox}{max width=\textwidth}
$\nu^2_{[\occ_\depth^{c,1},\beta^{2}_{\depth:H-1}]} (\theta^1_\depth)  \eqdef \max_{\beta_{\depth:H-1}^1}  \mathbb{E}_{\theta_\depth^2 \sim \occ_\depth^{c,1}(\theta_\depth^1)} \left\{ \sum_{t = \depth}^{H-1} \gamma^{t-\depth} r(S_t,A_t^1,A_t^2) \mid \beta_{\depth:H-1}^1, \beta_{\depth:H-1}^{2} \right\}$.
\end{adjustbox}
\end{theorem}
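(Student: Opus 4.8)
The plan is to unfold the definition of $V^*_\depth(\occ_\depth)$ as the value of the zero-sum subgame rooted at $\occ_\depth$ via a two-sided best-response argument. The subgame over the strategy pairs $(\beta^1_{\depth:H-1},\beta^2_{\depth:H-1})$ is a finite zero-sum game with perfect recall, so it has a value and admits a minimax theorem (exactly the point the footnote defers to \Cref{sec|oMGdpp}); hence $V^*_\depth(\occ_\depth) = \min_{\beta^2_{\depth:H-1}} \max_{\beta^1_{\depth:H-1}} V_\depth(\occ_\depth,\vbeta_{\depth:H-1}) = \max_{\beta^1_{\depth:H-1}} \min_{\beta^2_{\depth:H-1}} V_\depth(\occ_\depth,\vbeta_{\depth:H-1})$. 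The whole proof then reduces to evaluating each player's best-response value.

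The crux is to show that, against a fixed $\beta^2_{\depth:H-1}$, player $1$'s best-response value equals $\occ^{m,1}_\depth \cdot \nu^2_{[\occ^{c,1}_\depth,\beta^2_{\depth:H-1}]}$. Once $2$'s strategy is frozen, player $1$ faces a single-agent problem — a POMDP whose hidden state is the pair $(s_t,\theta^2_t)$, since $\theta^2_t$ together with the fixed $\beta^2_{\depth:H-1}$ governs $2$'s actions and observations. The occupancy state supplies exactly the prior for this POMDP, sliced by $1$'s own history: the probability of facing $\theta^1_\depth$ is $\occ^{m,1}_\depth(\theta^1_\depth)$, and conditionally on $\theta^1_\depth$ the law of $\theta^2_\depth$ is $\occ^{c,1}_\depth(\cdot\mid\theta^1_\depth)$, composed with the strategy-independent belief $b(\cdot\mid\vth_\depth)$ entering \Cref{eq|transition}. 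Writing $\nu^2_{[\occ^{c,1}_\depth,\beta^2_{\depth:H-1}]}(\theta^1_\depth)$ for the optimal continuation value of that POMDP started at $\theta^1_\depth$, one gets $V_\depth(\occ_\depth,\vbeta_{\depth:H-1}) = \sum_{\theta^1_\depth}\occ^{m,1}_\depth(\theta^1_\depth)\cdot(\text{value of the branch below }\theta^1_\depth\text{ under }\beta^1_{\depth:H-1})$; and, because perfect recall makes the sets of $1$'s decision points reachable after distinct histories $\theta^1_\depth\ne\theta'^1_\depth$ disjoint, with choices below $\theta^1_\depth$ affecting only the return collected on that branch, player $1$ may optimize each branch independently, so $\max_{\beta^1_{\depth:H-1}} V_\depth(\occ_\depth,\vbeta_{\depth:H-1}) = \sum_{\theta^1_\depth}\occ^{m,1}_\depth(\theta^1_\depth)\,\nu^2_{[\occ^{c,1}_\depth,\beta^2_{\depth:H-1}]}(\theta^1_\depth) = \occ^{m,1}_\depth\cdot\nu^2_{[\occ^{c,1}_\depth,\beta^2_{\depth:H-1}]}$. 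This is the two-player analogue of the occupancy-state value decomposition of \citet{DibAmaBufCha-jair16}, here with a per-$\theta^1_\depth$ POMDP rather than a common one.

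Minimizing over $\beta^2_{\depth:H-1}$ then yields the left identity $V^*_\depth(\occ_\depth)=\min_{\beta^2_{\depth:H-1}}[\occ^{m,1}_\depth\cdot\nu^2_{[\occ^{c,1}_\depth,\beta^2_{\depth:H-1}]}]$, and the mirror-image argument (fix $\beta^1_{\depth:H-1}$, let $2$ best-respond) gives $V^*_\depth(\occ_\depth)=\max_{\beta^1_{\depth:H-1}}[\occ^{m,2}_\depth\cdot\nu^1_{[\occ^{c,2}_\depth,\beta^1_{\depth:H-1}]}]$, which is the ``more precisely'' claim. The concavity/convexity statements are then immediate corollaries: for a fixed $\occ^{c,1}_\depth$, each vector $\nu^2_{[\occ^{c,1}_\depth,\beta^2_{\depth:H-1}]}$ is independent of $\occ^{m,1}_\depth$, so $\occ^{m,1}_\depth\mapsto\occ^{m,1}_\depth\cdot\nu^2_{[\occ^{c,1}_\depth,\beta^2_{\depth:H-1}]}$ is linear and $V^*_\depth$ restricted to that slice is a pointwise minimum of linear maps, hence concave; symmetrically it is a pointwise maximum of linear maps in $\occ^{m,2}_\depth$ for fixed $\occ^{c,2}_\depth$, hence convex.

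I expect the per-history decomposition to be the delicate step, for two reasons. First, one must check that freezing $2$'s behavioral strategy genuinely produces a POMDP with state $(s_t,\theta^2_t)$, which hinges on the belief $b(s\mid\vth_\depth)$ being strategy-independent — a consequence of the HMM-filtering formula behind \Cref{eq|transition}. Second, one must justify that perfect recall, built into the \aoh{} representation of decision rules, is exactly what lets player $1$ choose continuations freely and \emph{independently} after each $\theta^1_\depth$; without it the per-branch $\argmax$es could be coupled — precisely the global-consistency pitfall illustrated by \Cref{ex|MP} — and the clean dot-product form would fail. The minimax interchange itself is the other load-bearing ingredient, but it is legitimately pushed to \Cref{sec|oMGdpp}.
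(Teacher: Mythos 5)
Your proposal is correct and follows essentially the same route as the paper: invoke the minimax theorem for the subgame (deferred, as in the paper, to the later section), then decompose player $1$'s best response to a fixed $\beta^2_{\depth:H-1}$ per individual history $\theta^1_\depth$ — which the paper justifies by the Bayesian-game/type argument and you justify via the induced POMDP on $(s_t,\theta^2_t)$ and perfect recall — yielding the scalar-product form $\occ^{m,1}_\depth\cdot\nu^2_{[\occ^{c,1}_\depth,\beta^2_{\depth:H-1}]}$, with concavity/convexity as lower/upper envelopes of linear maps. The extra care you take about strategy-independence of $b(\cdot\mid\vth_\depth)$ and branch-wise independence is exactly the content the paper compresses into its "best response computed for each \aoh{} independently" step.
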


\begin{proof}(Sketch)
We start from \citeauthor{Neu-ma28}'s Minimax theorem \citep{Neu-ma28} giving the following equation:
 \stepcounter{footnote}
\begin{align*}
    & V_\depth^*(\occ_\depth)
    = \min_{\beta_{\depth:H-1}^2}\max_{\beta_{\depth:H-1}^1} \left[ 
    V_\depth(\occ_\depth,\beta_{\depth:H-1}^1,\beta_{\depth:H-1}^1) \right] \\
    & = \min_{\beta_{\depth:H-1}^2}\max_{\beta_{\depth:H-1}^1} \left[ 
    \mathbb{E} \left\{ \sum_{t = \depth}^{H-1} \gamma^{t-\depth} r(S_t,A_t^1,A_t^2) \mid \theta_\depth^1,\beta_{\depth:H-1}^1, \beta_{\depth:H-1}^{2}, \occ^{c,1}_\depth \right\} \right], \\
    \intertext{then, observing that $1$'s best response to $\beta_{\depth:}^2$ can be computed for each \aoh{} $\theta_\depth^1$ independently, we can swap the $\max$ operator and part of the expectation one ($\mathbb{E}$) as follows:\footnotemark[\value{footnote}]}
    & = \min_{\beta_{\depth:H-1}^2} 
    \mathbb{E}_{\theta^1_\depth \sim \occ^{m,1}_\depth}
    \underbrace{\left\{
    \max_{\beta_{\depth:H-1}^1} 
    \mathbb{E}_{\theta^2_{\depth} \sim \occ_\depth^{c,1}(\theta^1)}
    \left[
    \sum_{t = \depth}^{H-1} \gamma^{t-\depth} r(S_t,A_t^1,A_t^2) \mid \beta_{\depth:H-1}^1, \beta_{\depth:H-1}^{2} \right] \right\} }_\text{best-response of $1$ to $\beta_{\depth:}^2$ under $\theta_\depth^1$}  \\
    \intertext{and, recognizing the components of vector $\nu^2_{[\occ_\depth^{c,1}, \beta_{\depth:H-1}^2]}$ and writing the expectation over \aoh{}s $\theta_{\depth:}^1$ as a scalar product:}
    & = \min_{\beta_{\depth:H-1}^2}\left[ \occ_\depth^{m,1} \cdot \nu^2_{[\occ_\depth^{c,1}, \beta_{\depth:H-1}^2]} \right].
  \end{align*}
\end{proof}
\footnotetext[\value{footnote}]{Note that this property is well known in Bayesian games, where \aoh{}s correspond to {\em types}, \cf \cite[Th.~1, p.~321]{HarsanyiBG-II-ms68}.}

An important observation that ensues from this theorem is that $V^*_\depth$ is concave in $\occ_\depth^{m,1}$ and convex in $\occ_\depth^{m,2}$.
In practice, however, such continuity properties alone only allow upper-bounding $V^*_\depth$ for finitely many
conditional terms $\occ^{c,i}_\depth$, thus {\em not} for the whole occupancy space, as required to enable DP and HS in our game.

%
In the following, we complement \mbox{\citeauthor{WigOliRoi-corr16}}'s results with properties of $V^*$ in subgames, plus continuity properties that help designing bounding approximators, which will lead us to an HSVI-like solver.

\paragraph{Note:}
To help the reader, \Cref{app|syntheticTables} provides two synthetic tables: 
\Cref{tab|PropertyTable} (p.~\pageref{tab|PropertyTable}) to sum up various theoretical properties that are stated in this paper (assuming a finite temporal horizon), and 
\Cref{tab|NotationTable} (p.~\pageref{tab|NotationTable}) to sum up the notations used in this paper, including some that are used only in the appendix.

Also, for convenience, we may replace in the following:
(i) subscript ``$\depth:H-1$'' with ``$\depth:$'',
(ii) any function $f(\vx)$ linear in vector $\vx$ with either $f(\cdot) \cdot \vx$ or ${\vx}^{\top} \! \cdot f(\cdot)$,
(iii) a full tuple with its few elements of interest,
and
(iv) an element (a "field") $x$ of a specific tuple $t$ by $x[t]$.

\section{Theoretical Contributions}
\label{sec|theory}

In this section, we demonstrate how to implement dynamic programming and heuristic search by
(1) rigorously showing that 
Bellman optimality equation (\Cshref{sec|oMGdpp}) holds,
(2) deriving bounding approximators of two novel optimal value functions, which come with solution strategies (\Cshref{sec|solvingOMGs}), and
(3) proposing a variant of HSVI that computes (in finite time) a player's strategy whose value is within $\epsilon$ of the zs-POSG's \nev{} (\Cshref{sec|HSVI}).

\subsection[The Optimal Value Function \texorpdfstring{$V^*$}{V*} and its Recursive Expression]{The Optimal Value Function \texorpdfstring{$V^*$}{V*} \\
and its Recursive Expression}
\label{sec|oMGdpp}

Let us first recall that, contrary to \citeauthor{WigOliRoi-corr16} \cite[Section~5, Lemma~4]{WigOliRoi-corr16}, we do not make the strong assumption that past decision rules can be considered as public 
(and, thus, we do not assume that any player knows $\occ_\depth$).
Indeed, while it is valid in Dec-POMDPs because the players are willing to coordinate their behaviors, it is {\em a priori} not valid in zs-POSGs, since players are, in the contrary, willing to deceive one another. Safety issues as presented in \Cref{ex|MP} illustrate the possible flaws of such an assumption.

\label{sec|CCV}

We now discuss the existence of an optimal value function $V_\depth^*$ and its properties.
These results are implicitly used by \citeauthor{WigOliRoi-corr16}, but it seems important to state and demonstrate them.
A first step is to demonstrate that \citeauthor{Neu-ma28}'s minimax theorem \citep{Neu-ma28} applies when in $\occ_\depth$,
thus justifying the definition of the optimal (Nash equilibrium) value of a subgame.

\begin{restatable}[Minimax theorem -- 
Proof in \Cshref{app|backToMixedStrategies}]
{theorem}{thUnicityNev}
  \labelT{th|unicityNev}
  \IfAppendix{{\em (originally stated on
    page~\pageref{th|unicityNev})}}{}
The subgame defined in \Cshref{eq|subgame} admits a unique \nev{} 
\begin{align}
    V_\depth^*(\occ_\depth) 
  & \eqdef \max_{\beta_{\depth:H-1}^1} \min_{\beta_{\depth:H-1}^2} V_\depth ( \occ_\depth, \beta_{\depth:H-1}^1,\beta_{\depth:H-1}^2).
\end{align}
\end{restatable}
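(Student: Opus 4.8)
The plan is to reduce the subgame rooted at $\occ_\depth$ to a finite normal-form zero-sum game, invoke \citeauthor{Neu-ma28}'s minimax theorem there, and then transfer the conclusion back to the behavioral strategies of \Cshref{eq|subgame} via Kuhn's theorem.

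First I would make the subgame explicit as a finite extensive-form game $\Gamma(\occ_\depth)$: an initial chance move draws a joint \aoh{} $\vth_\depth \in \vTh_\depth$ according to $\occ_\depth$ and privately reveals $\theta^i_\depth$ to player $i$; a hidden chance move then draws the current state from the belief $b(\cdot \mid \vth_\depth)$ obtained by HMM filtering; and from there the zs-POSG dynamics run for steps $\depth, \dots, H-1$, each player $i$ acting as a function of her \aoh{}, with payoff $\sum_{t=\depth}^{H-1}\gamma^{t-\depth} r(S_t, A_t^1, A_t^2)$ as in \Cshref{eq|subgame}. Since $\cS$, the $\cA^i$ and the $\cZ^i$ are finite and $H$ is finite, each player has only finitely many deterministic (pure) strategies $\beta^i_{\depth:H-1}$. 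Letting $\mu^i$ range over the simplex $\Delta^i$ of distributions over player $i$'s pure strategies, the expected payoff $\bar V_\depth(\occ_\depth, \mu^1, \mu^2)$ is bilinear in $(\mu^1,\mu^2)$ over the product $\Delta^1 \times \Delta^2$ of two nonempty, compact, convex sets.

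Then \citeauthor{Neu-ma28}'s minimax theorem \citep{Neu-ma28} gives $\max_{\mu^1 \in \Delta^1}\min_{\mu^2 \in \Delta^2}\bar V_\depth(\occ_\depth,\mu^1,\mu^2) = \min_{\mu^2 \in \Delta^2}\max_{\mu^1 \in \Delta^1}\bar V_\depth(\occ_\depth,\mu^1,\mu^2)$; call this common scalar $V_\depth^*(\occ_\depth)$. As for any finite zero-sum game, every Nash equilibrium of $\Gamma(\occ_\depth)$ attains exactly this value, so the normal-form subgame has a unique \nev{}. (Alternatively, one could apply Sion's minimax theorem directly to the sequence-form / realization-plan representation, whose payoff is already bilinear on a product of polytopes.) It then remains to go back to behavioral strategies: the subgame $\Gamma(\occ_\depth)$ has perfect recall for both players, because an \aoh{} $\theta^i_\depth = (a^i_0, z^i_1, \dots, a^i_{\depth-1}, z^i_\depth)$ together with the subsequent actions and observations records player $i$'s entire private history within $\Gamma(\occ_\depth)$. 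Hence, by Kuhn's theorem, every behavioral strategy $\beta^i_{\depth:H-1}$ is realization-equivalent to some mixed strategy $\mu^i$ and conversely, the two inducing the same distribution over terminal histories and hence the same payoff, $V_\depth(\occ_\depth,\beta^1_{\depth:H-1},\beta^2_{\depth:H-1}) = \bar V_\depth(\occ_\depth,\mu^1,\mu^2)$. Therefore the max--min and the min--max of $V_\depth(\occ_\depth,\cdot,\cdot)$ over behavioral strategies both equal $V_\depth^*(\occ_\depth)$, and behavioral-strategy Nash equilibria correspond to mixed-strategy ones of the same value; so the subgame of \Cshref{eq|subgame} admits a unique \nev{}, equal to $\max_{\beta_{\depth:H-1}^1}\min_{\beta_{\depth:H-1}^2} V_\depth(\occ_\depth,\beta_{\depth:H-1}^1,\beta_{\depth:H-1}^2)$.

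I expect the main obstacle to be conceptual rather than computational: justifying rigorously that ``the subgame at $\occ_\depth$'' is a bona fide standalone zero-sum game even though $\occ_\depth$ is not common knowledge when $\depth > 0$ --- the point being that $\occ_\depth$ is treated as a fixed parameter defining $\Gamma(\occ_\depth)$, not as something the players observe or condition on --- and then carefully verifying the perfect-recall hypothesis (so that Kuhn's theorem applies), including that the private chance move revealing $\theta^i_\depth$ is correctly folded into each player's information-set structure, and that realization equivalence preserves the occupancy-parametrized expected payoff $V_\depth(\occ_\depth,\cdot,\cdot)$.
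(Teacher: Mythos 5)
Your proof is correct and follows essentially the same route as the paper's: pass to mixed strategies, apply von Neumann's minimax theorem to the resulting bilinear zero-sum game over compact convex simplices, and carry the common value back to behavioral strategies through a Kuhn-type (perfect-recall) equivalence. The only difference is packaging: you realize the subgame as a standalone finite extensive-form game with chance moves drawing $\vth_\depth \sim \occ_\depth$ and $s \sim b(\cdot\mid\vth_\depth)$ and mix over suffix pure strategies, whereas the paper mixes over full-horizon pure strategies compatible with $\occ_\depth$ (the sets $M^i_{|\occ_\depth\rangle}$, shown convex and equivalent to behavioral strategies in \Cref{lem:equivalence}), a formulation it then reuses for the Bellman-equation proof.
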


$V(\occ_\depth,\cdot,\cdot)$ not being bilinear in the space of behavioral strategies (\Cref{ex|VnotBilinearInBetas}), the proof requires 
reasoning with mixed strategies (for which the bilinearity holds), \ie, distributions over pure (deterministic) strategies defined over {\em all} time steps.
Yet, when in a subgame, we have to reason only on mixed strategies {\em compatible} with the associated occupancy state $\occ_\depth$ (\ie, which ensure that the \os at $\depth$ is $\occ_\depth$), one step being to extend \citeauthor{Kuhn-ctg50}'s equivalence results between behavioral and mixed strategies \citep{Kuhn-ctg50} to the subgames.

Then, defining the optimal action-value function:
\begin{align}
  Q^*_\depth(\occ_\depth, \vbeta_\depth) 
  & \eqdef r(\occ_\depth, \vbeta_\depth) + \gamma V^*_{\depth+1}( T(\occ_\depth,\vbeta_\depth) ),
    \label{eq|localGames}
\end{align}
we can now prove that a Bellman optimality equation exists, 
 which justifies reasoning on subgames despite the non-observability.

\begin{restatable}[Bellman optimality equation -- 
Proof in \Cshref{app|backToMixedStrategies}]
{theorem}{bellmanPpe}
  \labelT{th|bellmanPpe}
  \IfAppendix{{\em (originally stated on
    page~\pageref{th|bellmanPpe})}}{}
$V_\depth^*(\occ_\depth)$ satisfies the following functional equation:
  \begin{align*}
    V_\depth^*(\occ_\depth)
    &
    = \max_{\beta_\depth^1} \min_{\beta_\depth^2}
    r(\occ_\depth, \vbeta_\depth) + \gamma V^*_{\depth+1}( T(\occ_\depth,\vbeta_\depth) )
   = \max_{\beta_\depth^1} \min_{\beta_\depth^2}
     Q^*_\depth(\occ_\depth, \vbeta_\depth).
 \end{align*}
\end{restatable}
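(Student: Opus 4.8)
The plan is to obtain the recursion straight from the max--min characterisation in \Cref{th|unicityNev}, by peeling off the stage-$\depth$ decision rules and then invoking \Cref{th|unicityNev} once more, one step later, for the continuation. The one preliminary fact needed is a \emph{one-step value decomposition}: for every strategy profile $\vbeta_{\depth:H-1}=(\vbeta_\depth,\vbeta_{\depth+1:H-1})$,
\[ V_\depth(\occ_\depth,\vbeta_{\depth:H-1}) \;=\; r(\occ_\depth,\vbeta_\depth) \;+\; \gamma\, V_{\depth+1}\!\big(T(\occ_\depth,\vbeta_\depth),\,\vbeta_{\depth+1:H-1}\big). \]
This is proved by splitting the expectation in \Cref{eq|subgame} at $t=\depth$: the $t=\depth$ term is exactly $r(\occ_\depth,\vbeta_\depth)$ by \Cref{eq|reward}, while the remaining terms, summing the discounted returns collected from $\depth+1$ onward, depend on the past only through the distribution of $(\vth_{\depth+1},S_{\depth+1})$ and hence, by the Markov and sufficiency properties (\Cref{lem|occSufficient}; the belief states $b(\cdot\mid\vth_{\depth+1})$ being recovered by HMM filtering from $\occ_{\depth+1}=T(\occ_\depth,\vbeta_\depth)$), only through $\occ_{\depth+1}$; together with dependence on the future only through $\vbeta_{\depth+1:H-1}$, this gives $\gamma V_{\depth+1}(\occ_{\depth+1},\vbeta_{\depth+1:H-1})$. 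Substituting into \Cref{th|unicityNev} and splitting each $\beta^i_{\depth:H-1}$ as $(\beta^i_\depth,\beta^i_{\depth+1:})$ then shows
\[ V^*_\depth(\occ_\depth) = \max_{\beta^1_\depth}\max_{\beta^1_{\depth+1:}}\min_{\beta^2_\depth}\min_{\beta^2_{\depth+1:}}\Big[r(\occ_\depth,\vbeta_\depth)+\gamma V_{\depth+1}\big(T(\occ_\depth,\vbeta_\depth),\beta^1_{\depth+1:},\beta^2_{\depth+1:}\big)\Big], \]
and, since \Cref{th|unicityNev} also guarantees a value (saddle point), the right-hand side is unchanged when rewritten as $\min_{\beta^2_\depth}\min_{\beta^2_{\depth+1:}}\max_{\beta^1_\depth}\max_{\beta^1_{\depth+1:}}[\cdots]$.

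Next I establish two inequalities. For ``$\le$'': keeping $\beta^1_\depth$ fixed, apply the always-valid weak-duality inequality $\max_{\beta^1_{\depth+1:}}\min_{\beta^2_\depth}\le\min_{\beta^2_\depth}\max_{\beta^1_{\depth+1:}}$ to pull $\min_{\beta^2_\depth}$ outward; then, for fixed $\beta^1_\depth,\beta^2_\depth$, the scalar $r(\occ_\depth,\vbeta_\depth)$ and the occupancy state $T(\occ_\depth,\vbeta_\depth)$ are constants, so (using $\gamma\ge 0$) $\max_{\beta^1_{\depth+1:}}\min_{\beta^2_{\depth+1:}}[\cdots]=r(\occ_\depth,\vbeta_\depth)+\gamma V^*_{\depth+1}(T(\occ_\depth,\vbeta_\depth))$ by \Cref{th|unicityNev} at $\depth+1$ and the definition of $Q^*_\depth$ in \Cref{eq|localGames}; taking the outer $\max_{\beta^1_\depth}$ yields $V^*_\depth(\occ_\depth)\le\max_{\beta^1_\depth}\min_{\beta^2_\depth}Q^*_\depth(\occ_\depth,\vbeta_\depth)$. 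For ``$\ge$'': start instead from the min--max form of $V^*_\depth(\occ_\depth)$, keep $\beta^2_\depth$ fixed, use the reverse inequality $\min_{\beta^2_{\depth+1:}}\max_{\beta^1_\depth}\ge\max_{\beta^1_\depth}\min_{\beta^2_{\depth+1:}}$, and again collapse the continuation via \Cref{th|unicityNev} to $r(\occ_\depth,\vbeta_\depth)+\gamma V^*_{\depth+1}(T(\occ_\depth,\vbeta_\depth))$; this yields $V^*_\depth(\occ_\depth)\ge\min_{\beta^2_\depth}\max_{\beta^1_\depth}Q^*_\depth(\occ_\depth,\vbeta_\depth)$.

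Finally, the trivial matrix-game inequality $\max_{\beta^1_\depth}\min_{\beta^2_\depth}Q^*_\depth\le\min_{\beta^2_\depth}\max_{\beta^1_\depth}Q^*_\depth$ closes the chain
\[ \max_{\beta^1_\depth}\min_{\beta^2_\depth}Q^*_\depth(\occ_\depth,\vbeta_\depth) \;\le\; \min_{\beta^2_\depth}\max_{\beta^1_\depth}Q^*_\depth(\occ_\depth,\vbeta_\depth) \;\le\; V^*_\depth(\occ_\depth) \;\le\; \max_{\beta^1_\depth}\min_{\beta^2_\depth}Q^*_\depth(\occ_\depth,\vbeta_\depth), \]
so all four quantities coincide, which is exactly the claimed equation (and, as a by-product, the one-shot game with payoff $Q^*_\depth$ has a value). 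The step I expect to be most delicate is the one-step value decomposition: although morally ``$\occ_{\depth+1}$ summarises everything relevant to the future'', a careful argument must track how $b(\cdot\mid\vth_{\depth+1})$ is read off from $\occ_{\depth+1}$ and how the conditioning in \Cref{eq|subgame} factorises; once that is in place, the remainder is only reordering of $\max/\min$ operators plus repeated use of \Cref{th|unicityNev}, the key point being that every nontrivial swap is performed in the universally valid $\max\min\le\min\max$ direction, so no continuation strategy is ever required to depend on the stage-$\depth$ decision rules.
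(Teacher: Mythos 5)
Your proposal is correct, and it reaches the result by a genuinely different route than the paper. The paper's proof starts from $\max_{\beta^1_\depth}\min_{\beta^2_\depth}Q^*_\depth(\occ_\depth,\vbeta_\depth)$ and transforms it into $V^*_\depth(\occ_\depth)$ through a chain of \emph{exact} equalities, each operator swap or merge being justified by moving to mixed strategies compatible with $\occ_\depth$ (the sets $M^i_{|\occ_\depth,\vbeta_\depth\rangle}$, $M^i_{|\occ_\depth\rangle}$), where the value is bilinear and von Neumann's theorem applies; this machinery is precisely what compensates for the non-bilinearity of $V_\depth$ in behavioral strategies. You instead stay entirely within behavioral strategies: you use the one-step decomposition $V_\depth(\occ_\depth,\vbeta_{\depth:})=r(\occ_\depth,\vbeta_\depth)+\gamma V_{\depth+1}(T(\occ_\depth,\vbeta_\depth),\vbeta_{\depth+1:})$ (which is indeed the delicate ingredient, but it follows from occupancy sufficiency, \Cref{lem|occSufficient}, and is the same identity the paper itself uses in the appendix when proving linearity of $V_\depth$ in $\occ_\depth$), split the strategy spaces as products, invoke \Cref{th|unicityNev} only where it is already available (at $\depth$ for both the max--min and min--max forms, and at $\depth+1$ to collapse the continuation into $V^*_{\depth+1}(T(\occ_\depth,\vbeta_\depth))$), perform every remaining swap in the universally valid $\max\min\le\min\max$ direction, and close with the sandwich $\max_{\beta^1_\depth}\min_{\beta^2_\depth}Q^*_\depth\le\min_{\beta^2_\depth}\max_{\beta^1_\depth}Q^*_\depth\le V^*_\depth\le\max_{\beta^1_\depth}\min_{\beta^2_\depth}Q^*_\depth$. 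What each approach buys: the paper's argument makes explicit \emph{why} the intermediate swaps are legitimate despite non-bilinearity (by exhibiting the constrained mixed-strategy games), whereas yours is more economical --- no new minimax argument beyond \Cref{th|unicityNev} is needed, all nontrivial steps are inequalities that cannot fail, and you obtain as a by-product that the one-shot game $\vbeta_\depth\mapsto Q^*_\depth(\occ_\depth,\vbeta_\depth)$ has a value, a fact the paper also relies on later for its LP-based decision-rule selection.
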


The proof requires decomposing min and max operators over different time steps before swapping them appropriately to end up recognizing the optimal value function at the next time step ($V^*_{\depth+1}$).

\Cref{th|unicityNev,th|bellmanPpe} together show that \Cref{theo|ConvexConcaveV} holds even without player's strategies being public so that we can now build on the convex-concave property to solve zs-oMGs.

\subsection{Towards Solving zs-OMGs}
\label{sec|solvingOMGs}

This section aims at providing the second tool for DP and HS with continuous state spaces, \ie, bounding approximators of optimal value functions which will allow generalization across occupancy space.
Their update and selection operators are written as linear programs, and they turn out to come with solution strategies.

\subsubsection{Bounding value functions}
\label{sec|boundingValueFunctions}

So far, several issues prevented to apply the HSVI scheme to zs-POSGs, starting with the continuous spaces of
1. occupancy states (zs-OMG states) and
2. decision rules (zs-OMG actions).
One can address (1) by introducing the bounding functions $\upb{V}_\depth(\occ_\depth)$ and $\lob{V}_\depth(\occ_\depth)$ of $V_\depth^*(\occ_\depth)$ (\cf \Cshref{app|derivingApproximations}), for instance:
\begin{align*}
  \upb{V}_\depth(\occ_\depth)
  & = \min_{   \langle \tilde\occ_\depth^{c,1}, \langle \upb\nu^2_\depth, \beta_{\depth:}^2 \rangle  \rangle  \in \upb{\cJ}_\depth } %
  \left[ \occ_\depth^{m,1} \cdot \upb\nu^2_\depth + \lt{\depth} \norm{ \occ_\depth - \occ_\depth^{m,1}\tilde\occ_\depth^{c,1} }_1 \right],
\end{align*}
where
$\upb\nu^2_\depth$ component-wise upper-bounds $\nu^2_{[\tilde\occ^{c,1}_\depth,\beta_{\depth:}^2]}$ for some $\beta_{\depth:}^2$.
%
They allow generalizing knowledge from the subgame rooted at $\occ_\depth$ to any other one rooted at $\tilde \occ_\depth$.
To do so, we use $V^*$'s Lipschitz-Continuity proven below.

\begin{restatable}[Lipschitz-Continuity of $V^*$ - proof in \Cshref{app|LC|V}]{theorem}{corVLCOcc}
  \labelT{app|V|LC|occ}
  \IfAppendix{{\em (originally stated on
      page~\pageref{sec|CCV})}}{}
  Let
  $\h{H}{\depth}{\gamma} \eqdef \frac{1-\gamma^{H-\depth}}{1-\gamma}$
  (or $\h{H}{\depth}{\gamma} \eqdef H-\depth$ if $\gamma=1$).
  Then $V^*_\depth(\occ_\depth)$ is $\lt{\depth}$-Lipschitz continuous in
  $\occ_\depth$ at any depth $\depth \in \{0 \twodots H-1\}$, where
  $\lt{\depth} = \frac{1}{2} \h{H}{\depth}{\gamma} %
  \left( r_{\max} - r_{\min} \right)$. 
\end{restatable}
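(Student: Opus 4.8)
The plan is to reduce the claim to an elementary fact about a max--min of linear functions over a probability simplex. First, by \Cref{th|unicityNev} we may write
\begin{align*}
  V^*_\depth(\occ_\depth) &= \max_{\beta^1_{\depth:}}\min_{\beta^2_{\depth:}} V_\depth(\occ_\depth,\beta^1_{\depth:},\beta^2_{\depth:}),
\end{align*}
where both optimizations range over the (fixed, $\occ_\depth$-independent) sets of behavioral strategies from $\depth$ to $H-1$. Since a pointwise supremum, and a pointwise infimum, of a uniformly $\lt{\depth}$-Lipschitz family of real functions that is everywhere finite is again $\lt{\depth}$-Lipschitz, it suffices to prove that, for each fixed profile $\vbeta_{\depth:H-1}$, the map $\occ_\depth\mapsto V_\depth(\occ_\depth,\vbeta_{\depth:H-1})$ is $\lt{\depth}$-Lipschitz for $\norm{\cdot}_1$.

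Next, I would establish that for a fixed profile the value is \emph{linear} in the occupancy state: $V_\depth(\occ_\depth,\vbeta_{\depth:H-1}) = \occ_\depth\cdot v_{\vbeta_{\depth:H-1}}$, where the component of $v_{\vbeta_{\depth:H-1}}$ indexed by a joint \aoh{} $\vth_\depth$ is the expected discounted return of the continuation started from $\vth_\depth$ under $\vbeta_{\depth:H-1}$ using the belief $b(\cdot\mid\vth_\depth)$. The key structural point is that this continuation (belief, induced state/observation trajectories, collected rewards) is a function of $\vth_\depth$ and $\vbeta_{\depth:H-1}$ only --- in particular $b(\cdot\mid\vth_\depth)$ is obtained by \hmm{} filtering along $\vth_\depth$ (\cf \Cref{lem|occSufficient}) and never involves $\occ_\depth$ --- so $\occ_\depth$ enters exclusively as the mixing weights over joint \aoh{}s. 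Because the continuation accumulates $H-\depth$ rewards, each in $[r_{\min},r_{\max}]$, with discounting, every component of $v_{\vbeta_{\depth:H-1}}$ lies in the interval $[\h{H}{\depth}{\gamma}\,r_{\min},\ \h{H}{\depth}{\gamma}\,r_{\max}]$, whose width is $2\lt{\depth}$.

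Finally, I would conclude with the simplex estimate. For two occupancy states $\occ_\depth,\tilde\occ_\depth$ at depth $\depth$ (both probability vectors over $\vTh_\depth$), one has $(\occ_\depth-\tilde\occ_\depth)\cdot\mathbf 1 = 0$, hence $V_\depth(\occ_\depth,\vbeta_{\depth:H-1})-V_\depth(\tilde\occ_\depth,\vbeta_{\depth:H-1}) = (\occ_\depth-\tilde\occ_\depth)\cdot(v_{\vbeta_{\depth:H-1}} - c\,\mathbf 1)$ for any scalar $c$; choosing $c \eqdef \tfrac12\h{H}{\depth}{\gamma}(r_{\min}+r_{\max})$ centers the vector so that $\norm{v_{\vbeta_{\depth:H-1}}-c\,\mathbf 1}_\infty \le \lt{\depth}$, and Hölder's inequality yields $\abs{V_\depth(\occ_\depth,\vbeta_{\depth:H-1})-V_\depth(\tilde\occ_\depth,\vbeta_{\depth:H-1})} \le \lt{\depth}\norm{\occ_\depth-\tilde\occ_\depth}_1$. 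Combined with the reduction of the first paragraph, this gives $\abs{V^*_\depth(\occ_\depth)-V^*_\depth(\tilde\occ_\depth)}\le\lt{\depth}\norm{\occ_\depth-\tilde\occ_\depth}_1$.

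The only genuinely delicate part is the linearity step: making rigorous that, for a fixed strategy profile, the subgame value is exactly affine/linear in $\occ_\depth$, \ie, that the belief and all downstream conditional quantities depend on the joint \aoh{} alone and not on the occupancy state. Once that structural observation is nailed down, everything else --- Hölder on the probability simplex, and the stability of Lipschitz continuity under $\sup$ and $\inf$ --- is routine.
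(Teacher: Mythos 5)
Your proposal is correct and follows essentially the same route as the paper: fix a strategy profile, use linearity of $V_\depth(\cdot,\vbeta_{\depth:})$ in $\occ_\depth$ together with the bound $[\h{H}{\depth}{\gamma}r_{\min},\h{H}{\depth}{\gamma}r_{\max}]$ to get $\lt{\depth}$-Lipschitz continuity of each fixed-profile value, then pass the Lipschitz property through the $\max$-$\min$. The only difference is presentational: where the paper invokes Horák's Lemma~3.5 for the ``bounded linear function on a simplex is Lipschitz with constant half the range'' step, you reprove it inline via the centering-plus-Hölder argument, and the linearity you flag as delicate is exactly the paper's \Cref{app|lin|occ}, established there by induction on $\depth$.
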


Yet, this yields (generally non-convex) Lipschitz-continuous functions whose $\max$-$\min$ optimization would be intractable, so that (2) remains an issue.
Also, we do not know how to retrieve valid solution strategies.
In particular, and as illustrated in \mbox{\Cref{ex|MP}}, simply concatenating decision rules backwards from $\depth=H-1$ to $0$ would not guarantee globally-consistent solutions, and could result in exploitable strategies.

But then, combining \Cref{th|bellmanPpe,theo|ConvexConcaveV} leads to introducing a novel value function (denoted $W^{1,*}_\depth$) through writing, for any \os{} $\occ_\depth$:
\begin{align*}
      V_\depth^*(\occ_\depth) & =  \max_{\beta_\depth^1} \underbrace{\min_{\beta_{\depth:H-1}^2 \in \cB_\depth^2} \left[ r(\occ_\depth,\vbeta_\depth) + \gamma \occ_{\depth+1}^{m,1} \cdot \nu_{[\occ_{\depth+1}^{c,1}, \beta_{\depth+1:H-1}^2]}^2 \right]}_{ \eqdef W^{1,*}_\depth(\occ_\depth,\beta^1_\depth) }.
\end{align*}

Assuming that player $2$ can only respond with one of finitely many stored strategies,
the concavity and $\lambda_\depth$-Lipschitz-continuity of $W^{1,*}_\depth$ allow upper-bounding it with finitely many tuples $w=\langle \tilde\occ_\depth, \beta_{\depth}^2, \langle \upb{\nu}_{\depth+1}^2, \beta_{\depth+1:}^2 \rangle \rangle$ stored in sets $\upb{\cI}_\depth$, and where $\upb{\nu}_{\depth+1}^2$   upper-bounds $\nu^2_{[\tilde\occ_{\depth+1}^{c,1}, \beta_{\depth+1:}^2]}$.

\begin{restatable}[proof in \Cshref{app|derivingApproximationsW}]{proposition}{coreThUpperBounds}
\labelT{core|thUpperBounds}
\IfAppendix{{\em (originally stated on page~\pageref{core|thUpperBounds})}}{}
Let $\upb{\cI}_\depth$ be a set of tuples $w=\langle \tilde\occ_\depth, \beta_{\depth}^2, \langle \upb{\nu}_{\depth+1}^2, \beta_{\depth+1:}^2 \rangle \rangle$.
Then, 
\begin{align}
\nonumber
     \upbW{\depth}{\cI}(\occ_\depth, \beta_\depth^1)
     & \eqdef 
     \min_{\langle \tilde\occ_\depth, \beta_{\depth}^2, \langle \upb{\nu}_{\depth+1}^2, \beta_{\depth+1:}^2 \rangle \rangle \in \upb{\cI}_\depth }
    \left[ r(\occ_\depth, \beta_\depth^1, \beta^2_\depth)
     + \gamma \Tm{1}(\occ_\depth, \beta_\depth^1, \beta^2_\depth) \cdot \upb\nu^2_{\depth+1}  \right. 
     \\ 
    & \qquad \left. +  \lt{\depth+1} \norm{ T(\occ_\depth, \beta_\depth^1, \beta^2_\depth) - \Tm{1}(\occ_\depth, \beta_\depth^1, \beta^2_\depth) \Tc{1}(\tilde\occ^{c,1}_\depth, \beta^2_\depth) }_1 \right]
    \label{eq|upW}
\end{align} 
upper-bounds $W_\depth^{1,*}$ over the whole space $\Occ_\depth \times \cB_{\depth}^1$.
\end{restatable}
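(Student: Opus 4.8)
The plan is to reduce the statement to a per-tuple inequality and then take a minimum. Fix $(\occ_\depth,\beta^1_\depth)\in\Occ_\depth\times\cB_\depth^1$ and a tuple $w=\langle \tilde\occ_\depth, \beta_{\depth}^2, \langle \upb{\nu}_{\depth+1}^2, \beta_{\depth+1:}^2 \rangle \rangle\in\upb{\cI}_\depth$; I will show that the $w$-indexed term inside the $\min$ of \Cref{eq|upW} is $\ge W^{1,*}_\depth(\occ_\depth,\beta^1_\depth)$, after which $\upbW{\depth}{\cI}(\occ_\depth,\beta^1_\depth)=\min_{w}(\cdots)\ge W^{1,*}_\depth(\occ_\depth,\beta^1_\depth)$ follows. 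For the per-tuple bound I would feed the player-$2$ strategy $\beta^2_{\depth:H-1}\eqdef(\beta^2_\depth,\beta^2_{\depth+1:})$ carried by $w$ into the minimization defining $W^{1,*}_\depth(\occ_\depth,\beta^1_\depth)$. Writing $\occ_{\depth+1}\eqdef T(\occ_\depth,\beta^1_\depth,\beta^2_\depth)$, $\occ^{m,1}_{\depth+1}\eqdef\Tm{1}(\occ_\depth,\beta^1_\depth,\beta^2_\depth)$ and $\occ^{c,1}_{\depth+1}\eqdef\Tc{1}(\occ_\depth,\beta^1_\depth,\beta^2_\depth)$, this feasible choice already gives $W^{1,*}_\depth(\occ_\depth,\beta^1_\depth)\le r(\occ_\depth,\beta^1_\depth,\beta^2_\depth)+\gamma\,\occ^{m,1}_{\depth+1}\cdot\nu^2_{[\occ^{c,1}_{\depth+1},\beta^2_{\depth+1:}]}$. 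Since the reward term matches the one in \Cref{eq|upW}, the whole proof boils down to establishing $\gamma\,\occ^{m,1}_{\depth+1}\cdot\nu^2_{[\occ^{c,1}_{\depth+1},\beta^2_{\depth+1:}]}\le\gamma\,\occ^{m,1}_{\depth+1}\cdot\upb\nu^2_{\depth+1}+\lt{\depth+1}\,\norm{\occ_{\depth+1}-\occ^{m,1}_{\depth+1}\,\Tc{1}(\tilde\occ^{c,1}_\depth,\beta^2_\depth)}_1$.

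Two facts make this work. \emph{(a)} From the transition formula \Cref{eq|transition}, the conditional term of a successor occupancy is a function of the current conditional term and of $\beta^2_\depth$ only: in the Bayes ratio defining $\occ^{c,1}_{\depth+1}(\theta^2_{\depth+1}\mid\theta^1_{\depth+1})$, both the factor $\beta^1_\depth(\theta^1_\depth,a^1)$ and the marginal weight $\occ^{m,1}_\depth(\theta^1_\depth)$ cancel between numerator and denominator, so $\Tc{1}(\occ_\depth,\beta^1_\depth,\beta^2_\depth)=\Tc{1}(\occ^{c,1}_\depth,\beta^2_\depth)$; in particular $\Tc{1}(\tilde\occ^{c,1}_\depth,\beta^2_\depth)$ is exactly the conditional term $\tilde\occ^{c,1}_{\depth+1}$ reached from $\tilde\occ_\depth$ when player~$2$ plays $\beta^2_\depth$, and by the standing hypothesis on $\upb\cI_\depth$ the vector $\upb\nu^2_{\depth+1}$ component-wise upper-bounds $\nu^2_{[\Tc{1}(\tilde\occ^{c,1}_\depth,\beta^2_\depth),\beta^2_{\depth+1:}]}$. \emph{(b)} For a fixed $\beta^2_{\depth+1:}$ and a fixed \aoh{} $\theta^1_{\depth+1}$, the map $\occ^{c,1}_{\depth+1}(\cdot\mid\theta^1_{\depth+1})\mapsto\nu^2_{[\occ^{c,1}_{\depth+1},\beta^2_{\depth+1:}]}(\theta^1_{\depth+1})$ is the optimal value of the POMDP faced by player~$1$ whose belief over $(s,\theta^2_{\depth+1})$ is $\occ^{c,1}_{\depth+1}(\theta^2_{\depth+1}\mid\theta^1_{\depth+1})\,b(s\mid\theta^1_{\depth+1},\theta^2_{\depth+1})$; hence it is $\lt{\depth+1}$-Lipschitz continuous in $\occ^{c,1}_{\depth+1}(\cdot\mid\theta^1_{\depth+1})$ for the $1$-norm, because (i) the $1$-norm of the induced joint-belief difference equals that of the conditional-term difference (the $b(s\mid\cdot)$ factors sum to $1$) and (ii) the span of a return accumulated over $\{\depth+1\twodots H-1\}$ is at most $\h{H}{\depth+1}{\gamma}(r_{\max}-r_{\min})$ --- this is the very estimate behind \Cref{app|V|LC|occ}, here specialised to a best response.

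Putting the pieces together, I would expand $\occ^{m,1}_{\depth+1}\cdot\nu^2_{[\occ^{c,1}_{\depth+1},\beta^2_{\depth+1:}]}=\sum_{\theta^1_{\depth+1}}\occ^{m,1}_{\depth+1}(\theta^1_{\depth+1})\,\nu^2_{[\occ^{c,1}_{\depth+1},\beta^2_{\depth+1:}]}(\theta^1_{\depth+1})$, bound each component via \emph{(b)} around the reference conditional $\Tc{1}(\tilde\occ^{c,1}_\depth,\beta^2_\depth)$, replace the resulting $\nu^2_{[\Tc{1}(\tilde\occ^{c,1}_\depth,\beta^2_\depth),\beta^2_{\depth+1:}]}(\theta^1_{\depth+1})$ by $\upb\nu^2_{\depth+1}(\theta^1_{\depth+1})$ using \emph{(a)}, and re-assemble the $\occ^{m,1}_{\depth+1}$-weighted sum of conditional-$1$-norms into $\norm{\occ_{\depth+1}-\occ^{m,1}_{\depth+1}\,\Tc{1}(\tilde\occ^{c,1}_\depth,\beta^2_\depth)}_1$ through the factorisation $\occ_{\depth+1}=\occ^{m,1}_{\depth+1}\cdot\occ^{c,1}_{\depth+1}$ (the spare $\gamma\le 1$ factor in front of the norm may simply be dropped). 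This yields the displayed inequality, hence the per-tuple bound, and taking the minimum over $w\in\upb\cI_\depth$ finishes the proof.

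The cancellation argument in \emph{(a)} is routine bookkeeping with \Cref{eq|transition}. The real obstacle is \emph{(b)}: one has to pin down precisely the POMDP underlying the best-response value vector $\nu^2_{[\cdot,\beta^2_{\depth+1:}]}$ (its hidden state $(s,\theta^2_{\depth+1})$, its belief, its reward and horizon), and verify that the $1$-norm travels transparently between that belief and the occupancy's conditional term, so that the Lipschitz modulus comes out as exactly $\lt{\depth+1}$; I expect to lift most of this from the Lipschitz-continuity development supporting \Cref{app|V|LC|occ}.
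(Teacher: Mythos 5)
Your proposal is correct and follows essentially the same route as the paper's appendix derivation: you bound $W^{1,*}_\depth$ tuple-by-tuple by substituting the stored strategy $\beta^2_\depth\oplus\beta^2_{\depth+1:}$, invoke the independence of $\Tc{1}$ from $\beta^1_\depth$ and $\occ^{m,1}_\depth$ (the paper's \Cref{lem|T1cindep}), apply the $\lt{\depth+1}$-Lipschitz continuity of $\nu^2_{[\cdot,\beta^2_{\depth+1:}]}$ in the conditional term (\Cref{lem|nuLC}), replace the exact vector by the stored upper bound, and rewrite the marginal-weighted conditional norms as the joint $1$-norm before taking the minimum over tuples. The only cosmetic differences are that you start from the min-form definition of $W^{1,*}_\depth$ instead of going through $Q^*_\depth$, $V^*_{\depth+1}$ and the best-response value, and that you explicitly drop the spare $\gamma\le 1$ factor in front of the norm, which is harmless.
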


Symmetrically, we define $\lob{W}_\depth$ as the lower bound of the symmetrically defined $W_\depth^{2,*}$.
As explained in the next two sections, $\upb{W}_\depth$ will be easier to deal with compared to $\upb{V}_\depth$, allowing 1 to seek for decision rules optimistically, and providing valid solution strategies for 2 for the subgame at $\depth$, \ie, ignoring consistency with higher-level subgames.

\subsubsection{Action Selection and Backup Operators}
\label{subsection:actionSelectionAndBackupOperators}

We now detail the decision rule selection for $1$ using $\upb{W}_\depth$ to optimistically guide a trajectory in occupancy space, and how to update $\upb{W}_\depth$ by providing backup operations.

To that end, first note that linearities in $\beta^1_\depth$ within \Cshref{eq|upW} allow writing $\upb{W}_\depth(\occ_\depth,\beta^1_\depth)= \min_{   \substack{ w
      \in \upb{\cI}_\depth }
}
{\beta^1_\depth}^\top \cdot M^{\occ_\depth}_{(\cdot,w)}$, where $\beta^1_\depth$ and $M^{\occ_\depth}_{(\cdot,w)}$ (for each $w$) are column vectors of dimension $\abs{\Theta^1 \times \cA^1}$.
$M^{\occ_\depth}$ (see developed formula in \Cshref{sec|getLP}) is thus a $|\Theta_\depth^1 \times \cA^1| \times |\upb{\cI}_\depth|$ matrix. 
Then, $\upbW{\depth}{\cI}$ being a lower envelope of hyperplanes leads to a convenient way of computing $\max_{\beta_\depth^1} \upbW{\depth}{\cI}(\occ_\depth,\beta_\depth^1)$.

\begin{restatable}[
Proof in \Cshref{sec|getLP}]{proposition}{core|lpth}
For any given $\occ_\depth$ and any set $\upb{\cI}_\depth$ of tuples $w=\langle \tilde\occ_\depth, \beta_{\depth}^2, \langle \upb{\nu}_{\depth+1}^2, \beta_{\depth+1:}^2 \rangle \rangle$, $\max_{\beta_\depth^1} \upbW{\depth}{\cI}(\occ_\depth,\beta_\depth^1)$ is equivalent to the LP and dual LP:
\begin{align}
  \label{eq|LP}
  & \begin{array}{ll@{\ }llllll}
  \lp{\upbW{\depth}{}}(\occ_\depth)\ : &
      \displaystyle
      \max_{\beta_\depth^1,v}
      & v
      & \text{s.t.} & \text{(i)}
      & 
        \forall w \in \upb{\cI}_\depth,
      & v \leq {\beta_\depth^1}^{\top} \! \cdot M^{\occ_\depth}_{(\cdot,w)}
      & \text{and}
      \\
      & & & &
      \text{(ii)}
      & \forall \theta_\depth^1 \in \Theta_\depth^1,
      & {\displaystyle \sum_{a^1}} \beta_\depth^1(a^1|\theta_\depth^1)
       = 1,
      \\ \\
      %
\dlp{\upbW{\depth}{}}(\occ_\depth)\ : &
      \displaystyle
      \min_{\tree^2_\depth,v}
      & v
       & \text{ s.t.} 
      &  \text{(i) } &
        \forall (\theta^1_\depth, a^1 ),
      & v \geq  M^{\occ_\depth}_{((\theta^1_\depth,a^1),\cdot)} \cdot \tree^2_\depth
      & \text{and} \\
      & & & &  \text{(ii)}
      & &
       {\displaystyle \sum_{w \in \upb{\cI}_\depth}} \! \tree^2_\depth( w )
       = 1.
    \end{array}
\end{align}
\end{restatable}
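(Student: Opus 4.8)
The plan is to show that the quantity $\max_{\beta_\depth^1} \upbW{\depth}{\cI}(\occ_\depth,\beta_\depth^1)$ coincides with the optimum of the primal linear program $\lp{\upbW{\depth}{}}(\occ_\depth)$, and then to obtain the dual $\dlp{\upbW{\depth}{}}(\occ_\depth)$ by standard LP duality, verifying that strong duality applies. First I would recall (from the paragraph preceding the statement) that, because each term in the minimum defining $\upbW{\depth}{\cI}$ is affine in $\beta_\depth^1$, one can write $\upbW{\depth}{\cI}(\occ_\depth,\beta_\depth^1) = \min_{w \in \upb{\cI}_\depth} {\beta_\depth^1}^{\top} \! \cdot M^{\occ_\depth}_{(\cdot,w)}$. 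This requires expanding \Cshref{eq|upW}: the reward term $r(\occ_\depth,\beta_\depth^1,\beta_\depth^2)$ is linear in $\beta_\depth^1$ by \Cshref{eq|reward}, the term $\gamma\,\Tm{1}(\occ_\depth,\beta_\depth^1,\beta_\depth^2)\cdot\upb\nu_{\depth+1}^2$ is linear in $\beta_\depth^1$ because $T$ is bilinear in $(\beta_\depth^1,\beta_\depth^2)$ by \Cshref{eq|transition} and the marginalization $\Tm{1}$ preserves this, and the $L_1$-penalty term is also linear in $\beta_\depth^1$ once $\beta_\depth^2$ (hence $\tilde\occ_\depth$, $w$) is fixed, since the sign pattern of each coordinate of $T(\occ_\depth,\beta_\depth^1,\beta_\depth^2) - \Tm{1}(\occ_\depth,\beta_\depth^1,\beta_\depth^2)\Tc{1}(\tilde\occ^{c,1}_\depth,\beta^2_\depth)$ is fixed (this is the subtle point — one must check the $L_1$ norm does not break linearity; I would argue the relevant differences keep a constant sign over the feasible $\beta_\depth^1$ region, or absorb the absolute values into the matrix coefficients $M^{\occ_\depth}$, deferring the explicit formula to \Cshref{sec|getLP}). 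Collecting coefficients per $(\theta_\depth^1,a^1)$ defines the column $M^{\occ_\depth}_{(\cdot,w)}$, giving the claimed matrix of size $|\Theta_\depth^1\times\cA^1|\times|\upb{\cI}_\depth|$.

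Next I would set up the epigraph reformulation: maximizing $\min_w {\beta_\depth^1}^{\top}M^{\occ_\depth}_{(\cdot,w)}$ over $\beta_\depth^1$ in the product of simplices $\{\beta_\depth^1 : \beta_\depth^1(a^1|\theta_\depth^1)\ge 0,\ \sum_{a^1}\beta_\depth^1(a^1|\theta_\depth^1)=1\ \forall\theta_\depth^1\}$ is, by introducing an auxiliary variable $v$ with $v\le {\beta_\depth^1}^{\top}M^{\occ_\depth}_{(\cdot,w)}$ for all $w$, exactly $\lp{\upbW{\depth}{}}(\occ_\depth)$. The feasible region is nonempty (any product of uniform distributions works) and bounded, so the primal LP attains its optimum and is equal to the original $\max$-$\min$.

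Then I would dualize. Attaching multipliers $\tree_\depth^2(w)\ge 0$ to the constraints $v\le{\beta_\depth^1}^{\top}M^{\occ_\depth}_{(\cdot,w)}$ (rewritten $v - {\beta_\depth^1}^{\top}M^{\occ_\depth}_{(\cdot,w)}\le 0$) and free multipliers to the equality constraints $\sum_{a^1}\beta_\depth^1(a^1|\theta_\depth^1)=1$, the dual variable $v$-coupling forces $\sum_{w}\tree_\depth^2(w)=1$, the $\beta_\depth^1(a^1|\theta_\depth^1)$-coupling gives $v\ge M^{\occ_\depth}_{((\theta_\depth^1,a^1),\cdot)}\cdot\tree_\depth^2$ for every $(\theta_\depth^1,a^1)$ (after identifying the free equality-multiplier with the dual objective $v$, which is the standard bookkeeping for a maximin over a simplex), and the objective becomes $\min v$. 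This is precisely $\dlp{\upbW{\depth}{}}(\occ_\depth)$; strong duality holds because the primal is a feasible, bounded LP, so the two optimal values agree and both equal $\max_{\beta_\depth^1}\upbW{\depth}{\cI}(\occ_\depth,\beta_\depth^1)$. I would note that $\tree_\depth^2$ is naturally interpreted as a mixed strategy (a distribution over the stored tuples $w$, equivalently over player~$2$'s response plans), which motivates the $\tree$ notation and foreshadows the extraction of solution strategies.

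The main obstacle I anticipate is the linearity claim for the $L_1$-penalty term in \Cshref{eq|upW} as a function of $\beta_\depth^1$: a priori $\norm{\cdot}_1$ is only piecewise linear and convex, not linear, so one must argue carefully — either that over the simplex of valid $\beta_\depth^1$ each coordinate of the argument has a fixed sign (because $T$ and $\Tm{1}\Tc{1}$ agree in sign structure for the occupancy components in play), or, more robustly, that the absolute values can be linearized by adding coordinates/constraints that the statement implicitly folds into the definition of $M^{\occ_\depth}$ (whose explicit developed form is deferred to \Cshref{sec|getLP}). Everything else is routine LP manipulation. Hence the detailed verification of the matrix $M^{\occ_\depth}$ and the handling of the norm are the parts I would expect to occupy the bulk of the full proof, the rest being the textbook primal-dual derivation for a matrix game played over products of simplices.
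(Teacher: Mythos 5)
Your overall route matches the paper's: establish that the bracketed expression in \Cshref{eq|upW} is linear in $\beta^1_\depth$ for each fixed $w$, so that $\upbW{\depth}{\cI}(\occ_\depth,\cdot)$ is a lower envelope of hyperplanes ${\beta^1_\depth}^{\top}\!\cdot M^{\occ_\depth}_{(\cdot,w)}$, then write the max--min over the product of simplices as an epigraph LP and dualize (the paper phrases the same thing as a Bayesian game with one type per history $\theta^1_\depth$ for player~$1$ and a single type for player~$2$); your strong-duality bookkeeping is fine.

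The gap is exactly the step you flag, and neither of your two proposed fixes is the one that works. The penalty term is linear in $\beta^1_\depth$ for a structural reason: by \Cshref{lem|T1cindep} the conditional term $\Tc{1}(\occ_\depth,\vbeta_\depth)$ is independent of $\beta^1_\depth$ (and of $\occ^{m,1}_\depth$), and $T$ factorizes as $\Tm{1}\cdot\Tc{1}$, so
\begin{align*}
\norm{ T(\occ_\depth, \beta_\depth^1, \beta^2_\depth) - \Tm{1}(\occ_\depth, \beta_\depth^1, \beta^2_\depth)\, \Tc{1}(\tilde\occ^{c,1}_\depth, \beta^2_\depth) }_1
 = \Tm{1}(\occ_\depth, \beta_\depth^1, \beta^2_\depth) \cdot \vnorm{ \Tc{1}(\occ^{c,1}_\depth, \beta^2_\depth) - \Tc{1}(\tilde\occ^{c,1}_\depth, \beta^2_\depth) }_1 ,
\end{align*}
a dot product of a nonnegative vector that is linear in $\beta^1_\depth$ (\Cshref{lem|T1mlin}) with a vector that does not depend on $\beta^1_\depth$ at all: the absolute values only ever apply to $\beta^1_\depth$-free quantities, with $\Tm{1}\geq 0$ factoring out. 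This is what makes each column $M^{\occ_\depth}_{(\cdot,w)}$ a genuine constant vector (its developed entries are computed in \Cshref{sec|getLP}) and each constraint $v\leq {\beta^1_\depth}^{\top}\!\cdot M^{\occ_\depth}_{(\cdot,w)}$ an LP constraint. Your ``fixed sign pattern'' heuristic gestures at this but never identifies the fact that fixes the signs, namely $\Tc{1}$'s independence from $\beta^1_\depth$; without it the claim is unsupported. Your fallback --- ``absorb the absolute values by adding coordinates/constraints'' --- would not rescue the argument: the norm sits on the favourable side of the inequality $v\leq f(\beta^1_\depth)$, where a convex piecewise-linear $f$ yields a non-convex feasible set, so the standard epigraph linearization is unsound there and the LP of the stated form would not follow. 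Once the factorization above is in place, the rest of your proof (bilinear form ${\beta^1_\depth}^{\top} M^{\occ_\depth}\tree^2_\depth$, epigraph LP, dual with simplex constraint on $\tree^2_\depth$) goes through as in the paper.
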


\begin{remark}[Outcomes of this game] 
Since $\upb{W}_\depth$ upper-bounds $W^{1,*}_\depth$, solving this LP provides $1$ with an {\em optimistically} selected immediate decision rule $\beta^1_\depth$.
For $2$, $\tree_{\depth}^2$ is a probability distribution over tuples containing strategies $\beta_{\depth}^{2} \oplus \beta_{\depth+1:H-1}^{2}$, thus
recursively induces a strategy, as illustrated by \Cshref{fig|dag}, which can be turned into a
behavioral strategy $\beta_{\depth:H-1}^2$ (more details in \Cshref{App|th:eqStrategies}) whose value is {\em at worst} (from $2$'s viewpoint) the LP's value, \ie, against $1$'s best response to it.
\end{remark}

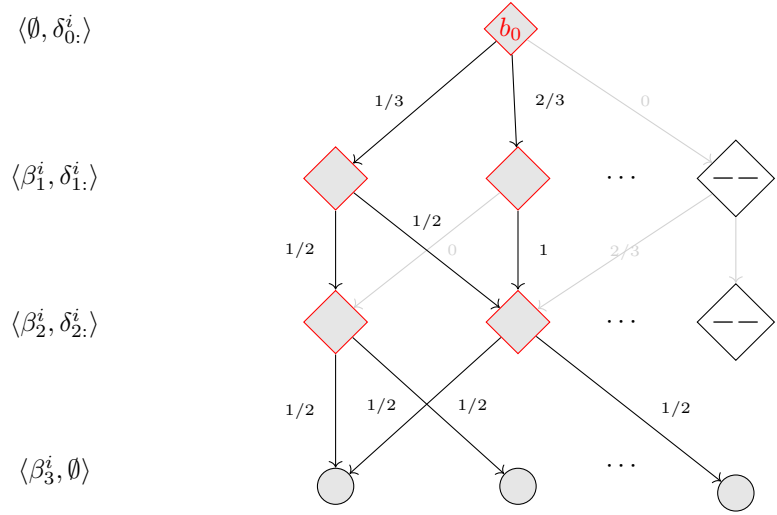
\begin{figure}[ht]
    \centering
    \adjustbox{max width = .85\linewidth}{
        \def\sScript{1.2} 
\def\sTiny{0.1} 
\def\width{11.25}
\def\leveldist{1.5}
\tikzstyle{open circle}=[diamond, draw, inner sep=0.5]

\newcommand{\nodeW}[2]{
    \phantom{}
}
\newcommand{\nodeWLast}[2]{
  \phantom{}
}

\colorlet{lightgray}{gray!35}

\begin{tikzpicture}

  \def\radA{.6cm} 
  \def\radB{3cm} 
  \def\radC{7.5cm}
  
  \node [pienode={2em}{100}{diamond}{gray!20}{gray!40}, draw, scale=1] (n0) at (0,0) {$b_0$};
  
  \node [below = \leveldist of n0] (ntemp) {};
  \node [right = 1cm of ntemp] (n1dots) { $\cdots$ };
  \node [pienode={2em}{100}{diamond}{gray!20}{gray!40}, scale=\sScript, left = \radB of n1dots] (n11) { \nodeW{1}{0} };
  \node [pienode={2em}{100}{diamond}{gray!20}{gray!40}, scale=\sScript, left = \radA of n1dots] (n12) { \nodeW{2}{0} };
  \node [open circle, scale=\sScript, right = \radA of n1dots] (n1n) { $--$ };
  
  \path (n0) edge[-> ] node[left = 0.15cm] {\tiny $1/3$} (n11) ;
  \path (n0) edge[-> ] node[right = 0.15cm] {\tiny $2/3$} (n12) ;
  \path (n0) edge[->, lightgray] node[right = 0.15cm] {\tiny $0$} (n1n) ;
  
  \node [below = \leveldist of n1dots] (n2dots) { $\cdots$ };
  \node [pienode={2em}{100}{diamond}{gray!20}{gray!40}, scale=\sScript, left = \radB of n2dots] (n21) { \nodeW{1}{1} };
  \node [pienode={2em}{100}{diamond}{gray!20}{gray!40}, scale=\sScript, left = \radA of n2dots] (n22) { \nodeW{2}{1} };
  \node [open circle, scale=\sScript, right = \radA of n2dots] (n2n) {$--$};
  
  \path (n12) edge[->, lightgray] node[right=0.15cm] {\tiny{$0$}} (n21) ;
  \path (n12) edge[->] node[right=0.15cm] {\tiny{$1$}} (n22) ;

  \path (n1n) edge[->, lightgray] node[] {\tiny{$2/3$}} (n22) ;
  \path (n1n) edge[->, lightgray] node[] {} (n2n) ;
  
  \path (n11) edge[->] node[left = 0.15cm] {\tiny $1/2$} (n21) ;
  \path (n11) edge[->] node[above = 0.15cm] {\tiny $1/2$} (n22) ;

  
  \node [below = \leveldist of n2dots] (nH1dots) { $\cdots$ };
  \node [circle, draw, fill = gray!20, scale=\sScript, below = \leveldist of n21] (nH11) { \nodeWLast{1}{H-1} };
  \node [circle, draw, fill = gray!20, scale=\sScript, below = \leveldist of n22] (nH12) { \nodeWLast{2}{H-1} };
  \node [circle, draw, fill = gray!20, scale=\sScript,  below = \leveldist of n2n] (nH1n) { \nodeWLast{n_{H-1}}{H-1} };
  
  \path (n21) edge[->] node[left = 0.15cm] {\tiny $1/2$} (nH11) ;
  \path (n21) edge[->] node[left = 0.3cm] {\tiny $1/2$} (nH12) ;
  \path (n22) edge[->] node[right = 0.3cm] {\tiny $1/2$} (nH11) ;
  \path (n22) edge[->] node[right = 0.3cm] {\tiny $1/2$} (nH1n) ;
  
\def\rootNode{n0}
\draw[]
	node at ($(\rootNode) + (-2*\radB,0)$) {$\langle \emptyset, \delta_{0:}^i \rangle$}
	node at ($(\rootNode) + (-2*\radB,-1.3*\leveldist)$) {$\langle \beta_1^i, \delta_{1:}^i \rangle$}
	node at ($(\rootNode) + (-2*\radB,-2.6*\leveldist)$) {$\langle \beta_2^i, \delta_{2:}^i \rangle$}
	node at ($(\rootNode) + (-2*\radB,-3.9*\leveldist)$) {$\langle \beta_3^i, \emptyset \rangle$}
;

\end{tikzpicture}
    }
    \caption{
    Representation of the strategy recursively induced by some $\tree_0^1$.
    At each time step $\depth$, one must
(i) sample a next tuple/node $w_\depth^1$ from current distribution $\tree_\depth^1$,
(ii) apply \dr{} $\beta_\depth^1[w_\depth^1]$, and
(iii) make $\tree_{\depth+1}^1[w_\depth^1]$ the new current distribution (unless reaching a leaf).
    }
    \label{fig|dag}
\end{figure}

Then, the following properties allow performing backups, \ie, filling up
the set $\upb{\cI}_{\depth-1}$ with new tuples $w$ containing, in particular, vectors $\upb\nu_\depth^2$.

\begin{restatable}[
Proof in \Cshref{sec|compute|nu}]{lemma}{core|propBackup}
\labelT{backup}
  For any $\tree_{\depth}^2 = \dlp{\upbW{\depth}{}}(\occ_\depth)$,
  the vector $\nu_{[\occ_\depth^{c,1},\tree_{\depth}^2]}^{2}$ is component-wise upper-bounded by \begin{align*}
  \upb{\nu}_\depth^2
  & \eqdef \frac{1}{\occ_\depth^{m,1}} M^{\occ_\depth}_{((\theta^1_\depth,a^1),\cdot)} \cdot \tree^2_{\depth}.
  \end{align*}
\end{restatable}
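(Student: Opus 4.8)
The plan is to turn the dual optimizer $\tree^2_\depth$ into the behavioral strategy it represents for player~$2$, bound player~$1$'s best response against that strategy one \aoh{} $\theta^1_\depth$ at a time, and recognise the resulting estimate as $\upb\nu^2_\depth$. I note at the outset that only the fact that $\tree^2_\depth$ is a probability distribution over $\upb\cI_\depth$ will be used — which holds for every feasible point of $\dlp{\upbW{\depth}{}}(\occ_\depth)$ by its constraint~(ii) — not its optimality; the latter matters for the algorithm, not for this inequality.

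First I would spell out the induced strategy, following the construction behind \Cref{fig|dag} (detailed in \Cshref{App|th:eqStrategies}): at time $\depth$, player~$2$ draws a tuple $w=\langle\tilde\occ_\depth,\beta^2_\depth[w],\langle\upb\nu^2_{\depth+1}[w],\beta^2_{\depth+1:}[w]\rangle\rangle$ from $\tree^2_\depth$, plays $\beta^2_\depth[w]$, and then follows $\beta^2_{\depth+1:}[w]$ — never revealing $w$ to player~$1$. Fix $\theta^1_\depth$ with $\occ^{m,1}_\depth(\theta^1_\depth)>0$ (the remaining ones carry no probability, hence need no bound) and an action $a^1$, and write $Q_1(\theta^1_\depth,a^1\mid\xi)$ for player~$1$'s value of playing $a^1$ at $\depth$ and then best-responding to the player-$2$ behaviour specified by $\xi$ — either the strategy induced by $\tree^2_\depth$, or a fixed strategy such as $\beta^2_\depth[w]\oplus\beta^2_{\depth+1:}[w]$ — with $\theta^2_\depth\sim\occ^{c,1}_\depth(\cdot\mid\theta^1_\depth)$. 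Because player~$1$ cannot observe the sampled $w$, its optimal continuation after $(\theta^1_\depth,a^1)$ is a single strategy, so the value against the induced strategy is a $\max$ over that continuation of an expectation over $w$, whereas letting player~$1$ react to each $w$ separately can only help:
\begin{align*}
  Q_1\big(\theta^1_\depth,a^1\mid\tree^2_\depth\big)
  \;\le\; \E_{w\sim\tree^2_\depth}\!\left[\, Q_1\big(\theta^1_\depth,a^1\mid\beta^2_\depth[w]\oplus\beta^2_{\depth+1:}[w]\big)\,\right].
\end{align*}

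Next I would bound each term on the right one coordinate at a time, reusing the computation already done for \Cref{core|thUpperBounds}. Evaluating the bracket of \Cshref{eq|upW} with $\beta^1_\depth$ set to the indicator of $(\theta^1_\depth,a^1)$ and dividing by $\occ^{m,1}_\depth(\theta^1_\depth)$ — legitimate because the whole $\beta^1_\depth$-dependence there factors through the single nonnegative coordinate $\beta^1_\depth(a^1\mid\theta^1_\depth)$, which is precisely why \Cshref{eq|upW} is linear in $\beta^1_\depth$ — the immediate-reward contribution of $(\theta^1_\depth,a^1)$ comes out exactly, while the continuation term $\gamma\,\Tm{1}(\cdot)\cdot\upb\nu^2_{\depth+1}[w]+\gamma\lt{\depth+1}\norm{T(\cdot)-\Tm{1}(\cdot)\Tc{1}(\tilde\occ^{c,1}_\depth,\beta^2_\depth[w])}_1$ over-estimates the true continuation value $\gamma\,\Tm{1}(\cdot)\cdot\nu^2_{[\Tc{1}(\occ_\depth,\cdot,\beta^2_\depth[w]),\,\beta^2_{\depth+1:}[w]]}$: indeed $\upb\nu^2_{\depth+1}[w]$ already dominates $\nu^2$ at the reference conditional $\Tc{1}(\tilde\occ^{c,1}_\depth,\beta^2_\depth[w])$ (the invariant maintained on the tuples of $\upb\cI_\depth$), and the $\lt{\depth+1}\norm{\cdot}_1$ slack pays for moving to the \emph{true} next conditional via $\lt{\depth+1}$-Lipschitz-continuity (\Cref{app|V|LC|occ} for $V^*$, hence for $\nu^2$, exactly as already used to derive \Cref{core|thUpperBounds}). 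Recognising the developed form of $M^{\occ_\depth}$ (\Cshref{sec|getLP}), this says
\begin{align*}
  Q_1\big(\theta^1_\depth,a^1\mid\beta^2_\depth[w]\oplus\beta^2_{\depth+1:}[w]\big)
  \;\le\; \tfrac{1}{\occ^{m,1}_\depth(\theta^1_\depth)}\, M^{\occ_\depth}_{((\theta^1_\depth,a^1),w)}.
\end{align*}

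Finally I would take the expectation over $w\sim\tree^2_\depth$ and then the maximum over $a^1$ — which is how player~$1$ best-responds once it finds itself at $\theta^1_\depth$ — giving
\begin{align*}
  \nu^2_{[\occ^{c,1}_\depth,\tree^2_\depth]}(\theta^1_\depth)
  \;=\; \max_{a^1} Q_1\big(\theta^1_\depth,a^1\mid\tree^2_\depth\big)
  \;\le\; \max_{a^1}\ \tfrac{1}{\occ^{m,1}_\depth(\theta^1_\depth)}\, M^{\occ_\depth}_{((\theta^1_\depth,a^1),\cdot)}\cdot\tree^2_\depth
  \;=\; \upb\nu^2_\depth(\theta^1_\depth),
\end{align*}
the claimed component-wise bound. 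I expect the delicate step to be the second paragraph's swap: one must argue precisely that it is player~$1$'s inability to condition on the sampled $w$ that makes $\max\E\le\E\max$ the \emph{useful} direction here, and then verify that the per-$(\theta^1_\depth,a^1)$ disaggregation of \Cshref{eq|upW} lines up with the entries of $M^{\occ_\depth}$ term by term. The rest is the algebra behind \Cref{core|thUpperBounds}, read one coordinate at a time.
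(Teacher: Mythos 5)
Your proof is correct and follows essentially the same route as the paper's (\Cref{prop|rec|nu} in \Cshref{sec|compute|nu}): fix $\theta^1_\depth$, unroll one step of player~1's best-response value against the strategy induced by $\tree^2_\depth$, disaggregate over the sampled $w$ (an over-estimate because player~1 cannot condition on $w$ --- a step the paper writes somewhat loosely inside an equality chain), bound each continuation by the stored $\upb\nu^2_{\depth+1}[w]$ plus the $\lt{\depth+1}$-Lipschitz slack for the shift of conditional term, and recognize the entries of $M^{\occ_\depth}$, exactly as in the paper's algebra. The only nits: the Lipschitz fact actually invoked is the one for $\nu^2$ in $\occ^{c,1}$ (\Cref{lem|nuLC}) rather than \Cref{app|V|LC|occ} itself, and, as in the paper, the division by $\occ^{m,1}_\depth(\theta^1_\depth)$ restricts the claim to $\theta^1_\depth\in\supp(\occ^{m,1}_\depth)$, which you handle consistently.
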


\begin{proposition}[update]
Let us assume that
\begin{itemize}
    \item a transition $\occ_{\depth-1} \to \occ_{\depth}$ has been performed through playing $\langle \beta_{\depth-1}^1,\beta_{\depth-1}^2 \rangle$, and
    \item solving $\dlp{\upbW{\depth}{}}(\occ_\depth)$ provides both
    \begin{itemize}
        \item a tree strategy $\tree_\depth^2$ (as the main solution of the DLP), and 
        \item a vector $\upb{\nu}_\depth^2 = \frac{1}{\occ_\depth^{m,1}} M^{\occ_\depth}_{((\theta^1_\depth,a^1),\cdot)} \cdot \tree^2_{\depth}$ (as a by-product).
    \end{itemize}
\end{itemize}
Then,
\begin{enumerate}
    \item $\upb\cI_{\depth-1} \gets \upb\cI_{\depth-1} \cup \{ \langle \occ_{\depth-1}^{c,1}, \beta_{\depth-1}^2, \langle \upb{\nu}^2_\depth, \tree_{\depth}^2 \rangle \rangle \}$ 
    is a valid update operator in the sense that it preserves $\upb W_\depth$'s upper-bounding property, and
    \item similarly, $\upb\cJ_\depth \gets \upb\cJ_\depth \cup \{\langle \occ_\depth^{c,1}, \langle \upb\nu_\depth^2, \tree_\depth^2 \rangle \rangle \}$ is a valid update operator for $\upb V_\depth$.
\end{enumerate}
\end{proposition}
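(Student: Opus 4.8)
Both items enlarge a set ($\upb\cI_{\depth-1}$, resp.\ $\upb\cJ_\depth$) by a single tuple, and both $\upb W_{\depth-1}$ and $\upb V_\depth$ are pointwise minima over their tuple sets, so adding a tuple can only decrease them. Hence it suffices to show that the \emph{inserted} tuple is \emph{valid}, \ie, that the individual function it contributes to the minimum already lies above $W^{1,*}_{\depth-1}$ (resp.\ $V^*_\depth$) everywhere; the minimum of the previously valid upper bound and this new valid term is then still an upper bound, which is exactly the claim. By \Cref{core|thUpperBounds} (resp.\ by the construction of $\upb V_\depth$, \cf \Cshref{app|derivingApproximations}), validity of a tuple boils down to one requirement: its stored vector $\upb\nu^2$ must component-wise upper-bound the best-response value vector $\nu^2_{[\tilde\occ^{c,1},\,\beta^2_{\depth:}]}$ determined by its stored conditional term $\tilde\occ^{c,1}$ and its stored strategy.

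For item~1, the inserted tuple is $\langle \occ_{\depth-1}^{c,1}, \beta_{\depth-1}^2, \langle \upb\nu_\depth^2, \tree_\depth^2 \rangle \rangle$, and I would proceed in three small steps. First, replace $\tree_\depth^2$ by the behavioral strategy $\beta_{\depth:}^2$ it recursively induces (\cf \Cshref{App|th:eqStrategies}), noting the two share the same best-response vector $\nu^2$. Second, identify the conditional term this tuple points to at step $\depth$: when the tuple is instantiated inside \Cref{eq|upW} at depth $\depth-1$, the projection term involves $\Tc{1}(\occ_{\depth-1}^{c,1},\beta_{\depth-1}^2)$, and I would show this equals $\occ_\depth^{c,1}$ --- this follows from the transition formula \Cref{eq|transition} and the marginal/conditional decomposition, because the conditional term of the successor occupancy state does not depend on player~$1$'s decision rule, and because the transition $\occ_{\depth-1}\to\occ_\depth$ was actually performed with the stored $\beta_{\depth-1}^2$ as its player-$2$ component. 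Third, invoke \Cref{backup}, which states precisely that $\upb\nu_\depth^2 = \frac{1}{\occ_\depth^{m,1}}M^{\occ_\depth}_{((\theta^1_\depth,a^1),\cdot)}\cdot\tree_\depth^2$ component-wise upper-bounds $\nu^2_{[\occ_\depth^{c,1},\tree_\depth^2]}$. Chaining these gives the validity condition for a tuple of $\upb\cI_{\depth-1}$, and \Cref{core|thUpperBounds} at depth $\depth-1$ finishes the argument.

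Item~2 is the same argument minus the second step: the tuple $\langle \occ_\depth^{c,1}, \langle \upb\nu_\depth^2, \tree_\depth^2 \rangle \rangle$ already stores the conditional term of the current occupancy state, so nothing has to be pushed back by one time step. Validity here is just that $\upb\nu_\depth^2$ component-wise upper-bounds $\nu^2_{[\occ_\depth^{c,1},\beta_{\depth:}^2]}$ for the induced $\beta_{\depth:}^2$, which is again \Cref{backup} together with the tree/behavioral-strategy equivalence; and since the construction of $\upb V_\depth$ rests on the concavity (\Cref{theo|ConvexConcaveV}) and $\lt{\depth}$-Lipschitz-continuity (\Cref{app|V|LC|occ}) of $V^*_\depth$, every valid tuple contributes a function $\geq V^*_\depth$, so the enlarged $\upb\cJ_\depth$ still yields an upper bound.

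The only genuinely delicate point is the conditional-term identification in item~1: one must be sure that the term $\occ_\depth^{c,1}$ handed over by \Cref{backup} (the conditional of the occupancy state that was actually reached) coincides with the term $\Tc{1}(\occ_{\depth-1}^{c,1},\beta_{\depth-1}^2)$ that the definition of $\upb W_{\depth-1}$ expects inside the stored tuple --- this is exactly where the hypotheses ``the transition was performed by playing $\langle\beta_{\depth-1}^1,\beta_{\depth-1}^2\rangle$'' and the $\beta^1$-independence of the conditional term are used. Everything else --- the pointwise-minimum monotonicity and the strategy equivalence --- is routine bookkeeping.
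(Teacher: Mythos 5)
Your proposal is correct and follows essentially the route the paper itself intends: the paper states this proposition without a separate proof precisely because it is the immediate assembly of \Cref{backup} (proved in \Cshref{sec|compute|nu}), the bound forms of \Cref{core|thUpperBounds} and \Cshref{app|derivingApproximationsV}, the strategy conversion of \Cshref{App|th:eqStrategies}, and the fact that $\Tc{1}$ is independent of $\beta^1_{\depth-1}$ (so $\Tc{1}(\occ^{c,1}_{\depth-1},\beta^2_{\depth-1})=\occ^{c,1}_\depth$ for the transition actually performed). Your explicit identification of that conditional-term step as the only delicate point matches the paper's implicit use of it in the appendix derivations, so nothing is missing.
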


\subsubsection{Initialization}
\label{sec|initialization}

To initialize the bounds $\upb{W}_\depth$ and $\upb{V}_\depth$ for any time step, we begin by generating a trajectory in a forward phase. 
At each time step, a uniform decision rule is picked for both players to derive a sequence of occupancy states $\occ_0, \dots, \occ_{H-1}$.
Then, during a backward phase, for each time step $\depth = H-1, \dots, 1$, we create a tuple $w_{\depth-1,init} = \langle \occ_{\depth-1}^{c,1}, \beta_{\depth-1}^2, \langle \upb\nu_\depth^2, \tree_{\depth}^2 \rangle \rangle$, 
where
\begin{itemize}
    \item $\occ_{\depth-1}^{c,1}$ is the conditional term associated to $\occ_{\depth-1}$;
    \item $\beta_{\depth-1}^2$ is a uniform decision rule;
    \item $\tree_\depth^2$ is
    \begin{itemize}
        \item a degenerate distribution over the only next tuple $w_{\depth+1}$ if $\depth<H-1$ (which induces a concatenation of uniform decision rules for all future time steps);
        \item undefined if $\depth=H-1$;
    \end{itemize}
and
    \item $\upb\nu_\depth^2(\theta_\depth^1) = r_{max} \cdot (H-\depth)$ for any history $\theta_\depth^1$ that player $1$ could face. 
\end{itemize}
Tuples $w_{\depth-1,init}$ are added to sets $\upb\cI_{\depth-1}$.
For any time step $\depth \geq 0$, we similarly create tuples $\langle \occ_\depth^{c,1}, \langle \upb\nu_\depth^2, \tree_{\depth:}^2 \rangle \rangle$ and add them to sets $\upb\cJ_\depth$.
The lower bounds are initialized symmetrically.

We now show that occupancy states can also be prescriptive, allowing one to retrieve an $\epsilon$-NES for the subgame at occupancy state $\occ_\depth$ once the bounds are withing $\epsilon$ from each other, in particular at $\depth=0$.

\subsubsection{Extracting a NES}
\label{sec|extraction}

Vectors $\upb{\nu}^2_0$ upper bounding the value of their associated strategies,
the following result tells when and how to extract an $\epsilon$-optimal solution strategy for this player.

\begin{restatable}[]{theorem}{thInclusionWandV}
\label{th|thinclusionWandV}
\IfAppendix{{\em (originally stated on page~\pageref{th|thinclusionWandV})}}{}
If sets $\upb{\mathcal{J}}_{0}$ and $\lob{\mathcal{J}}_{0}$ are such that
$ 
    \upb{V}_0(\occ_0) - \lob{V}_0(\occ_0) \leq \epsilon,
$ 
then 
$\argmax_{\lob{w} \in \lob{\mathcal{J}}_{0}} \lob{\nu}^2_0 $
and
$\argmin_{\upb{w} \in \upb{\mathcal{J}}_{0}} \upb{\nu}^2_0 $ respectively provide strategies
$\tree_{0}^1$ and $\tree_{0}^2$ that form an $\epsilon$-\nes{} of the zs-POSG. 
\end{restatable}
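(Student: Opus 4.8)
The plan is to show that each of the two extracted tuples carries a strategy whose worst-case (security) value can be read directly off its stored vector, and then to bracket the value of the resulting joint profile between those two security values, which by hypothesis differ by at most $\epsilon$.

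First I would record the degeneracy at $\depth=0$. Since the only length-$0$ joint \aoh{} is $((),())$, the occupancy state $\occ_0$ is the point mass on it, every stored conditional term $\tilde\occ_0^{c,1}$ is trivial, $\occ_0^{m,1}\tilde\occ_0^{c,1}=\occ_0$, and hence the Lipschitz penalty $\lt{0}\norm{\occ_0-\occ_0^{m,1}\tilde\occ_0^{c,1}}_1$ vanishes for every tuple of $\upb\cJ_0$ (and symmetrically for $\lob\cJ_0$). Therefore $\upb V_0(\occ_0)=\min_{\upb w\in\upb\cJ_0}\occ_0^{m,1}\cdot\upb\nu_0^2$ and $\lob V_0(\occ_0)=\max_{\lob w\in\lob\cJ_0}\occ_0^{m,1}\cdot\lob\nu_0^1$; these $\min$/$\max$ over finite nonempty sets (nonempty thanks to the initialization) are attained at the tuples picked by the extraction rule, call them $\upb w^\star$ and $\lob w^\star$, with associated strategies $\tree_0^2$, $\tree_0^1$ and vectors $\upb\nu_0^2$, $\lob\nu_0^1$, so that $\upb V_0(\occ_0)=\occ_0^{m,1}\cdot\upb\nu_0^2$ and $\lob V_0(\occ_0)=\occ_0^{m,1}\cdot\lob\nu_0^1$.

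Next I would bound the two security levels. By \Cref{backup}, together with the fact that every tuple that can ever be inserted in $\upb\cJ_0$ (by the update operator or by the backward initialization phase) meets its hypotheses, the stored vector satisfies $\upb\nu_0^2\geq\nu^2_{[\occ_0^{c,1},\tree_0^2]}$ componentwise; invoking \Cref{theo|ConvexConcaveV}, the value of player $1$'s best response to $\tree_0^2$ is $\max_{\beta^1_{0:}}V_0(\occ_0,\beta^1_{0:},\tree_0^2)=\occ_0^{m,1}\cdot\nu^2_{[\occ_0^{c,1},\tree_0^2]}\leq\occ_0^{m,1}\cdot\upb\nu_0^2=\upb V_0(\occ_0)=:\overline v$, once $\tree_0^2$ is replaced by its equivalent behavioral strategy (\Cshref{App|th:eqStrategies}). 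Symmetrically, $\min_{\beta^2_{0:}}V_0(\occ_0,\tree_0^1,\beta^2_{0:})\geq\lob V_0(\occ_0)=:\underline v$. Since a profile's value lies between its own worst cases, $\underline v\leq V_0(\occ_0,\tree_0^1,\tree_0^2)\leq\overline v$, and $\overline v-\underline v\leq\upb V_0(\occ_0)-\lob V_0(\occ_0)\leq\epsilon$. Hence player $1$'s gain from any deviation is $\overline v-V_0(\occ_0,\tree_0^1,\tree_0^2)\leq\overline v-\underline v\leq\epsilon$ and player $2$'s gain is $V_0(\occ_0,\tree_0^1,\tree_0^2)-\underline v\leq\epsilon$, which is exactly the definition of an $\epsilon$-\nes{}; as $V_0$ is the zs-POSG's own value function, this is an $\epsilon$-\nes{} of the zs-POSG. (For a subgame at an arbitrary $\occ_\depth$ the argument is identical, the only addition being that the mismatch between the stored conditional term $\tilde\occ_\depth^{c,1}$ and the true $\occ_\depth^{c,1}$ is absorbed by the Lipschitz penalty exactly as in \Cref{core|thUpperBounds}.)

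The sandwich step itself is elementary; the real obstacle is the bookkeeping behind the componentwise bound $\upb\nu_0^2\geq\nu^2_{[\occ_0^{c,1},\tree_0^2]}$: one must check that \emph{every} tuple that can appear in $\upb\cJ_0$ — produced either by the update operator of the preceding proposition or by the backward initialization phase — genuinely satisfies the hypotheses of \Cref{backup}, and that the recursively-induced tree strategy of \Cref{fig|dag} can be turned into an ordinary behavioral strategy without altering its value against any opponent response. Both are delegated to earlier results, but they are where the substance sits.
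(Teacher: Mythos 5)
Your proof is correct and follows essentially the same route as the paper's: exploit the degeneracy of $\occ_0$ so the extracted vectors equal $\upb{V}_0(\occ_0)$ and $\lob{V}_0(\occ_0)$, use the fact that the stored vectors bound the best-response values of their associated strategies, and conclude that the two security levels are within $\epsilon$ of each other. Your explicit sandwich of $V_0(\occ_0,\tree_0^1,\tree_0^2)$ and your pointer to \Cref{backup} and the update/initialization validity merely spell out what the paper asserts more tersely (``vectors $\upb{\nu}^2_0$ upper bounding the value of their associated strategies''), so no substantive difference remains.
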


\begin{proof}
\labelT{app|thinclusionWandV}
First, let us notice that, at $\depth=0$, the occupancy-state space is reduced to a singleton, $\{ \occ_0= \langle 1 \rangle \}$, because of the single (empty) joint \aoh{}. The value vectors $\nu$ are thus one-dimensional, and here considered as scalar numbers.

Let us assume that sets $\upb{\mathcal{J}}_{0}$ and $\lob{\mathcal{J}}_{0}$ are such that
\begin{align*}
    \upb{V}_0(\occ_0) - \lob{V}_0(\occ_0)
    & \leq \epsilon,
\end{align*}
and let
$\lob{w}^* = \langle \occ_0^{c,1}, \langle \lob{\nu}_0^*, \tree_0^{1,*} \rangle \rangle$ and
$\upb{w}^* = \langle \occ_0^{c,1} \langle \upb{\nu}_0^*, \tree_0^{2,*} \rangle \rangle$ be the tuples returned by $\argmax_{\lob{w}\in \lob{\cJ}_0} \lob{\nu}_0^2$ and $\argmin_{\upb{w}\in \upb{\cJ}_0} \upb{\nu}_0^1$.
Then, noting that $\occ_0= \langle 1 \rangle$, 
\begin{align*}
    \nu^1_{[\occ_0^{c,2},\tree_0^{1,*}]} - \nu^2_{[\occ_0^{c,1},\tree_0^{2,*}]} 
    & \leq \upb{\nu}_0^* - \lob{\nu}_0^*
    \\
    & = \max_{\lob{w}\in \lob{\cJ}_0} \lob{\nu}_0^2
    - \min_{\upb{w}\in \upb{\cJ}_0} \upb{\nu}_0^1
    \\
    & = 
    \upb{V}_0(\occ_0) - \lob{V}_0(\occ_0)
    \\
    & \leq \epsilon.
\end{align*}
Thus, $\tree_{0}^1$ and $\tree_{0}^2$ are two strategies whose security levels (values against best-responding opponents) are $\epsilon$-close, and thus form an $\epsilon$-\nes{} of the zs-POSG.
\end{proof}

Note: This result can be generalized to any $\occ_\depth$ at later time steps, but this generalization is not used in practice.

Distributions $\tree_{0}^2$ are stored and could be executed as is.
\Cref{app|stratExtraction} still presents a conversion process to retrieve a behavioral strategy $\beta_{0:H-1}^2$ from a distribution $\tree_{0}^2$ over tuples $w \in \upb{\cI}_0$. 
Next, we see how to design a practical HSVI-based algorithm that provably returns sets $\upb{\cJ}_0$ and $\lob{\cJ}_0$ satisfying \Cref{th|thinclusionWandV} after finitely many iterations.

\subsection{HSVI for zs-POSGs}
\label{sec|HSVI}

This section details our adaptation of the general HSVI scheme for $\epsilon$-optimally solving zs-POSGs, and presents a theoretical finite-time convergence property.

\subsubsection{Algorithm}

HSVI for zs-POSGs
is described in \Cref{alg|zsPOSGwithLP+VWs+}.
As vanilla HSVI, it relies on
(i) generating trajectories while acting optimistically (lines
\ref{alg|greedP1}+\ref{alg|greedP2}), \ie, player $1$ (resp. $2$)
acting ``greedily'' w.r.t. $\upbW{\depth}{}$ (resp. $\lobW{\depth}{}$),
and
(ii) locally updating the upper and lower bounds
(lines~\ref{alg|updateUpB}+\ref{alg|updateLoB}).
Both phases rely on solving the same games described by LP~(\ref{eq|LP}).
%
At $\depth=H-1$, \cref{alg|oneShot} selects \dr{}s by solving 
an exact game, and \cref{line|computeDelta}
returns a distribution reduced to the single element added in
\cref{alg|oneShotAddW}.

A key difference with \citeauthor{SmiSim-uai05}'s HSVI algorithm \citep{SmiSim-uai05} lies in the criterion for stopping trajectories.
The branching factor for zs-oMGs being infinite, we make use of $V^*$'s Lipschitz-continuity to implement the same adaptations as  \cite{HorBosPec-aaai17} used for zs-OS-POSGs.
The Lipschitz-continuity allows controlling the variations of the value function within small balls of radius $\radius$ around a previously visited occupancy-state. A finite number of such balls is sufficient to cover the whole space.
Then, \Cref{thm|termination} (below) ensures $\epsilon$-optimality in finite time if stopping trajectories when $\upb{V}_\depth(\occ_\depth) - \lob{V}_\depth(\occ_\depth) \leq \thr(\depth)$, with the threshold function $\thr(\depth) \eqdef
    \gamma^{-\depth}\epsilon - \sum_{i=1}^\depth 2 \radius \lt{\depth-i} \gamma^{-i}$.

 
\begin{algorithm}
  \caption{\zsomg-HSVI($b_0, [ \epsilon, \radius ]$) \newline
    \small [here returning a tuple $w_0$ containing a solution strategy $\lob\tree^1_0$ for player $1$]}
  \label{alg|zsPOSGwithLP+VWs+}
  \DontPrintSemicolon
  \SetKwFunction{FupbUpdate}{$\upb{\text{\bf Update}}$}
  \SetKwFunction{FlobUpdate}{$\lob{\text{\bf Update}}$}
  \SetKwFunction{FRecursivelyTry}{\textbf{Explore}} 
  \SetKwFunction{FzsOMGHSVI}{\textbf{\zsomg-HSVI}}
  %
 
      \Fct{\FzsOMGHSVI{$b_0 \simeq \occ_0$}}{
        \ForEach{$\depth \in 0\twodots H-1$}{
            Initialize 
            $\upb{V}_\depth$, $\lob{V}_\depth$, 
            $\upbW{\depth}{}$, \& $\lobW{\depth}{}$
        }
        \While{ $\left[ \upb{V}_0(\occ_0)-\lob{V}_0(\occ_0)  > thr(0) \right]$ \nllabel{alg|valSigma0} }{
          \FRecursivelyTry{$\occ_0, 0, -, -$}
        }
        \Return{$\argmax_{w_0 \in \lob{\mathcal{J}}_0} \lob{\nu}^1_0$}
      }
        
      \medskip
    
      \Fct{\FRecursivelyTry{$\occ_\depth, \depth, %
          \occ_{\depth-1}, \vbeta_{\depth-1} $}}
      {
        \If{$\left[\upb{V}_\depth(\occ_\depth) - \lob{V}_\depth(\occ_\depth) > thr(\depth) \right]$ \nllabel{alg|valSigmaT}}{
          \eIf{$\depth<H-1$}{
            $\upb\beta_\depth^1 \gets \lp{\upbW{\depth}{}}(\occ)$
            \nllabel{alg|greedP1}\;
            $\lob{\beta}_\depth^2 \gets \lp{\lobW{\depth}{}}(\occ)$
            \nllabel{alg|greedP2}\;
            \FRecursivelyTry{$ T(\occ_\depth, \upb\beta_\depth^1, \lob\beta_\depth^2), %
              \depth+1,%
              \occ_\depth, \langle \upb\beta^1_\depth, \lob\beta^2_\depth \rangle
              $} \nllabel{alg|callExploreRec}
          }({($\depth=H-1$)}){
            $(\upb\beta_\depth^1, \lob\beta^2_\depth) \gets \nes \left( r(\occ, \beta^1_\depth, \beta^2_\depth) \right)$ \nllabel{alg|oneShot}\;

            $\upb{\cI}^1_{\depth} \gets \upb{\cI}^1_{\depth} \cup \{ \langle { %
              \occ^{c,1}_{\depth}, \lob\beta^2_\depth, -
            } \rangle \} $ \nllabel{alg|oneShotAddW}\;

            $\lob{\cI}^2_{\depth} \gets \lob{\cI}^2_{\depth} \cup \{ \langle { %
              \occ^{c,2}_{\depth}, \upb\beta^1_\depth,  -
            } \rangle \} $
            
          }
          $\FupbUpdate ({\upbW{\depth-1}{}, %
            \langle \occ_\depth, %
            \occ^{c,1}_{\depth-1}, \lob\beta^2_{\depth-1} 
            \rangle }) $ \nllabel{alg|updateUpB}\;

          $\FlobUpdate ({\lobW{\depth-1}{}, %
              \langle \occ_\depth, %
              \occ^{c,2}_{\depth-1}, \upb\beta^1_{\depth-1} 
              \rangle }) $  \nllabel{alg|updateLoB}\;
        }
      }
      
      \medskip
       
      \Fct{\FupbUpdate{$\upbW{\depth-1}{}, \langle { %
            \occ_\depth, %
            \occ^{c,1}_{\depth-1}, \upb\beta^2_{\depth-1}
          } \rangle $ } %
      }
      {
        \nllabel{alg|UpdateFunction}
        $\langle \upb\nu^2_\depth, \tree_\depth^2 \rangle %
        \gets \dlp{\upbW{\depth}{}}(\occ_\depth, )$
        \nllabel{line|computeDelta}
        
        $\upb{\cI}_{\depth-1} \gets \upb{\cI}_{\depth-1} \cup \{ \langle { %
          \occ^{c,1}_{\depth-1}, \lob\beta^2_{\depth-1}, \langle \upb\nu^2_\depth, \tree^2_\depth \rangle
        } \rangle \} $
        \nllabel{alg|UpdateFunction|end}
        
        $\upb{\mathcal{J}}_{\depth} \gets \upb{\mathcal{J}}_{\depth} \cup \{ \langle { %
          \occ_\depth^{c,1}, \langle \upb\nu^2_\depth, \tree_\depth^2 \rangle
        } \rangle \}$
                \nllabel{alg|UpdateFunction|endJ}

      }

\end{algorithm}


\paragraph{Setting \texorpdfstring{$\radius$}{Rho}}

As can be observed, this threshold function should always return
positive values, which requires a small enough (but $>0$) $\radius$.
For a given problem (\cf \Cshref{lem|MaxRadius},
\Cshref{sec|settingRadius}), the maximum possible value
$\radius_{\max}$ depends on the Lipschitz constants at each time step,
which themselves depend on the initial upper and lower bounds of the
optimal value function.
Setting $\radius \in (0,\radius_{\max})$ means making a trade-off between generating many trajectories (small $\radius$) and long ones
(large $\radius$).

\subsubsection{Finite-Time Convergence} 

\begin{restatable}[Proof in \Cshref{sec|ConvergenceProof}]{theorem}{thmTermination}
  \labelT{thm|termination}
  \IfAppendix{{\em (originally stated on page~\pageref{thm|termination})}}{}
  \zsomg-HSVI (\Cshref{alg|zsPOSGwithLP+VWs+}) terminates in
  finite time with an $\epsilon$-approximation of $V^*_0(\occ_0)$
  that statisfies \Cref{th|thinclusionWandV}.
\end{restatable}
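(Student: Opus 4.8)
The plan is to follow the standard three-part template for finite-time convergence of HSVI-type solvers (\citeauthor{SmiSim-uai05}'s HSVI and its zero-sum adaptation by \citet{HorBosPec-aaai17}): (A) the maintained bounds remain sound throughout the run, so that exiting the main loop already yields the announced guarantee; (B) every trajectory generated by \textbf{Explore} is finite; (C) only finitely many trajectories are ever generated.

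For (A), I would prove by induction on the execution of \Cref{alg|zsPOSGwithLP+VWs+} that at every depth $\depth$ and everywhere on $\Occ_\depth$ one has $\lob{V}_\depth \le V^*_\depth \le \upb{V}_\depth$, with the analogous sandwich for $\upbW{\depth}{}$ and $\lobW{\depth}{}$. The base case is the initialisation of \Cref{sec|initialization}, whose tuples encode the crude bounds $\pm r_{\max}(H-\depth)$; the inductive step uses the validity of the backup and update operators (\Cref{backup} and the ensuing proposition), which preserve the bounding property, and the fact that at $\depth=H-1$ \cref{alg|oneShot} returns an exact minimax pair for the one-shot game $r(\occ_{H-1},\cdot,\cdot)$, so that the fresh tip tuples are consistent with $V^*_{H-1}$. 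Since the while guard (\cref{alg|valSigma0}) is $\upb{V}_0(\occ_0)-\lob{V}_0(\occ_0)>\thr(0)$ and $\thr(0)=\epsilon$ (the sum defining $\thr$ is empty at $\depth=0$), on exit we have $\upb{V}_0(\occ_0)-\lob{V}_0(\occ_0)\le\epsilon$, and \Cref{th|thinclusionWandV} then extracts from the returned tuple an $\epsilon$-\nes{} of the zs-POSG. It therefore only remains to bound the number of iterations.

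For (B), finiteness of $H$ forces each nested \textbf{Explore} call to increase the depth, so a trajectory visits at most $H$ occupancy states and returns; in particular the while body always terminates. For (C), I would run the usual Lipschitz-based covering argument. Its engine is a width-propagation lemma bounding, after the backups of \cref{alg|updateUpB,alg|updateLoB}, the gap $\width_\depth(\occ_\depth)\eqdef\upb{V}_\depth(\occ_\depth)-\lob{V}_\depth(\occ_\depth)$ at a visited occupancy state in terms of a $\gamma$-contraction of the gap at its deterministic successor $\occ_{\depth+1}=T(\occ_\depth,\upb\beta^1_\depth,\lob\beta^2_\depth)$; this uses the greedy (``optimistic'') selection of $\upb\beta^1_\depth,\lob\beta^2_\depth$, the convex--concave structure (\Cref{theo|ConvexConcaveV}) and the Bellman equation (\Cref{th|bellmanPpe}) to ensure these greedy responses drive the gap down, and the observation that a successor's conditional term does not depend on the acting player's own decision rule, so that the backed-up tuples can be anchored along the trajectory with vanishing Lipschitz-correction terms. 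Calibrating this recursion against the identity $\thr(\depth)=\gamma\,\thr(\depth+1)+2\radius\lt{\depth}$ (obtained by direct computation from the closed form of $\thr$), together with the exact solve of \cref{alg|oneShot} collapsing the gap at depth $H-1$ to $0\le\thr(H-1)$ (finite and positive because $\radius<\radius_{\max}$ keeps every $\thr(\depth)$ positive), yields a ``progress'' statement: a trajectory that recurses past depth $\depth$ leaves $\width_\depth$ resolved below $\thr(\depth)$ on a whole $\radius$-ball around the visited occupancy state, the $2\radius\lt{\depth}$ margin in $\thr$ being exactly what the $2\lt{\depth}$-Lipschitz continuity of $\width_\depth$ (\Cref{app|V|LC|occ} and the construction of $\upb{V}_\depth,\lob{V}_\depth$) can ``spend'' to cover that ball. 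Since the bounds only tighten, no later trajectory can recurse through an already-covered ball; hence the occupancy states at each depth through which trajectories recurse are pairwise $\radius$-separated. As each $\Occ_\depth$ is a bounded subset of the finite-dimensional simplex over $\vTh_\depth$, it has a finite $\radius$-packing number $N_\depth$, and since every trajectory recurses at least at $\depth=0$ (the main guard being true), at most $\sum_{\depth=0}^{H-1}N_\depth$ trajectories are produced. With (A), this proves the theorem.

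I expect the width-propagation / progress lemma to be the main obstacle. Its proof must (i) track carefully which of $\upb{\cI}_\depth,\upb{\cJ}_\depth$ (and their lower-bound analogues) is updated by which call, and in which order, as a trajectory unwinds; (ii) exploit that a successor's conditional term is independent of the acting player's own decision rule so that the Lipschitz-correction terms of the freshly added tuples do not degrade the contraction along the trajectory; and (iii) handle the asymmetry between $\upb\beta^1_\depth$ (greedy for the upper bound) and $\lob\beta^2_\depth$ (greedy for the lower bound), which is where \Cref{theo|ConvexConcaveV} and \Cref{th|bellmanPpe} are needed to relate the greedy responses back to $V^*_{\depth+1}$. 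The threshold function $\thr$ and the cap $\radius_{\max}$ are precisely engineered so that the resulting slack terms telescope against the $\radius$-ball radius. The remaining ingredients --- the soundness induction, the trivial trajectory-length bound, and the compactness/packing step --- are routine.
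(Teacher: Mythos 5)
Your plan matches the paper's own proof in all essentials: the paper establishes exactly your ``width-propagation'' engine via three lemmas (a contraction lemma showing the updated $\upbW{\depth}{}-\lobW{\depth}{}$ gap at a visited $\occ_\depth$ is at most $\gamma\,\thr(\depth+1)$, a monotonicity lemma for the bounds, and a lemma relating the updated $\upb{V}_\depth,\lob{V}_\depth$ to $\upbW{\depth}{},\lobW{\depth}{}$), then uses $\lt{\depth}$-Lipschitz continuity and the identity $\thr(\depth)=\gamma\,\thr(\depth+1)+2\radius\lt{\depth}$ to conclude, by induction from $\depth=H$ down to $0$, that expanded occupancy states at each depth must be pairwise $\radius$-separated, which by compactness of $\Occ_\depth\times\cB_\depth$ (your packing-number step, phrased there as a contradiction) yields finitely many trajectories. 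So the proposal is correct and takes essentially the same route as the paper.
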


The finite time complexity suffers from the same combinatorial 
explosion as for Dec-POMDPs, and is even worse as we have to handle 
``infinitely branching'' trees of possible futures.
More precisely, the bound on the number of iterations depends on the 
number of balls of radius $\radius$ required to cover occupancy 
simplexes at each depth.

Also, the following proposition allows solving infinite horizon problems as 
well (when $\gamma<1$) by bounding the length of HSVI's trajectories 
using the boundedness of $\upb{V}-\lob{V}$ and the exponential growth of 
$thr(\depth)$.

\begin{restatable}[Proof in \Cshref{proofLemFiniteTrials}]{proposition}{lemFiniteTrials}
  \labelT{lem|finiteTrials}
  \IfAppendix{{\em (originally stated on
      page~\pageref{lem|finiteTrials})}}{}
  When $\gamma<1$,
  the length of trajectories is
  upper bounded by
  $ 
    T_{\max}
    \eqdef \ceil*{
      \log_{\gamma} 
      \frac{
        \epsilon - \frac{2 \radius \l^\infty}{1-\gamma}
      }{
        \WUL - \frac{2 \radius \l^\infty}{1-\gamma}
      }
    } $,
where $\l^\infty$ is a depth-independent Lipschitz constant and $\WUL \eqdef \norm{ \upb{V}^{(0)}-\lob{V}^{(0)}}_\infty$ is the maximum width between initializations.
\end{restatable}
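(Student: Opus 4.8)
The plan is to read the bound directly off the trajectory-stopping rule of \Cref{alg|zsPOSGwithLP+VWs+}: the procedure \textbf{Explore} recurses from depth $\depth$ into depth $\depth+1$ only while $\upb{V}_\depth(\occ_\depth)-\lob{V}_\depth(\occ_\depth) > \thr(\depth)$, so it suffices to determine the largest depth $\depth$ at which this inequality can hold, using that its left-hand side is uniformly bounded. First I would note that the gap never exceeds $\WUL$: every update performed in \Cref{alg|zsPOSGwithLP+VWs+} only inserts elements into the sets $\upb{\cJ}_\depth,\lob{\cJ}_\depth$ (and $\upb{\cI}_\depth,\lob{\cI}_\depth$), so $\upb{V}_\depth$ is non-increasing and $\lob{V}_\depth$ non-decreasing along the run while both keep sandwiching $V^*_\depth$; hence $\upb{V}_\depth(\occ_\depth)-\lob{V}_\depth(\occ_\depth)\le \upb{V}^{(0)}_\depth(\occ_\depth)-\lob{V}^{(0)}_\depth(\occ_\depth)\le\WUL$ at every iteration and every reachable $\occ_\depth$.

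Next I would lower-bound $\thr(\depth)$. In the discounted case ($\gamma<1$) every depth-dependent constant $\lt{j}$ can be replaced by a single $\l^\infty\ge\lt{j}$ (the $H\to\infty$ limit of the Lipschitz constant of \Cref{app|V|LC|occ}, i.e.\ $\l^\infty=\tfrac{1}{2}\tfrac{1}{1-\gamma}(r_{\max}-r_{\min})$). Using $\sum_{i=1}^\depth\gamma^{-i}=\tfrac{\gamma^{-\depth}-1}{1-\gamma}$,
\[
  \thr(\depth)=\gamma^{-\depth}\epsilon-\sum_{i=1}^\depth 2\radius\,\lt{\depth-i}\,\gamma^{-i}
  \ \ge\ \gamma^{-\depth}\epsilon-\frac{2\radius\l^\infty}{1-\gamma}\bigl(\gamma^{-\depth}-1\bigr)
  \ =\ \gamma^{-\depth}\Bigl(\epsilon-\tfrac{2\radius\l^\infty}{1-\gamma}\Bigr)+\tfrac{2\radius\l^\infty}{1-\gamma}.
\]
The requirement $\radius<\radius_{\max}$ that makes $\thr$ positive at all depths forces $\epsilon-\tfrac{2\radius\l^\infty}{1-\gamma}>0$, and, since a trajectory is generated only when the initial gap exceeds $\thr(0)=\epsilon$, we also get $\WUL>\epsilon>\tfrac{2\radius\l^\infty}{1-\gamma}$.

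Finally I would chain the two estimates: if some trajectory reaches depth $\depth+1$, then $\WUL>\gamma^{-\depth}\bigl(\epsilon-\tfrac{2\radius\l^\infty}{1-\gamma}\bigr)+\tfrac{2\radius\l^\infty}{1-\gamma}$, hence $\gamma^{-\depth}<\dfrac{\WUL-2\radius\l^\infty/(1-\gamma)}{\epsilon-2\radius\l^\infty/(1-\gamma)}$; both fractions being positive, $\gamma^{\depth}>\dfrac{\epsilon-2\radius\l^\infty/(1-\gamma)}{\WUL-2\radius\l^\infty/(1-\gamma)}$, and applying the decreasing map $\log_\gamma$ yields $\depth<\log_\gamma\dfrac{\epsilon-2\radius\l^\infty/(1-\gamma)}{\WUL-2\radius\l^\infty/(1-\gamma)}$. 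Thus no trajectory extends past depth $T_{\max}=\bigl\lceil\log_\gamma\tfrac{\epsilon-2\radius\l^\infty/(1-\gamma)}{\WUL-2\radius\l^\infty/(1-\gamma)}\bigr\rceil$. I expect the two delicate points to be (i) the uniform width bound — conceptually trivial, but it must be stated carefully as it relies on every operator of \Cref{alg|zsPOSGwithLP+VWs+} being a genuine tightening relative to the initialization — and (ii) the passage to a depth-independent Lipschitz constant when $H$ is infinite, together with verifying the sign conditions ($\epsilon>\tfrac{2\radius\l^\infty}{1-\gamma}$ and $\WUL>\tfrac{2\radius\l^\infty}{1-\gamma}$) needed to invert the inequality and take logarithms.
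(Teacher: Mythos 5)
Your proposal is correct and follows essentially the same route as the paper: the trajectory can only be extended while the (uniformly bounded by $\WUL$) gap exceeds $\thr(\depth)$, the depth-dependent Lipschitz constants are replaced by $\l^\infty$ so that $\thr(\depth)\geq \gamma^{-\depth}\bigl(\epsilon-\tfrac{2\radius\l^\infty}{1-\gamma}\bigr)+\tfrac{2\radius\l^\infty}{1-\gamma}$, and the resulting inequality $\WUL>\thr(\depth)$ is inverted using the positivity guaranteed by \Cref{lem|MaxRadius} and $\log_\gamma$. Your additional remarks on the monotone tightening of the bounds and on the sign conditions needed before taking logarithms are implicit in the paper's proof but do not change the argument.
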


\section{Experiments}
\label{sec|XPs}

Experiments presented in this section aim at validating the proposed approach and comparing its behavior to the behavior of some reference algorithms.
%
%

\subsection{Setup}

\subsubsection*{Benchmark Problems}

Five benchmark problems were used.
Adversarial Tiger and Competitive Tiger were introduced by
\citet{Wiggers-msc15}.
Mabc and Recycling Robot are well-known Dec-POMDP benchmark problems (\cf \url{http://masplan.org}) %
and were adapted to our competitive setting by making player $2$ minimize (rather than maximize) the objective function. 
The fifth benchmark is the adaptation of the well-known Matching Pennies game detailed in \Cref{ex|MP}, with a small difference in that $r(s_h,\cdot,a_h)=+2$ instead of $+1$;
this change breaks the symmetry in the optimal strategy, so that HSVI can not find the NES by "chance" by trying uniform strategies.
We only consider finite horizons $H$ and $\gamma=1$.
\Cref{tab|BenchmarkNumberStatesActionsObservations} gives the cardinal of the state, action and observation sets for each of these problems.

\begin{table}[ht]
    \caption{Number of states/actions/observations for each benchmark problem}
    \centering
    \label{tab|BenchmarkNumberStatesActionsObservations}
    \begin{tabular}{l|cc@{ }cc@{ }c}
      \toprule
      & $\mathcal{S}$ &  $\mathcal{A}^1$ &  $\mathcal{A}^2$ & $\mathcal{O}^1$ & $\mathcal{O}^2$ \\
      \midrule
      Competitive Tiger & 2 & 4 & 4 & 3 & 3  \\
      Adversarial Tiger & 2 & 3 & 2 & 2 & 2 \\
      Recycling Robot & 4 & 3 & 3 & 2 & 2 \\
      Mabc & 4 & 2 & 2 & 2 & 2 \\
      Matching Pennies & 3 &  2 & 2 & 1 & 1 \\
      \bottomrule
    \end{tabular}
\end{table}

\subsubsection*{Algorithms}
\label{sec|XP|algorithms}

\def\checklistCiteCreators{
For conciseness, \Cref{alg|zsPOSGwithLP+VWs+} is here denoted HSVI, and compared against %
{\em Random} search and {\em Informed} search \cite{Wiggers-msc15}
(both using \citeauthor{Wiggers-msc15}'s  implementation {\scriptsize (unlicensed and unreleased})), 
SFLP \citep{KolMegSte-geb96}, %
and
\CFR{} \cite{Tammelin-arxiv14} %
(both using open\_spiel \citep{LanctotEtAl2019OpenSpiel} {\scriptsize (Apache license)}).
}
\checklistCiteCreators

All algorithms (but SFLP, which is exact) used a target error $\epsilon = 1$\% of the initial gap $H \cdot (r_{\max}-r_{\min})$.
HSVI ran with %
$\lambda_\depth = (H-\depth) \cdot ( r_{\max} - r_{\min} )$, %
and
$\rho$ the middle of its feasible interval.
We also use FB-HSVI's LPE lossless compression of probabilistically equivalent action-observation histories in occupancy states, %
so as to reduce their dimensionality \citep{DibAmaBufCha-jair16}.
Experiments ran on an Ubuntu machine with i7-10810U 1.10\,GHz Intel processor and 16\,GB available RAM, and the code is available under MIT license at \url{https://gitlab.com/aureliendelage1/hsviforzsposgs}.

Random and Informed, only ran once, providing fairly representative results.

\subsection{Results}

\paragraph{Performance Measures}

A common performance measure in 2-player zero-sum games is the {\em exploitability} of a strategy $\beta^i_{0:}$, \ie, the difference between the strategy's {\em security level} (the value of $\neg i$'s best response to $\beta^i_{0:}$) and the Nash equilibrium value $V^*_0(\occ_0)$:
\begin{align*}
    \text{exploitability}(\beta^i_{0:})
     & = \abs{ V^*(\occ_0) - \occ^{m,1}_0 \cdot \nu^i_{[\occ^{c,\neg i}_0,\beta^i_{0:}]} } \\
     & = \abs{ V^*(\occ_0) - \nu^i_{[\occ^{c,\neg i}_0,\beta^i_{0:}]} },
\end{align*}
noting that $\occ_0$ is a degenerate distribution over a single element, the pair of empty action-observation histories.
In our setting, it will be convenient to look at the (average) {\em exploitability of a strategy profile} $\langle \beta_{0:}^1, \beta_{0:}^2 \rangle$:
\begin{align*}
    \text{exploitability}(\beta_{0:}^1, \beta_{0:}^2)
    & = \frac{
        ( V^*(\occ_0) - \nu^1_{[\occ^{c,2}_0,\beta^1_{0:}]} )
        +
        ( \nu^2_{[\occ^{c,1}_0,\beta^2_{0:}]} - V^*(\occ_0) )
        }{2} \\
    & = \frac{
        \nu^2_{[\occ^{c,1}_0,\beta^2_{0:}]} - \nu^1_{[\occ^{c,2}_0,\beta^1_{0:}]}
        }{2}.
\end{align*}
This quantity is a more concise statistic than both individual exploitabilities, and can be obtained by solving two POMDPs (fixing one player's strategy or the other) without requiring to know the actual NEV.

This exploitability can also be defined as half of the {\em gap between security levels} (SL-gap).
To analyze the convergence of algorithms with respect to the initial gap, we will look at the {\em SL-gap percentage}, \ie,
\begin{align*}
    \text{SL-gap percentage}(\beta_{0:}^1, \beta_{0:}^2)
    & = \frac{
        \nu^2_{[\occ^{c,1}_0,\beta^2_{0:}]} - \nu^1_{[\occ^{c,2}_0,\beta^1_{0:}]}
    }{
        H\cdot (R_{\max} - R_{\min})
    } \\
    & = \frac{
        2 \cdot \text{exploitability}(\beta_{0:}^1, \beta_{0:}^2)
    }{
        H\cdot (R_{\max} - R_{\min})
    }
    .
\end{align*}

\subsubsection{Comparison with the state of the art}

\Cref{tab|resExp} gives the convergence time of \citeauthor{Wiggers-msc15}'s two heuristic algorithms, \CFR, SFLP, and HSVI on the benchmark problems with various horizons, or the SL-gap percentage when reaching a 1\,h time limit.
%
Executions not returning any result (\ie, for Random, Informed and \CFR, not performing a single iteration) are noted out-of-time {\sc [oot]}.

This table first shows that HSVI always outperforms the heuristic baseline provided by \citeauthor{Wiggers-msc15}'s algorithms, thus proving the interest of an HSVI scheme.
However, HSVI is outperformed by both SFLP and \CFR{}, unless they run out of time. 
As can be noted, HSVI is able to keep improving even when the horizon grows thanks to the LPE compression, taking advantage of underlying structure in some games (\eg, Recycling Robot, a problem with transition+observation independence (TOI), when scaling to larger horizons). 

\def\compTigerInitGap{12.0}
\def\advTigerInitGap{8.0}
\def\recyclingInitGap{3.552}
\def\mabcInitGap{0.4}
\def\mpInitGap{3.0}

\begin{table}
  \def\tunits{time/gap}
  \def\oom{{\scriptsize \sc [oom]}}
  \def\oot{{\scriptsize \sc [oot]}}
  \def\eer{{\scriptsize \sc [eer]}}
  \def\pcent{{\%}}
  \newcommand{\pgap}[3]
  {\fpeval{#1/(#2*#3) * 100}\,{\pcent}}
  \newcommand{\pgapB}[3]
  {\fpeval{#1/(#2*#3) * 100}\,{\pcent}}
  \caption{
  Comparison of different solvers on various benchmark problems.
    Reported values are %
    the running times until the algorithm's error gap (based on bounds for HSVI) is lower than 1\,\pcent{}, or,
    if the timeout limit is reached,
    the security-level gap percentages (100\,\% if gap $=H \cdot (R_{\max}-R_{\min})$).
    {
    %
    %
    } 
    Notes:
    (1) Horizons with a star exponent ($H^*$) are those for which the security-level computations ran out of time so that, for HSVI, we give the gap between the pessimistic bounds. 
    (2) Even though Random and Informed contain randomness, we ran them only once, getting fairly representative results.
  }
  \label{tab|resExp}
    \centering
    \adjustbox{max width=\linewidth}
    {
\sisetup{
round-mode = places,
round-precision = 1,
table-format=2.1,
}%
    \begin{tabular}[t]{llSSSSSS}
    \toprule
    Domain & H & \multicolumn{2}{c}{Wiggers} & HSVI & SFLP & {\CFR} \\ 
    \cmidrule(lr){3-4}
    & & {Rand.} & {Inf.} \\ 
    \midrule
    \multirow{3}{*}{\shortstack[l]{Comp\\Tiger}}
    & 2 & \pgap{0.63}{\compTigerInitGap}{2} & \pgap{2.00}{\compTigerInitGap}{2} &  {6 } \si{\second} & \BS {1 } \si{\second} & {18 } \si{\second}\\
    & 3 & \pgap{2.53}{\compTigerInitGap}{3} & \pgap{2.20}{\compTigerInitGap}{3} & \pgap{1.38}{\compTigerInitGap}{3} & \BS {48 } \si{\second}  & {30 } \si{\minute}\\
    & 4 & \pgap{5.81}{\compTigerInitGap}{4} & \pgap{3.71}{\compTigerInitGap}{4} & \pgap{2.28}{\compTigerInitGap}{4} & \BS {14 } \si{\minute} &\oot\\
     & $5^*$ & \oot & \oot & \pgapB{32}{\compTigerInitGap}{5} & \oot & \oot\\
    \midrule
    \multirow{3}{*}{\shortstack[l]{Rec.\\Robot}}
& 2 & \pgap{0.24}{\recyclingInitGap}{2} & \pgap{0.36}{\recyclingInitGap}{2} & {5 } \si{\second} & \BS {1 } \si{\second} & {30 } \si{\second}\\
    & 3 & \pgap{0.98}{\recyclingInitGap}{3} & \pgap{1.62}{\recyclingInitGap}{3} & {4 }\si{\minute} & \BS {1 } \si{\second} & {13 } \si{\minute}\\
    & 4 & \pgap{2.0}{\recyclingInitGap}{4} & \pgap{2.78}{\recyclingInitGap}{4} & \pgap{0.7}{\recyclingInitGap}{4} & \BS {13 } \si \second & \pgap{0.21}{\recyclingInitGap}{4}\\
    & 5 & \oot & \oot & \BS \pgap{1.9}{\recyclingInitGap}{5} & \oot & \oot \\
    & $6^*$ & \oot & \oot & \BS \pgap{9.70}{\recyclingInitGap}{6} & \oot & \oot \\
    \midrule
    \multirow{3}{*}{\shortstack[l]{Adv\\Tiger}}
    & 2 & {1 } \si{\second} & \pgap{0.59}{\advTigerInitGap}{2} & \BS {1 }\si{\second} & \BS {1 } \si{\second} & \BS {1 } \si{\second}\\
    & 3 & \pgap{0.35}{\advTigerInitGap}{3} & \pgap{1.05}{\advTigerInitGap}{3} & {2 }\si{\minute} & \BS {1 } \si{\second} & {8 } \si{\second}\\
    & 4 & \pgap{0.92}{\advTigerInitGap}{4} & \pgap{1.79}{\advTigerInitGap}{4} & \pgap{0.83}{\advTigerInitGap}{4} & \BS {8 } \si{\second}& {13 } \si{\minute} \\
    \midrule
    \multirow{3}{*}{MABC}
    & 2 & {45 } \si{\second} & \pgap{0.15}{\mabcInitGap}{2} & {8 }\si{\second} & \BS {3 } \si{\second} & {5 } \si{\second}\\
    & 3 & \pgap{0.05}{\mabcInitGap}{3} & \pgap{0.11}{\mabcInitGap}{3} & {27 }\si{\second} & \BS {1 }\si{\second} &{1 } \si{\minute}\\
    & 4 & \pgap{0.29}{\mabcInitGap}{4} & \pgap{0.58}{\mabcInitGap}{4} & \pgap{0.07}{\mabcInitGap}{4} & \BS {3 } \si{\second} & {47 } \si{\minute}\\
    \midrule
    \multirow{3}{*}{MP}
    & 4 & {2 } \si{\minute} & \pgap{5.6}{\mpInitGap}{4}& {5 }\si{\second} & \BS {1 } \si{\second} & {2 } \si{\second}\\
    & 5 & {9 } \si{\minute}& \pgap{6.87}{\mpInitGap}{5} & {1 }\si{\minute} & \BS {1 } \si{\second} & {10 } \si{\second}\\
    & 6 & \pgap{0.40}{\mpInitGap}{6} & \pgap{8.03}{\mpInitGap}{6} & {8 }\si{\minute} & \BS {2 } \si{\second} & {1 } \si{\minute}\\
    \bottomrule
    \end{tabular}
    }
\end{table}

\medskip

We now study the dynamic behavior of the algorithms at hand by providing and analyzing the bounds and exploitability graphs for the same benchmarks.

\subsubsection{Bounding Graphs}
\label{core|boundsGraphs}

Left-side graphs in \Cref{fig|Graphs|AdvTiger,fig|Graphs|CompTiger,fig|Graphs|Mabc,fig|Graphs|Recycling,fig|Graphs|RecyclingH56,fig|Graphs|MP} show how the computed upper- and
lower-bounding values $\upb{V}_0(\occ_0)$ and $\lob{V}_0(\occ_0)$ (respectively the {\em dotted} dark and light green curves) evolve as a function of computation time (always given in seconds).
The {\em solid} dark and light green curves show the security levels $\nu_{[\occ_0,\tree_{0}^2]}^{1}$ and $\nu_{[\occ_0,\tree_{0}^1]}^{2}$ of the current returned strategies $\tree_{0}^2$ and $\tree_0^1$. 
Note that, when best-response computations to obtain security levels are expensive (\eg, for the competitive tiger problem, with $H=4$), they are performed either periodically (\eg every $10$ iterations) or only once, at the end.
In the captions, we indicate the (arbitrary) frequency of the POMDP evaluations. For example, $(1,1,once)$ means that, for the first two horizons, the POMDP evaluations were done after each iteration, and, for the last one, only once (at the end).

Overall, we observe consistent curves with
(i) security levels in-between bounds and around the NEV, and
(ii) bounds converging monotonically.
Note that HSVI stops when the gap between bounds is small enough, while the gap between SLs (used by Informed, Random and CFR, and whose computation can be time-consuming) can be much smaller.
As a matter of fact, one can notice that strategies $\tree_{0}^i$ returned at each iteration
by HSVI are often better (in terms of security level) 
than their pessimistic lower- or upper-bounding guarantees $\lob{\nu}_{[\occ_0,\tree_{0}^1]}^1$ and $\upb{\nu}_{[\occ_0,\tree_{0}^2]}^2$.

\newcommand\evolGraphCaption[1]{\scriptsize
Evolution of the upper- and
  lower-bound value $\upb{V}_0(\occ_0)$ (in red) and
  $\lob{V}_0(\occ_0)$ (in blue) of \omgHSVIlccc{} for the #1 problem
  as a function of time (in seconds).
  Optimal value found by Sequence Form LP in green for reference (when available). 
}
\newcommand\evolGraphSubCaption[1]{\scriptsize #1}
\def\evolGraphScale{0.32} 

\newcommand{\dotvfill}{%
  \par\leaders\hbox{$\cdot$}\vfill}

\subsubsection{Exploitability Graphs}
\label{core|exploitabilityGraphs}

Right-side graphs in \Cref{fig|Graphs|AdvTiger,fig|Graphs|CompTiger,fig|Graphs|Mabc,fig|Graphs|Recycling,fig|Graphs|RecyclingH56,fig|Graphs|MP}
show the exploitability of the returned strategy profile as a function of computation time for HSVI, Random, Informed, and \CFR{} for the different benchmarks considered. A limit precision of $10^{-7}$ (chosen empirically, according to the LP solver's precision) was applied to HSVI's exploitability.

As can be observed, Random and Informed tend to produce reasonable strategies quickly, but struggle to improve them so as to converge towards an $\epsilon$-NES with $\epsilon \simeq 0$.
In contrast, our algorithm keeps improving as computation time increases.
The exploitation graphs support the observed behavior in \Cref{tab|resExp} that HSVI converges in reasonable time compared to \citeauthor{Wiggers-msc15}'s algorithms.
However, the graphs also show that \CFR{} essentially outperforms HSVI when the problems are difficult enough
(\ie, when the temporal horizon grows) but the traversal of the whole tree still remains tractable (thus allowing \CFR{} to perform iterations).
An interesting observation is that, on small enough problems, HSVI achieves very low exploitabilities earlier than \CFR{}.

Finally, HSVI's exploitability graph shares strong similarities with those of \citeauthor{BosEtAl-jair14}'s double-oracle algorithms \citep[Fig.~8 and 11]{BosEtAl-jair14}.
This can be understood as HSVI iteratively building two sets of strategies, one per player, until they are sufficient to support NES profiles, so that the average exploitability is almost zero.
But note that \citeauthor{BosEtAl-jair14} construct LPs using pure strategies (deterministic best responses), while HSVI's strategies are stochastic.

\def\evolGraphScale{.49}

\newcommand{\myEvoExploCaption}[1]{
    \uline{\textbf{#1:}} %
    \textbf{(left)} Evolution of (in dotted lines) the upper- and lower-bound values, and (in solid lines) the security levels of the returned strategies for HSVI as a function of time (ms).
    \textbf{(right)} Exploitability ($=\frac{\text{SL-gap}}{2}$) as a function of time (s) for Random, Informed, \CFR{}, and HSVI.
}

\begin{figure}[ht]%
    \centering    
    
    \includegraphics[width=\evolGraphScale\linewidth]{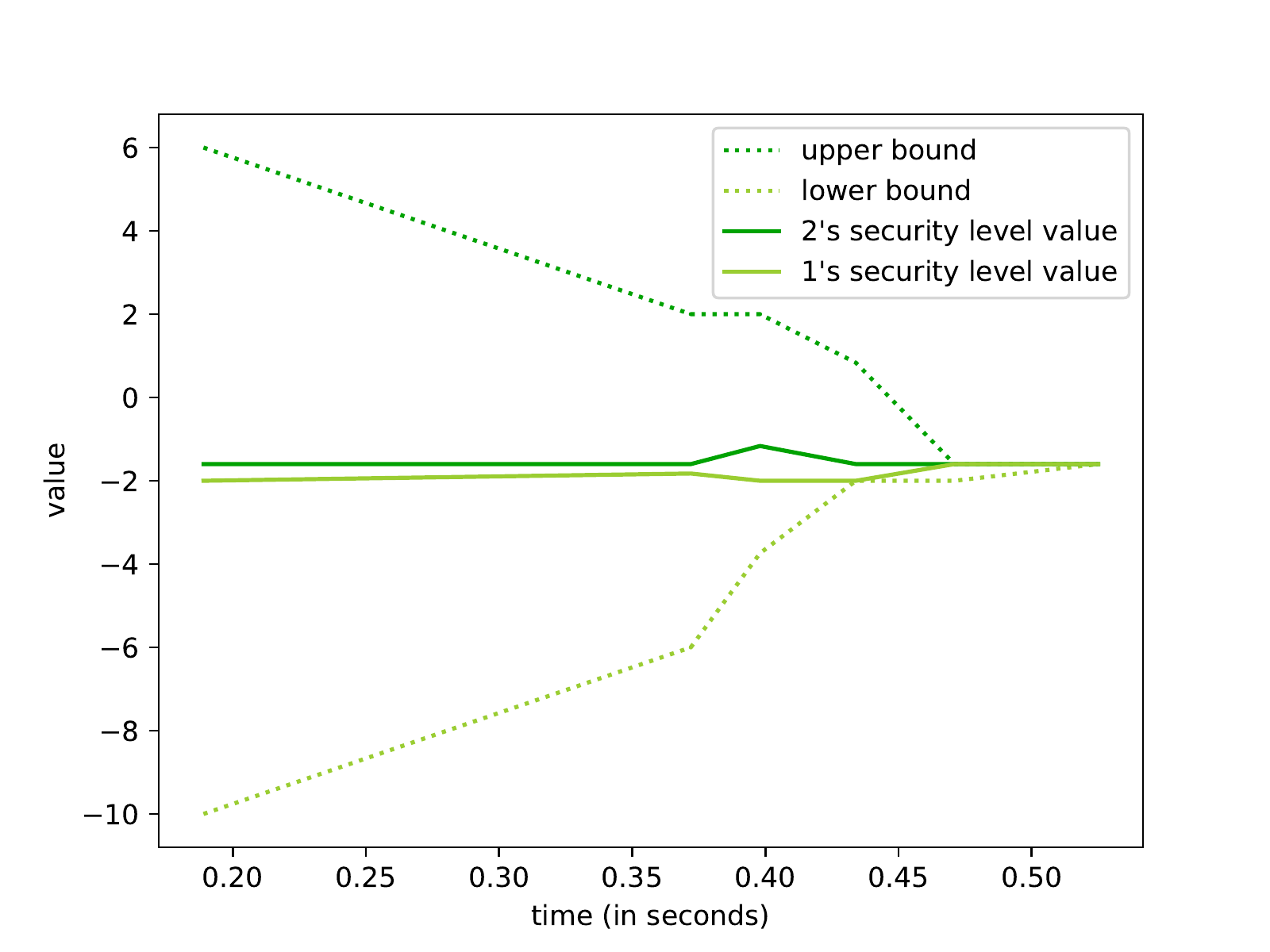}%
    \hfill 
    \includegraphics[width=\evolGraphScale\linewidth]{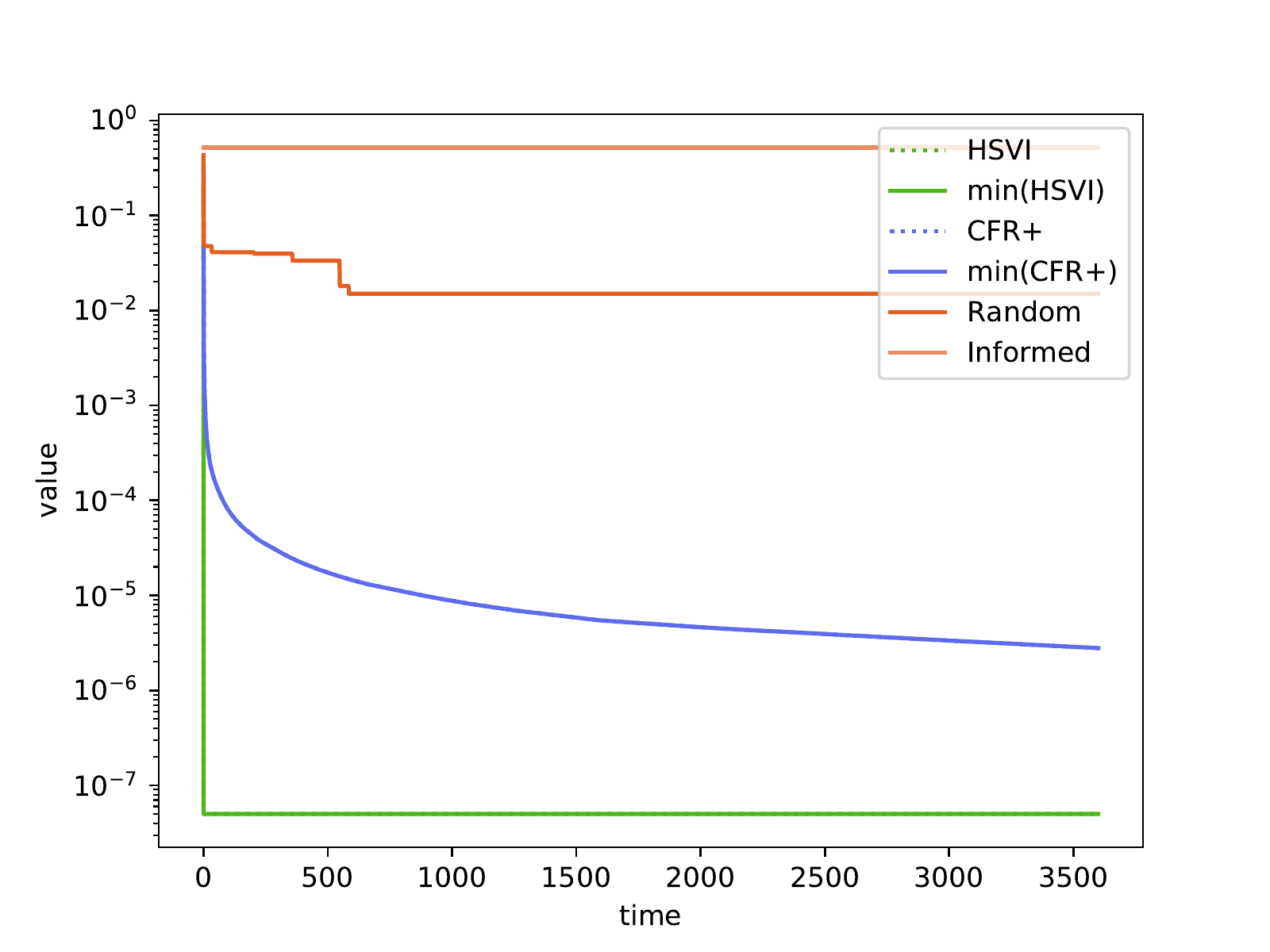}%
        
    \includegraphics[width=\evolGraphScale\linewidth]{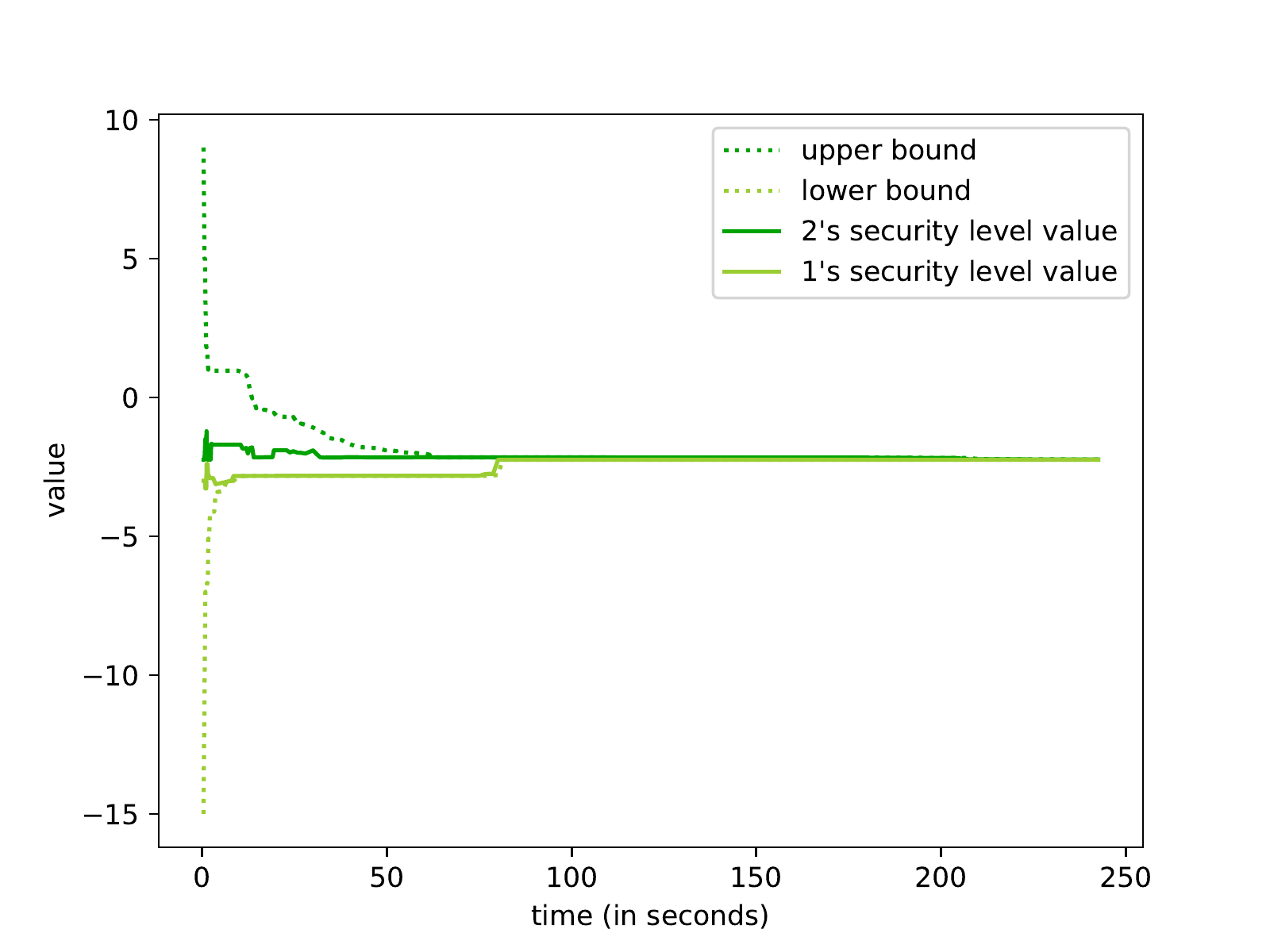}
    \hfill 
    \includegraphics[width=\evolGraphScale\linewidth]{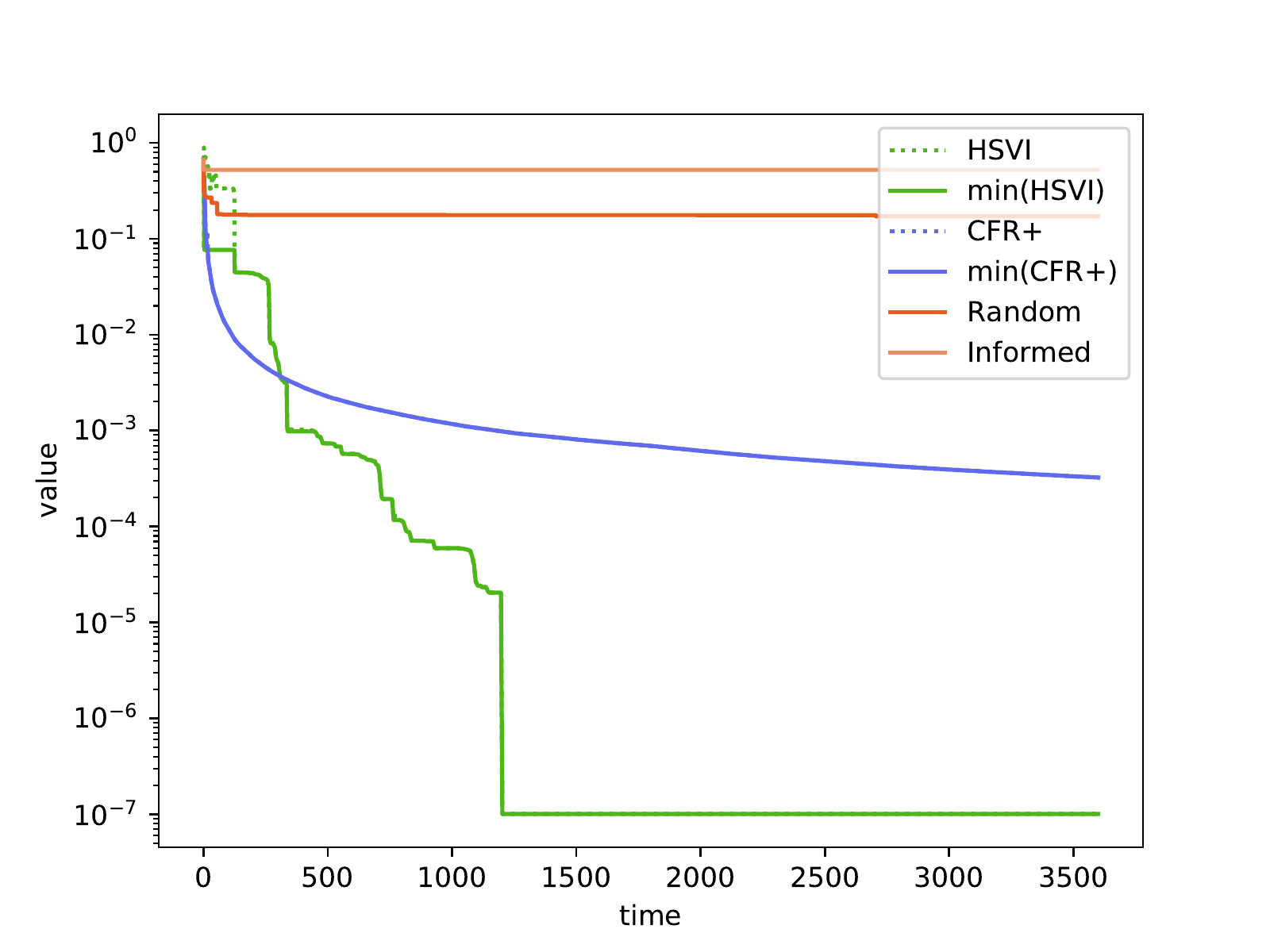}%
    
    \includegraphics[width=\evolGraphScale\linewidth]{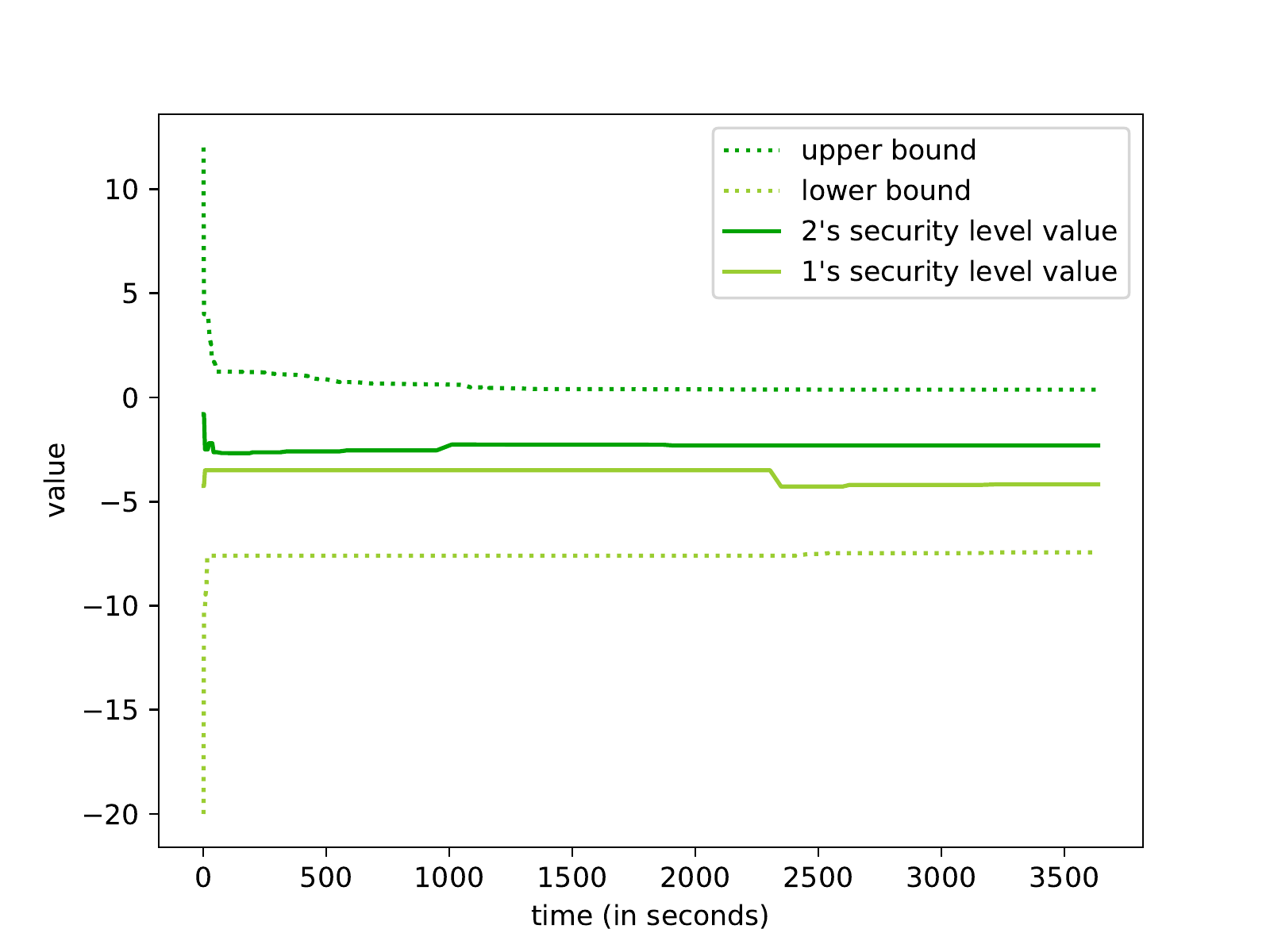}
    \hfill 
    \includegraphics[width=\evolGraphScale\linewidth]{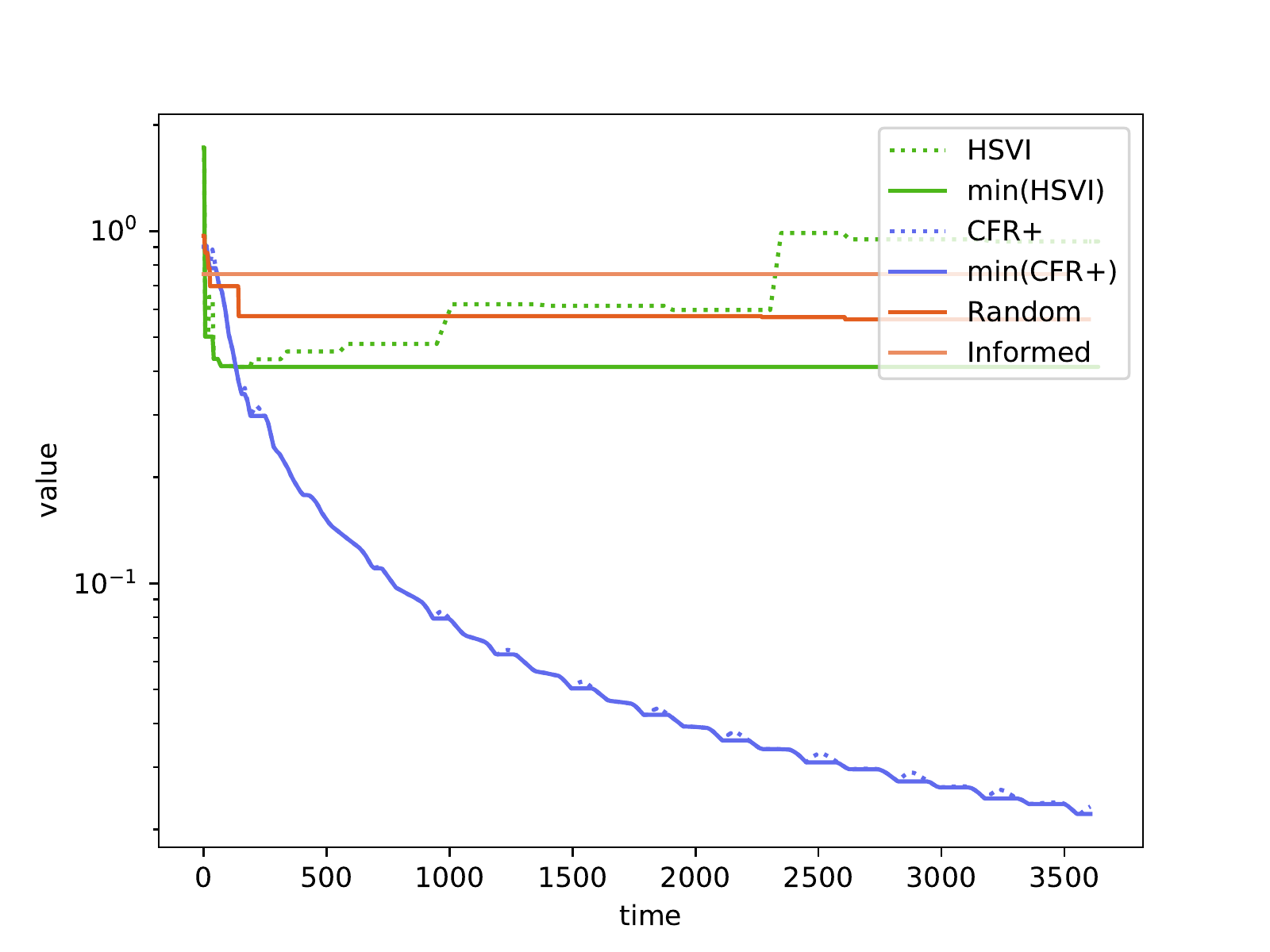}%
        
    \caption{\myEvoExploCaption{Adversarial Tiger ($H=2,3,4$) (1,1,10)}}
    \label{fig|Graphs|AdvTiger}
\end{figure}

\begin{figure}[ht]
    \centering    

    \includegraphics[width=\evolGraphScale\linewidth]{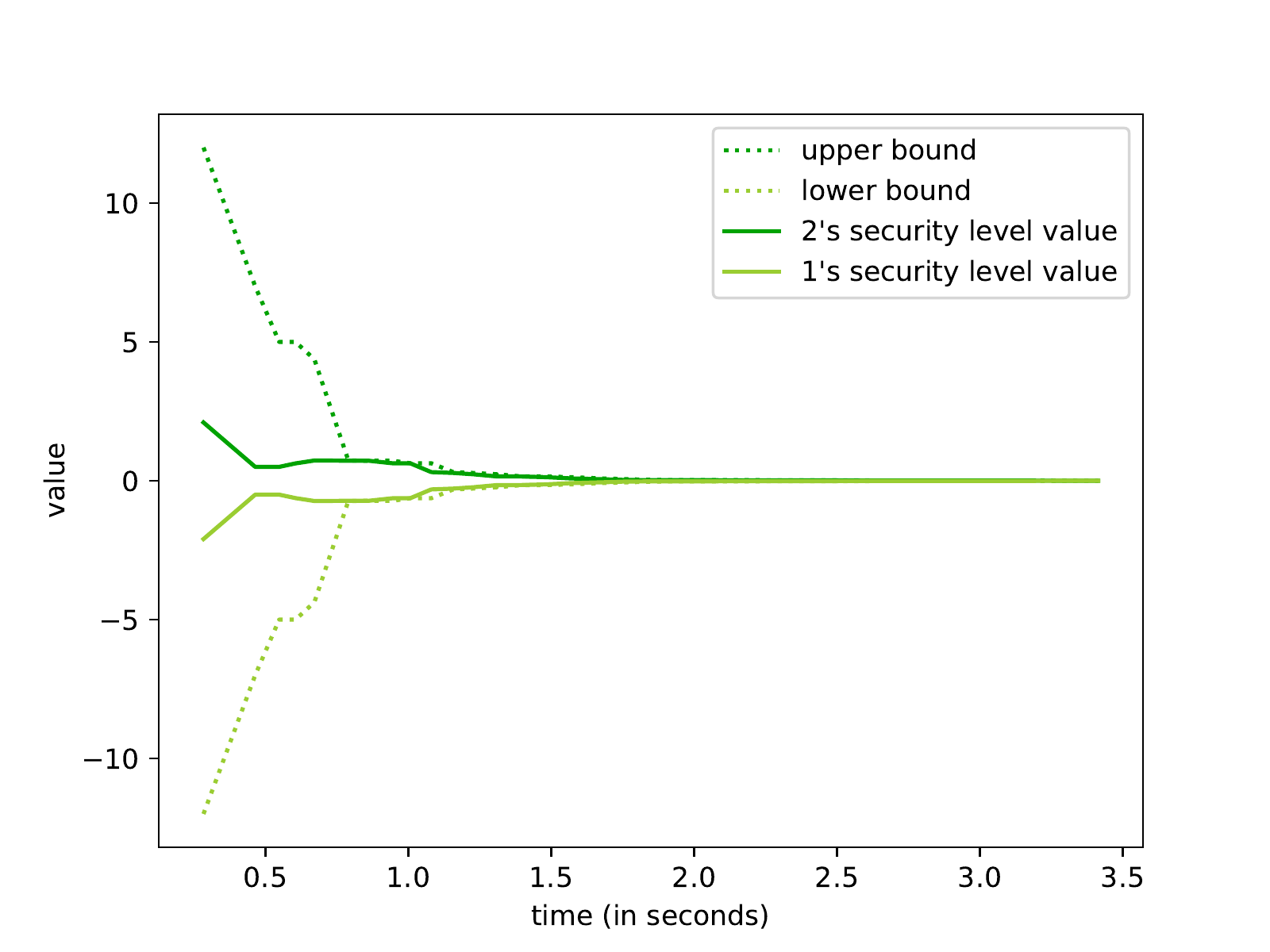}
    \hfill 
    \includegraphics[width=\evolGraphScale\linewidth]{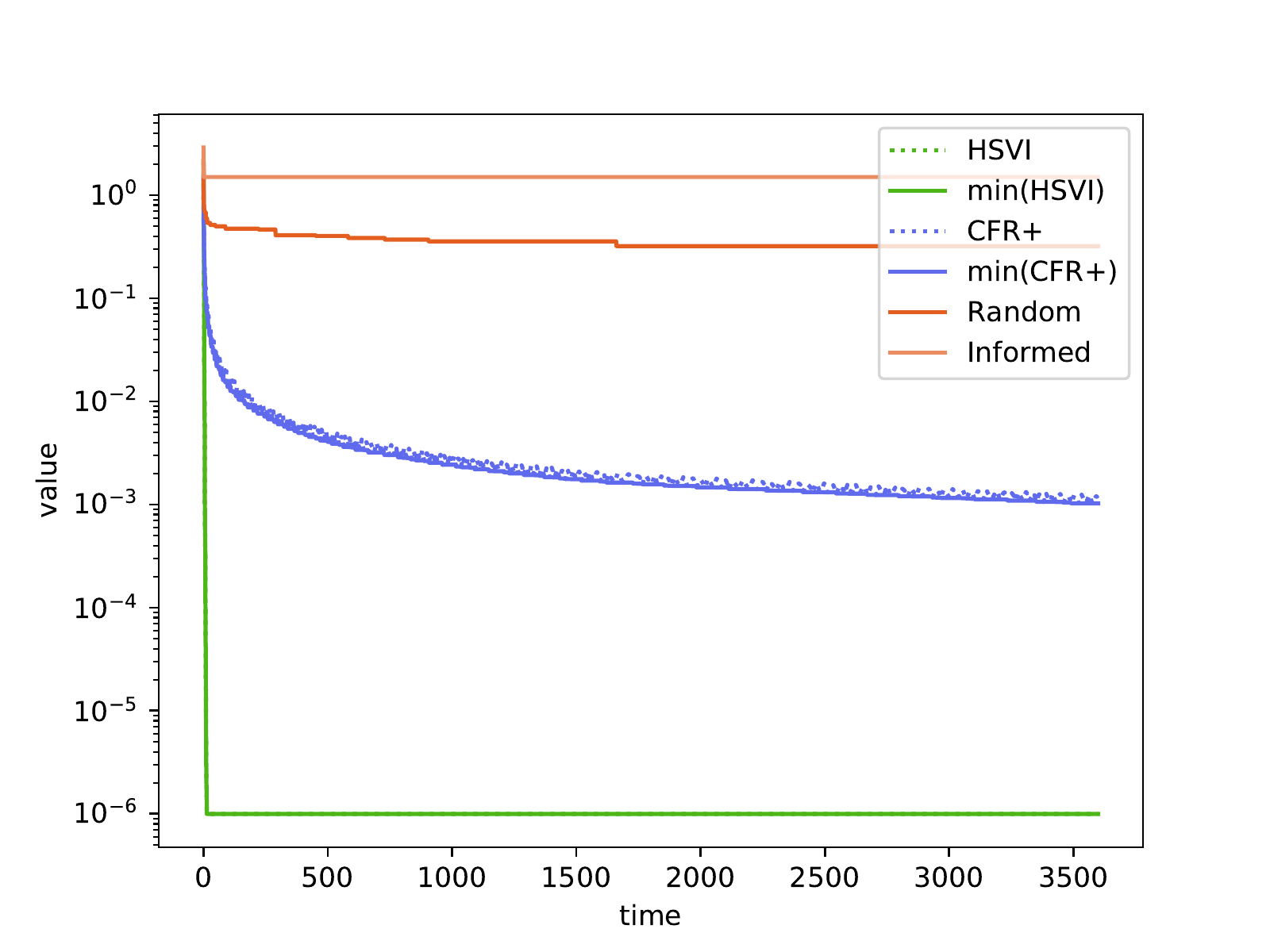}
    
    \includegraphics[width=\evolGraphScale\linewidth]{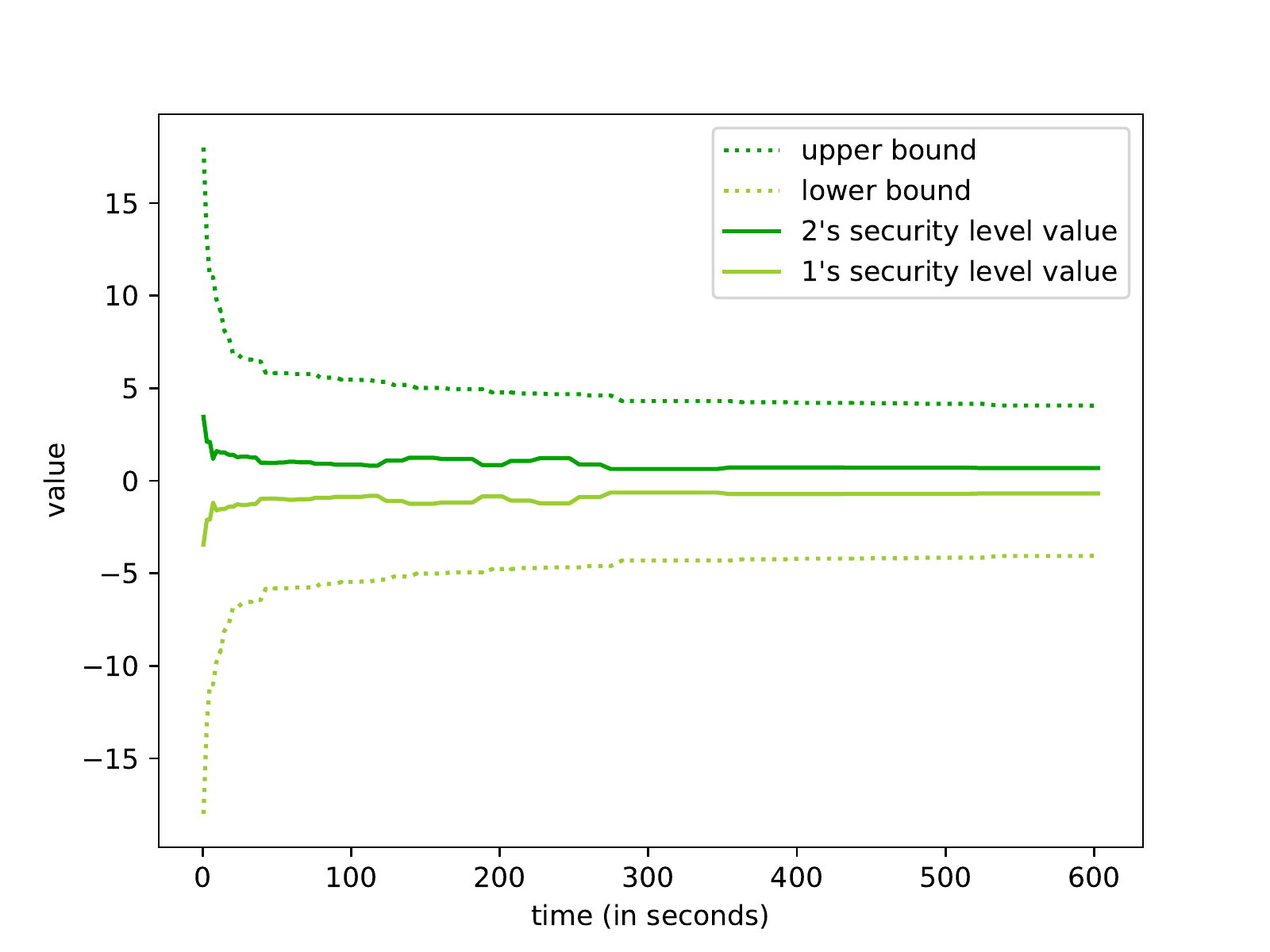}
    \hfill 
    \includegraphics[width=\evolGraphScale\linewidth]{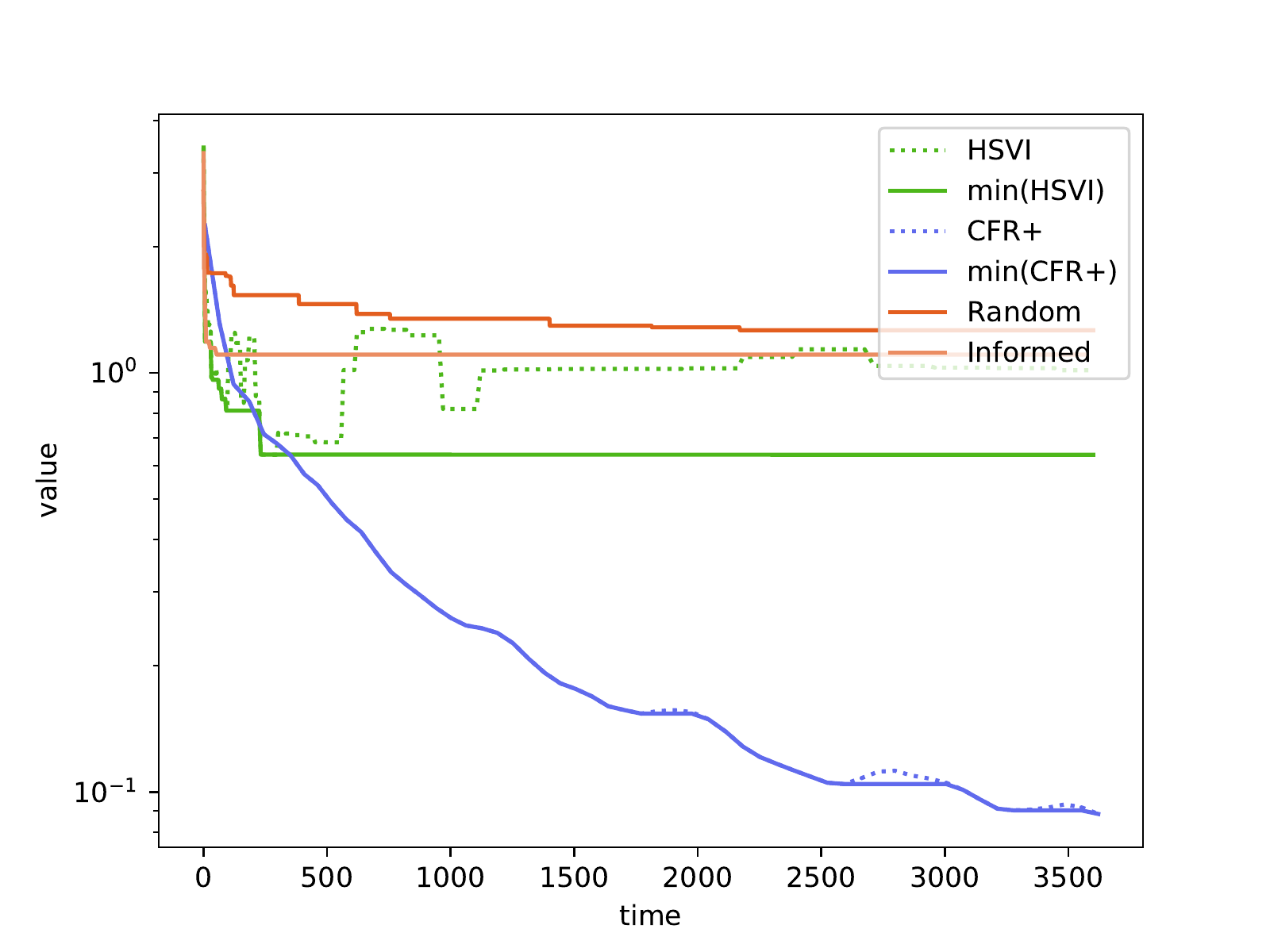}
    
    \includegraphics[width=\evolGraphScale\linewidth]{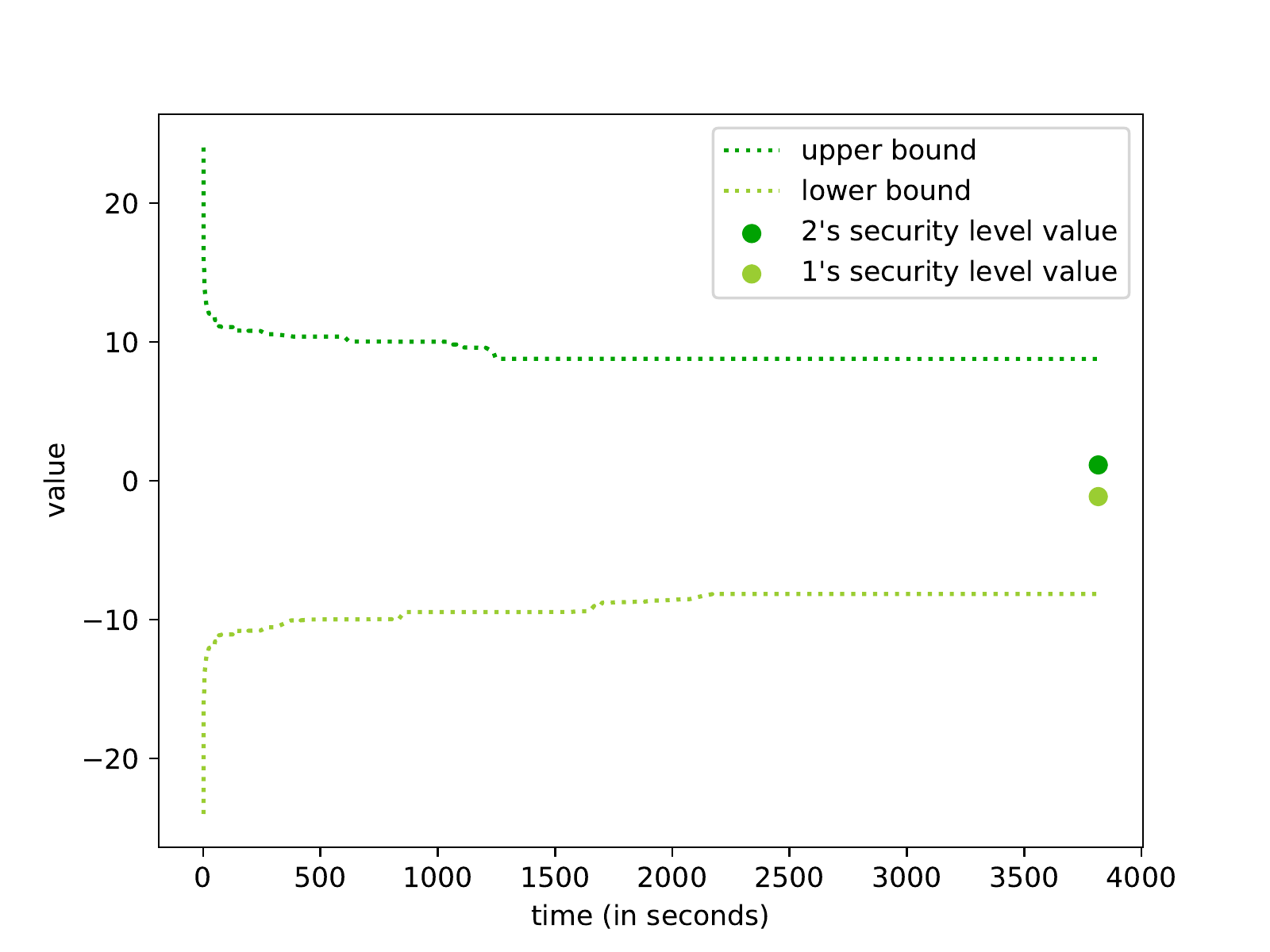}%
    \hfill 
    ~ 
    
    \caption{\myEvoExploCaption{Competitive Tiger ($H=2,3$) (1,1)}}
    \label{fig|Graphs|CompTiger}
\end{figure}

\begin{figure}[ht]
    \centering    

    \includegraphics[width=\evolGraphScale\linewidth]{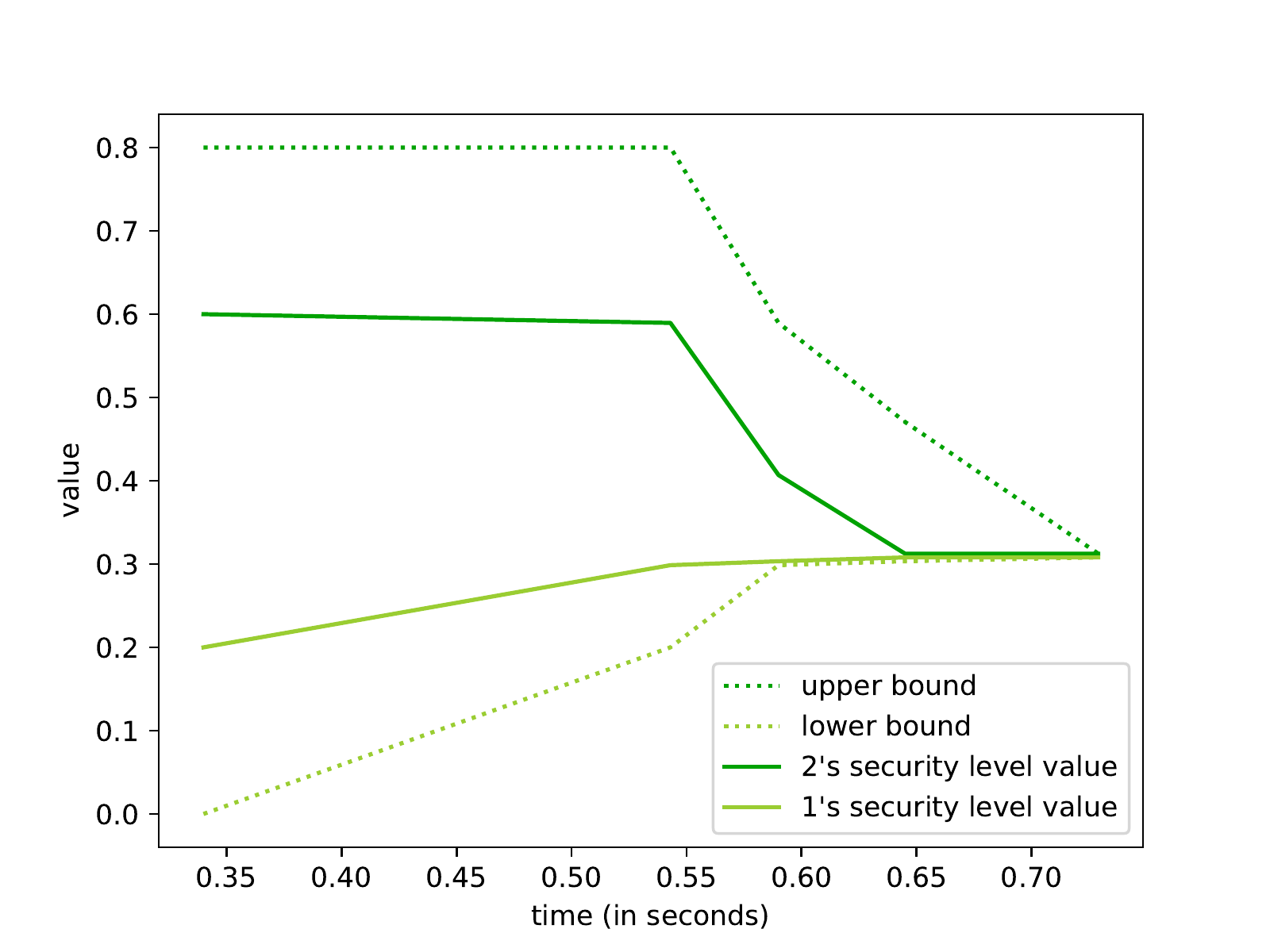}%
    \hfill 
    \includegraphics[width=\evolGraphScale\linewidth]{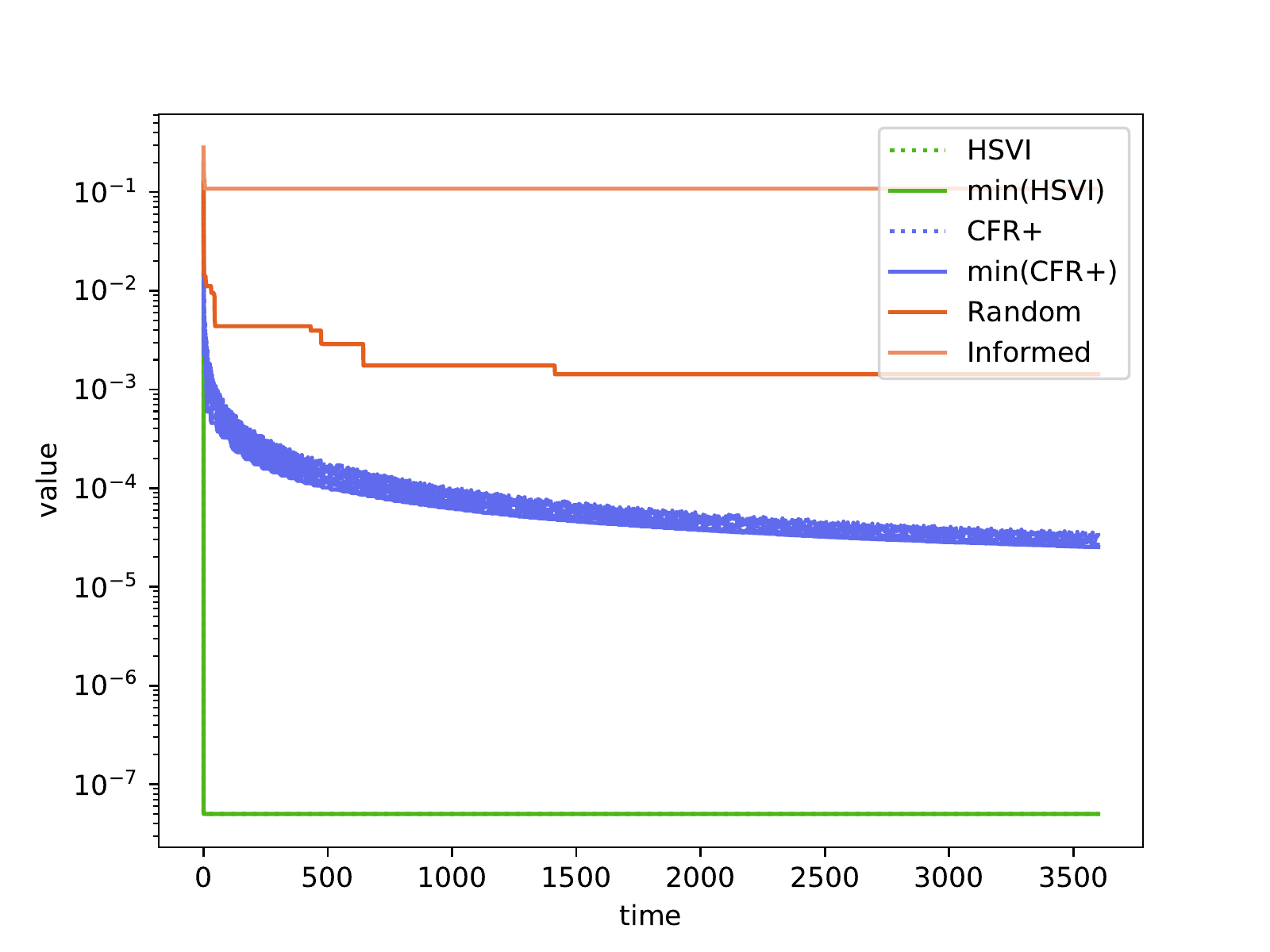}%
    
    \includegraphics[width=\evolGraphScale\linewidth]{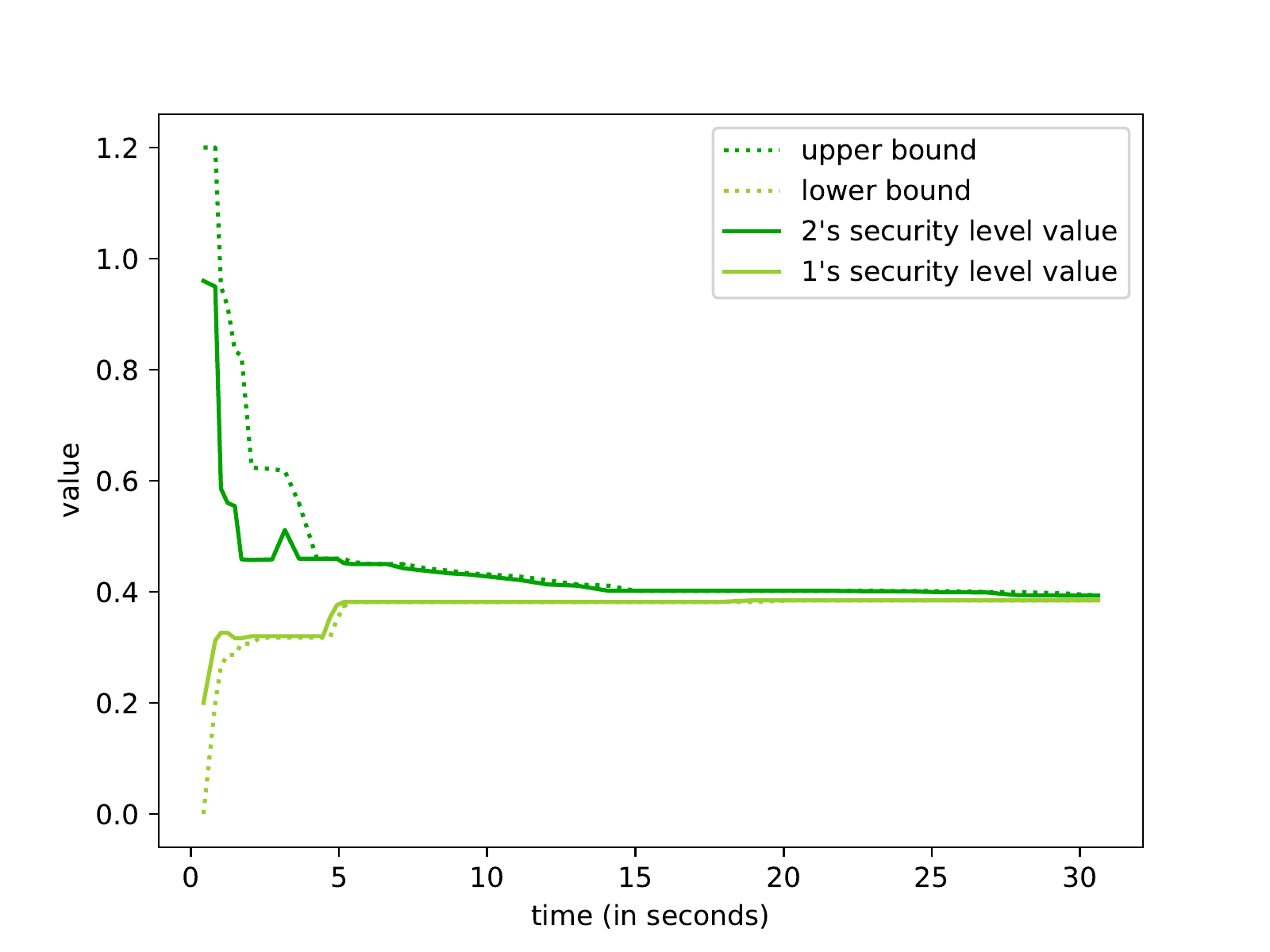}%
    \hfill 
    \includegraphics[width=\evolGraphScale\linewidth]{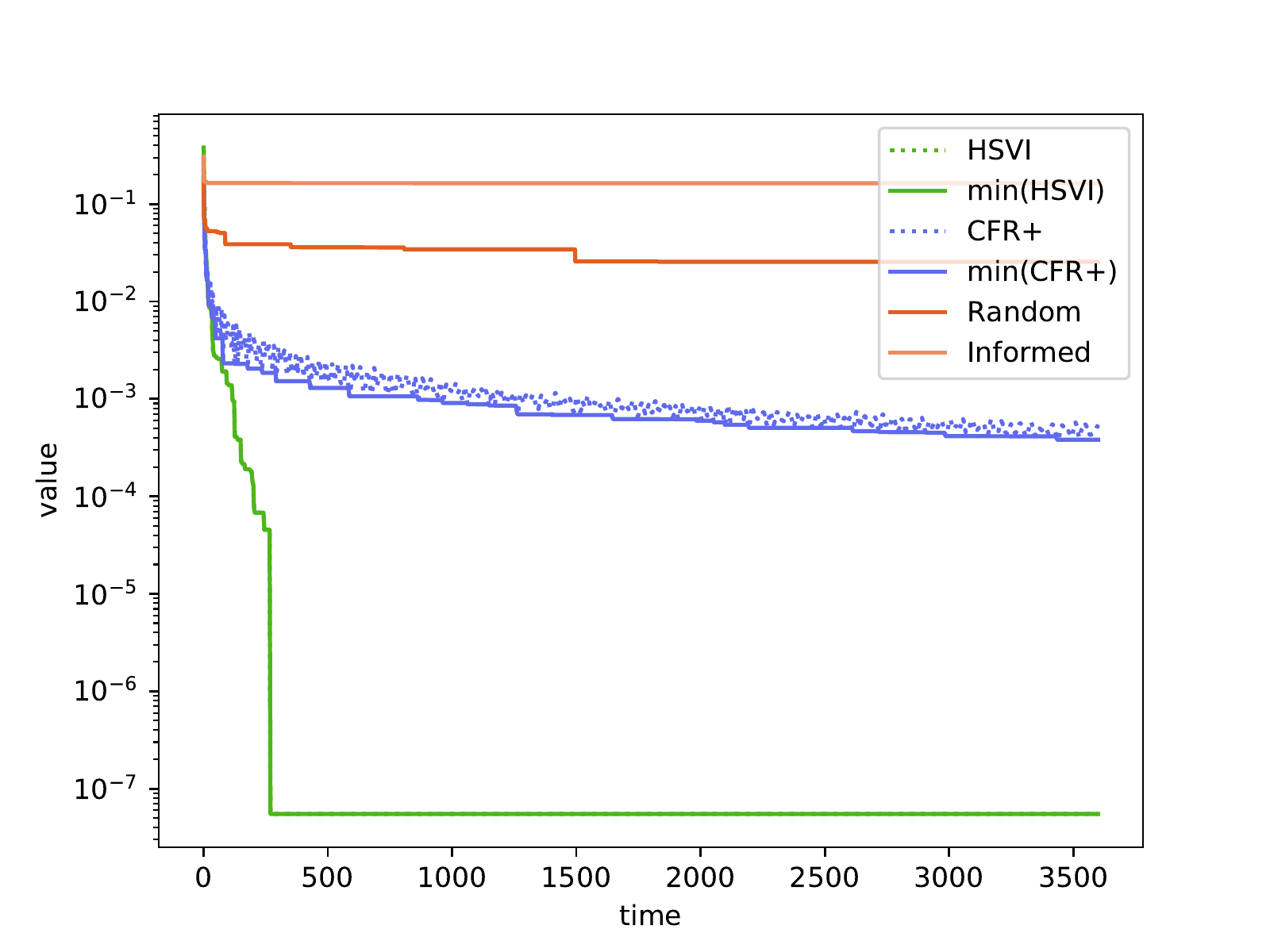}%
    
    \includegraphics[width=\evolGraphScale\linewidth]{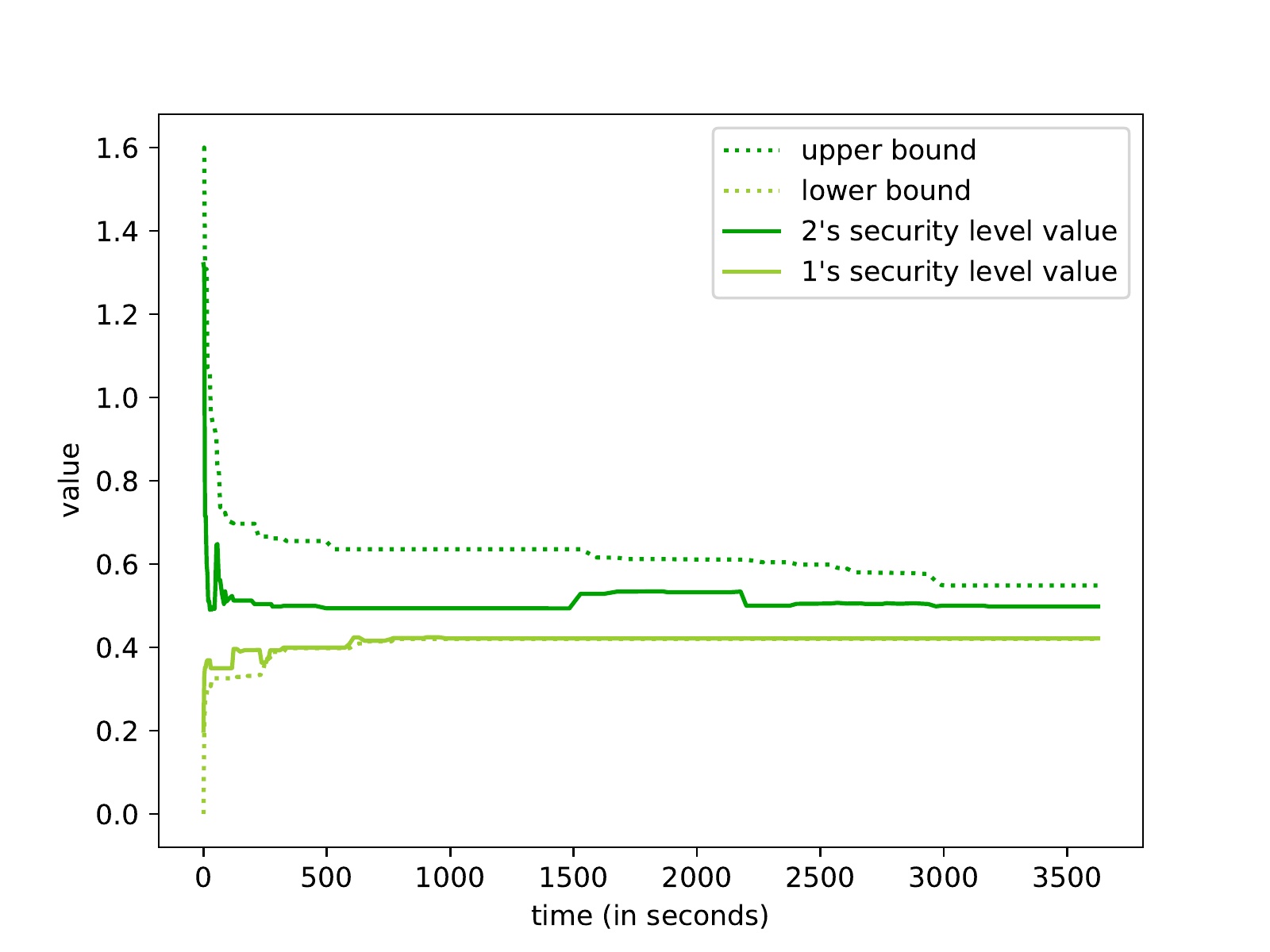}%
    \hfill 
    \includegraphics[width=\evolGraphScale\linewidth]{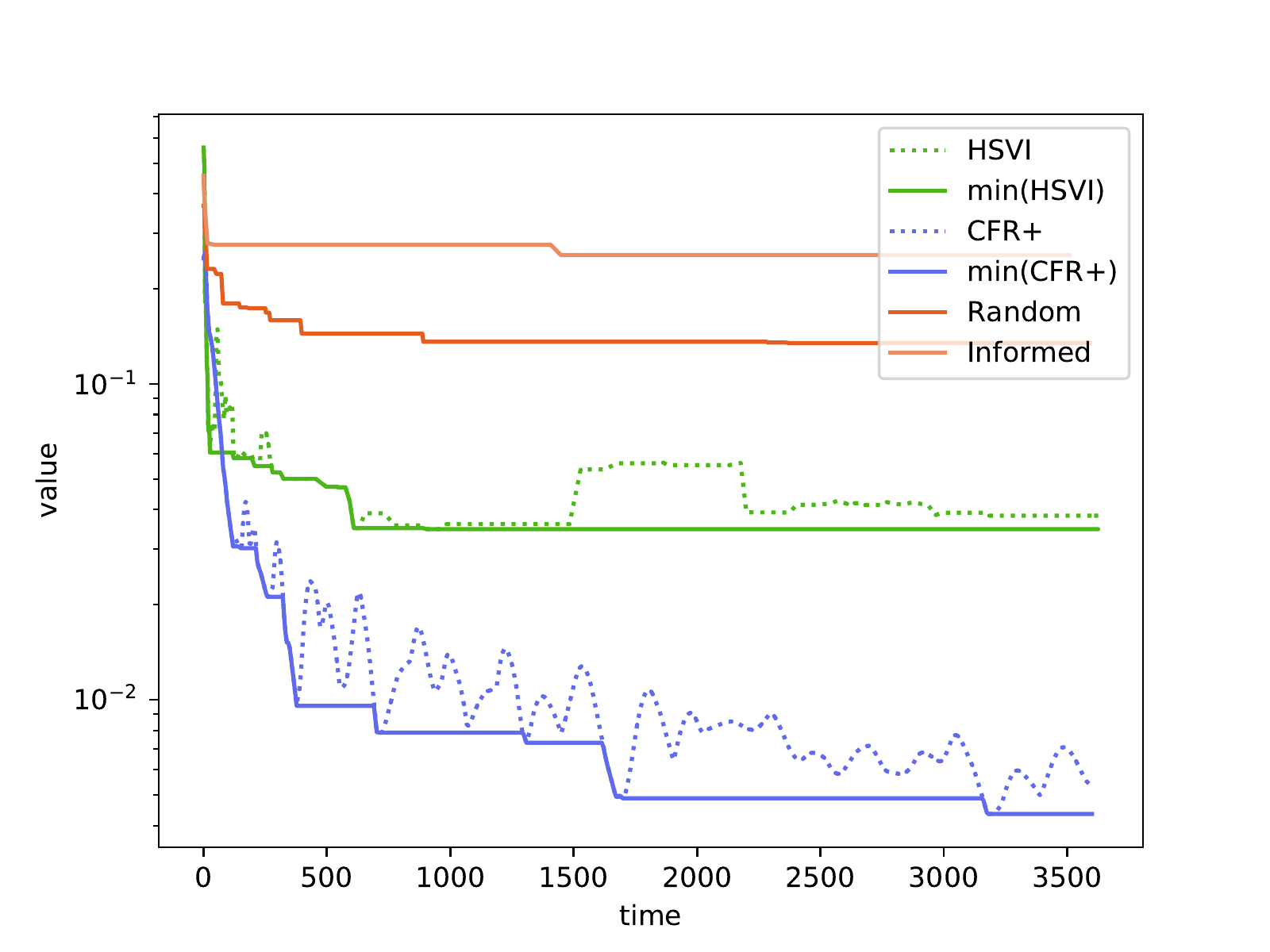}%
    
    \caption{\myEvoExploCaption{Mabc ($H=2,3,4$) (1,1,10)}}
    \label{fig|Graphs|Mabc}
\end{figure}

\begin{figure}[ht]
    \centering    
    
    \includegraphics[width=\evolGraphScale\linewidth]{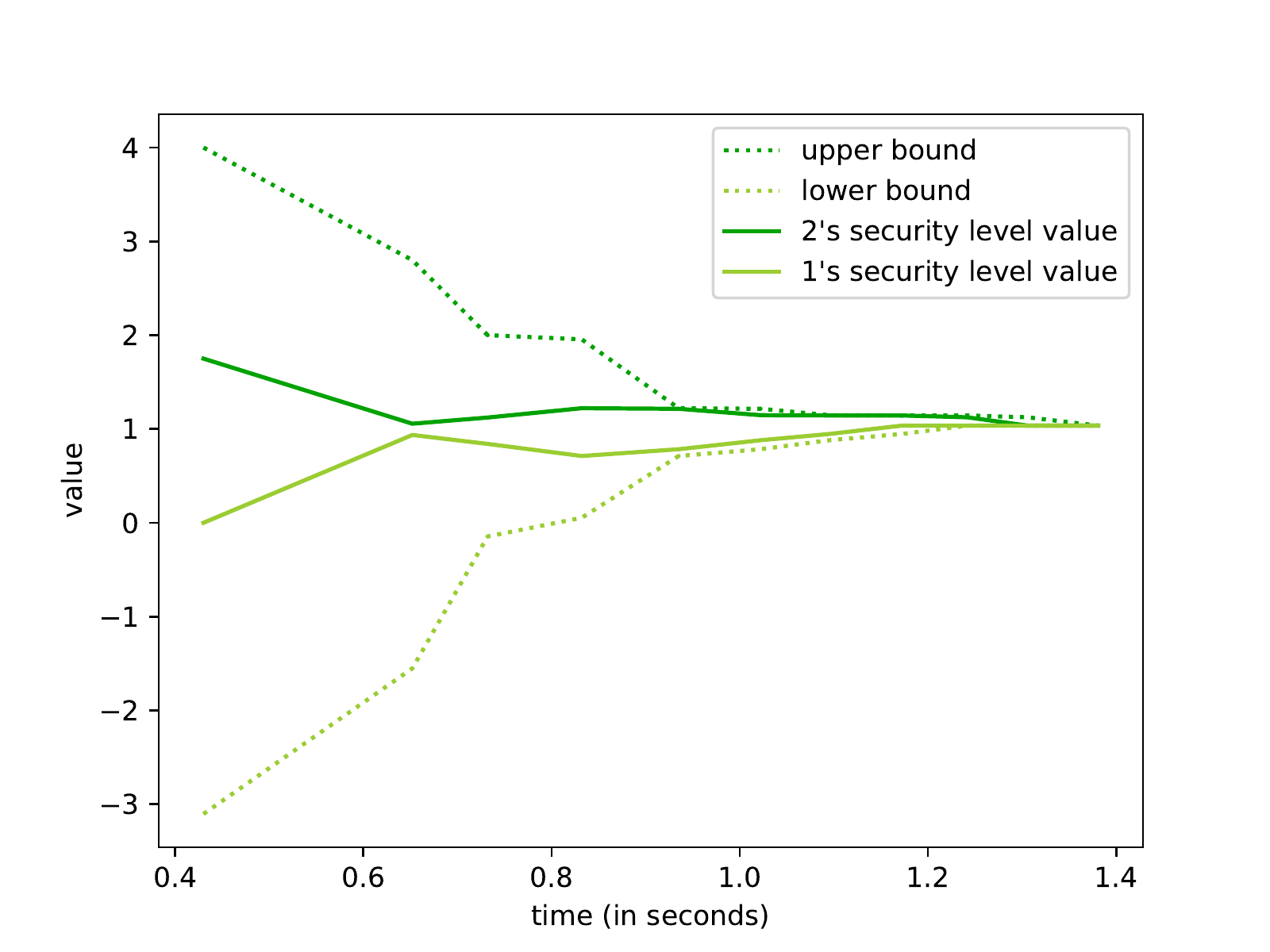}%
    \hfill
    \includegraphics[width=\evolGraphScale\linewidth]{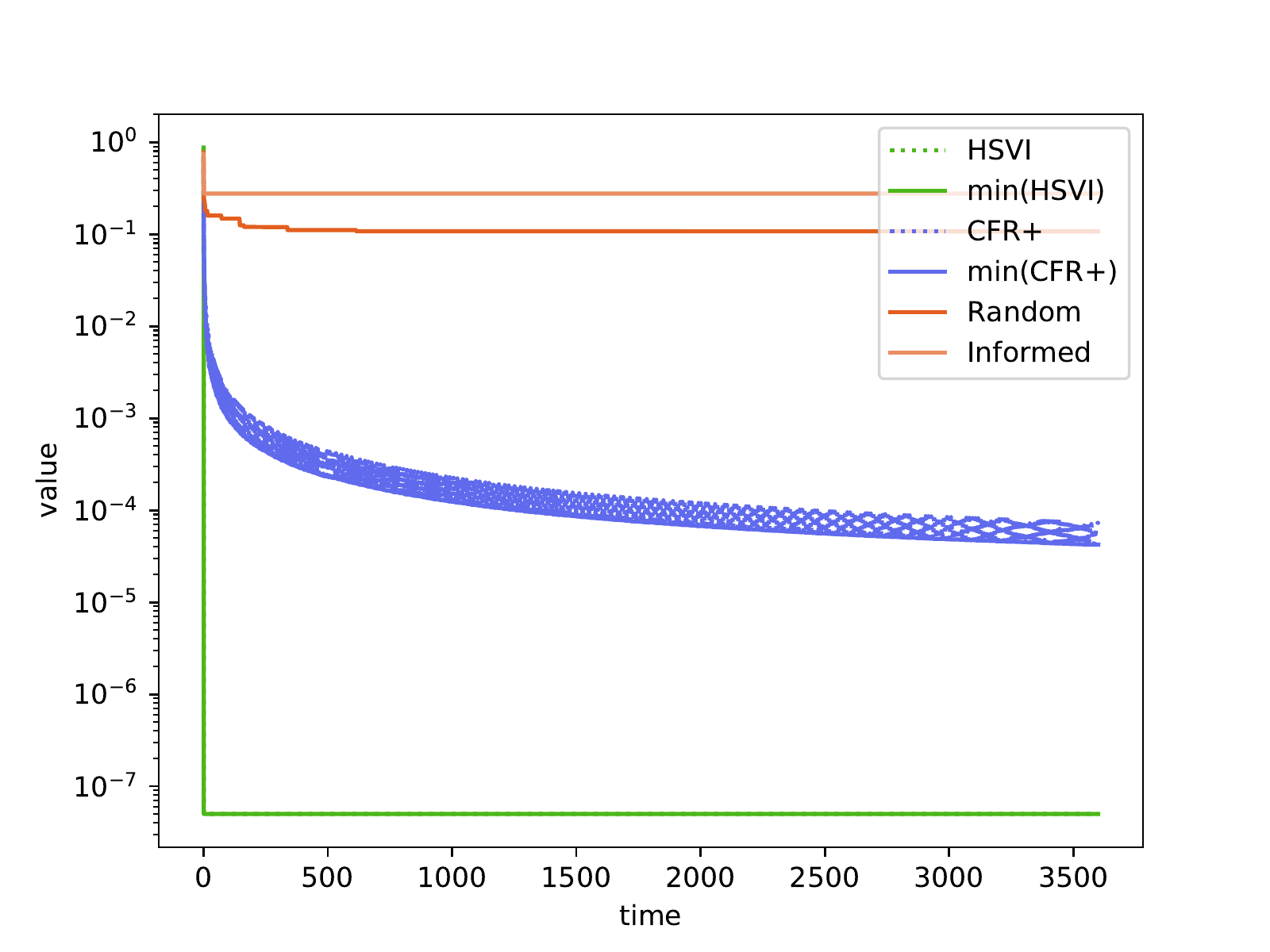}
    
    \includegraphics[width=\evolGraphScale\linewidth]{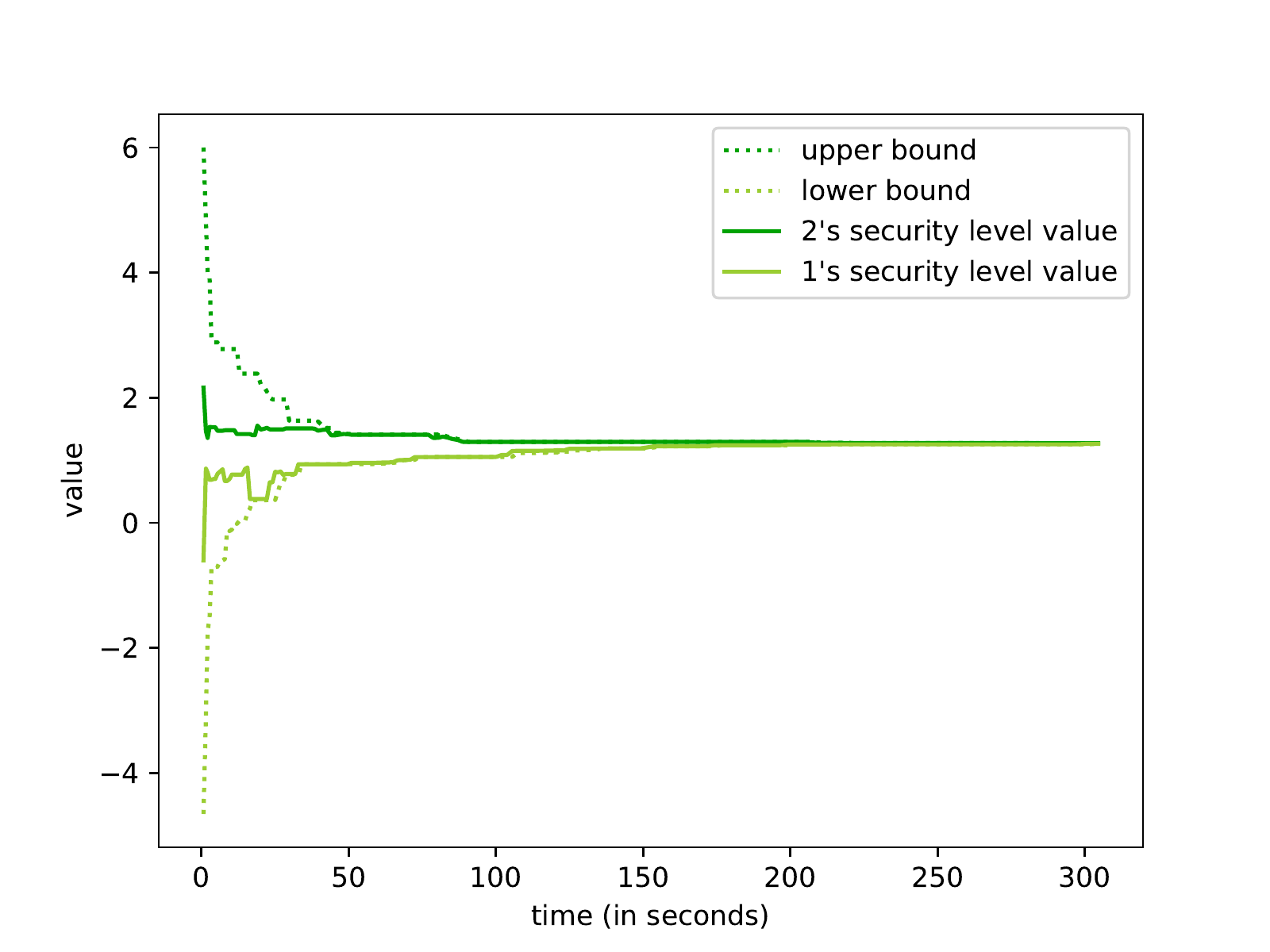}
    \hfill
    \includegraphics[width=\evolGraphScale\linewidth]{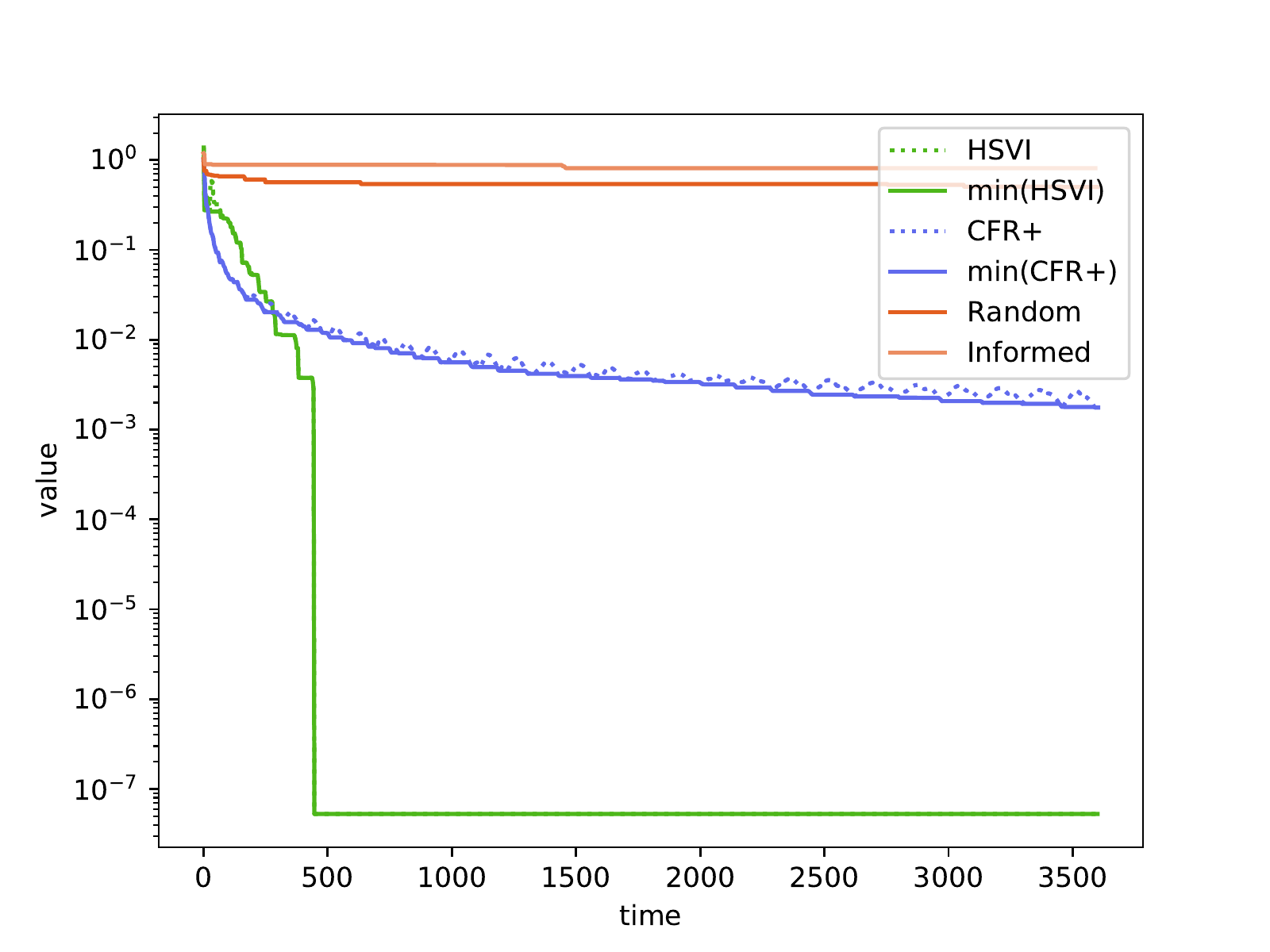}
    
    \includegraphics[width=\evolGraphScale\linewidth]{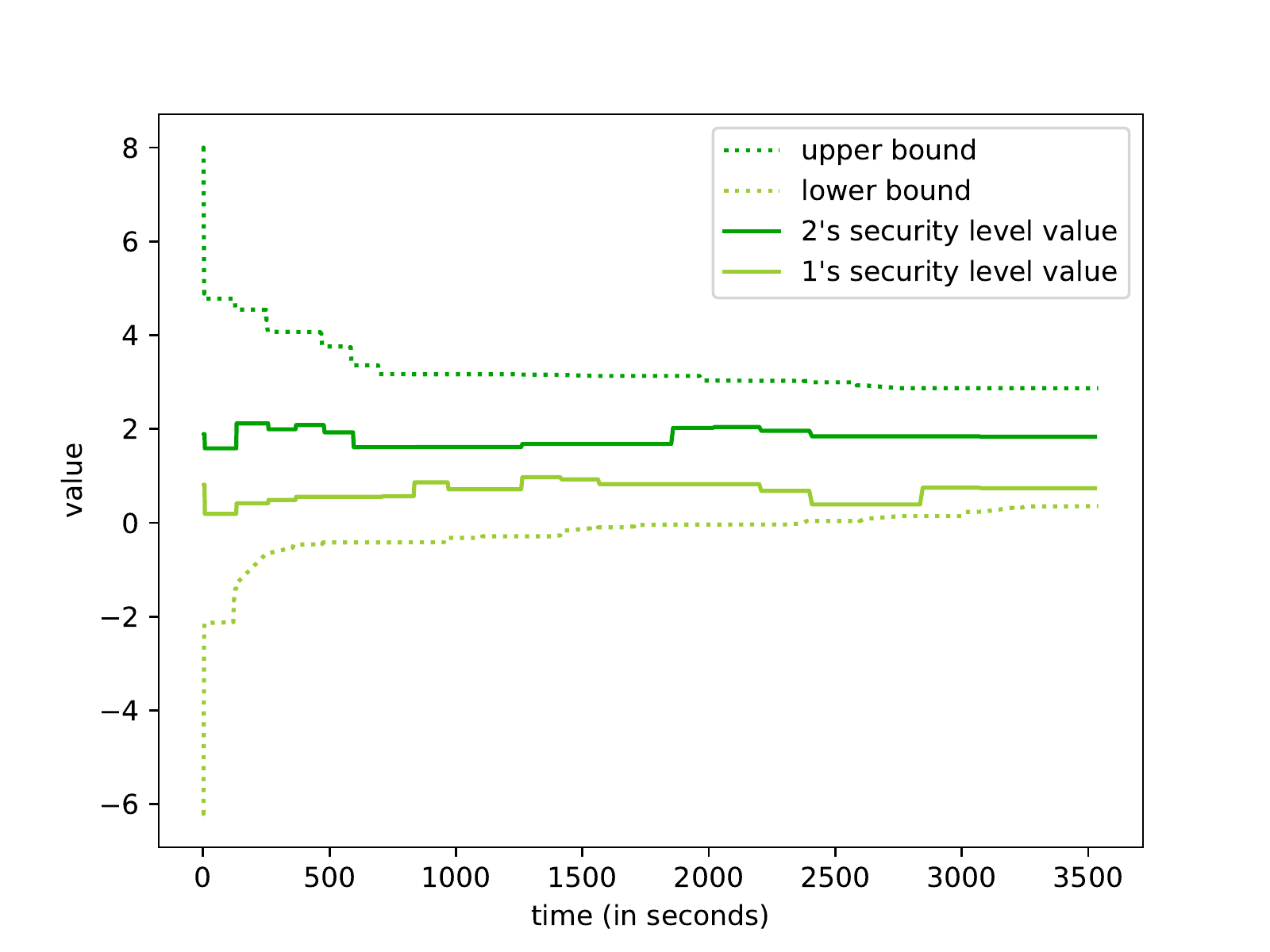}
    \hfill
    \includegraphics[width=\evolGraphScale\linewidth]{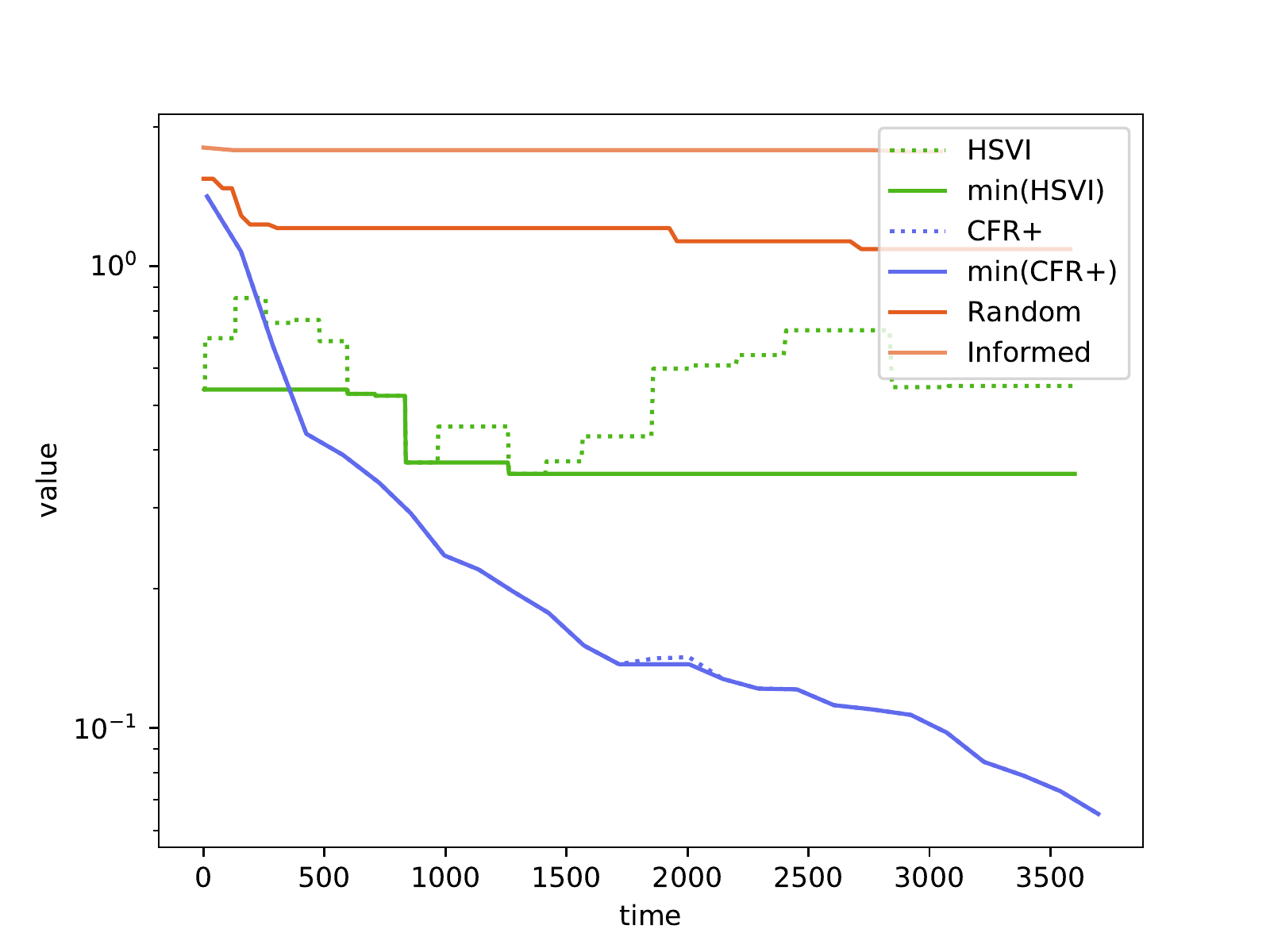}

    \hfill
    ~ 

    \hfill
    ~ 
    
    \caption{\myEvoExploCaption{Recycling Robot ($H=2,3,4$) (1,1,10)}}
    \label{fig|Graphs|Recycling}
\end{figure}

\begin{figure}[ht]
    \includegraphics[width=\evolGraphScale\linewidth]{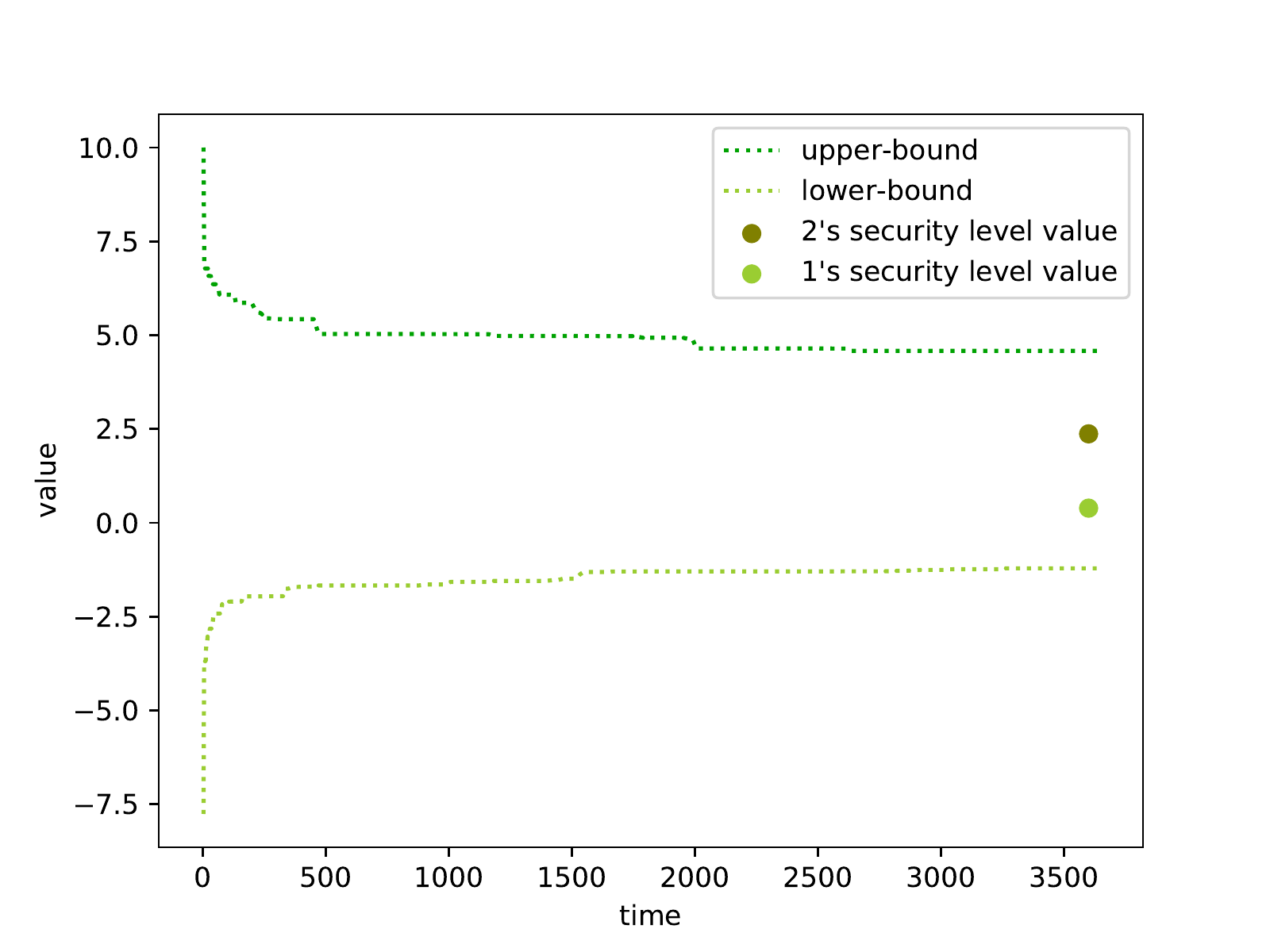}
    \vfill
    \includegraphics[width=\evolGraphScale\linewidth]{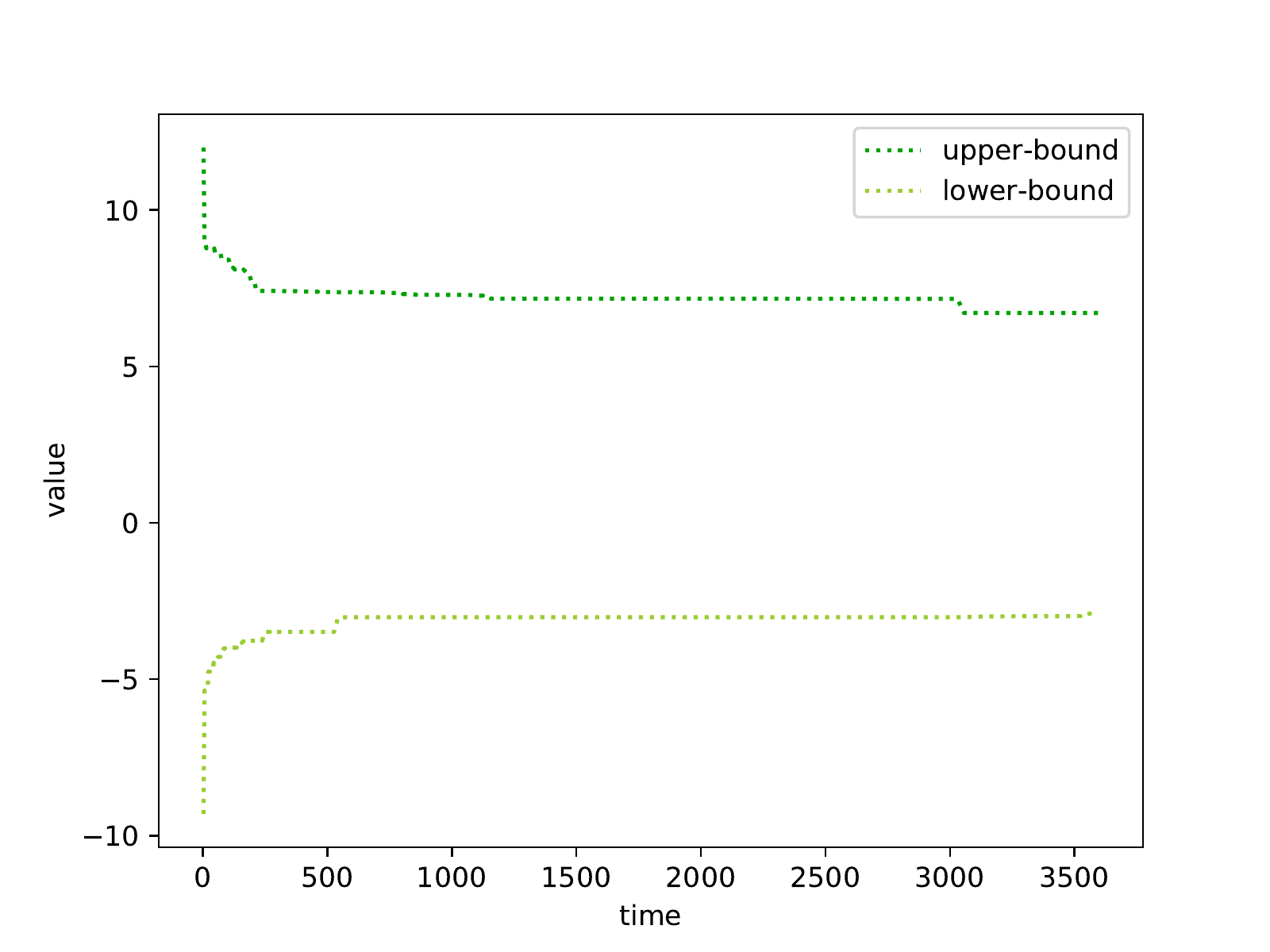}
    \caption{\uline{\textbf{Recycling Robot ($H=5,6$) (once,none):}} %
    \textbf{(left)} Evolution of (in dotted lines) the upper- and lower-bound values, and (in solid lines) the security levels of the returned strategies for HSVI as a function of time (ms).
    }
    \label{fig|Graphs|RecyclingH56}
\end{figure}

\begin{figure}[ht]
    \centering 

    \includegraphics[width=\evolGraphScale\linewidth]{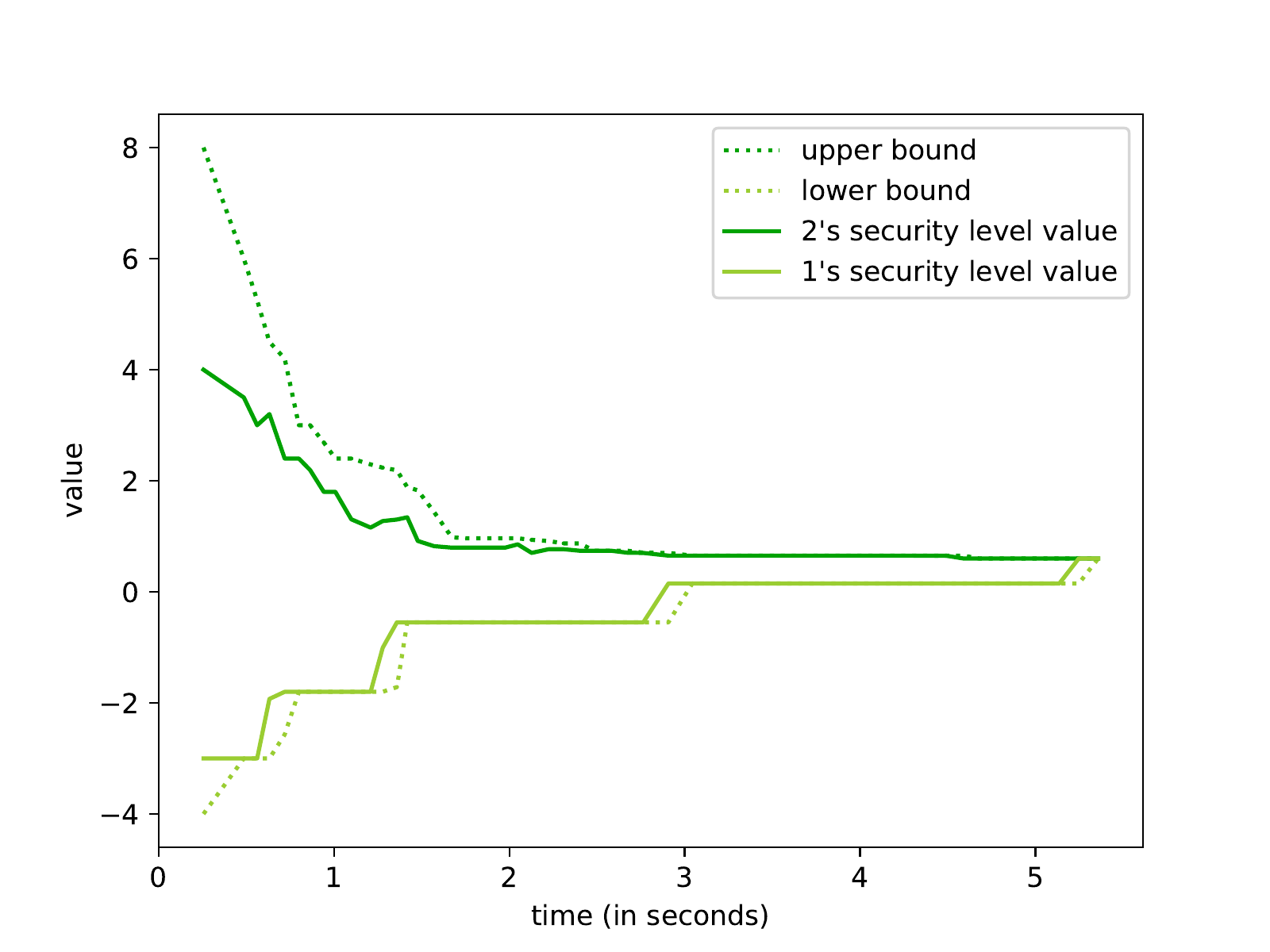}
    \hfill
    \includegraphics[width=\evolGraphScale\linewidth]{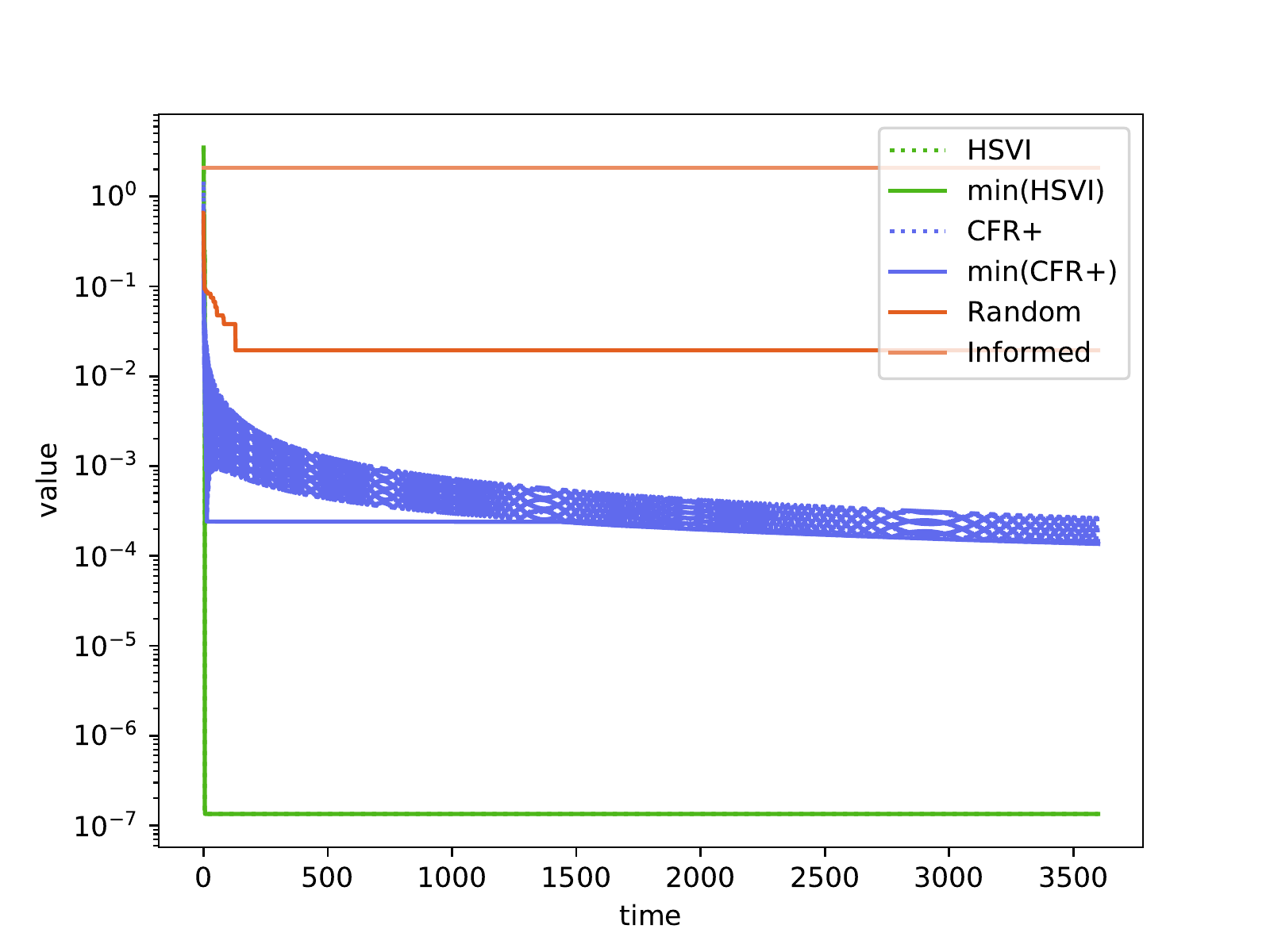}
    
    \includegraphics[width=\evolGraphScale\linewidth]{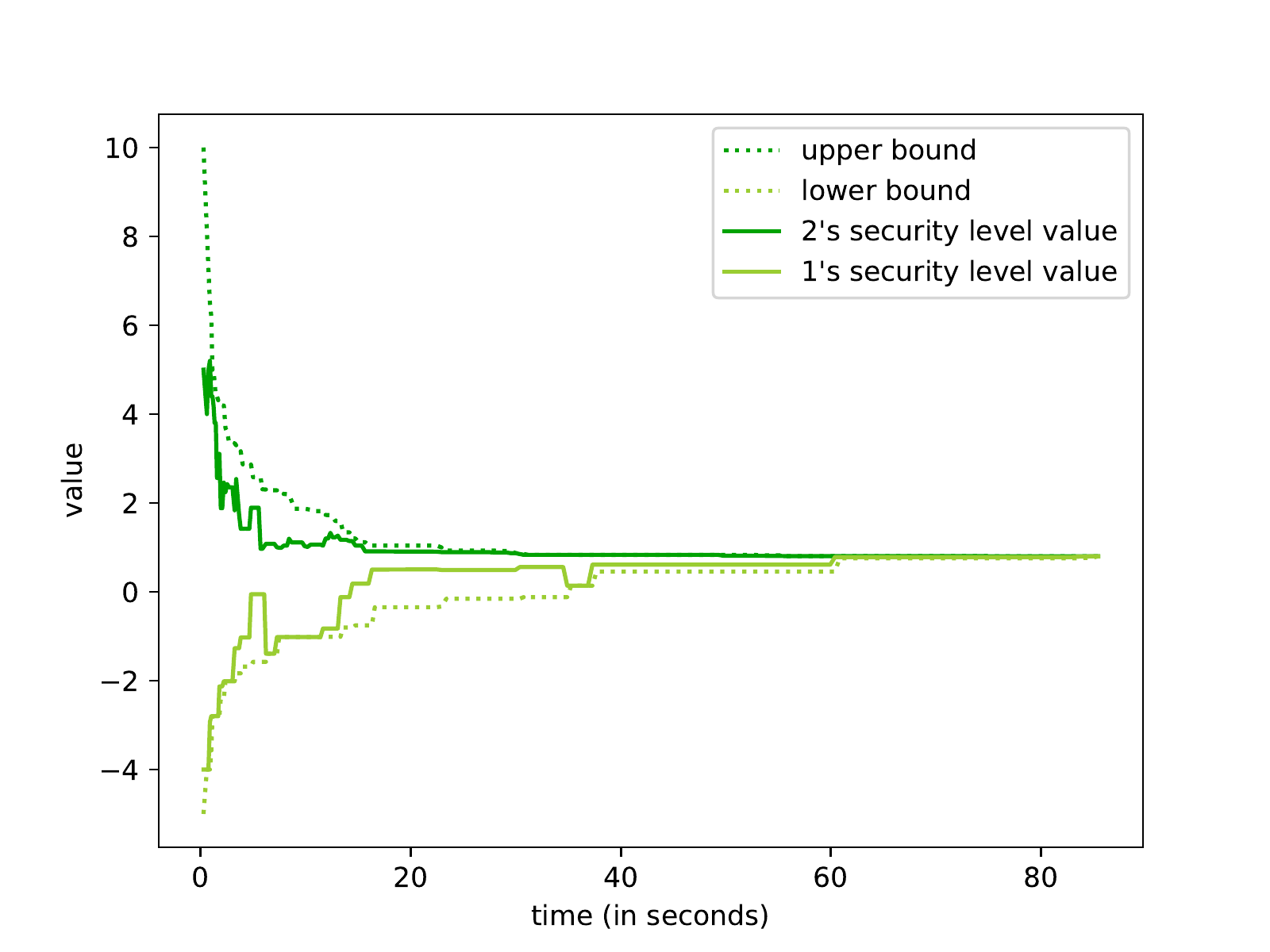}
    \hfill
    \includegraphics[width=\evolGraphScale\linewidth]{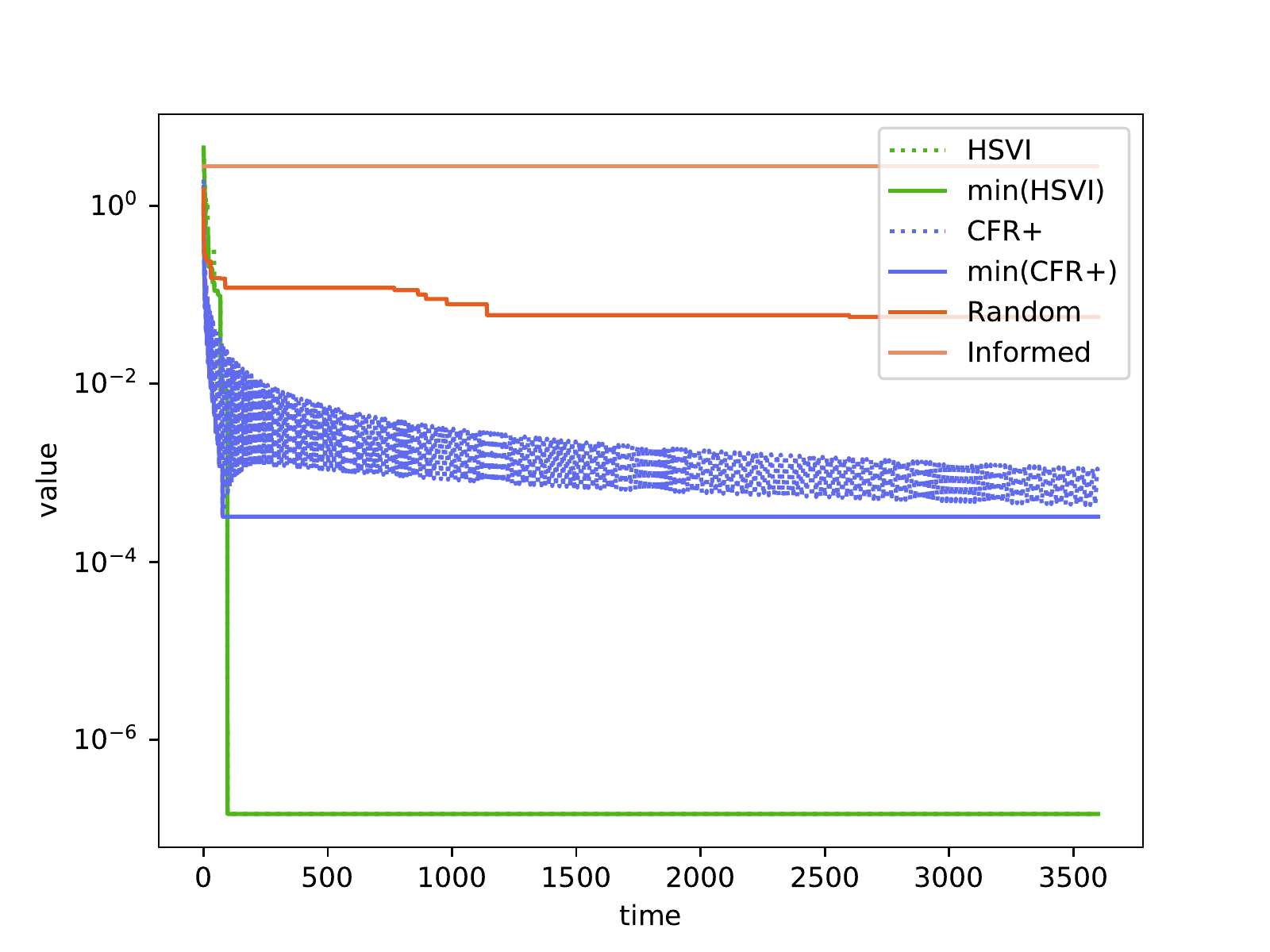}
    
    \includegraphics[width=\evolGraphScale\linewidth]{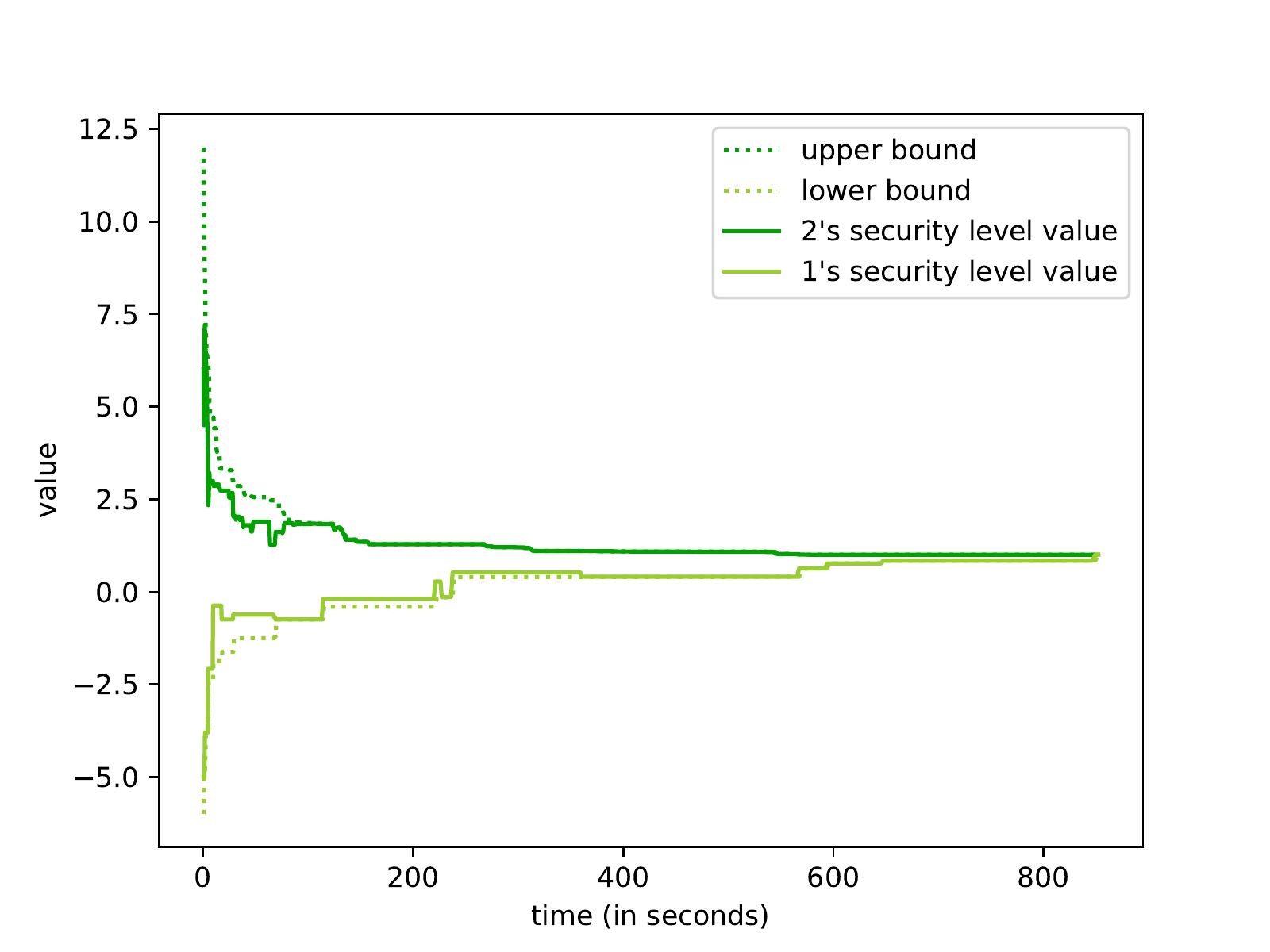}
    \hfill
    \includegraphics[width=\evolGraphScale\linewidth]{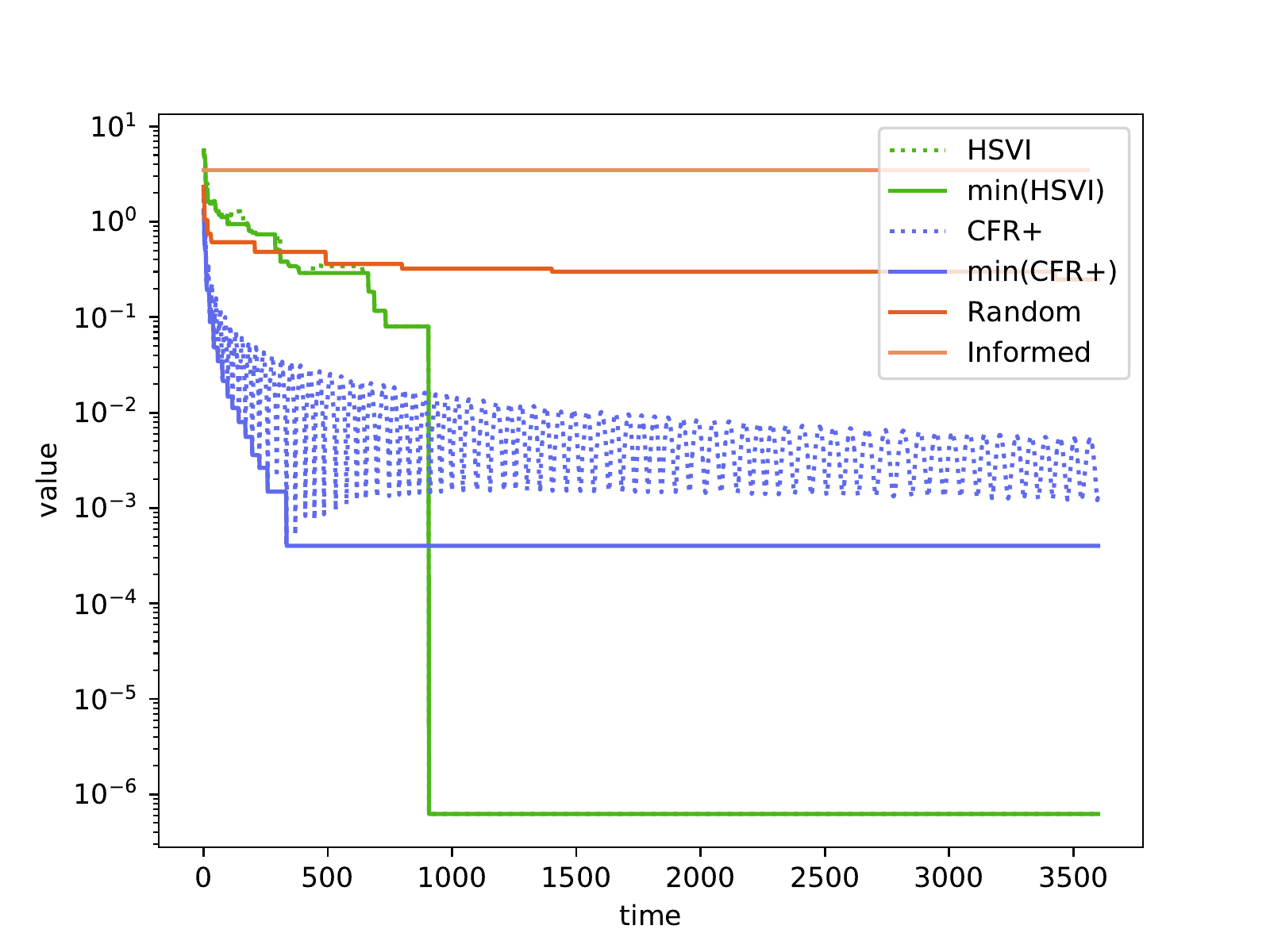}
    
    \caption{\myEvoExploCaption{Matching Pennies ($H=4,5,6$) (1,1,1)}}
    \label{fig|Graphs|MP}
\end{figure}

Having empirically studied the behavior of HSVI compared to other basic offline solvers, we now provide insight about the connections between HSVI and continual (thus online) resolving methods.

\section{Comparison with Continual Resolving}
\label{sec|CFR}

Continual Resolving techniques share some similarities with our approach, but also important differences.
The purpose of this section is to clarify these points.
It starts with a quick description of CFR, on which Continual Resolving is built.

\subparagraph*{Counterfactual Regret Minimization (CFR)} \cite{ZinJohBowPic-nips07} belongs to the self-play family of algorithms, which gave rise to several CFR-based approaches \citep{Tammelin-arxiv14,BurMorSch-jair19,LanctotEtAl-nips09,BroKroSan-aaai17}).
It iteratively traverses the whole game tree and applies, in each private history, a regret-matching update rule based on a specific type of regret called counterfactual regrets.
Iteratively updating an initially uniform strategy asymptotically converges towards a NES.
However, the tree traversal becomes intractable when the tree size is large.

Built on top of the CFR framework, approaches based on {\em limited-lookahead continual resolving} (LLCR) (inspired by Burch et al.'s decomposition \cite{Burch-aaai-2014}) such as DeepStack \cite{MorEtAl-science17}, Libratus \cite{BroSan-science18}, ReBeL \cite{BroBakLerQon-nips20} and Player of Games \cite{SchmidEtAl-arxiv21}, perform well by exploiting a temporal decomposition in subgames, which are specified through knowledge about both players' past strategies.
Our approach thus shares similarities with these works.

Yet, as we will see in the next sections, a closer look at LLCR \citep{Schmid-phd21} demonstrates how fundamentally different they are, starting with the fact that LLCR is an {\em online} search algorithm, \ie, is meant to make good decisions at each time step, based on the current knowledge about the state of the game, while HSVI, as SFLP or CFR (and its variants), is an {\em offline} algorithm returning a complete solution strategy.

\subsection{Continual Resolving}

{\em (Continual) Resolving} techniques have been the first ingredient to adapt CFR to online settings.
They address the problem of solving the complete subgame (down to its end) starting in the current situation at $\depth$,
while maintaining the global consistency (aka safety) of the whole strategy, \ie, not making choices that could encourage the opponent to deviate {\em in the past}, before $\depth$.
This is achieved by introducing constraints, called {\em gadgets}, in a preliminary stage of the subgame that represent possible deviations and their values, but increase the size of the game tree so that it is practically intractable \cite{MorEtAl-science17}.
For its part, HSVI solves similar subgames, but ensuring only local consistency, \ie, only considering the subgame. Global consistency comes from the way HSVI combines "lower-level" solutions in its backtracking process, without adding any gadget modifying the game.

For both Resolving and HSVI, solving a subgame requires {\em sufficient statistics} that represent a prefix strategy profile from $t=0$ to $\depth$.
In Resolving's online setting, this may seem surprising, since the opponent's actual strategy is not public.
Yet, Resolving does not actually require knowing or guessing the opponent's actual strategy.
In Resolving, any opponent Nash equilibrium strategy is appropriate, since the purpose is to verify that the opponent has no incentive to deviate from the Nash equilibrium.
A requirement for Resolving is for the sufficient statistics to represent {\em complete} strategies (given the current public information), so that decisions are anticipated for player $i$ even in \aoh{}s (infostates) not reachable given player $\neg i$'s strategy.
This leads to using {\em ranges} \cite{KovEtAl-Corr-2019}, a vector that gives, for each player, her contribution to the probability of any history she could face at time step $\depth$.
In contrast, HSVI's occupancy state is not necessarily related to a Nash equilibrium strategy in any manner, and leads to ignoring unreachable \aoh{}s, which helps to reduce the size of the decision-making (sub)problem.

\subsection{Limited Lookahead}

Continual Resolving alone solves complete subgames, thus larger problems at early stages of the game than at the end, which is not appropriate in an online setting.
To address this issue through limiting the lookahead of subgames, one needs to estimate the value of the leaves of any truncated subgame.
This is achieved through learning offline, for each player $i$, deep networks that, given the current public belief state, map each \aoh{} to its value under some Nash equilibrium strategy profile.
Note that the target function is not unique \cite[Proposition~A.1]{KovEtAl-Corr-2019}, since each NES profile maps to different value vectors.
Still, according to \cite{KovEtAl-Corr-2019}, this does not seem to cause problems in practice.

In contrast, the individual value functions HSVI considers (the "$\nu$" functions) are uniquely defined since they correspond to the best responses to given (not necessarily Nash equilibrium) strategies of the opponent.

\subsection{Limited Lookahead Continual Resolving as a General Scheme?}

The previous subsections highlight to what extent HSVI and LLCR are fundamentally different, in particular because they are not on the same algorithmic level.
LLCR should be seen as a general scheme in which the subgame solver used, namely CFR, could be replaced by other "basic" offline algorithms such as HSVI or SFLP.
But we leave further investigation on this topic for future work.

\section{Discussion}
\label{sec|discussion}

This paper addresses the problem of $\epsilon$-optimally solving zs-POSGs.
In contrast to SFLP or \CFR{}, we provide the necessary foundational building blocks to apply dynamic programming (in tandem with heuristic search) to solve zs-POSGs.
We introduce Bellman optimality equations and uniform-continuity properties of the optimal value function.
Next, we exhibit rules for updating value functions while preserving uniform continuity and the ability to extract globally-consistent solutions. 
Finally, we describe the first effective DP algorithm for zs-POSGs, zs-oMG-HSVI, with finite-time convergence to an $\epsilon$-optimal solution.
Experiments support our theoretical findings.

We believe our approach complements existing ones, e.g., SFLP and \CFR{}, in two dimensions.
First, it breaks the original zs-POSG into subgames.
Second, it generalizes values from visited subgames to unvisited ones.
Our performances are as good as or better than those from SFLP and \CFR{} for small-dimensional subgames (\eg, with TOI structure).
Unfortunately, the advantage of breaking the original problem into subgames and exploiting uniform continuity properties often fails to fully manifest in the overall computational time.

Despite some similarities, our (offline) approach is fundamentally different from (online) continual resolving approaches.
The latter could even possibly be adapted to use other offline methods than CFR-based ones, including HSVI.

We hope that this approach will lay the foundation for further work in the area of both exact and approximate DP solutions for zs-POSGs.
In the short term, we shall investigate pruning techniques, better Lipschitz constants, and improved initial bounding approximators using solutions from relaxations of zs-POSGs, e.g., zs-OS-POSGs.
In the long term, we shall investigate (deep) RL for zs-POSGs, similarly to a recent approach for Dec-POMDPs \citep{BonDibLaePerSim-ECML-18}.
The latter shall investigate the trade-off between the update-rule accuracy and the computational efficiency when facing high-dimensional subgames, hence providing competitive solvers.
%


\bibliography{article}




\newpage

\appendix

\def\l{\lambda}
\def\Appendix{}


\temporallyHidden{
\olivier{Dans les annexes je recommenderais
\begin{itemize}
\item de commencer par la section "synthetic tables", qui doivent être facilement accessibles; et
\item d'essayer de coller à l'ordre des sections du corps du papier, afin que le lecteur qui suit les deux en parallèles ait un parcours linéaire des deux côtés (donc les expériences iraient plutôt à la fin).
\end{itemize}}
}

\section{Synthetic Tables}
\label{app|syntheticTables}

For convenience, we provide two synthetic tables: 
\Cref{tab|PropertyTable} to sum up various theoretical properties
that are stated in this paper (assuming a finite temporal horizon), and 
\Cref{tab|NotationTable} to sum up the notations used in this paper, adding some notations that appear only in the appendix.
%


More precisely, \Cref{tab|PropertyTable} indicates, for various
functions $f$ and variables $x$, properties that $f$ is known to exhibit with
respect to $x$.
We denote by
\begin{description}[leftmargin=!,labelwidth=\widthof{$PWLCv$}]
\item[-] a function with no known (or used) property (see also comment
  below);
\item[{\sc n/a}] a non-applicable case;
\item[$\Lin$] a linear function;
\item[$LC$] a Lipschitz-continuous function;
\item[$Cv$] (resp. $Cc$) a convex (resp. concave) function;
\item[$PWLCv$] (resp. $PWLCc$) a piecewise linear and convex (resp. concave) function;
\item[$\indep$] the function being independent of the variable;
\item[$\neg P$] the negation of some property $P$ %
  (\ie, $P$ is known not to hold).
\end{description}
Note also that, as
$\occ_\depth = \occ_\depth^{c,1} \occ_\depth^{m,1}$, the linearity or
Lipschitz-continuity properties of any function w.r.t. $\occ_\depth$
extends to both $\occ_\depth^{c,1}$ and $\occ_\depth^{m,1}$.
Reciprocally, related negative results extend from $\occ_\depth^{c,1}$ or $\occ_\depth^{m,1}$ to $\occ_\depth$.
In these three columns, we just indicate results that cannot be derived from one of the two other columns.

\def\myfontsize{\tiny}

\begin{table}[ht]
    \caption{Known properties of various functions appearing in this work} 
    \centering
    \label{tab|PropertyTable}
    \resizebox{\linewidth}{!}{%
    \begin{tabular}{lr@{ }l@{ }r@{ }l@{ }r@{ }l@{ }r@{ }l@{ }r@{ }l}
      \toprule
      & \multicolumn{2}{c}{$\occ_\depth$}
      & \multicolumn{2}{c}{$\occ_\depth^{m,1}$}
      & \multicolumn{2}{c}{$\occ_\depth^{c,1}$}
      & \multicolumn{2}{c}{$\beta_\depth^i$}
      & \multicolumn{2}{c}{$\beta_\depth^{\neg i}$} \\
      \midrule
      $T(\occ_\depth,\vbeta_\depth)$
      & $\Lin$ & {\myfontsize (\cshrefpage{lem|occSufficient})}
      & \multicolumn{2}{c}{-}
      & \multicolumn{2}{c}{-}
      & $\Lin$ & {\myfontsize (\cshrefpage{lem|occSufficient})}
               & $\Lin$ & {\myfontsize (\cshrefpage{lem|occSufficient})} \\
      $\Tm{i}(\occ_\depth,\vbeta_\depth)$
      & $\Lin$ & {\myfontsize (\cshrefpage{lem|T1mlin})}
      & \multicolumn{2}{c}{-}  & \multicolumn{2}{c}{-}
      & $Lin$ & {\myfontsize (\cshrefpage{lem|T1mlin})}
               & $Lin$ & {\myfontsize (\cshrefpage{lem|T1mlin})} \\
      $\Tc{i}(\occ_\depth,\vbeta_\depth)$
      & \multicolumn{2}{c}{-}  
      & $\indep$ & {\myfontsize (\cshrefpage{lem|T1cindep})}
      & \multicolumn{2}{c}{-} 
      & $\indep$ & {\myfontsize (\cshrefpage{lem|T1cindep})}
      & \multicolumn{2}{c}{-} \\
      $V_\depth^{*}(\occ_\depth)$
      & $LC$ & {\myfontsize (\cshrefpage{proof|cor|V|LC|occ})}
      & $PWLCv$ & {\myfontsize (\cshrefpage{theo|ConvexConcaveV})}
      & \multicolumn{2}{c}{-}
      & \multicolumn{2}{c}{\sc n/a} & \multicolumn{2}{c}{\sc n/a} \\
      $W_\depth^{i,*}(\occ_\depth,\beta_\depth^i)$
      & $LC$ & {\myfontsize (from $Q^*_{\depth+1}$ LC)} 
      & \multicolumn{2}{c}{-} & \multicolumn{2}{c}{-}
     & $\neg Lin$ & {\myfontsize (from $Q^*$ $\neg Lin$)} 
               & \multicolumn{2}{c}{\sc n/a} \\
      %
      $\nu^2_{[\occ^{c,1}_\depth,\beta^2_{\depth:}]}$
      & \multicolumn{2}{c}{{\sc n/a}} & \multicolumn{2}{c}{{\sc n/a}}
      & $LC$ & {\myfontsize (\cshrefpage{lem|nuLC})} 
      & \multicolumn{2}{c}{\sc n/a}
      & \multicolumn{2}{c}{-}  \\
      \bottomrule
    \end{tabular}
}
\end{table}

\newcommand{\multicolumnTWO}[1]{\multicolumn{2}{@{}>{\hsize=\dimexpr2\hsize+4\tabcolsep+2\arrayrulewidth\relax}X}{#1}}

\begin{table}[htbp]
  \caption{Various notations used in this work 
} %
  \label{tab|NotationTable}
  \centering 
  \begin{tabularx}{\textwidth}{r@{ }c@{ }l@{ }X} 
    \toprule
    $\neg i$ & $\eqdef$ &
    \multicolumnTWO{$i$'s opponent. Thus: $\neg 1=2$, and $\neg 2=1$.}
    \medskip \\
    \multicolumn{4}{c}{\underline{Histories and occupancy states}} 
    \medskip \\
    $\theta^i_\depth$ & $\eqdef$ & 
    \multicolumnTWO{
      $(a^i_0, z^i_1, \dots , a^i_{\depth-1}, z^i_\depth) $ %
      ($\in \Theta^i = \cup_{t=0}^{H-1} \Theta^i_t$) %
      is a length-$\depth$ {\em action-observation history} (\aoh) for
      \player $i$.
    }
    \\
    $\vth_\depth$ & $\eqdef$ & 
    \multicolumnTWO{
      $(\theta^1_\depth,\theta^2_\depth)$ %
      ($\in \vTh = \cup_{t=0}^{H-1} \vTh_t$) %
      is a {\em joint \aoh} at $\depth$.
    }
    \\
    $\occ_\depth(\vth_\depth)$  & $\eqdef$ & 
    \multicolumnTWO{
      {\em Occupancy state} (\os) $\occ_\depth$ ($\in \Occ = \cup_{t=0}^{H-1} \Occ_t$, where $\Occ_\depth \eqdef \Delta(\vTh_\depth)$), %
      \ie, probability distribution over joint \aoh{}s $\vth_\depth$
      (typically for some applied $\vbeta_{0:\depth-1}$).
    }
    \\
    $\occ_\depth^{m,i}(\theta^i_\depth)$ & $\eqdef$ & 
    \multicolumnTWO{
      {\em Marginal term} of $\occ_\depth$ from player $i$'s point of view ($\occ_\depth^{m,i} \in \Delta(\Theta_\depth^i) $).
    }
    \\
    $\occ_\depth^{c,i}(\theta^{\neg i}_\depth | \theta^i_\depth)$ & $\eqdef$ & 
    \multicolumnTWO{
      {\em Conditional term} of $\occ_\depth$ from $i$'s point of view ($\occ_\depth^{c,i} : \Theta_\depth^i \mapsto \Delta(\Theta_\depth^{\neg i}) $).
    }
    \\
    $b(s | \vth_\depth)$ & $\eqdef$ & 
    \multicolumnTWO{
      {\em Belief state}, \ie, probability distribution over states given a joint \aoh ($b(s | \vth_\depth) : \cS \times \vTh_\depth \mapsto \reals $). 
      Can be computed by an HMM filtering process.
    }
    \\
    $o_\depth$ & $\eqdef$ &  
    \multicolumnTWO{
      {\em Full occupancy state} $o_\depth$ ($\in \Delta(\cS \times \vTh_\depth) $), \ie,
      $Pr(s,\vth_\depth)$ for the current $\vbeta_{0:\depth-1}$,
      and thus verifies $\occ_\depth(\vth_\depth)=\sum_{s\in\cS} o_\depth(s,\vth_\depth)$. 
      Is used in the implementation to simplify computations (\eg, of $r_t$ and $\occ_{\depth+1}$ through $b$).
    }
    \medskip \\
    \multicolumn{4}{c}{\underline{Decision rules and strategies}} 
    \medskip \\
    $\pi^i_{0:\depth}$ & $\eqdef$ &
    \multicolumnTWO{A {\em pure strategy} for \player $i$ is a mapping $\pi^i_{0:\depth}$ from private histories in $\Theta^i_t$ ($\forall t \in \{0\twodots \depth\}$) to {\em single} private   actions in $\cA^i$. %
    By default, $\pi^i\eqdef \pi^i_{0:H-1}$.}
    \\
    $\mu^i_{0:\depth}$ & $\eqdef$ &
    \multicolumnTWO{ A {\em mixed strategy}
    $\mu^i_{0:\depth}$ for \player $i$ is a probability distribution over pure strategies.
    It is used by first sampling one of the pure strategies (at $t=0$), and then executing it until $t=\depth$.
    }
    \\
    $\mu^i_{0:\depth' \mid \occ_\depth \rangle}$ & $\eqdef$ &
    \multicolumnTWO{($\depth\leq \depth'$) is a mixed strategy {\em compatible} with some \os{} $\occ_\depth$, \ie, that could induce this \os{} at $\depth$ (assuming an appropriate complementary $\mu^{\neg i}_{0:\depth' \mid \occ_\depth \rangle}$).
    }
    \\
    $\beta^i_\depth$ & $\eqdef$ & 
    \multicolumnTWO{
      A {\em (behavioral) decision rule} (\dr) at time $\depth$ for \player $i$ is a mapping $\beta^i_\depth$ from private \aoh{}s in $\Theta^i_\depth$ to {\em distributions} over private actions.
      We note $\beta^i_\depth(\theta^i_\depth,a^i)$ the probability to pick $a^i$ when facing $\theta^i_\depth$.
    }
    \\
    $\beta^i_{\depth:\depth'}$ & $\eqdef$ &
    \multicolumnTWO{
      $(\beta^i_\depth, \dots, \beta^i_{\depth'})$ is a {\em behavioral strategy} for \player $i$ from time step $\depth$ to $\depth'$  (included).
    }
    \\
    $rw^i(\theta^i_\depth, a^i_\depth)$ & $\eqdef$ &
    \multicolumnTWO{
    $\prod_{t=0}^\depth  \beta^i_{0:}(a^i_t | a^i_0, z^i_1, a^i_1, \dots, z^i_t) $ is the {\em realization weight} (RW) of sequence $a^i_0, z^i_1, a^i_1, \dots, a^i_\depth (=\theta^i_\depth, a^1_\depth)$ under strategy $\beta^i_{0:}$.
    }
    \\
    $ rw^i( \phi^i_{\depth:\depth'} | \theta^i_\depth) $ & $\eqdef$ &
    \multicolumnTWO{
    $\prod_{t=\depth}^{\depth'}  \beta^i_{0:}(a^i_t | \theta^i_\depth, a^i_\depth, \dots, z^i_t)$
    is the RW of a {\em suffix sequence} $\phi^i_{\depth:\depth'} = a^i_\depth, \dots, a^i_{\depth'}$ ``conditioned'' on a {\em prefix sequence/\aoh{}} $\theta^i_\depth$.
    }
    \\
    $\vpi_{0:\depth}$ & $\eqdef$ &
    \multicolumnTWO{
       is a {\em pure strategy profile}.
    }
    \\
    $\vmu_{0:\depth}$ & $\eqdef$ &
    \multicolumnTWO{ is a {\em
    mixed strategy profile}.
    }
    \\
    $\vmu_{0:\depth' \mid \occ_\depth \rangle}$ & $\eqdef$ &
    \multicolumnTWO{
    ($\depth\leq \depth'$) is a mixed strategy profile {\em compatible} with some \os{} $\occ_\depth$, \ie, that could induce this \os{} at $\depth$.
    }
    \\
    $\vbeta_\depth$ & $\eqdef$ &
    \multicolumnTWO{
      $\langle \beta^1_\depth, \beta^2_\depth \rangle$
      ($\in \cB = \cup_{t=0}^{H-1} \cB_t$) is a {\em decision rule
        profile}.
    }
    \\
    $\vbeta_{\depth:\depth'}$ & $\eqdef$ &
    \multicolumnTWO{
      $ \langle \beta^1_{\depth:\depth'}, \beta^2_{\depth:\depth'}
      \rangle$ is a {\em behavioral strategy profile}.
    }
    \medskip \\
    \end{tabularx}
\end{table}

\begin{table}[htbp] 
  \begin{tabularx}{\textwidth}{r@{ }c@{ }l@{ }X} 
    \multicolumn{4}{c}{\underline{Rewards and value functions}} 
    \medskip \\
    $r_{\max}$ & $\eqdef$ & $\max_{s,\va}r(s,\va)$ & Maximum possible reward. \\
    $r_{\min}$ & $\eqdef$ & $\min_{s,\va}r(s,\va)$ & Minimum possible reward. \\
    $V_\depth(\occ_\depth,\vbeta_{\depth:})$  & $\eqdef$ & 
    %
    $ E[\sum_{t=\depth}^{H-1} \gamma^t R_t \mid \occ_\depth, \vbeta_{\depth:}] $,
    &
    {\em Value} of  $\vbeta_{\depth:H-1}$ in
    \os $\occ_\depth$.
    \\
    & &
    \multicolumnTWO{
      where $R_t$ is the random var. for the reward at $t$.}
    \\
    $V_\depth^*(\occ_\depth)$ & $\eqdef$ & %
    $\max_{\beta^1_{\depth:}} \min_{\beta^2_{\depth:}} V_\depth(\occ_\depth, \vbeta_{\depth:})$ 
    & \text{ \it Optimal value function} \\
    $Q_\depth^*(\occ_\depth,\vbeta_\depth)$ & $\eqdef$  %
    &
    $ r(\occ_\depth,\vbeta_\depth) + \gamma V_{\depth+1}^{*}(T(\occ_\depth,\vbeta_\depth))$
    & \text{ \it Opt. (joint) action-value fct.} \\
    $W_\depth^{i,*}(\occ_\depth,\beta_\depth^i)$ & $\eqdef$ 
    & $opt_{\beta^{\neg i}_\depth} Q_\depth^*(\occ_\depth,\vbeta_\depth)$,
    & \text{ \it Opt. (individual) action-value fct.} \\
    & &   \multicolumn{2}{>{\hsize=\dimexpr2\hsize+2\tabcolsep+\arrayrulewidth\relax}X}{
      where $opt=\max$ if $i=1$, $\min$ otherwise.
    }%
    \\
    $\nu^2_{[\occ_\depth^{c,1}, \beta_{\depth:}^2]}$ & $\eqdef$ & 
    \multicolumnTWO{
      Vector of values (one component per \aoh{} $\theta^1_\depth$) for $1$'s best response to $\beta_{\depth:}^2$ assuming $\occ_\depth^{c,1}$.
      This solution of a POMDP allows computing $V_\depth^{*}$ (see \Cref{theo|ConvexConcaveV}).
    }
    \medskip \\
    \multicolumn{4}{c}{\underline{Approximations}} 
    \medskip \\
    $\upb{V}_\depth(\occ_\depth)$ & $\eqdef$ & 
    \multicolumnTWO{
      Upper bound approximation of $V^*_\depth(\occ_\depth)$;
     relies on data set $\upb{\cI}_{\depth-1}$. 
    }
    \\
    $\lob{V}_\depth(\occ_\depth)$ & $\eqdef$ & 
    \multicolumnTWO{
      Lower bound approximation of $V^*_\depth(\occ_\depth)$; relies on data set $\lob{\cI}_{\depth-1}$. 
    }
    \\
    $\upbW{\depth}{}(\occ_\depth,\beta^1_\depth)$ & $\eqdef$ & 
    \multicolumnTWO{
      Upper bound approximation of $W_\depth^{*,1}(\occ_\depth,\beta^1_\depth)$; relies on data set $\upb{\mathcal{I}}_\depth$. 
    }
    \\
    $\lobW{\depth}{}(\occ_\depth,\beta^2_\depth)$ & $\eqdef$ & 
    \multicolumnTWO{
      Lower bound approximation of $W_\depth^{*,2}(\occ_\depth,\beta^2_\depth)$; relies on data set $\lob{\mathcal{I}}_\depth$. 
    }
    \\
    $\upb\nu_{\depth}^2$ & $\eqdef$ & 
    \multicolumnTWO{
      Vector (with one component per \aoh{} $\theta^1_\depth$) used in $\upb{V}_\depth$ and $\upbW{\depth-1}{}$ (if $\depth\geq 1$).
    }
    \medskip \\
    \multicolumn{4}{c}{\underline{Miscellaneous}} 
    \medskip \\
    $w_\depth$ & $\eqdef$ & 
    \multicolumnTWO{
      Denotes a triplet $\langle \occ^{c,1}_{\depth-1}, \beta^1_{\depth-1}, \upb\nu^2_\depth, \rangle \in \upb{\mathcal{I}}_\depth$ (or a triplet in $\lob{\mathcal{I}}_\depth$).
  }
  \\
    $\tree_{\depth}^2$ & $\eqdef$ & 
    \multicolumnTWO{
      Distribution over triplets $w_{\depth+1} \in \upb{\mathcal{I}}_{\depth+1}$ (inducing a recursively defined strategy from $\depth$ to $H-1$).
      Often denotes the strategy it induces. 
    }
    \\
    $x^{\top}$ & $\eqdef$ & 
    \multicolumnTWO{
      The transpose of a (usually column) vector $x$ of $\reals^n$.
    }
    \\ 
    $c[y]$ & $\eqdef$ & 
    \multicolumnTWO{
      Denotes field $c$ of object/tuple $y$.
    }
    \\
    $\supp(d)$ & $\eqdef$ & 
    \multicolumnTWO{
      Support of distribution $d$, \ie, set of its non-zero probability elements.
    }
    \\
    \bottomrule
  \end{tabularx}
\end{table}


\section{Background}


\temporallyHidden{
\olivier{[2022-01-26] Tentative de "plan" révisé (qui revient pas mal à ce qu'on avait au départ, avec des retouches):
\begin{enumerate}
    \item Introduire le \zsomg{} associé à un zs-POSG.\\
    Note: Il n'est à ce stade question que de décrire un processus, pas un problème à résoudre (un jeu et son concept de solution).
    \item Expliquer que, pour un \zsomg{}, on propose de chercher une solution (un équilibre de Nash) non pas en boucle fermée (décidant d'une règle de décision en fonction de l'état d'occupation atteint), mais en boucle ouverte, de sorte que ce soit aussi une solution du zs-POSG (et que la notion de fonction de valeur soit identique).
    \\
    Sauf erreur, il est évident qu'on ne fait que ré-écrire le même problème.%
    \footnote{\olivier{Note: On pourrait presque se passer d'introduire les \zsomg{}. Ce dont on a besoin c'est des états d'occupations avec dynamique markovienne.  Mais bon, ça ne coûte pas grand chose d'introduire les \zsomg{}, et ça permet de bien expliquer qu'on va les résoudre en boucle ouverte (contrairement à ce que j'ai cru).}}
    \item Introduire le concept de sous-jeu, en expliquant qu'a priori on ne sait pas si, dans notre contexte en boucle ouverte, résoudre un sous-jeu aide à résoudre le jeu de départ.
    \item Montrer que la valeur maximin est égale à la valeur obtenue en changeant arbitrairement l'ordre des opérateurs max et min de tous les pas de temps (avec une procédure qui me fait penser au "tri des crêpes" en ces temps de chandeleur).\footnote{\url{https://fr.wikipedia.org/wiki/Tri_de_cr\%C3\%AApes}}
    \\
    A priori je ne vois pas comment faire autrement que de passer par les stratégies mixtes suffixes (comme dans le "vieux" papier arXiv),\footnote{Pour ces stratégies mixtes on voudrait idéalement la notation $\mu$ qu'on pourrait éventuellement reprendre aux méta-stratégies qui ne semblent finalement pas très utiles.} ou plutôt par les poids de réalisation (ce qui me semble bien plus pratique).
    On a besoin de la linéarité en celles-ci pour échanger des maxs avec des mins.
    \item En particulier, on a:
    \begin{align}
        \max_{\beta^1_{\depth:}} \min_{\beta^2_{\depth:}} V_\depth(\occ_\depth,\vbeta_{\depth:})
        & = \min_{\beta^2_{\depth:}} \max_{\beta^1_{\depth:}} V_\depth(\occ_\depth,\vbeta_{\depth:}),
    \end{align}
    ce qui implique que les argmax et argmin obtenus par l'un et l'autre problème forment un équilibre de Nash !
    \item Déduire aussi de l'interchangeabilité des mins et maxs la récurrence suivante, qui justifie l'utilisation du POB:
    \begin{align}
        V^*_\depth(\occ_\depth)
        & = \max_{\beta^1_{\depth}} \min_{\beta^2_{\depth}} \max_{\beta^1_{\depth+1:}} \min_{\beta^2_{\depth+1:}} V_\depth(\occ_\depth,\vbeta_{\depth:})
        \\
        & = \max_{\beta^1_{\depth}} \min_{\beta^2_{\depth}} \max_{\beta^1_{\depth+1:}} \min_{\beta^2_{\depth+1:}} \left[ r(\occ_\depth,\vbeta_{\depth}) + V_{\depth+1}(T(\occ_\depth,\vbeta_{\depth}),\vbeta_{\depth+1:}) \right]
        \\
        & = \max_{\beta^1_{\depth}} \min_{\beta^2_{\depth}} \left[ r(\occ_\depth,\vbeta_{\depth}) + \max_{\beta^1_{\depth+1:}} \min_{\beta^2_{\depth+1:}} V_{\depth+1}(T(\occ_\depth,\vbeta_{\depth}),\vbeta_{\depth+1:}) \right]
        \\
        & = \max_{\beta^1_{\depth}} \min_{\beta^2_{\depth}} \left[ r(\occ_\depth,\vbeta_{\depth}) +  V^*_{\depth+1}(T(\occ_\depth,\vbeta_{\depth})) \right].
    \end{align}
    \item Expliquer que, avec la formule ci-dessus, on obtient une règle de décision $\beta^1_\depth$ qui fait nécessairement partie d'une stratégie solution pour 1. Cela justifie (il me semble) de suivre une telle \dr lors de la construction d'une trajectoire.
    \item Construire l'approximateur $\upbW{\depth}{}$ (+ initialisation) et son opérateur de MàJ.
    \item Expliquer que la MàJ s'accompagne naturellement du calcul d'une stratégie arborescente \uline{pour le joueur 2} de niveau de sécurité (de plus haute (/pire) valeur quelle que soit la stratégie de 1) en $\occ_\depth$ \uline{au plus}  $\upbW{\depth}{}(\occ_\depth,\beta^1_{\depth:1})$.
    \item \dots
\end{enumerate}
}
}

\subsection{Occupancy States}
\label{app|occStates}



The following result shows that the occupancy state is %
(i) Markovian, \ie, its value at $\depth$ only depends on its previous
value $\occ_{\depth-1}$, the system dynamics
$\PP{s}{a^1,a^2}{s'}{z^1,z^2}$, and the last behavioral decision rules
$\beta^1_{\depth-1}$ and $\beta^2_{\depth-1}$, and %
(ii) sufficient to estimate the expected reward.
Note that it holds for general-sum POSGs with any number of agents,
and as many reward functions; %
similar results have already been established, \eg, for Dec-POMDPs
(\cf \citep[Theorem~1]{DibAmaBufCha-jair16}).


\lemOccSufficient*

\begin{proof}
  \label{proof|lem|occSufficient}
  Let us first derive a recursive way of computing
  $ \occ_{ \vbeta_{0:\depth} }( \vth_\depth, \va_\depth,
  \vz_{\depth+1}) $:
  \begin{align*}
    & \occ_{ \vbeta_{0:\depth} }( \vth_\depth, \va_\depth, \vz_{\depth+1}) %
    \eqdef Pr( \vth_\depth, \va_\depth, \vz_{\depth+1} \mid \vbeta_{0:\depth} ) \\
    & = \sum_{s_\depth, s_{\depth+1}}
    Pr( \vth_\depth, \va_\depth, \vz_{\depth+1}, s_\depth, s_{\depth+1} \mid \vbeta_{0:\depth} )
    \\
    & = \sum_{s_\depth, s_{\depth+1}}
    Pr( \vz_{\depth+1}, s_{\depth+1} \mid \vth_\depth, \va_\depth, s_\depth, \vbeta_{0:\depth} ) 
    Pr( \va_\depth \mid  \vth_\depth, s_\depth, \vbeta_{0:\depth} ) 
    \\ & \qquad \qquad 
    Pr( s_\depth \mid \vth_\depth, \vbeta_{0:\depth} ) 
    Pr( \vth_\depth \mid \vbeta_{0:\depth} ) 
    \\
    & = \sum_{s_\depth, s_{\depth+1}}
    \underbrace{ Pr( \vz_{\depth+1}, s_{\depth+1} \mid \va_\depth, s_\depth ) }_{= \PP{s_\depth}{\va_\depth}{s_{\depth+1}}{\vz_{\depth+1}}}
    \underbrace{ Pr( \va_\depth \mid  \vth_\depth, \vbeta_{\depth} ) }_{= \vbeta(\vth_\depth, \va_\depth)}
    \underbrace{ Pr( s_\depth \mid \vth_\depth, \vbeta_{0:\depth} ) }_{= b(s_\depth \mid \vth_\depth)}
    \underbrace{ Pr( \vth_\depth \mid \vbeta_{0:\depth-1} ) }_{= \occ_{\vbeta_{0:\depth-1}}(\vth_\depth)},
    \intertext{(where $b(s \mid \vth_\depth)$ is the belief over states obtained by a usual HMM filtering process)}
    & = \sum_{s_\depth, s_{\depth+1}}
    \PP{s_\depth}{\va_\depth}{s_{\depth+1}}{\vz_{\depth+1}}
    \vbeta(\vth_\depth, \va_\depth)
    b(s_\depth \mid \vth_\depth)
    \occ_{\vbeta_{0:\depth-1}}(\vth_\depth).
  \end{align*}
  $\occ_{ \vbeta_{0:\depth} }$ can thus be computed from
  $\occ_{\vbeta_{0:\depth-1}}$ and $\vbeta_\depth$ without explicitly
  using $\vbeta_{0:\depth-1}$ or earlier occupancy states.

  Then, let us compute the expected reward at $\depth$ given
  $\vbeta_{0:\depth}$:
  \begin{align*}
    & E[r(S_\depth,A^1_\depth,A^2_\depth) \mid \vbeta_{0:\depth} ] \\ %
    & = \sum_{s_\depth, \va_\depth} r(s_\depth, \va_\depth) Pr( s_\depth, \va_\depth \mid \vbeta_{0:\depth} )
    \\
    & = \sum_{s_\depth, \va_\depth} \sum_{\vth_\depth} r(s_\depth, \va_\depth) Pr( s_\depth, \va_\depth, \vth_\depth \mid \vbeta_{0:\depth} )
    \\
    & = \sum_{s_\depth, \va_\depth} \sum_{\vth_\depth} r(s_\depth, \va_\depth)
    Pr( s_\depth, \va_\depth \mid \vth_\depth, \vbeta_{0:\depth} )
    Pr( \vth_\depth \mid \vbeta_{0:\depth} )
    \\
    & = \sum_{s_\depth, \va_\depth} \sum_{\vth_\depth} r(s_\depth, \va_\depth)
    \underbrace{ Pr( \va_\depth \mid \vth_\depth, \vbeta_{0:\depth} ) }_{ \vbeta_\depth(\vth_\depth, \va_\depth) }
    \underbrace{ Pr( s_\depth \mid \vth_\depth, \vbeta_{0:\depth} ) }_{ b(s_\depth \mid \vth_\depth) }
    \underbrace{ Pr( \vth_\depth \mid \vbeta_{0:\depth} ) }_{\occ_{\vbeta_{0:\depth-1}}( \vth_\depth ) }
    \\
    & = \sum_{s_\depth, \va_\depth} \sum_{\vth_\depth} r(s_\depth, \va_\depth)
     \vbeta_\depth(\vth_\depth, \va_\depth)
     b(s_\depth \mid \vth_\depth) 
     \occ_{\vbeta_{0:\depth-1}}( \vth_\depth ).
  \end{align*}  
  The expected reward at $\depth$ can thus be computed from
  $\occ_{\vbeta_{0:\depth-1}}$ and $\vbeta_\depth$ without explicitly
  using $\vbeta_{0:\depth-1}$ or earlier occupancy states.
\end{proof}


\section{Occupancy Markov Games: Definition and Preliminary Properties}


\subsection{Properties of \texorpdfstring{$V^*$}{V*}}
\label{sec|propVstar}

Before proving the postulates implicitly used by \citep{WigOliRoi-corr16}, we need to show that we can reason with mixed strategies in subgames as is usually done on full games.

\subsubsection{\texorpdfstring{$V^*$}{V*} is not linear in \texorpdfstring{$\beta$}{behavioral strategies} (behavioral strategies)}
\label{ex|VnotBilinearInBetas}
Let us consider the following (finite-horizon, deterministic) Non-Observable MDP:
\begin{align*}
    \cS & \eqdef \{ -2, -1, 0, +1, +2\}, 
    \quad b_0(0) =1, & \text{(always start in $s=0$)} \\
    \cA & \eqdef \{-1, +1\}, & \text{(moves = add or subtract 1)} \\
    T(s,a) & \eqdef \min \{ +2, \max \{ -2, s+a \} \},
    & \text{(dép. de $+1$ ou $-1$ dans $\cS$)} \\
    \cZ & \eqdef \{ none \},
    \quad O(none) \eqdef 1, & \text{(no observation)} \\
    r(s) & \eqdef
        \begin{cases}
        +1 & \text{si }s \in \{-2,+2\} \\
        0 & \text{sinon,}
        \end{cases}
        & \text{($\abs{s}=2$ : victoire !)}
    \\
    \gamma & \eqdef 1,
    \quad H \eqdef 2.
\end{align*}

Let us then consider two particular behavioral strategies:
\begin{align*}
    \forall \theta,\ \beta^{+}(A=+1|\theta) &= 1 & \text{(always $+1$), and} \\
    \forall \theta,\ \beta^{-}(A=-1|\theta) &= 1 & \text{(always $-1$)}.
\end{align*}
These two strategies are optimal, with an expected return of $+1$, because, at $t=H=2$,
$\beta^+$ reaches $+2$ w.p. $1$, and 
$\beta^-$ reaches $-2$ w.p. $1$:
\begin{align*}
    V(\beta^+) &= V(\beta^-) = +1.
\end{align*}

Let us now consider their linear combination $\beta^\pm \eqdef \frac{1}{2}\beta^+ + \frac{1}{2} \beta^-$:
\begin{align*}
    \forall \theta,\ \beta^{\pm}(A=-1|\theta) &= 0.5, \\
    \forall \theta,\ \beta^{\pm}(A=+1|\theta) &= 0.5.
\end{align*}
Here, the probability to reach $s=-2$ or $s=+2$ at the last time step is much lower, and gives the value of that strategy:
\begin{align*}
    V(\beta^{\pm})
    & = Pr(s_2=+2 | \beta^{\pm} ) + Pr(s_2=+2 | \beta^{\pm} ) \\
    & = Pr(a_0=+1 | \beta^{\pm} ) \cdot Pr(a_1=+1 | \beta^{\pm} ) \\
    & + Pr(a_0=-1 | \beta^{\pm} ) \cdot Pr(a_1=-1 | \beta^{\pm} ) \\
    & = \underbrace{0.5\cdot 0.5}_{0.25} + \underbrace{0.5\cdot 0.5}_{0.25} 
    = 0.5.
\end{align*}

\poubelle{
\newpage

\begin{tcolorbox}[breakable,enhanced]
Intially, $\beta_{\depth:}$ is defined as a function, taking as input histories and yielding a distribution probability. Then, the canonical sense to $\lambda \beta_{\depth:}^1 + (1-\lambda) \tilde \beta_{\depth:}^1$ is the function $\psi_{\depth:} : \theta_t \mapsto \psi_{\depth:}(\theta_t) \in \Delta(\cA^1)$. For any $\theta_t,a$, $\psi_{\depth:}(\theta_t)(a) =\lambda \mathbb{P}(a \mid \beta_{\depth:}^1(\theta_t)) + (1-\lambda) \mathbb{P}(a \mid \tilde \beta_{\depth:}^1(\theta_t))$ which is still a behavioral strategy. $V$ is not linear in the space of behavioral strategies as Olivier's example shows, with this sense given to "+".

Now, let us define (as I understand, Jilles' formuation) $\mathcal{F}^1 \eqdef \Delta(\cB^1_{\depth:})$. The value of any $(\mu^1,\mu^2) \in \mathcal{F}^1 \times \mathcal{F}^2$ is $V(\occ,\mu^1,\mu^2) = \int_{\cB_{\depth:}^1 \times \cB_{\depth:}^2} V(\occ_\depth,\beta_\depth^1,\beta_\depth^2) d\mu^1(\beta_\depth^1)d\mu^2(\beta_\depth^2)$. $V$ is linear in $\mathcal{F}^1$ and $\mathcal{F}^2$ (see theorem 4.9 in \url{https://olivier.garet.xyz/cours/ip/cours_l3_s1.pdf}). $\mathcal{F}^1$ is compact, convex. 

Now, Sion's theorem shows that $\max_{\mu^1} \min_{\mu^2} V(\occ_\depth,\beta_\depth^1,\beta_\depth^2) = \min_{\mu^2} \max_{\mu^1}  V(\occ_\depth,\beta_\depth^1,\beta_\depth^2)$.

We now only need to show (but should be trivial) that 
\begin{itemize}
    \item $\mathcal{F}$ is convex
    \item $\mathcal{F} \simeq \cB_{\depth:}$ (already done in \Cref{th|eqStrategies})
\end{itemize}
and we should be fine.
\end{tcolorbox}
}

\subsubsection{Back to Mixed Strategies}
\label{app|backToMixedStrategies}



We now generalize mixed strategies as a mathematical tool to handle
subgames of a zs-OMG as normal-form games, and give some preliminary
results.

First, for a given $\occ_\depth$ and $\depth \leq \depth'$, let $\vmu_{0:\depth'-1|\occ_\depth\rangle}$ denote a mixed strategy profile that is defined over $0:\depth'-1$, and induces (/is {\em compatible} with) $\occ_\depth$ at time $\depth$.
%
%
Then, to complete a given mixed {\em prefix} strategy $\vmu_{0:\depth'-1|\occ_\depth\rangle}$ (here $\depth=\depth'$), the solver should
provide each player with a different {\em suffix} strategy to execute for
each $\theta^i_\depth$ it could be facing.
We now detail how to build an equivalent set of mixed {\em full}
strategies for \player $i$.
Each of the pure {\em prefix} strategies $\pi^i_{0:\depth-1}$ used in
$\mu^i_{0:\depth-1|\occ_\depth\rangle}$ (belonging to a set denoted
$\Pi^i_{0:\depth-1|\occ_\depth\rangle}$) can be extended by appending a
different pure {\em suffix} strategy $\pi^i_{\depth:H-1}$ at each of its
leaf nodes, which leads to a large set of pure strategies
$\Pi^i_{0:H-1}(\pi^i_{0:\depth-1})$.
Then, let $M^i_{0:H-1|\occ_\depth\rangle}$ be the set of mixed {\em full}
strategies $\mu^i_{0:H-1|\occ_\depth\rangle}$ obtained by considering
the distributions over
$\bigcup_{\pi^i_{0:\depth-1}\in \Pi^i_{0:\depth-1|\occ_\depth\rangle}}
\Pi^i_{0:H-1}(\pi^i_{0:\depth-1})$
that verify, $\forall \pi^i_{0:\depth-1}$,
\begin{align}
  \sum_{\substack{\pi^i_{0:H-1} \in \\ \Pi^i_{0:H-1}(\pi^i_{0:\depth-1}) \hspace{-1cm}}} \mu^i_{0:H-1|\occ_\depth\rangle}(\pi^i_{0:H-1})
  & = \mu^i_{0:\depth-1|\occ_\depth\rangle}(\pi^i_{0:\depth-1}).
    \label{eq:compatible:mixed}
\end{align}
This is the set of mixed strategies compatible with $\occ_\depth$.


\begin{restatable}[Proof in App.~\extref{proofLemEquivalence}]{lemma}{lemEquivalence}
  \label{lem:equivalence}
  %
  $M^i_{0:H-1|\occ_\depth\rangle}$ is convex and equivalent to the set
  of behavioral strategies $\beta^i_{0:H-1|\occ_\depth \rangle}$, thus
  sufficient to search for a Nash equilibrium in $\occ_\depth$.
\end{restatable}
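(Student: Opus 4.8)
The plan is to prove the three assertions — convexity of $M^i_{0:H-1|\occ_\depth\rangle}$, its realization‑equivalence with the behavioral strategies compatible with $\occ_\depth$, and sufficiency for Nash‑equilibrium search — in that order.

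\textbf{Convexity.} By construction $M^i_{0:H-1|\occ_\depth\rangle}$ is the set of probability distributions over the \emph{finite} support $\bigcup_{\pi^i_{0:\depth-1}\in\Pi^i_{0:\depth-1|\occ_\depth\rangle}}\Pi^i_{0:H-1}(\pi^i_{0:\depth-1})$ that moreover satisfy the finitely many linear equalities \eqref{eq:compatible:mixed}. A probability simplex is a convex polytope, and its intersection with the affine subspace cut out by \eqref{eq:compatible:mixed} is again convex (and compact, and nonempty — it contains, e.g., the distribution obtained by spreading each prefix weight $\mu^i_{0:\depth-1|\occ_\depth\rangle}(\pi^i_{0:\depth-1})$ uniformly over the suffix‑extensions of $\pi^i_{0:\depth-1}$). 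This settles the first claim.

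\textbf{Equivalence.} Here I would invoke \citeauthor{Kuhn-ctg50}'s theorem \citep{Kuhn-ctg50}, which applies because each player individually has perfect recall in a zs-POSG: an \aoh{} $\theta^i_\depth$ records all of player $i$'s past actions and observations. Over the full horizon $0:H-1$, Kuhn's theorem yields, for every behavioral strategy $\beta^i_{0:H-1}$, a realization‑equivalent mixed strategy $\mu^i_{0:H-1}$ (same realization weights $rw^i(\theta^i_t,a^i)$ on all of player $i$'s sequences, hence the same induced distribution over histories, hence the same value against any fixed opponent), and conversely. It remains to check that this correspondence intertwines ``compatible with $\occ_\depth$'' on both sides. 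The point is that compatibility with $\occ_\depth$ is a constraint on the prefix $0:\depth-1$ only and — because in \Cref{eq|transition} player $i$'s and player $\neg i$'s reach‑contributions to a joint \aoh{} factorize, the remaining factor being fixed by the POSG dynamics — it is equivalent to reproducing player $i$'s marginal realization weights on $\Theta^i_\depth$, i.e. the marginal $\occ^{m,i}_\depth$ (the opponent being free to realize $\occ^{c,i}_\depth$), while being silent on the suffix $\depth:H-1$. On the mixed side, summing \eqref{eq:compatible:mixed} over $\pi^i_{0:\depth-1}$ shows precisely that any $\mu^i_{0:H-1|\occ_\depth\rangle}\in M^i_{0:H-1|\occ_\depth\rangle}$ has prefix marginal $\mu^i_{0:\depth-1|\occ_\depth\rangle}$, hence the prescribed prefix realization weights, hence induces $\occ^{m,i}_\depth$; conversely, a behavioral strategy compatible with $\occ_\depth$ maps under Kuhn to a mixed strategy whose prefix marginal is realization‑equivalent to $\mu^i_{0:\depth-1|\occ_\depth\rangle}$ and which can therefore be represented inside $M^i_{0:H-1|\occ_\depth\rangle}$ — the free recombination of pure suffixes at the leaves of each pure prefix is exactly what the $M^i$ construction permits. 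Thus the two compatibility conditions coincide and the equivalence descends to the constrained subsets.

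\textbf{Sufficiency, and the hard part.} Extend $V_\depth(\occ_\depth,\cdot,\cdot)$ to mixed profiles by $V_\depth(\occ_\depth,\vmu)\eqdef\E_{(\pi^1,\pi^2)\sim\vmu}[V_\depth(\occ_\depth,\pi^1,\pi^2)]$; this is bilinear in $(\mu^1_{0:H-1|\occ_\depth\rangle},\mu^2_{0:H-1|\occ_\depth\rangle})$, ranging over the convex compact sets $M^1_{0:H-1|\occ_\depth\rangle}$ and $M^2_{0:H-1|\occ_\depth\rangle}$, so von Neumann's minimax theorem \citep{Neu-ma28} gives a saddle point in compatible mixed strategies; translating it back through the equivalence just established produces a behavioral saddle point compatible with $\occ_\depth$, i.e. a Nash equilibrium of the subgame at $\occ_\depth$, whose value equals the one in \Cref{eq|subgame} by realization‑equivalence. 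The main obstacle is exactly the middle step: making rigorous that \citeauthor{Kuhn-ctg50}'s behavioral–mixed equivalence restricts to a correspondence between behavioral strategies compatible with $\occ_\depth$ and $M^i_{0:H-1|\occ_\depth\rangle}$. This needs careful bookkeeping of realization weights — isolating that ``compatible with $\occ_\depth$'' is a condition purely on prefix realization weights, that it is exactly equivalent to inducing $\occ^{m,i}_\depth$, and that the leaf‑by‑leaf suffix extension defining $M^i$ is the image, under Kuhn's map, of that prefix condition. Convexity and the minimax conclusion are then routine.
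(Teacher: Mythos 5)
Your argument is correct and takes essentially the same route as the paper: convexity follows from the linearity of the compatibility constraints (\cref{eq:compatible:mixed}), the behavioral--mixed equivalence is the Kuhn-style observation that suffix behavior can be generated independently at each \aoh{} (which the paper compresses into a single sentence), and your concluding bilinearity/minimax step is exactly what the paper carries out separately in \Cref{th|unicityNev}. The only blemish is your identification of player $i$'s prefix realization weights with the marginal $\occ^{m,i}_\depth$ (the latter also involves the opponent's weights and the chance/dynamics factor); this is harmless here, since compatibility with $\occ_\depth$ indeed constrains only $i$'s prefix realization weights, which Kuhn's correspondence preserves.
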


\begin{proof}
  Let $\mu^i_{0:H-1|\occ_\depth\rangle}$ and
  $\nu^i_{0:H-1|\occ_\depth\rangle}$ be two mixed strategies in
  $M^i_{0:H-1|\occ_\depth\rangle}$, \ie, which are both full and compatible
  with occupancy state $\occ_\depth$ at time step $\depth$, and
  $\alpha \in [0,1]$.
  Then, for any $\pi^i_{0:\depth-1}$,
  \begin{align*}
    & \sum_{\pi^i_{0:H-1} \in \Pi^i_{0:H-1}(\pi^i_{0:\depth-1})} \left[
      \alpha \cdot \mu^i_{0:H-1|\occ_\depth\rangle}(\pi^i_{0:H-1}) + (1-\alpha) \cdot \nu^i_{0:H-1|\occ_\depth\rangle}(\pi^i_{0:H-1})
      \right] \\
    & =
      \alpha \left[ \sum_{\pi^i_{0:H-1} \in \Pi^i_{0:H-1}(\pi^i_{0:\depth-1})}
      \mu^i_{0:H-1|\occ_\depth\rangle}(\pi^i_{0:H-1}) \right]
      \\
      & \qquad + (1-\alpha) \left[ \sum_{\pi^i_{0:H-1} \in \Pi^i_{0:H-1}(\pi^i_{0:\depth-1})}
      \nu^i_{0:H-1|\occ_\depth\rangle}(\pi^i_{0:H-1}) \right] \\
    \intertext{(because both mixed strategies are compatible with $\occ_\depth$ 
    (eq.~\ref{eq:compatible:mixed}, p.~\pageref{eq:compatible:mixed}):)}
    & =
      \alpha \cdot \mu^i_{0:\depth-1|\occ_\depth\rangle}(\pi^i_{0:\depth-1})
      + (1-\alpha) \cdot \mu^i_{0:\depth-1|\occ_\depth\rangle}(\pi^i_{0:\depth-1}) \\
    & =
      \mu^i_{0:\depth-1|\occ_\depth\rangle}(\pi^i_{0:\depth-1}).
  \end{align*}
  Eq.~\ref{eq:compatible:mixed} thus also applies to
  $\alpha \cdot \mu^i_{0:H-1|\occ_\depth\rangle} + (1-\alpha) \cdot
  \nu^i_{0:H-1|\occ_\depth\rangle}$,
  proving that it belongs to $M^i_{0:H-1|\occ_\depth\rangle}$ and, as
  a consequence, that this set is convex.

  The equivalence with the set of behavioral strategies simply relies
  on the fact that all mixed strategies over $\depth:H-1$ can be
  independently generated at each action-observation history
  $\theta^i_{0:\depth-1}$.
\end{proof}





While only future rewards are relevant when making a decision at
$\depth$, reasoning with mixed strategies defined from $t=0$ will be
convenient because $V_\depth(\occ_\depth,\cdot,\cdot)$ is linear in
$\mu^i_{0:H-1 | \occ_\depth\rangle}$, which allows coming back to a
standard normal-form game and applying known results.

In the remaining, we simply note $\mu^i$ (without index) the mixed
strategies in $M^i_{0:H-1|\occ_\depth\rangle}$, set which we now note
$M^i_{|\occ_\depth\rangle}$.
Also, since we shall work with local game
$Q^*_\depth(\occ_\depth,\vbeta_\depth)$, let us
define:
%
%
$M^i_{|\occ_\depth,\beta^j_\depth\rangle}$ the set of $i$'s mixed
strategies compatible with occupancy states reachable given
$\occ_\depth$ and $\beta^j_\depth$ {\footnotesize (with either $j=i$
  or $j=-i$)}.
Then,
$M^i_{|T(\occ_\depth,\vbeta_\depth)\rangle} \subseteq
M^i_{|\occ_\depth,\beta^j_\depth\rangle} \subseteq
M^i_{|\occ_\depth\rangle}$
(inclusion due to the latter sets being less constrained in their
definition).
As a consequence, if maximizing some function $f$ over $i$'s mixed
strategies compatible with a given $\occ_\depth$: 
\begin{align*}
  \max_{\mu^i \in M^i_{| \occ_\depth \rangle}} f(\occ_\depth, \mu^i, \dots)
   & \geq \max_{\mu^i \in M^i_{| \occ_\depth, \beta^j_\depth \rangle}} f(\occ_\depth, \mu^i, \dots)
   \geq \max_{\mu^i \in M^i_{| \occ_\depth, \vbeta_\depth \rangle}} f(\occ_\depth, \mu^i, \dots).
\end{align*}

\subsubsection{Von Neumann's Minimax Theorem for Subgames and a Bellman Optimality Equation}

Using the previous results, one can show that \citeauthor{Neu-ma28}'s minimax theorem applies in any subgame, allowing to swap operators $\max$ and $\min$.

\thUnicityNev*

\begin{proof}
For any occupancy state $\occ_\depth$,
\begin{align}
    \max_{\beta_{\depth:}^1} \min_{\beta_{\depth:}^2} V(\occ_\depth,\vbeta_{\depth:}) & = \max_{\mu_{\depth:}^1} \min_{\mu_{\depth:}^2} V(\occ_\depth,\vmu_{\depth:}) \qquad \text{(\citeauthor{Kuhn-ctg50}'s theorem (generalized))}\\
    & = \min_{\mu_{\depth:}^2} \max_{\mu_{\depth:}^1}  V(\occ_\depth,\vmu_{\depth:}) \qquad \text{(\citeauthor{Neu-ma28}'s theorem)}\\
    & = \min_{\beta_{\depth:}^2} \max_{\beta_{\depth:}^1}  V(\occ_\depth,\vbeta_{\depth:}) \qquad \text{(again \citeauthor{Kuhn-ctg50}'s theorem (generalized))}
\end{align}
\end{proof}

One can also show that a Bellman optimality equation allows relating optimal values in subgames at $\depth$ and $\depth+1$, leading to a recursive expression of $V^*_0$.

\bellmanPpe*

\begin{proof}
  \label{proofLemAbnormalMaximinimax}

  Focusing, without loss of generality, on player $1$, we have
  (complementary explanations follow for numbered lines in
  particular):
  \begin{align}
  & \max_{\beta^1_\depth} \min_{\beta^2_\depth} Q^*_\depth(\occ_\depth,\beta^1_\depth,\beta^2_\depth) \nonumber 
     = \max_{\beta^1_\depth} \min_{\beta^2_\depth} \left[ r(\occ_\depth, \beta^1_\depth, \beta^2_\depth)
      + \gamma V^*_{\depth+1}(T(\occ_\depth,\beta^1_\depth, \beta^2_\depth)) \right] \nonumber \\
    \intertext{($V^*_{\depth+1}(T(\occ_\depth,\beta^1_\depth, \beta^2_\depth))$ being the Nash equilibrium value of normal-form game $V_{\depth+1}(T(\occ_\depth,\beta^1_\depth, \beta^2_\depth), \mu^1, \mu^2 )$:)} 
    & = \max_{\beta^1_\depth} \min_{\beta^2_\depth} \left[ r(\occ_\depth, \beta^1_\depth, \beta^2_\depth)
      + \gamma
      \max_{\mu^1\in M^1_{| \occ_\depth, \vbeta_\depth \rangle}}
      \min_{\mu^2\in M^2_{| \occ_\depth, \vbeta_\depth \rangle}}
      V_{\depth+1}(T(\occ_\depth,\beta^1_\depth, \beta^2_\depth), \mu^1, \mu^2 ) \right] \nonumber \\
    & = \max_{\beta^1_\depth} \min_{\beta^2_\depth}
      \max_{\mu^1\in M^1_{| \occ_\depth, \vbeta_\depth\rangle}}
      \min_{\mu^2\in M^2_{| \occ_\depth, \vbeta_\depth\rangle}}
      \left[ r(\occ_\depth, \beta^1_\depth, \beta^2_\depth)
      + \gamma V_{\depth+1}(T(\occ_\depth,\beta^1_\depth, \beta^2_\depth), \mu^1, \mu^2 ) \right] \nonumber \\
    \intertext{(using the equivalence between maximin and minimax values for the (constrained normal-form) game at $\depth+1$, the last two max and min operators can be swapped:)}
    & = \max_{\beta^1_\depth} \min_{\beta^2_\depth}
      \min_{\mu^2\in M^2_{| \occ_\depth, \vbeta_\depth \rangle}}
      \max_{\mu^1\in M^1_{| \occ_\depth, \vbeta_\depth \rangle}}
      \left[ r(\occ_\depth, \beta^1_\depth, \beta^2_\depth)
      + \gamma V_{\depth+1}(T(\occ_\depth,\beta^1_\depth, \beta^2_\depth), \mu^1, \mu^2) \right] \nonumber \\
    \intertext{(merging both mins (and with explanations thereafter):)}
    \nonumber
    & = \max_{\beta^1_\depth}
      \min_{\mu^2 \in M^2_{| \occ_\depth, \beta^1_\depth \rangle}}
      \max_{\mu^1 \in M^1_{| \occ_\depth, \beta^1_\depth, \beta^2_\depth(\mu^2) \rangle}}
      \\ & 
      \left[ r(\occ_\depth, \beta^1_\depth, \beta^2_\depth(\mu_2))
      + \gamma V_{\depth+1}(T(\occ_\depth,\beta^1_\depth, \beta^2_\depth(\mu_2)), \mu^1, \mu^2 ) \right] \label{eq|twomins} \\
    \intertext{(since ignoring the opponent's decision rule does not influence the expected return:)}
    & = \max_{\beta^1_\depth}
      \min_{\mu^2\in M^2_{| \occ_\depth \rangle}}
      \max_{\mu^1 \in M^1_{| \occ_\depth, \beta^1_\depth\rangle}}
      \left[ r(\occ_\depth, \beta^1_\depth, \beta^2_\depth(\mu_2))
      + \gamma V_{\depth+1}(T(\occ_\depth,\beta^1_\depth, \beta^2_\depth(\mu_2)), \mu^1, \mu^2 ) \right] \nonumber \\
    \intertext{(using again the minimax theorem's equivalence between maximin and minimax on an appropriate game:)}
    & = \max_{\beta^1_\depth}
      \max_{\mu^1\in M^1_{| \occ_\depth, \beta^1_\depth\rangle}}
      \min_{\mu^2\in M^2_{| \occ_\depth\rangle}}
      \left[ r(\occ_\depth, \beta^1_\depth, \beta^2_\depth(\mu_2))
      + \gamma V_{\depth+1}(T(\occ_\depth,\beta^1_\depth, \beta^2_\depth(\mu_2)), \mu^1, \mu^2 ) \right] \label{eq|appropriateGame} \\
    \intertext{(merging both maxs (and with explanations thereafter):)}
    & =
      \max_{\mu^1\in M^1_{| \occ_\depth\rangle}}
      \min_{\mu^2_{\depth} | \occ_\depth)}
      \left[ r(\occ_\depth, \beta^1_\depth(\mu_1), \beta^2_\depth(\mu_2))
      + \gamma V_{\depth+1}(T(\occ_\depth,\beta^1_\depth(\mu^1), \beta^2_\depth(\mu_2)), \mu^1, \mu^2 ) \right] \label{eq|twomaxs} \\
    \intertext{(again with the equivalence property discussed before the lemma:)}
    & =
      \max_{\mu^1\in M^1_{| \occ_\depth\rangle}}
      \min_{\mu^2\in M^2_{| \occ_\depth\rangle}}
      V_\depth(\occ_\depth, \mu^1, \mu^2) \nonumber \\
    & =
      \max_{\beta^1_{\depth:H-1 | \occ_\depth\rangle}}
      \min_{\beta^2_{\depth:H-1 | \occ_\depth\rangle}}
      V_\depth(\occ_\depth, \beta^1_{\depth:H-1}, \beta^2_{\depth:H-1}) \nonumber \\
    & \eqdef
      V^*_\depth(\occ_\depth). \nonumber
  \end{align}

  More precisely, line \ref{eq|twomins} (and, similarly, line
  \ref{eq|twomaxs}) is obtained by observing that
  \begin{itemize}
  \item minimizing over both (i) $\beta^2_\depth$ and (ii) $\mu^2$
    constrained by $\occ_\depth$ and $\vbeta_\depth$ is equivalent to
    minimizing over $\mu^2$ constrained by $\occ_\depth$ and
    $\beta^1_\depth$; and
  \item in the reminder of the formula, decision rule $\beta^2_\depth$
    at time $\depth$ can be retrieved as a function of $\mu^2$ (noted
    $\beta^2_\depth(\mu^2)$).
  \end{itemize}
  Also, line \ref{eq|appropriateGame} results from the observation
  that, while $M^1_{| \occ_\depth, \beta^1_\depth \rangle}$ and
  $M^2_{| \occ_\depth \rangle}$ allow to actually make decisions over
  different time intervals, we are here minimizing over $\mu^2$ while
  maximizing over $\mu^1$ a function that is linear in both input
  spaces.
  This amounts to solving some 2-player zero-sum normal-form game,
  hence the applicability of von Neumann's minimax theorem.

  The above derivation tells us that the maximin value (the best
  outcome player $1$ can guarantee whatever player $2$'s strategy) in
  the one-time-step game is thus the Nash equilibrium value (NEV) for
  the complete subgame from $\depth$ onwards.
\end{proof}

\temporallyHidden{
\olivier{Il {\bf faut} utiliser l'environnement "restatable" pour ne pas avoir à maintenir plusieurs versions du même théorème/corollaire/\dots Les corrections faites dans le corps de l'article ne sont pas reprises ici.}

\olivier{{\bf La suite était écrite en supposant qu'on travaillerait avec des méta-stratégies (donc en boucle fermée). On peut encore en tirer quelques idées, mais bcp de choses deviennent hors-sujet, à commencer par les réfs biblio.} 
Je crois qu'il faut expliquer clairement (dans l'article) que:
\begin{itemize}
    \item le concept solution qui nous intéresse pour un \zsomg{} est celui d'équilibre de Nash; puis
    \item le POB s'applique aux \zsomg{} (ce qui n'a rien d'évident), permettant de construire un \nes{} en $\occ_\depth$ si on a des \nes{} pour tout \os atteignable depuis $\occ_\depth$, càd d'écrire une équation de Bellman.
\end{itemize}
Tout cela risque d'être compliqué parce qu'on manipule des méta-stratégies, donc un espace de stratégies avec un nombre infini de dimensions !}
\Aurelien{Concernant tous les soucis de maxmin et minmax, on devrait pouvoir voir un zs-oMG comme un zs-SG particulier avec espace d'état infini; on a pas déjà des propriétés dessus? Du genre le théorème de Sion (pas celui de Von-neumann qui lui se place effectivement dans le cas d'une matrice, donc finie); le théorème de sion se place juste dans un espace topologique vectoriels $X \times Y$ où la fonction de payoff est convexe/concave, donc il devrait s'appliquer à $\cM^1 \times \cM^2$. La convexe-concavité en les meta-stratégies me semble facile.}
\olivier{Les mêmes idées étaient aussi en train de me venir :) On notera que, en plus de l'espace d'états infini, l'espace d'actions (que l'on considère les règles de décision pour un jeu local, ou les méta-stratégies (comportementales) pour un sous-jeu) est aussi infini. Ces espaces sont aussi convexes, mais il n'est pas évident que l'espace des méta-stratégies (un espace de fonctions) soit compact.
\\
Note: Papiers trouvés sur les SG à états et actions continus: d'abord \citep{Sobel-jap73} dans le cas somme générale et avec facteur d'atténuation, de là, \citep{MaiPAr-jota70,MaiPAr-jota71} dans le cas somme nulle et (toujours) avec facteur d'atténuation.}
\aurelien{de ce que je cois, le papier de Sobel, 1973 donne bien la solution a notre probleme, non? les hypothese telles qu'espace metriue etc.. sont triviallement respectees}
\aurelien{ps : $\cM^i$ est bien ce qu'on appelle un "espace topologique vectoriel", puisque c'est un espace vectoriel (potentiellement de dimension infinie, peu importe) qui a une topologie.}
\aurelien{
\begin{align}
    r(\occ_\depth,\vmu_\depth) = \sum_{\boldsymbol \theta_\depth, \boldsymbol a} \vmu(\occ_\depth)(\boldsymbol a \mid \boldsymbol \theta_\depth) \sum_{s} \occ_\depth(s \mid \boldsymbol \theta) r(s,\boldsymbol a)
\end{align}
On montre après facilement par récurrence que ça s'applique aussi sur plusieurs pas de temps. Du coup, minimax = maximin et par la preuve classique d'échange de min et de max, on montre que POB s'applique bien aux zs-omg. Non?
}
\olivier{J'ai du mal à suivre.
\begin{itemize} 
    \item D'abord, si cette récurrence cherche à montrer que la récompense est linéaire dans l'espace des méta-stratégies, ça ne me semble pas évident.
    En effet, si on modifie $\mu(\occ_\depth)$, alors l'état d'occupation atteint $\occ_{\depth+1}$ n'est plus le même, et la règle de décision associée peut être complètement différente.
    \\
    Ce doit être en particulier vrai si on s'intéresse à Matching Pennies: $\mu^{*,2}_1(\occ_1)$ est une fonction discontinue, basculant d'une règle de décision déterministe à une autre en un certain $\occ_1$ limite (donc ici en un certain $\mu^{*,1}_0(b_0)$ limite).
    Ainsi, si on part de $\vmu=\vmu^*$ et qu'on fait varier $\mu^{1}_0(b_0)$ linéairement, la récompense $r(\occ_0,\vmu_0)$ va croître linéairement jusqu'en la règle de décision limite ($\mu^{*,1}_0(b_0)$), puis décroître linéairement.
    \item Ensuite, le théorème de Sion requière qu'un des espaces d'actions soit compact, ce qui ne me semble pas évident quand il s'agit de méta-stratégies (entre autres par incompétence sur le sujet). [Les autres hypothèses seraient à vérifier proprement, mais me semblent moins problématiques.]
\end{itemize}
}

\Aurelien{je trouve qu'on s'éloigne du sujet et que cette discussion  pourrait être intéressante pour la version JAIR, mais surtout en tout début d'article, elle risque de perdre le lecteur. Ca ne me choquerait pas de définir les sous-jeux "restreints" d'un zs-omg où les stratégies sont des stratégies comportementales en signalant par une remarque que ce n'est pas un sous-jeu d'un zs-omg, mais que les sous-jeux des zs-omgs ne nous intéressent pas vraiment. Peu importe ce que ça représente en tant que stratégie/jeu dans l'omg, la suele chose qui nous intéresse c'est la valeur des fonctions.}

\olivier{Hum\dots Si je comprends bien, tu proposes de ne pas aborder le \zsomg{} avec des méta-stratégies comme solutions, mais avec des stratégies comportementales. C'est peut-être faisable, mais il faut
(1) bien expliquer qu'on fait cela (qu'on veut résoudre le \zsomg{} en "boucle ouverte" et non en "boucle fermée" comme on résoudrait un SG d'habitude); et
(2) démontrer que, dans ce problème, le POB s'applique: si je sais résoudre le "sous-jeu" (trouver un \nes{}) partant de n'importe quel $\occ_{\depth+1}$, alors je sais en faire autant pour tout $\occ_\depth$. 
\\
Ceci-dit, je n'arrive pas à voir comment justifier les descentes de HSVI qui se font en concaténant des règles de décision. Où peut-on montrer que, dans le sous-jeu associé à $\occ_\depth$, la première règle de décision est solution du jeu local ?
Peut-être a-t-on déjà bcp d'éléments du puzzle, et suffit-il de les remettre à la bonne place ?}

\aurelien{
\begin{lemma}[Sion's theorem]
  for any occupancy state $\occ_\depth$, sion's theorem implies that $\max_{\mu_{\depth:}^1 \in \cM^1} \min_{\mu_{\depth:}^2 \in \cM^2} V(\occ_\depth, \vmu_{\depth:}) = \min_{\mu_{\depth:}^2 \in \cM^2} \max_{\mu_{\depth:}^1 \in \cM^1}  V(\occ_\depth, \vmu_{\depth:})$
\end{lemma}
\begin{lemma}[POB for zs-oMGs]
  \begin{align}
      & \max_{\mu_{\depth:}^1 \in \cM^1} \min_{\mu_{\depth:}^2 \in \cM^2} V(\occ_\depth, \vmu_{\depth:}) \\
      & = \max_{\mu_{\depth}^1 \in \cM^1} \max_{\mu_{\depth+1:}^1 \in \cM^1} \min_{\mu_{\depth:}^2 \in \cM^2} V(\occ_\depth, \vmu_{\depth:})  \qquad \text{$\downarrow$ th. de sion}\\
      & = \max_{\mu_{\depth}^1 \in \cM^1} \min_{\mu_{\depth}^2 \in \cM^2} r(\occ_\depth,\vmu_\depth) + \gamma \max_{\mu_{\depth+1:}^1 \in \cM^1} \min_{\mu_{\depth+1:}^2 \in \cM^2} V(T(\occ_\depth,\vmu_\depth), \vmu_{\depth+1:}) 
  \end{align}
\end{lemma}
}
\begin{theorem}
  The NEV value of a zs-oMG at $\occ_\depth$ is equal to the value of the zs-POSG subgame at $\occ_\depth$.
\end{theorem}

\begin{proof}
\label{app|temp}
We prove this property by induction.
First, let $\occ_{H-1}$ be any occupancy-state for the last time step.
Then,
\begin{align}
    V_{oMG}^* &= \max_{\mu^1_{H-1} \in M_{H-1}^1} \min_{\mu^2_{H-1} \in M_{H-1}^2} r(\occ_{H-1},\vmu) \\
    & = \max_{\mu^1 \in M_{H-1}^1} \min_{\mu^2 \in M_{H-1}^2} r(\occ_{H-1}, \vmu_{H-1}(\occ_{H-1})) \\
    & = \max_{\beta^1_{H-1} \in \cB_{H-1}^1} \min_{\beta^2_{H-1} \in \cB_{H-1}^2} r(\occ_{H-1}, \vbeta_{H-1}) \\
    & = V_{POSG}^*(\occ_{H-1}).
\end{align}

Now, assume that, for any occupancy state $\occ_{\depth+1}$, $V_{oMG}^*(\occ_{\depth+1}) = V_{POSG}^*(\occ_{\depth+1})$.
Then, for any occupancy-state $\occ_\depth$,
\begin{align}
    \intertext{\olivier{Pour pouvoir écrire la première ligne, il faut déjà avoir démontré que la valeur de l'équilibre de Nash d'un sous-zs-OMG en $\occ_\depth$ peut s'écrire via le maximin ou le minimax.  Or il me semble qu'on n'a pas encore cette propriété du zs-OMG.}}
    &\max_{\mu_{\depth:}^1} \min_{\mu_{\depth:}^2} V_{oMG}(\occ_\depth, \boldsymbol \mu_{\depth:}) \\
    & = \max_{\mu_{\depth:}^1} \min_{\mu_{\depth:}^2} r(\occ_\depth, \boldsymbol \mu_{\depth}) + \gamma V_{oMG}(T(\occ_\depth, \boldsymbol \mu_{\depth}), \boldsymbol \mu_{\depth+1:}) 
    \intertext {\olivier{Même observation pour le jeu local en $\occ_\depth$.}}
    & = \max_{\mu_{\depth}^1} \min_{\mu_{\depth}^2} r(\occ_\depth, \boldsymbol \mu_{\depth}) + \gamma \max_{\mu_{\depth+1:}^1} \min_{\mu_{\depth+1:}^2} V_{oMG}(T(\occ_\depth, \boldsymbol \mu_{\depth}), \boldsymbol \mu_{\depth+1:}) \\
    & = \max_{\mu_{\depth}^1} \min_{\mu_{\depth}^2} r(\occ_\depth, \boldsymbol \mu_{\depth}) + \gamma \max_{\beta_{\depth+1:}^1} \min_{\beta_{\depth+1:}^2} V_{POSG}(T(\occ_\depth, \boldsymbol \mu_{\depth}), \boldsymbol \beta_{\depth+1:}) \\
    & = \max_{\beta_{\depth}^1} \min_{\beta_{\depth}^2} r(\occ_\depth, \boldsymbol \beta_{\depth}) + \gamma \max_{\beta_{\depth+1:}^1} \min_{\beta_{\depth+1:}^2} V_{POSG}(T(\occ_\depth, \boldsymbol \mu_{\depth}), \boldsymbol \beta_{\depth+1:}) \\
    & = \max_{\beta_{\depth:}^1} \min_{\beta_{\depth:}^2} r(\occ_\depth, \boldsymbol \beta_{\depth}) + \gamma V_{POSG}(T(\occ_\depth, \boldsymbol \mu_{\depth}), \boldsymbol \beta_{\depth+1:}) \\
    & = \max_{\beta_{\depth:}^1} \min_{\beta_{\depth:}^2} V_{POSG}(\occ_\depth, \boldsymbol \beta_{\depth:}) \\
\end{align}
\end{proof}
}

\section{Solving zs-OMGs}
\label{app|zsOMGs}


%
\temporallyHidden{
\aurelien{
First, using \Cref{app|lin|occ}, von Neumman's minimax theorem applies to any subgame $(\beta_\depth^1,\beta_\depth^2) \to V(\occ_\depth,\beta_\depth^1,\beta_\depth^2)$. 
Secondly, we show that Bellman's principle of Optimality applies to the search for NES on any subgame $V(\occ_\depth,\cdot,\cdot)$.
\bellmanPpe*
\begin{proof}
\temporallyHidden{
Soit $\zeta = \langle (i_1,t_1), \dots (i_{2K}, t_{2K}) \rangle$ un ordre cible (qu'on veut atteindre) des opérateurs de $\depth$ à $H-1$ (d'où $K=(H-1)-\depth + 1$ ?).
    On peut en extraire $\zeta^1 = \langle (i^1_1,t^1_1), \dots (i^1_K, t^1_K) \rangle$ (resp. $\zeta^2$) qui ne contient que les paires dont le premier terme est $1$ (resp. $2$).
    Les max (resp. min) étant interchangeable, on a:
    \begin{align}
        V^*_\depth(\occ_\depth)
        & = \max_{\beta^1_{\depth:}} \min_{\beta^2_{\depth:}} V_\depth(\occ_\depth,\vbeta_{\depth:}) \\
        & = \max_{\beta^1_0} \dots \max_{\beta^1_{H-1}}
        \min_{\beta^2_0} \dots \min_{\beta^2_{H-1}}
        V_\depth(\occ_\depth,\vbeta_{\depth:}) \\
        & = \max_{\beta^1_0} \dots \max_{\beta^1_{H-1}}
        \min_{\beta^2_0} \dots \min_{\beta^2_{H-1}}
        V_\depth(\occ_\depth,\vbeta_{\depth:}) \\
        & \dots \text{TODO}
    \end{align}
    \item En déduire la récurrence suivante, qui justifie l'utilisation du POB:
    \begin{align}
        V^*_\depth(\occ_\depth)
        & = \max_{\beta^1_{\depth}} \min_{\beta^2_{\depth}} \max_{\beta^1_{\depth+1:}} \min_{\beta^2_{\depth+1:}} V_\depth(\occ_\depth,\vbeta_{\depth:})
        \\
        & = \max_{\beta^1_{\depth}} \min_{\beta^2_{\depth}} \max_{\beta^1_{\depth+1:}} \min_{\beta^2_{\depth+1:}} \left[ r(\occ_\depth,\vbeta_{\depth}) + V_{\depth+1}(T(\occ_\depth,\vbeta_{\depth}),\vbeta_{\depth+1:}) \right]
        \\
        & = \max_{\beta^1_{\depth}} \min_{\beta^2_{\depth}} \left[ r(\occ_\depth,\vbeta_{\depth}) + \max_{\beta^1_{\depth+1:}} \min_{\beta^2_{\depth+1:}} V_{\depth+1}(T(\occ_\depth,\vbeta_{\depth}),\vbeta_{\depth+1:}) \right]
        \\
        & = \max_{\beta^1_{\depth}} \min_{\beta^2_{\depth}} \left[ r(\occ_\depth,\vbeta_{\depth}) +  V^*_{\depth+1}(T(\occ_\depth,\vbeta_{\depth})) \right].
    \end{align}
    }
    \begin{align}
    maximin(\occ_\depth) 
    & \eqdef \max_{\beta^1_\depth} \min_{\beta^2_\depth} \left[ r(\occ_\depth, \beta^1_\depth, \beta^2_\depth)
      + \gamma V^*_{\depth+1}(T(\occ_\depth,\beta^1_\depth, \beta^2_\depth)) \right] \nonumber \\
    \intertext{($V^*_{\depth+1}(T(\occ_\depth,\beta^1_\depth, \beta^2_\depth))$ being the Nash equilibrium value of normal-form game $V_{\depth+1}(T(\occ_\depth,\beta^1_\depth, \beta^2_\depth), \mu^1, \mu^2 )$:)} 
    & = \max_{\beta^1_\depth} \min_{\beta^2_\depth} \left[ r(\occ_\depth, \beta^1_\depth, \beta^2_\depth)
      + \gamma
      \max_{\beta_{\depth+1:}^1}
      \min_{\beta_{\depth+1:}^2}
      V_{\depth+1}(T(\occ_\depth,\beta^1_\depth, \beta^2_\depth), \vbeta_{\depth+1:} ) \right] \nonumber \\
    & = \max_{\beta^1_\depth} \min_{\beta^2_\depth}
      \max_{\beta_{\depth+1:}^1}
      \min_{\beta_{\depth+1:}^2}
      \left[ r(\occ_\depth, \beta^1_\depth, \beta^2_\depth)
      + \gamma V_{\depth+1}(T(\occ_\depth,\beta^1_\depth, \beta^2_\depth), \mu^1, \mu^2 ) \right] \nonumber \\
    \intertext{(using the equivalence between maximin and minimax values for the (constrained normal-form) game at $\depth+1$, the last two max and min operators can be swapped:)}
    & = \max_{\beta^1_\depth} \min_{\beta^2_\depth}
      \min_{\beta_{\depth+1:}^2}
      \max_{\beta_{\depth+1:}^1}
      \left[ r(\occ_\depth, \beta^1_\depth, \beta^2_\depth)
      + \gamma V_{\depth+1}(T(\occ_\depth,\beta^1_\depth, \beta^2_\depth), \vbeta_{\depth:}) \right] \nonumber \\
    \intertext{(merging both mins (and with explanations thereafter):)}
    & = \max_{\beta^1_\depth}
      \min_{\beta_{\depth:}^2}
      \max_{\beta_{\depth+1:}^1}
      \left[ r(\occ_\depth, \beta^1_\depth, \beta^2_\depth(\mu_2))
      + \gamma V_{\depth+1}(T(\occ_\depth,\beta^1_\depth, \beta^2_\depth), \vbeta_{\depth:}) \right] \label{eq|twomins} \\
    \intertext{(using again the minimax theorem's equivalence between maximin and minimax on an appropriate game:)}
    & = \max_{\beta^1_\depth}
      \max_{\beta_{\depth+1:}^1}
      \min_{\beta_{\depth+1:}^2}
      \left[ r(\occ_\depth, \beta^1_\depth, \beta^2_\depth(\mu_2))
      + \gamma V_{\depth+1}(T(\occ_\depth,\beta^1_\depth, \beta^2_\depth), \vbeta_{\depth:}) \right] \label{eq|appropriateGame} \\
    \intertext{(merging both maxs (and with explanations thereafter):)}
    & =
      \max_{\beta_{\depth:}^1}
      \min_{\beta_{\depth:}^1}
      \left[ r(\occ_\depth, \vbeta_\depth)
      + \gamma V_{\depth+1}(T(\occ_\depth,\beta^1_\depth, \beta^2_\depth), \vbeta_{\depth:}) \right] \label{eq|twomaxs} \\
    & \eqdef
      V^*_\depth(\occ_\depth). \nonumber
  \end{align}
\end{proof}
}
%
%
%
%
%
%
%
%
}

\subsection{Preliminary Properties}

\subsubsection{Properties of \texorpdfstring{$\Tm{1}$}{Tm1} and \texorpdfstring{$\Tc{1}$}{Tc1}}

The first two lemmas below present properties of $\Tm{1}$ and $\Tc{1}$ that will be useful afterwards.

\begin{lemma}
  \labelT{lem|T1mlin}
  $\Tm{1}(\occ_\depth, \vbeta_\depth)$ is linear in $\occ_\depth$, $\beta^1_\depth$, and $\beta^2_\depth$.
\end{lemma}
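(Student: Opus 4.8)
The plan is to start from the closed-form expression for the transition function $T$ given in \Cref{eq|transition}, obtain $\Tm{1}$ by marginalizing over player~$2$'s next action–observation history, and then simply read off the multilinear structure. Concretely, writing a length-$(\depth{+}1)$ history of player~$1$ as $\theta^1_{\depth+1}=(\theta^1_\depth,a^1,z^1)$, I would recall that
\begin{align*}
\Tm{1}(\occ_\depth,\vbeta_\depth)(\theta^1_{\depth+1})
&\eqdef \sum_{\theta^2_\depth,a^2,z^2} T(\occ_\depth,\vbeta_\depth)\big((\theta^1_\depth,a^1,z^1),(\theta^2_\depth,a^2,z^2)\big),
\end{align*}
and substitute \Cref{eq|transition} to obtain
\begin{align*}
\Tm{1}(\occ_\depth,\vbeta_\depth)(\theta^1_{\depth+1})
&= \beta^1_\depth(\theta^1_\depth,a^1)\sum_{\theta^2_\depth}\occ_\depth(\theta^1_\depth,\theta^2_\depth)\sum_{a^2,z^2}\beta^2_\depth(\theta^2_\depth,a^2)\sum_{s,s'}\PP{s}{a^1,a^2}{s'}{z^1,z^2}\, b(s\mid\theta^1_\depth,\theta^2_\depth).
\end{align*}

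The key observation is that the belief $b(s\mid\vth_\depth)$ produced by the HMM filtering process depends only on the joint history $\vth_\depth$ and on the (fixed) POSG dynamics; it is independent of $\occ_\depth$, of $\beta^1_\depth$ and of $\beta^2_\depth$. Hence, for each fixed $\theta^1_{\depth+1}$, the inner double sum $\sum_{s,s'}\PP{s}{a^1,a^2}{s'}{z^1,z^2}\,b(s\mid\theta^1_\depth,\theta^2_\depth)$ is a constant $c(\theta^1_\depth,\theta^2_\depth,a^1,a^2,z^1,z^2)$ that does not depend on the three arguments of $\Tm{1}$. With this in hand, linearity in each argument is immediate by inspection of the displayed formula: for fixed $\beta^1_\depth,\beta^2_\depth$, the component is a linear combination (with constant coefficients) of the entries $\occ_\depth(\theta^1_\depth,\theta^2_\depth)$; for fixed $\occ_\depth,\beta^2_\depth$, it equals $\beta^1_\depth(\theta^1_\depth,a^1)$ times a constant, hence is linear in $\beta^1_\depth$ (in fact it depends on a single coordinate); and for fixed $\occ_\depth,\beta^1_\depth$, it is a linear combination of the entries $\beta^2_\depth(\theta^2_\depth,a^2)$. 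Since every component of the vector $\Tm{1}(\occ_\depth,\vbeta_\depth)$ has this form, the whole map is linear in each of $\occ_\depth$, $\beta^1_\depth$, and $\beta^2_\depth$.

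There is no genuine obstacle here: the argument is a routine expansion of \Cref{eq|transition}. The only point that needs a little care is the justification that $b(s\mid\vth_\depth)$ is independent of $\occ_\depth$ and of the decision rules — this is exactly what guarantees that the product $\occ_\depth(\vth_\depth)\,b(s\mid\vth_\depth)$ remains linear in $\occ_\depth$ rather than quadratic, and it follows from the derivation of the belief in the proof of \Cref{lem|occSufficient}. (One may optionally record, as a by-product, that each coordinate of $\Tm{1}$ depends on $\beta^1_\depth$ only through the single scalar $\beta^1_\depth(\theta^1_\depth,a^1)$, which will be convenient later.)
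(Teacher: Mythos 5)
Your proposal is correct and follows essentially the same route as the paper: marginalize \Cref{eq|transition} over player~$2$'s next action--observation history, factor out $\beta^1_\depth(\theta^1_\depth,a^1)$, and read off componentwise linearity in $\occ_\depth$, $\beta^1_\depth$, and $\beta^2_\depth$, using that $b(s\mid\vth_\depth)$ depends only on the model. The paper's proof is the same computation, with the independence of the belief from $\occ_\depth$ and the decision rules left implicit (it is established in the proof of \Cref{lem|occSufficient}), so your explicit remark on that point is a welcome but non-essential addition.
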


\begin{proof}
  \label{proof|lem|T1mlin}
  \begin{align}
    & \Tm{1}(\occ_\depth,\vbeta_\depth)(\theta^1_\depth,a^1,z^1) \\ %
    & = \sum_{\theta^2_\depth, a^2, z^2} T(\occ_\depth, \vbeta_\depth)({
      ( \theta^1_\depth, a^1, z^1 ),
      ( \theta^2_\depth, a^2, z^2 )
    })
    \qquad 
    \text{(from \Cshref{eq|transition})}
    \nonumber
    \\
    & = \sum_{s',\theta^2_\depth,a^2,z^2} \beta^1_\depth(\theta^1_\depth,a^1) \beta^2_\depth(\theta^2_\depth,a^2)  \sum_{s} P^{z^1,z^2}_{a^1,a^2}(s'|s) 
    b(s|\theta^1_\depth,\theta^2_\depth) \occ_\depth (\theta^1_\depth,\theta^2_\depth)
    \nonumber
    \\
    & = \beta^1_\depth(\theta^1_\depth,a^1) \sum_{\theta^2_\depth,a^2} \beta^2_\depth(\theta^2_\depth,a^2)  \sum_{s,s',z^2} P^{z^1,z^2}_{a^1,a^2}(s'|s) 
    b(s|\theta^1_\depth,\theta^2_\depth) \occ_\depth (\theta^1_\depth,\theta^2_\depth).
    \label{eq|occm1}
    %
  \end{align}
\end{proof}

\begin{lemma}
  \labelT{lem|T1cindep}
  $\Tc{1}(\occ_\depth, \vbeta_\depth)$ is independent of $\beta^1_\depth$ and $\occ^{m,1}_\depth$.
\end{lemma}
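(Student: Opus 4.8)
The plan is to write the conditional term of the successor occupancy state explicitly as a ratio and then cancel the offending factors. By definition of the marginal/conditional decomposition applied to $T(\occ_\depth,\vbeta_\depth)$, for any history $(\theta^1_\depth,a^1,z^1)$ reachable with positive probability (i.e.\ with $\Tm{1}(\occ_\depth,\vbeta_\depth)(\theta^1_\depth,a^1,z^1)>0$),
\begin{align*}
  \Tc{1}(\occ_\depth,\vbeta_\depth)\big((\theta^2_\depth,a^2,z^2)\mid(\theta^1_\depth,a^1,z^1)\big)
  & = \frac{T(\occ_\depth,\vbeta_\depth)\big((\theta^1_\depth,a^1,z^1),(\theta^2_\depth,a^2,z^2)\big)}{\Tm{1}(\occ_\depth,\vbeta_\depth)(\theta^1_\depth,a^1,z^1)}.
\end{align*}
Into the numerator I would substitute \Cshref{eq|transition} and into the denominator \Cshref{eq|occm1} (obtained in the proof of \Cshref{lem|T1mlin}), so that both are expressed through $\beta^1_\depth$, $\beta^2_\depth$, $\occ_\depth$, the transition kernel $P$, and the belief states $b(\cdot\mid\vth_\depth)$.

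First I would cancel the common factor $\beta^1_\depth(\theta^1_\depth,a^1)$, which appears untouched in the numerator and in every summand of the denominator; this already removes all dependence on $\beta^1_\depth$. Next I would use $\occ_\depth(\theta^1_\depth,\theta^2_\depth)=\occ_\depth^{m,1}(\theta^1_\depth)\,\occ_\depth^{c,1}(\theta^2_\depth\mid\theta^1_\depth)$: since the first coordinate is fixed to $\theta^1_\depth$ throughout, the scalar $\occ_\depth^{m,1}(\theta^1_\depth)$ factors out of the numerator and out of every term of the denominator, and therefore cancels as well, leaving an expression built only from $\beta^2_\depth$, the conditional slice $\occ_\depth^{c,1}(\cdot\mid\theta^1_\depth)$, the kernel $P$, and the belief states.

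It then remains to note that each $b(s\mid\vth_\depth)$ is obtained by HMM filtering and hence depends only on the joint \aoh{} $\vth_\depth$ (through $b_0$ and $P$), not on any decision rule nor on the marginal of $\occ_\depth$ --- the policy probabilities cancel in Bayes' rule. Consequently the whole ratio is a function of $\beta^2_\depth$ and $\occ_\depth^{c,1}$ alone, which is exactly the claimed independence from $\beta^1_\depth$ and from $\occ_\depth^{m,1}$. The only point needing a little care --- and the closest thing here to an obstacle --- is the treatment of histories $(\theta^1_\depth,a^1,z^1)$ that are unreachable under $\vbeta_\depth$, where $\Tm{1}=0$ and the conditional term is fixed only by convention; these carry no information and are simply excluded from the statement, as is customary for conditional occupancy terms.
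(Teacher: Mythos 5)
Your proposal is correct and follows essentially the same route as the paper's own proof: write $\Tc{1}$ as the ratio of $T(\occ_\depth,\vbeta_\depth)$ to its marginal, expand via \Cshref{eq|transition}, cancel the common factor $\beta^1_\depth(\theta^1_\depth,a^1)$, and then factor $\occ_\depth(\theta^1_\depth,\theta^2_\depth)=\occ^{m,1}_\depth(\theta^1_\depth)\,\occ^{c,1}_\depth(\theta^2_\depth\mid\theta^1_\depth)$ so that $\occ^{m,1}_\depth(\theta^1_\depth)$ cancels as well. Your explicit remark on unreachable histories (where the conditional is only defined by convention) is a point the paper leaves implicit, but it does not change the argument.
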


\begin{proof}
\ifArxiv{
See \cite{Wiggers-msc15}, Lemma 4.2.3.
}
{
  \label{proof|lem|T1cindep}
  \begin{align*}
    \hspace{3cm}
    & \hspace{-3cm} \Tc{1}(\occ_\depth, \vbeta_\depth)((\theta^2_\depth,a^2,z^2) | (\theta^1_\depth, a^1, z^1)) %
    = \frac{
      T(\occ_\depth, \vbeta_\depth)( (\theta^1_\depth, a^1, z^1), (\theta^2_\depth,a^2,z^2) )
    }{
      \sum_{\theta^2_\depth, a^2, z^2} T(\occ_\depth, \vbeta_\depth)( (\theta^1_\depth, a^1, z^1), (\theta^2_\depth,a^2,z^2) )
    } \\
    & = \frac{
      \beta^1_\depth(\theta^1_\depth,a^1) \beta^2_\depth(\theta^2_\depth,a^2)  \sum_{s,s'} P^{z^1,z^2}_{a^1,a^2}(s'|s) b(s | \theta^1_\depth, \theta^2_\depth) \occ_\depth (\theta^1_\depth, \theta^2_\depth)
    }{
      \beta^1_\depth(\theta^1_\depth,a^1) \sum_{\theta^2,a^2} \beta^2_\depth(\theta^2_\depth,a^2)  \sum_{s,s',z^2} P^{z^1,z^2}_{a^1,a^2}(s'|s) b(s | \theta^1_\depth, \theta^2_\depth) \occ_\depth (\theta^1_\depth,\theta^2_\depth)
    } \\
    & = \frac{
      \beta^2_\depth(\theta^2_\depth,a^2)  \sum_{s,s'} P^{z^1,z^2}_{a^1,a^2}(s'|s) b(s | \theta^1_\depth, \theta^2_\depth) \occ_\depth (\theta^1_\depth,\theta^2_\depth)
    }{
      \sum_{\theta^2,a^2} \beta^2_\depth(\theta^2_\depth,a^2)  \sum_{s,s',z^2} P^{z^1,z^2}_{a^1,a^2}(s'|s) b(s | \theta^1_\depth, \theta^2_\depth) \occ_\depth (\theta^1_\depth,\theta^2_\depth)
    }
    \\
    & = \frac{
      \beta^2_\depth(\theta^2_\depth,a^2)  \sum_{s,s'} P^{z^1,z^2}_{a^1,a^2}(s'|s) b(s | \theta^1_\depth, \theta^2_\depth)
      \overbrace{ \occ^{c,1}_\depth(\theta^2_\depth | \theta^1_\depth) \occ^{m,1}_\depth(\theta^1_\depth) }
    }{
      \sum_{\theta^2,a^2} \beta^2_\depth(\theta^2_\depth,a^2)  \sum_{s,s',z^2} P^{z^1,z^2}_{a^1,a^2}(s'|s) b(s | \theta^1_\depth, \theta^2_\depth)
      \underbrace{ \occ^{c,1}_\depth(\theta^2_\depth | \theta^1_\depth) \occ^{m,1}_\depth(\theta^1_\depth) }
    } \\
    & = \frac{
      \left( \beta^2_\depth(\theta^2_\depth,a^2)  \sum_{s,s'} P^{z^1,z^2}_{a^1,a^2}(s'|s) b(s | \theta^1_\depth, \theta^2_\depth)
        \occ^{c,1}_\depth(\theta^2_\depth | \theta^1_\depth) \right) \occ^{m,1}_\depth(\theta^1_\depth)
    }{
      \left( \sum_{\theta^2,a^2} \beta^2_\depth(\theta^2_\depth,a^2)  \sum_{s,s',z^2} P^{z^1,z^2}_{a^1,a^2}(s'|s) b(s | \theta^1_\depth, \theta^2_\depth)
        \occ^{c,1}_\depth(\theta^2_\depth | \theta^1_\depth)  \right) \occ^{m,1}_\depth(\theta^1_\depth)
    } \\
    & = \frac{
      \beta^2_\depth(\theta^2_\depth,a^2)  \sum_{s,s'} P^{z^1,z^2}_{a^1,a^2}(s'|s) b(s | \theta^1_\depth, \theta^2_\depth)
        \occ^{c,1}_\depth(\theta^2_\depth | \theta^1_\depth)
    }{
      \sum_{\theta^2,a^2} \beta^2_\depth(\theta^2_\depth,a^2)  \sum_{s,s',z^2} P^{z^1,z^2}_{a^1,a^2}(s'|s) b(s | \theta^1_\depth, \theta^2_\depth)
      \occ^{c,1}_\depth(\theta^2_\depth | \theta^1_\depth)
    }.
    %
  \end{align*}
}
\end{proof}

\poubelle{
\begin{tcolorbox}[breakable, enhanced]
  \uline{Addendum:} The following complementary properties explain why seeking for better approximations is difficult.

  \begin{proposition}
    \labelT{lemma|NotLipschitzBeta}
    $\Tc{i}(\occ_\depth,\vbeta_\depth)$ may be non continuous (thus
    non Lipschitz-continuous) w.r.t. $\beta_\depth^{\neg i}$.
    %
  \end{proposition}

  \begin{proof}
    \label{proof|lemma|NotLipschitzBeta}
    First, let us define
    \begin{align}
      & \begin{array}{r@{\,}c@{\,}c@{\,}c}
          f : & S_3(1) & \to & \reals  \\
              & (x,y,z) & \mapsto & \frac{\alpha x}{\alpha x + \beta y},
        \end{array}
      \end{align}
      where $(\alpha,\beta) \in (\mathbb{R}^{+,*})^2$ and $S_k(1)$ is the $k$-dimensional probability simplex.
      
      One can show that $f$ is not Lipschitz-Continuous.
      Indeed, the sequences 
      \begin{align} 
        (u_n)_n & = \left( f\left(\frac{1}{n}, \frac{1}{n^2}, 1-\left(\frac{1}{n} + \frac{1}{n^2}\right)\right)\right)_n
        \text{ and} \\
        (v_n)_n & = \left( f\left(\frac{1}{n^2}, \frac{1}{n}, 1-\left(\frac{1}{n} + \frac{1}{n^2}\right)\right)\right)_n
      \end{align} 
      converge towards different values (respectively 1 and 0).
      $f$ is thus not continuous around $(0,0,1)$, and %
      therefore not Lipschitz continuous.
      
      This property extends to functions of the form %
      $f(x, y_1, \dots, y_I, z_1, \dots, z_J) = %
      \frac{\alpha x}{\alpha x + \sum_{i=1}^I \beta_i y_i}$ with %
      \begin{itemize}
      \item $I,J\in \mathbb{N}^*$,
      \item $(x,y_1,\dots,y_I,z_1,\dots,z_J) \in S_{1+I+J}(1)$,
      \item positive scalars $\alpha$ and $\vbeta_i$
        ($i\in \{1, \dots, I\}$).
      \end{itemize}

      Note: In the following, we make plausible assumptions without
      providing a detailed example.
      Let us now consider \Cshref{eq|occc1} for two tuples
      $\langle \theta^1_\depth, a^1, z^1 \rangle$ and
      $\langle \theta^2_\depth, a^2, z^2 \rangle$ such that
      $\occ_\depth(\theta^1_\depth,\theta^2_\depth)\neq 0$:
      \begin{align}
        & \Tc{1}(\occ_\depth,\vbeta_\depth)(\theta^2_{\tau},a^2,z^2|\theta^1_{\tau},a^1,z^1) \\
        & \qquad =  \frac{
          \beta^2_\depth(\theta^2_\depth,a^2) \left[ \sum_{s,s'} P^{z^1,z^2}_{a^1,a^2}(s'|s) b(s | \theta^1_\depth, \theta^2_\depth) \right] \occ_\depth (\theta^1_\depth,\theta^2_\depth)
        }{
          \sum_{\hat\theta^2, \hat a^2} \beta^2_\depth(\hat \theta^2_\depth, \hat a^2) \left[ \sum_{s,s',\hat z^2} P^{z^1,\hat z^2}_{a^1, \hat a^2}(s'|s) b(s | \theta^1_\depth, \hat \theta^2_\depth) \right] \occ_\depth (\theta^1_\depth, \hat\theta^2_\depth)} %
        \intertext{and assuming a simple case where $\occ^{c,1}(\theta^2_\depth|\theta^1_\depth)=1$ (\ie, all other \aoh{}s for $2$ being impossible):}
        & \qquad =  \frac{
          \beta^2_\depth(\theta^2_\depth,a^2)  \left[ \sum_{s,s'} P^{z^1,z^2}_{a^1,a^2}(s'|s) b(s | \theta^1_\depth, \theta^2_\depth) \right] \occ_\depth (\theta^1_\depth,\theta^2_\depth)
        }{
          \sum_{\hat a^2} \beta^2_\depth(\theta^2_\depth, \hat a^2) \left[ \sum_{s,s',\hat z^2} P^{z^1, \hat z^2}_{a^1,\hat a^2}(s'|s) b(s | \theta^1_\depth, \theta^2_\depth) \right] \occ_\depth (\theta^1_\depth,\theta^2_\depth)}.
      \end{align}
      Then, in cases where %
      \begin{itemize}
      \item
        $\sum_{s,s'} P^{z^1,z^2}_{a^1,\tilde a^2}(s'|s) b(s |
        \theta^1_\depth, \theta^2_\depth) >0$ for action $a^2$, and %
      \item
        $\sum_{s,s',\hat z^2} P^{z^1,\hat z^2}_{a^1,\tilde a^2}(s'|s) b(s |
        \theta^1_\depth, \theta^2_\depth) >0$
        for some, but not all, other actions $\tilde a^2$ %
        ($=0$ typically when $z^1$ and $\tilde a^2$ are incompatible),
      \end{itemize}
      we recognize the above function
      $f(x, y_1, \dots, y_I, z_1, \dots, z_J)$, which is not
      continuous.
%
      
      Such situations where $\Tc{1}$ is not continuous thus may
      indeed happen.
    \end{proof}


    \begin{proposition}
      \labelT{lemma|NotLipschitzOccC}
      $\Tc{i}(\occ_\depth,\vbeta_\depth)$ may be non continuous
      (thus non Lipschitz-continuous) w.r.t. $\occ^{c,1}_\depth$.
    \end{proposition}


    A similar proof as for \Cref{lemma|NotLipschitzBeta}
    applies, the variables corresponding to parameters
    $\occ^{c,1}_\depth(\theta^2_\depth|\theta^1_\depth)$ under fixed
    $\theta^1_\depth$.

\end{tcolorbox}
%
%
%
}

\subsubsection{Linearity and Lipschitz-continuity of \texorpdfstring{$T (\occ_\depth, \beta^1_\depth, \beta^2_\depth)$}{the Transition Function} }


\begin{restatable}[]{lemma}{lemOccLin}
  \labelT{lem|occ|lin}
  %
  At depth $\depth$, $T(\occ_\depth,\vbeta_\depth)$ is linear in
  $\beta^1_\depth$, $\beta^2_\depth$, and $\occ_\depth$, where
  $\vbeta_\depth=\langle \beta^1_\depth, \beta^2_\depth\rangle$.
  It is more precisely $1$-Lipschitz-continuous ($1$-LC) in
  $\occ_\depth$ (in $1$-norm), \ie, for any $\occ_\depth$,
  $\occ'_\depth$:
  \begin{align*}
    \norm{T(\occ'_\depth,\vbeta_\depth) - T(\occ_\depth,\vbeta_\depth)}_1
    & \leq 1 \cdot \norm{\occ'_\depth - \occ_\depth}_1.
  \end{align*}
\end{restatable}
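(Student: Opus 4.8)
The plan is to read the transition function off its explicit coordinate formula \Cref{eq|transition} and then do the bookkeeping. Recall that for every joint \aoh{} $\vth_\depth=(\theta^1_\depth,\theta^2_\depth)$, every $\va=(a^1,a^2)$ and every $\vz=(z^1,z^2)$,
\begin{align*}
  T(\occ_\depth,\vbeta_\depth)\big((\theta^1_\depth,a^1,z^1),(\theta^2_\depth,a^2,z^2)\big)
  = \beta^1_\depth(\theta^1_\depth,a^1)\,\beta^2_\depth(\theta^2_\depth,a^2)\,\occ_\depth(\vth_\depth)\; c(\vth_\depth,\va,\vz),
\end{align*}
where $c(\vth_\depth,\va,\vz)\eqdef\sum_{s,s'}\PP{s}{\va}{s'}{\vz}\, b(s\mid\vth_\depth)$. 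The first thing I would make explicit is that the belief $b(\cdot\mid\vth_\depth)$ is produced by HMM filtering of $b_0$ along the fixed actions and observations listed in $\vth_\depth$, hence depends neither on $\occ_\depth$ nor on $\vbeta_\depth$, while the kernel $\PP{s}{\va}{s'}{\vz}$ is a fixed datum of the model. Thus $c(\vth_\depth,\va,\vz)$ is a constant, and with any two of the three arguments $\occ_\depth,\beta^1_\depth,\beta^2_\depth$ held fixed, each output coordinate of $T$ is a scalar multiple of the relevant component of the remaining argument ($\occ_\depth(\vth_\depth)$, or $\beta^1_\depth(\theta^1_\depth,a^1)$, or $\beta^2_\depth(\theta^2_\depth,a^2)$). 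Linearity in each of $\occ_\depth$, $\beta^1_\depth$ and $\beta^2_\depth$ is then immediate; this is essentially \Cref{lem|occSufficient} re-read coordinate-wise.

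For the Lipschitz estimate I would fix $\vbeta_\depth$, pick two occupancy states $\occ_\depth,\occ'_\depth$, and subtract the two coordinate formulas: the only factor that changes is $\occ_\depth(\vth_\depth)$, so
\begin{align*}
  \norm{T(\occ'_\depth,\vbeta_\depth)-T(\occ_\depth,\vbeta_\depth)}_1
  = \sum_{\vth_\depth,\va,\vz}\beta^1_\depth(\theta^1_\depth,a^1)\,\beta^2_\depth(\theta^2_\depth,a^2)\,\abs{\occ'_\depth(\vth_\depth)-\occ_\depth(\vth_\depth)}\;c(\vth_\depth,\va,\vz).
\end{align*}
I would then collapse the sum from the inside out using three normalization identities: $\sum_{s'}\sum_{\vz}\PP{s}{\va}{s'}{\vz}=1$ for every $(s,\va)$ (the combined transition/observation kernel is a distribution over $(s',\vz)$), $\sum_{s}b(s\mid\vth_\depth)=1$ (a belief is a distribution), and $\sum_{a^i}\beta^i_\depth(\theta^i_\depth,a^i)=1$ for each $i$ and each $\theta^i_\depth$ (a behavioral decision rule is a distribution). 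The first two yield $\sum_{\vz}c(\vth_\depth,\va,\vz)=1$ for every $\vth_\depth,\va$; the third then kills the $\sum_{\va}\beta^1_\depth(\cdot)\beta^2_\depth(\cdot)$ factor; and what survives is exactly $\sum_{\vth_\depth}\abs{\occ'_\depth(\vth_\depth)-\occ_\depth(\vth_\depth)}=\norm{\occ'_\depth-\occ_\depth}_1$. This gives the claimed $1$-Lipschitz inequality — in fact with equality, so $T(\cdot,\vbeta_\depth)$ is an $\ell_1$-isometry, which incidentally explains why the constant $1$ cannot be improved.

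There is no genuine obstacle here; the proof is a short verification. The only two points requiring care are (i) insisting that $b(\cdot\mid\vth_\depth)$ is a model-plus-history quantity and not a function of $\occ_\depth$ or $\vbeta_\depth$ — otherwise the coordinate formula would fail to be linear in $\occ_\depth$ — and (ii) getting the three normalization sums and the order of summation right so that the constants telescope cleanly down to $1$. Everything else is index bookkeeping already carried out in the proof of \Cref{lem|occSufficient}.
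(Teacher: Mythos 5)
Your proof is correct and follows essentially the same route as the paper's: linearity is read off the explicit coordinate formula from the proof of \Cref{lem|occSufficient}, and the $1$-Lipschitz bound comes from the fact that the induced linear map distributes each component of $\occ_\depth$ over output coordinates with non-negative weights summing to one (the paper phrases this abstractly as $\norm{M\vx-M\vy}_1\leq\norm{\vx-\vy}_1$ for a transition matrix $M$, you carry out the same normalization sums directly). Your additional observation that the inequality is in fact an equality — because each output coordinate involves a single $\vth_\depth$, so no cancellation occurs — is a valid minor strengthening showing the constant $1$ is tight, which the paper's more generic matrix argument does not record.
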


\begin{proof}
  \label{proof|lem|occ|lin}
  Let $\occ$ be an occupancy state at time $\depth$ and
  $\vbeta_\depth$ be a decision rule.
  Then, as seen in the proof of \Cref{lem|occSufficient}, the
  next occupancy state $\occ' = T(\occ,\vbeta_\depth)$ satisfies,
  for any $s'$ and $(\vth,\va,\vz)$:
  \begin{align*}
    \occ'(\vth,\va,\vz)
    & \eqdef Pr(\vth,\va,\vz | \occ, \beta^1_\depth, \beta^2_\depth) \\
    & = \beta^1_\depth(\theta^1, a^1) \beta^2_\depth(\theta^2, a^2) \left[ \sum_{s', s\in \cS} \PP{s}{\va}{s'}{\vz} b(s| \vth) \right] \occ(\vth) .
  \end{align*}
  $b(s|\vth)$ depending only on the model (transition function and initial belief),
  the next occupancy state $\occ'$ thus evolves linearly w.r.t.
  (i) {\em private} decision rules $\beta^1_\depth$ and $\beta^2_\depth$, and
  %
  (ii) the occupancy state $\occ$.

  The $1$-Lipschitz-continuity holds because each component of vector
  $\occ_\depth$ is distributed over multiple components of $\occ'$.
  Indeed, let us view two occupancy states as vectors
  $\vx,\vy \in \reals^n$, and their corresponding next states under
  $\vbeta_\depth$ as $M \vx$ and $M \vy$, where
  $M \in \reals^{m\times n}$ is the corresponding transition matrix
  (\ie, which turns $\occ$ into
  $\occ' \eqdef T(\occ_\depth,\vbeta_\depth)$).
  Then,
  \begin{align*}
    \norm{M\vx - M\vy}_{1}
    & \eqdef \sum_{j=1}^m \ \abs{ \sum_{i=1}^n M_{i,j} (x_i - y_i) } \\
    & \leq \sum_{j=1}^m  \sum_{i=1}^n \abs{M_{i,j} (x_i - y_i) }
    & \text{(convexity of $\abs{\cdot}$)} \\
    & = \sum_{j=1}^m  \sum_{i=1}^n M_{i,j} \abs{ x_i - y_i }
    & \text{($\forall {i,j},\ M_{i,j}\geq 0$)} \\
    & = \sum_{i=1}^n \underbrace{\sum_{j=1}^m M_{i,j}}_{=1} \abs{ x_i - y_i } 
    & \text{($M$ is a transition matrix)} \\
    & \eqdef \norm{\vx-\vy}_1.
    %
  \end{align*}
\end{proof}

\subsubsection{Lipschitz-Continuity of \texorpdfstring{$V^*$}{V*}}
\label{app|LC|V}

The next two results demonstrate that, in the finite horizon setting,
$V^*$ is Lipschitz-continuous (LC) in occupancy space, which allows
defining LC upper- and lower-bound approximations.

\begin{restatable}{lemma}{lemVLinOcc}
  \labelT{app|lin|occ}
  %
  At depth $\depth$,
  $V_\depth(\occ_\depth,\vbeta_{\depth:})$ is linear 
  w.r.t.  $\occ_\depth$ and $\vbeta_{\depth:}$.
\end{restatable}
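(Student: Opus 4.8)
The plan is to prove the statement by backward induction on $\depth$, from $\depth = H-1$ down to $0$, exploiting the one-step decomposition $V_\depth(\occ_\depth, \vbeta_{\depth:}) = r(\occ_\depth, \vbeta_\depth) + \gamma\, V_{\depth+1}(T(\occ_\depth, \vbeta_\depth), \vbeta_{\depth+1:})$ together with \Cref{lem|occ|lin}, which already gives the linearity of $T$ in $\occ_\depth$, in $\beta^1_\depth$, and in $\beta^2_\depth$. Before starting, I would make precise what ``linear w.r.t. $\occ_\depth$ and $\vbeta_{\depth:}$'' means here, namely \emph{separate} linearity: $V_\depth(\cdot, \vbeta_{\depth:})$ is a linear functional of $\occ_\depth$ for fixed decision rules, and $V_\depth(\occ_\depth, \cdot)$ is a linear functional of each single decision rule $\beta^i_t$ (for $\depth \le t \le H-1$) when $\occ_\depth$ and all the other decision rules of both players are held fixed. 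It is worth flagging that this is not joint (multi-)linearity, and in particular not linearity in a full suffix strategy $\beta^i_{\depth:H-1}$ --- \Cref{ex|VnotBilinearInBetas} already rules that out, since varying a full suffix strategy amounts to changing several decision rules simultaneously.

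For the base case $\depth = H-1$, I would observe that $V_{H-1}(\occ_{H-1}, \vbeta_{H-1})$ equals $r(\occ_{H-1}, \vbeta_{H-1})$ up to the fixed discount factor, and read off the three linearities directly from the closed form in \Cref{eq|reward}: every summand is proportional to exactly one coordinate of $\occ_{H-1}$, one coordinate of $\beta^1_{H-1}$, and one coordinate of $\beta^2_{H-1}$, while the belief $b(s\mid\vth_{H-1})$ is a fixed model quantity (it depends only on the transition function and $b_0$), not on $\occ$ or $\vbeta$. For the inductive step, assuming the result at $\depth+1$, I would treat each argument separately, freezing all the others. Linearity in $\occ_\depth$ follows because $r(\cdot,\vbeta_\depth)$ is linear and $\occ_\depth \mapsto T(\occ_\depth, \vbeta_\depth) \mapsto V_{\depth+1}(T(\occ_\depth,\vbeta_\depth), \vbeta_{\depth+1:})$ is the composition of a linear map (\Cref{lem|occ|lin}) with a map linear in its first argument (induction hypothesis). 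Linearity in $\beta^1_\depth$, and symmetrically in $\beta^2_\depth$, is analogous, again using that $T$ is linear in $\beta^i_\depth$. Linearity in a later decision rule $\beta^i_t$ with $t\ge\depth+1$ is immediate, since neither $r(\occ_\depth,\vbeta_\depth)$ nor $T(\occ_\depth,\vbeta_\depth)$ involves $\beta^i_t$ and $V_{\depth+1}(\occ_{\depth+1},\cdot)$ is linear in it by the induction hypothesis. A finite sum of linear functionals being linear closes the step.

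I do not expect a genuine obstacle here; the one thing to get right rather than to ``grind through'' is the bookkeeping of what is held fixed, so that at each stage we are honestly composing a linear map with a linear functional (e.g.\ the argument for $\beta^1_\depth$ needs $\occ_\depth$ \emph{and} $\beta^2_\depth$ frozen). I would also note, for completeness of the ``linear in $\occ_\depth$'' claim, that $V_\depth(\cdot,\vbeta_{\depth:})$ extends to a genuine linear functional on all of $\reals^{|\vTh_\depth|}$ (not merely affine on the simplex), because both $r(\cdot,\vbeta_\depth)$ and $T(\cdot,\vbeta_\depth)$ are homogeneous in $\occ_\depth$; this homogeneous form is exactly what is needed downstream to deduce the Lipschitz-continuity of $V^*_\depth = \max_{\beta^1_{\depth:}}\min_{\beta^2_{\depth:}} V_\depth(\cdot,\vbeta_{\depth:})$ as a $\max$-$\min$ of linear functions.
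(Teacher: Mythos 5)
Your proposal is correct and follows essentially the same route as the paper's proof: backward induction from $\depth=H-1$, the one-step decomposition $V_\depth = r(\occ_\depth,\vbeta_\depth) + \gamma V_{\depth+1}(T(\occ_\depth,\vbeta_\depth),\vbeta_{\depth+1:})$, and the linearity of $T$ from \Cref{lem|occ|lin} composed with the induction hypothesis. Your explicit reading of the statement as \emph{separate} linearity in $\occ_\depth$ and in each individual decision rule (consistent with \Cref{ex|VnotBilinearInBetas}) is a precision the paper leaves implicit (its proof only spells out linearity in $\occ_\depth$ and in $\vbeta_\depth$), but it does not change the argument.
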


Note: This result in fact applies to any reward function of a
general-sum POSG with any number of agents (here $N$), \eg, to a
Dec-POMDP.
The following proof handles the general case (with
$\vbeta_\depth \eqdef \langle \beta^1_\depth, \dots, \beta^N_\depth
\rangle$, and
$\vbeta_\depth(\va|\vth) = \prod_{i=1}^N
\beta^i_\depth(a^i,\theta^1)$).

\begin{proof}
  \label{proof|lem|V|lin|occ}
  This property trivially holds for $\depth=H-1$ because
  \begin{align*}
    V_{H-1}(\occ_{H-1},\vbeta_{H-1:}) 
    & = r(\occ_{H-1},\vbeta_{H-1}) \\
    & = \sum_{s,\va} \left( \sum_\vth Pr(s,\va|\vth) \occ_{H-1}(\vth) \right) r(s,\va) \\
    & = \sum_{s,\va} \left( \sum_\vth b(s|\vth) \vbeta_\depth(\va|\vth) \occ_{H-1}(\vth) \right) r(s,\va) \\
    & = \sum_{s,\vth}  b(s | \vth) \occ_{H-1}(\vth)  \left( \sum_{\va} \vbeta_\depth(\va|\vth) r(s,\va) \right).
  \end{align*}
  Now, let us assume that the property holds for
  $\depth+1 \in \{1 \twodots H-1\}$.
  Then,
  \begin{align*}
    V_\depth(\occ_\depth,\vbeta_{\depth:}) 
    & = \sum_{s,\va} \Big( \sum_\vth b(s| \vth) \vbeta_\depth(\va|\vth) \occ_\depth(\vth)  \Big) r(s, \va)
    + \gamma V_{\depth+1}\left( T (\occ_\depth, \vbeta_\depth), \vbeta_{\depth+1:} \right) \\
    & = \sum_{s,\vth}  b(s| \vth) \occ_\depth(\vth)  \Big( \sum_{\va} \vbeta_\depth(\va|\vth) r(s, \va) \Big)
    + \gamma V_{\depth+1}\left( T (\occ_\depth, \vbeta_\depth), \vbeta_{\depth+1:} \right) .
  \end{align*}
  As
  \begin{itemize}
  \item $T(\occ_\depth,\vbeta_\depth)$ is linear in $\occ_\depth$
    {\footnotesize (\Cref{lem|occ|lin})} 
    and
  \item $V_{\depth+1}(\occ_{\depth+1}, \vbeta_{\depth+1:})$ is
    linear in $\occ_{\depth+1}$ {\footnotesize (induction hypothesis)},
  \end{itemize}
  their composition,
  $V_{\depth+1} ( T (\occ_\depth,\vbeta_\depth),
  \vbeta_{\depth+1:} )$,
  is also linear in $\occ_\depth$, and so is
  $V_\depth(\occ_\depth,\vbeta_{\depth:})$. Similarly, $V_\depth(\occ_\depth,\vbeta_\depth)$ is linear in $\vbeta_\depth$ for any $\occ_\depth$.
\end{proof}

\corVLCOcc*

\begin{proof}
  \label{proof|cor|V|LC|occ}
  At depth $\depth$, the value of any behavioral strategy
  $\vbeta_{\depth:}$ is bounded, independently of $\occ_\depth$, by
  \begin{align*}
    V^{\max}_\depth & \eqdef \h{H}{\depth}{\gamma} r_{\max}, \quad \text{where } r_{\max} \eqdef \max_{s,\va}r(s,\va),
    \text{ and } \\
    V^{\min}_\depth &\eqdef \h{H}{\depth}{\gamma}  r_{\min}, \quad \text{where } r_{\min} \eqdef \min_{s,\va}r(s,\va).
  \end{align*}
  Thus, $V_{\vbeta_{\depth:}}$ being a linear function defined over a
  probability simplex ($\Occ_\depth$) (\cf \Cref{proof|lem|V|lin|occ}) and
  bounded by $[V^{\min}_\depth,V^{\max}_\depth]$, we can apply
  \citeauthor{Horak-phd19}'s PhD thesis' Lemma~3.5 (p.~33) \citeyear{Horak-phd19} to establish that it is also
  $\lt{\depth}$-LC, \ie,
  \begin{align*}
    \abs{V_{\vbeta_{\depth:}}(\occ) -V_{\vbeta_{\depth:}}(\occ')}
    & \leq \lt{\depth} \norm{\occ-\occ'}_{\p} \quad (\forall \occ, \occ'), \\
    \text{with }
    \lt{\depth}
    & = \frac{V^{\max}_{\depth}-V^{\min}_{\depth}}{2}.
  \end{align*}

  Considering now optimal solutions, this means that, at depth
  $\depth$ and for any $(\occ,\occ') \in \Occ_\depth$:
  \begin{align*}
    & V^*_\depth(\occ) - V^*_\depth(\occ')  \\ 
    & = \max_{\beta^1_{\depth:}} \min_{\beta^2_{\depth:}} V_\depth(\occ, \beta^1_{\depth:}, \beta^2_{\depth:}) 
    - \max_{\beta'^1_{\depth:}} \min_{\beta'^2_{\depth:}} V_\depth(\occ', \beta'^1_{\depth:}, \beta'^2_{\depth:}) \\
    & \leq \max_{\beta^1_{\depth:}} \min_{\beta^2_{\depth:}} \left[ V_\depth(\occ', \beta^1_{\depth:}, \beta^2_{\depth:}) + \lt{\depth} \norm{\occ-\occ'}_{\p} \right] 
    - \max_{\beta'^1_{\depth:}} \min_{\beta'^2_{\depth:}} V_\depth(\occ', \beta'^1_{\depth:}, \beta'^2_{\depth:}) \\
    & = \lt{\depth} \norm{\occ-\occ'}_{\p}.
  \end{align*}
  Symmetrically,
  $V^*_\depth(\occ) - V^*_\depth(\occ') \geq -\lt{\depth}
  \norm{\occ-\occ'}_{\p}$, hence the expected result:
  \begin{align*}
    \abs{ V^*_\depth(\occ) - V^*_\depth(\occ') } 
    & \leq \lt{\depth} \norm{\occ-\occ'}_{\p}.
    %
  \end{align*}
\end{proof}

As it will be used later, let us also present the following lemma.

\begin{lemma}
  \labelT{lem|nuLC}
  Let us consider $\depth \in \{0 \twodots H-1\}$, $\theta^1_\depth$, and
  $\tree^2_\depth$.
  Then
  $\nu^2_{[\occ^{c,1}_{\depth},
    \tree^2_{\depth}]}(\theta^1_{\depth})$ is $\l_\depth$-LC in
  $\occ^{c,1}_\depth(\cdot|\theta^1_\depth)$.
      
  Equivalently, we will also write that
  $\nu^2_{[\occ^{c,1}_{\depth}, \tree^2_{\depth}]}$ is $\l_\depth$-LC
  in $\occ^{c,1}_\depth$ in {\em vector-wise} 1-norm, \ie:
  \begin{align*}
    \vabs{
      \nu^2_{[\occ^{c,1}_{\depth}, \tree^2_{\depth}]} - \nu^2_{[\tilde\occ^{c,1}_{\depth}, \tree^2_{\depth}]}
    }_1
    & \vleq \lt\depth \vnorm{
      \occ^{c,1}_{\depth} - \tilde\occ^{c,1}_{\depth}
    }_1,
  \end{align*}
  where %
  (i) the absolute value of a vector is obtained by taking the
  absolute value of each component; and %
  (ii) the vector-wise 1-norm of a matrix is a vector made of the
  1-norm of each of its component vectors.
\end{lemma}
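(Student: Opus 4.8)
The plan is to recognize $\nu^2_{[\occ^{c,1}_{\depth},\tree^2_{\depth}]}(\theta^1_\depth)$, for a fixed $\theta^1_\depth$, as an upper envelope of linear functions of the vector $\occ^{c,1}_\depth(\cdot\mid\theta^1_\depth)$ as it ranges over the probability simplex $\Delta(\Theta^2_\depth)$, and then to invoke the same simplex-boundedness argument already used in the proof of \Cref{app|V|LC|occ}.

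First I would fix $\theta^1_\depth$ and a behavioral strategy $\beta^1_{\depth:H-1}$ for player~$1$ (the strategy $\tree^2_\depth$ of player~$2$ being fixed throughout). Writing
\[
  g_{\beta^1_{\depth:},\tree^2_\depth}(\theta^2_\depth)
  \eqdef \mathbb{E}\!\left[\textstyle\sum_{t=\depth}^{H-1}\gamma^{t-\depth} r(S_t,A^1_t,A^2_t)\,\middle|\, \theta^1_\depth,\theta^2_\depth,\beta^1_{\depth:},\tree^2_\depth\right],
\]
this quantity depends on $(\theta^1_\depth,\theta^2_\depth)$ only and \emph{not} on $\occ^{c,1}_\depth$: player~$1$'s strategy is defined on her own \aoh{}s, and the belief state $b(s\mid\vth_\depth)$ from which the rollout starts depends only on the model. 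Moreover $g_{\beta^1_{\depth:},\tree^2_\depth}(\theta^2_\depth)$ is a discounted sum of at most $H-\depth$ rewards, hence lies in $[\h{H}{\depth}{\gamma}\, r_{\min},\ \h{H}{\depth}{\gamma}\, r_{\max}]$. By the definition of $\nu$ recalled in \Cref{theo|ConvexConcaveV}, we then have
\[
  \nu^2_{[\occ^{c,1}_\depth,\tree^2_\depth]}(\theta^1_\depth)
  = \max_{\beta^1_{\depth:}}\ \sum_{\theta^2_\depth}\occ^{c,1}_\depth(\theta^2_\depth\mid\theta^1_\depth)\, g_{\beta^1_{\depth:},\tree^2_\depth}(\theta^2_\depth),
\]
i.e.\ a maximum of linear maps $x\mapsto x^{\top} g_{\beta^1_{\depth:},\tree^2_\depth}$ over the simplex $\Delta(\Theta^2_\depth)$.

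Next I would bound each such linear map and conclude. Being linear over a probability simplex and taking values in $[V^{\min}_\depth,V^{\max}_\depth]$ with $V^{\min}_\depth=\h{H}{\depth}{\gamma}\,r_{\min}$ and $V^{\max}_\depth=\h{H}{\depth}{\gamma}\,r_{\max}$, each map is $\lt{\depth}$-Lipschitz-continuous in $1$-norm with $\lt{\depth}=(V^{\max}_\depth-V^{\min}_\depth)/2=\tfrac12\h{H}{\depth}{\gamma}(r_{\max}-r_{\min})$; this is exactly \citeauthor{Horak-phd19}'s PhD thesis Lemma~3.5 invoked in \Cref{proof|cor|V|LC|occ}. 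Since a pointwise maximum (here bounded above by $V^{\max}_\depth$, so everywhere finite) of a family of $\lt{\depth}$-Lipschitz functions is itself $\lt{\depth}$-Lipschitz, via $\lvert\max_i f_i(x)-\max_i f_i(y)\rvert\le\max_i\lvert f_i(x)-f_i(y)\rvert$, the scalar claim follows. The vector-wise statement is then obtained by applying the scalar claim independently at each $\theta^1_\depth$ and noting that $\norm{\occ^{c,1}_\depth(\cdot\mid\theta^1_\depth)-\tilde\occ^{c,1}_\depth(\cdot\mid\theta^1_\depth)}_1$ is precisely the $\theta^1_\depth$-component of $\vnorm{\occ^{c,1}_\depth-\tilde\occ^{c,1}_\depth}_1$.

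There is no deep obstacle here; the only points requiring care are (i) justifying that $g_{\beta^1_{\depth:},\tree^2_\depth}$ is independent of $\occ^{c,1}_\depth$, so that the dependence on $\occ^{c,1}_\depth(\cdot\mid\theta^1_\depth)$ is genuinely linear rather than merely affine-looking, and (ii) handling $\tree^2_\depth$, which is a distribution over tuples rather than a behavioral strategy — but since $\tree^2_\depth$ induces a well-defined behavioral strategy $\beta^2_{\depth:H-1}$ (\cf \Cref{fig|dag}), the rollout expectation $g$ is well defined and every step above goes through unchanged.
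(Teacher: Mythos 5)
Your proof is correct and follows essentially the same route as the paper's: the paper views the pair $(\theta^1_\depth,\tree^2_\depth)$ as inducing a POMDP whose initial belief $b_{\theta^1_\depth}(s,\theta^2_\depth)=b^{\text{\sc hmm}}_{\theta^2_\depth,\theta^1_\depth}(s)\cdot\occ^{c,1}_\depth(\theta^2_\depth|\theta^1_\depth)$ is linear in $\occ^{c,1}_\depth(\cdot|\theta^1_\depth)$, so that the value of any fixed $\beta^1_{\depth:}$ is linear and bounded on the simplex and the optimal (max) value is $\lt{\depth}$-LC by the same Horak-Lemma-3.5 argument as in \Cref{app|V|LC|occ} — which is exactly your explicit decomposition $\nu^2_{[\occ^{c,1}_\depth,\tree^2_\depth]}(\theta^1_\depth)=\max_{\beta^1_{\depth:}}\occ^{c,1}_\depth(\cdot|\theta^1_\depth)^{\top} g_{\beta^1_{\depth:},\tree^2_\depth}$ followed by the max-of-Lipschitz-maps step. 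Your added care about the independence of the coefficients $g$ from $\occ^{c,1}_\depth$ and about $\tree^2_\depth$ inducing a behavioral strategy merely makes explicit what the paper leaves implicit.
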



\begin{proof}
  \label{proof|lem|nuLC}
  %
  For any $\theta^1_\depth$, $\occ^{c,1}_{\depth}$ and $\tree^2_{\depth}$ induce a POMDP for Player $1$
  from $\depth$ on,
  where %
  (i) the state at any $t\in \{\depth \twodots H-1\}$ corresponds to a pair
  $\langle s, \theta^2_t \rangle$, and %
  (ii) the initial belief is derived from $\occ^{c,1}_{\depth}(\cdot|\theta^1_t)$.
  The belief state at $t$ thus gives:
  \begin{align*}
    b_{\theta^1_t}(s, \theta^2_t)
    & \eqdef Pr(s, \theta^2_t | \theta^1_t) 
    = \underbrace{Pr(s | \theta^2_t, \theta^1_t)}_{b^{\hmm}_{\theta^2_t, \theta^1_t}(s)}
    \cdot \underbrace{Pr(\theta^2_t | \theta^1_t)}_{\occ^{c,1}_t(\theta^2_t|\theta^1_t)}.
  \end{align*}
  So,
  \begin{itemize}
  \item the value function of any behavioral strategy
    $\beta^1_{\depth:}$ is linear at $t$ in $b_{\theta^1_t}$, thus (in
    particular) in $\occ^{c,1}_t(\cdot|\theta^1_t)$; and
  \item the optimal value function is LC at $t$ also in
    $b_{\theta^1_t}$ (with the same depth-dependent upper-bounding
    Lipschitz constant $\l_t$ as in the proof of \mbox{\Cref{app|V|LC|occ}}),\footnote{The
      proof process is similar. The only difference lies in the space
      at hand, but without any impact on the resulting formulas.} %
    thus (in particular) in $\occ^{c,1}_t(\cdot|\theta^1_t)$.
  \end{itemize}

  Using $t=\depth$, the optimal value function is
  $\nu^2_{[\occ^{c,1}_{\depth},
    \tree^2_{\depth}]}(\theta^1_{\depth})$, which is thus
  $\l_\depth$-LC in $\occ^{c,1}_{\depth}(\cdot|\theta^1_\depth)$.
\end{proof}


\subsection{Bounding Approximations of \texorpdfstring{$V^*$}{V*}, \texorpdfstring{$W^{1,*}$}{W1*} and \texorpdfstring{$W^{2,*}$}{W2*}} 
\label{app|derivingApproximations}

\subsubsection{\texorpdfstring{$\upb{V}_\depth$}{upbV} and \texorpdfstring{$\lob{V}_\depth$}{lobV}}
\label{app|derivingApproximationsV}

To find a form that could be appropriate for an upper bound approximation of $V^*_\depth$,
let us consider an \os{} $\occ_\depth$ and a single tuple $\langle { %
  \tilde\occ_\depth, %
  \nu^2_{[\tilde\occ_\depth^{c,1}, \beta^2_{\depth:}]} %
} \rangle$,
and define
$\zeta_\depth \eqdef \occ_\depth^{m,1}\tilde\occ_\depth^{c,1}$.
Then,
\begin{align*}
  V^*(\occ_\depth)
  & \leq V^*(\zeta_\depth) + \lt{\depth}  \norm{ \occ_\depth - \zeta_\depth }_1 
  & \text{(LC, \cf \Cref{app|V|LC|occ})} \\
  & = V^*(\occ_\depth^{m,1} \tilde\occ_\depth^{c,1})
  + \lt{\depth}  \norm{ \occ_\depth - \zeta_\depth }_1 \\
  & \label{eq|upbV}
  \leq \occ_\depth^{m,1} \cdot \nu^2_{[\tilde\occ_\depth^{c,1}, \beta^2_{\depth:}]}
  + \lt{\depth}  \norm{ \occ_\depth - \occ_\depth^{m,1}\tilde\occ_\depth^{c,1} }_1. 
  & \text{(Cvx, \cf \Cref{theo|ConvexConcaveV})} 
\end{align*}
Notes:
\begin{itemize}
\item $\tilde\occ^{m,1}_\depth$ does not appear in the resulting
  upper bound, thus will not need to be specified.
\item For $\depth=H-1$,
  $\nu^2_{[\tilde\occ_\depth^{c,1}, \beta_{\depth:}^2]}$ is a simple
  function of $r$, $\tilde\occ_\depth^{c,1}$, $\beta_{\depth:}^2$, and
  the dynamics of the system, as described in Eq.~(9) of
  \citet{WigOliRoi-corr16}.
\end{itemize}

From this, we can deduce the following appropriate forms of upper and
(symmetrically) lower bound function approximations for $V^*_\depth$:
\begin{align*}
  \upb{V}_\depth(\occ_\depth)
  & = \min_{   \langle \tilde\occ_\depth^{c,1}, \langle \upb\nu^2_\depth, \tree_{\depth:}^2 \rangle  \rangle  \in \upb{\cJ}_\depth } %
  \left[ \occ_\depth^{m,1} \cdot \upb\nu^2_\depth + \lt{\depth} \norm{ \occ_\depth - \occ_\depth^{m,1}\tilde\occ_\depth^{c,1} }_1 \right], \text{ and} \\
  \lob{V}_\depth(\occ_\depth) 
  & = \max_{ \langle \tilde\occ_\depth^{c,2}, \langle \lob\nu^1_\depth, \tree_{\depth:}^1 \rangle \rangle \in \lob{\cJ}_\depth } %
  \left[ \occ_\depth^{m,2} \cdot \lob\nu^1_\depth - \lt{\depth} \norm{ \occ_\depth - \occ_\depth^{m,2}\tilde\occ_\depth^{c,2} }_1 \right],
\end{align*}
which are respectively concave in $\occ^{m,1}_\depth$ and convex in
$\occ^{m,2}_\depth$, and which both exploit the Lipschitz continuity.

\subsubsection{\texorpdfstring{$\upbW{\depth}{}$}{upbW} and \texorpdfstring{$\lobW{\depth}{}$}{lobW}}
\label{app|derivingApproximationsW}
\label{lemma|OptimalQValues}

Note: We discuss all depths from $0$ to $H-1$, even though we do not
need these approximations at $\depth=H-1$.

%
Let us first see how concavity-convexity properties affect $W_\depth^{*,1}$.

\begin{lemma}
  Considering that vectors $\nu^2_{[\occ_{H}^{c,1},\beta_{H:}^2]}$
  are null vectors, we have, for all $\depth \in \{0\twodots H-1\}$:
  \begin{align*}
    W_\depth^{1,*}(\occ_\depth,\beta^1_\depth) %
    & = \min_{\beta^2_\depth, \langle \beta_{\depth+1:}^2, \nu^2_{{[\Tc{1}(\occ_\depth, \beta^2_\depth), \beta^2_{\depth+1:}]}} \rangle} \beta^1_\depth \cdot \Big[ {
      \vr(\occ_\depth, \cdot, \beta^2_\depth) } \\
    & \qquad  + {\gamma \Tm{1}(\occ_\depth, \cdot, \beta^2_\depth) \cdot \nu^2_{{[\Tc{1}(\occ_\depth, \beta^2_\depth),\beta^2_{\depth+1:}]}}
    } \Big].
  \end{align*}
\end{lemma}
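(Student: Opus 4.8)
The plan is to start from the defining expression of $W^{1,*}_\depth$ introduced in \Cshref{sec|solvingOMGs},
\[
  W^{1,*}_\depth(\occ_\depth,\beta^1_\depth)=\min_{\beta^2_{\depth:H-1}\in\cB^2_\depth}\Big[\,r(\occ_\depth,\vbeta_\depth)+\gamma\,\occ^{m,1}_{\depth+1}\cdot\nu^2_{[\occ^{c,1}_{\depth+1},\beta^2_{\depth+1:}]}\,\Big],\qquad\occ_{\depth+1}=T(\occ_\depth,\vbeta_\depth),
\]
and to rewrite the bracketed term as a single inner product $\beta^1_\depth\cdot(\cdots)$ by pulling out every occurrence of $\beta^1_\depth$, using the linearity/independence properties of $r$, $\Tm{1}$ and $\Tc{1}$ established above. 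Because $\nu^2_{[\occ^{c,1}_{H},\beta^2_{H:}]}$ is, by the stated convention, the null vector, the second summand is $0$ when $\depth=H-1$, so a single derivation handles every $\depth\in\{0\twodots H-1\}$ without a separate base case.

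First I would treat the reward term: by \Cref{eq|reward}, $\beta^1_\depth\mapsto r(\occ_\depth,\beta^1_\depth,\beta^2_\depth)$ is linear, so grouping by $(\theta^1_\depth,a^1)$ gives $r(\occ_\depth,\vbeta_\depth)=\beta^1_\depth\cdot\vr(\occ_\depth,\cdot,\beta^2_\depth)$ for the vector $\vr(\occ_\depth,\cdot,\beta^2_\depth)$ over $\Theta^1_\depth\times\cA^1$ with $(\theta^1_\depth,a^1)$-entry $\sum_{s,\theta^2_\depth,a^2}\occ_\depth(\theta^1_\depth,\theta^2_\depth)\,b(s\mid\vth_\depth)\,\beta^2_\depth(\theta^2_\depth,a^2)\,r(s,a^1,a^2)$. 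Next I would invoke \Cshref{lem|T1cindep}: $\occ^{c,1}_{\depth+1}=\Tc{1}(\occ_\depth,\vbeta_\depth)$ is independent of $\beta^1_\depth$, hence equals $\Tc{1}(\occ_\depth,\beta^2_\depth)$, so the best-response value vector $\nu\eqdef\nu^2_{[\Tc{1}(\occ_\depth,\beta^2_\depth),\beta^2_{\depth+1:}]}$ is also independent of $\beta^1_\depth$. Then I would invoke \Cshref{lem|T1mlin}: each component of $\Tm{1}(\occ_\depth,\vbeta_\depth)$, indexed by $\theta^1_{\depth+1}=(\theta^1_\depth,a^1,z^1)$, is of the form $\beta^1_\depth(\theta^1_\depth,a^1)\cdot g_{\occ_\depth,\beta^2_\depth}(\theta^1_\depth,z^1)$ with $g$ not depending on $\beta^1_\depth$; contracting with the fixed vector $\nu$ and summing over $z^1$ yields $\occ^{m,1}_{\depth+1}\cdot\nu=\beta^1_\depth\cdot\big(\Tm{1}(\occ_\depth,\cdot,\beta^2_\depth)\cdot\nu\big)$, where $\Tm{1}(\occ_\depth,\cdot,\beta^2_\depth)\cdot\nu$ is the vector over $\Theta^1_\depth\times\cA^1$ with $(\theta^1_\depth,a^1)$-entry $\sum_{z^1}g_{\occ_\depth,\beta^2_\depth}(\theta^1_\depth,z^1)\,\nu(\theta^1_\depth,a^1,z^1)$.

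Putting the three substitutions together, for each fixed $\beta^2_{\depth:H-1}$ the bracket equals
\[
  r(\occ_\depth,\vbeta_\depth)+\gamma\,\occ^{m,1}_{\depth+1}\cdot\nu^2_{[\occ^{c,1}_{\depth+1},\beta^2_{\depth+1:}]}=\beta^1_\depth\cdot\Big[\vr(\occ_\depth,\cdot,\beta^2_\depth)+\gamma\,\Tm{1}(\occ_\depth,\cdot,\beta^2_\depth)\cdot\nu^2_{[\Tc{1}(\occ_\depth,\beta^2_\depth),\beta^2_{\depth+1:}]}\Big].
\]
To finish I would split $\min_{\beta^2_{\depth:H-1}}$ into a minimization over $\beta^2_\depth$ and over $\beta^2_{\depth+1:H-1}$, and observe that since $\nu^2_{[\Tc{1}(\occ_\depth,\beta^2_\depth),\beta^2_{\depth+1:}]}$ is a deterministic function of $(\beta^2_\depth,\beta^2_{\depth+1:})$, minimizing over $\beta^2_{\depth+1:H-1}$ is the same as minimizing over the pair $\langle\beta^2_{\depth+1:},\nu^2_{[\Tc{1}(\occ_\depth,\beta^2_\depth),\beta^2_{\depth+1:}]}\rangle$ with its second component tied to the first; this is precisely the asserted identity. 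The main obstacle is the $\Tm{1}$ step: one must argue carefully that contracting $\Tm{1}(\occ_\depth,\beta^1_\depth,\beta^2_\depth)$ with a fixed vector pulls $\beta^1_\depth$ out as a single inner product over $\Theta^1_\depth\times\cA^1$, which relies on the ``diagonal-in-$\theta^1_\depth$'' form of the transition (each entry of $\Tm{1}$ being proportional to the single coordinate $\beta^1_\depth(\theta^1_\depth,a^1)$); once that is written out, summing over $z^1$ collapses the index and the remaining manipulations are routine substitution plus a trivial reparametrization of the inner $\min$.
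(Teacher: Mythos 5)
Your proposal is correct and takes essentially the same route as the paper: the appendix proof starts from $W^{1,*}_\depth(\occ_\depth,\beta^1_\depth)=\min_{\beta^2_\depth}Q^*_\depth(\occ_\depth,\vbeta_\depth)$ and first re-derives, via \Cref{theo|ConvexConcaveV} and the $\beta^1_\depth$-independence of $\Tc{1}$ (\Cref{lem|T1cindep}), exactly the expression you take as your starting point from the main text, after which the two arguments coincide — linearity of $r$ and $\Tm{1}$ in $\beta^1_\depth$ (\Cref{lem|T1mlin}) to pull out the scalar product over $\Theta^1_\depth\times\cA^1$, then merging/reparametrizing the minimizations over $\beta^2_\depth$ and $\langle\beta^2_{\depth+1:},\nu^2\rangle$. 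One cosmetic point only: your factor $g_{\occ_\depth,\beta^2_\depth}(\theta^1_\depth,z^1)$ also depends on $a^1$ (through $P^{z^1,z^2}_{a^1,a^2}$), which changes nothing in the contraction argument since each component of $\Tm{1}$ is still proportional to the single coordinate $\beta^1_\depth(\theta^1_\depth,a^1)$.
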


\begin{proof}
  \label{proof|lemma|OptimalQValues}
  Considering that vectors $\nu^2_{[\occ_{H}^{c,1},\beta^2_{H:}]}$
  are null vectors, we have, for all $\depth \in \{0 \twodots H-1\}$:
  \begin{align*}
     W_\depth^{1,*}(\occ_\depth,\beta^1_\depth) %
    & = \min_{\beta^2_\depth}  Q^*_\depth(\occ_\depth, \beta^1_\depth, \beta^2_\depth) %
     = \min_{\beta^2_\depth}  \left[
      r(\occ_\depth, \vbeta_\depth)
      + \gamma V^*_{\depth+1}( T(\occ_\depth, \vbeta_\depth) )
    \right] 
    \intertext{(Line below exploits \Cref{theo|ConvexConcaveV} (p.~\pageref{theo|ConvexConcaveV}) %
      and $\Tc{1}$'s independence from $\beta^1_\depth$ (\Cref{lem|T1cindep}).)}
    & \hspace{-1.25cm} = \min_{\beta^2_\depth}  \left[
      r(\occ_\depth, \vbeta_\depth)
      + \gamma \min_{\langle \beta_{\depth+1:}^2, \nu^2_{{[\Tc{1}(\occ_\depth, \beta^2_\depth), \beta^2_{\depth+1:}]}} \rangle}  \left[
        \Tm{1}(\occ_\depth, \vbeta_\depth) \cdot \nu^2_{{[\Tc{1}(\occ_\depth, \beta^2_\depth), \beta^2_{\depth+1:}]}}
      \right]
    \right] \\
    & \hspace{-1.25cm} = \min_{\beta^2_\depth, \langle \beta_{\depth+1:}^2, \nu^2_{{[\Tc{1}(\occ_\depth, \beta^2_\depth), \beta^2_{\depth+1:}]}} \rangle} \left[
      r(\occ_\depth, \vbeta_\depth)
      + \gamma \Tm{1}(\occ_\depth, \vbeta_\depth) \cdot \nu^2_{{[\Tc{1}(\occ_\depth, \beta^2_\depth), \beta^2_{\depth+1:}]}}
    \right] 
    \intertext{(Line below exploits $r$ and $\Tm{1}$'s linearity in $\beta^1_\depth$ (\Cref{lem|T1mlin}).)}
    & = \min_{\beta^2_\depth, \langle \beta_{\depth+1:}^2, \nu^2_{{[\Tc{1}(\occ_\depth, \beta^2_\depth), \beta^2_{\depth+1:}]}} \rangle} {\beta^1_\depth}^{\top} \! \cdot \Big[ {
      \vr(\occ_\depth, \cdot, \beta^2_\depth) }
     \\ & \qquad + \gamma \Tm{1}(\occ_\depth, \cdot, \beta^2_\depth) \cdot \nu^2_{{[\Tc{1}(\occ_\depth, \beta^2_\depth),\beta^2_{\depth+1:}]}}
     \Big].
    %
  \end{align*}
\end{proof}

Note that, since $V^*_H=0$, $\depth=H-1$ is a particular case which can be simply re-written:
\begin{align*}
   W_\depth^{1,*}(\occ_\depth,\beta^1_\depth) %
   & = \min_{\beta^2_\depth} {\beta^1_\depth}^{\top} \! \cdot
   \vr(\occ_\depth, \cdot, \beta^2_\depth).
 \end{align*}

\poubelle{
\begin{tcolorbox}[breakable, enhanced]

  \uline{Addendum:} The following complementary property is not
  directly used in the present work, but makes for a more complete
  table of properties (\Cref{tab|PropertyTable}).

    \begin{proposition}
      \labelT{lemma|Wconcave}
      $W^{1,*}_\depth(\occ_\depth,\beta^1_\depth)$ is concave in $\beta^1_\depth$.
    \end{proposition}

    \begin{proof}
      \label{proof|lemma|Wconcave}
      Let %
      $X$ and $Y$ be two convex domains, %
      $f: X\times Y \mapsto \reals$ be a concave-convex function, and %
      $g(x)\eqdef \min_{y\in Y} f(x,y)$.
      Then, for any $x_1, x_2 \in X$, and any $\alpha \in [0,1]$,
      \begin{align}
        g(\alpha x_1 + (1-\alpha) x_2)
        & = min_y \underbrace{ f( \alpha x_1 + (1-\alpha) x_2, y ) }_{
          \geq \alpha f( x_1, y ) + (1-\alpha) f( x_2, y )
        }
        & \text{(concavity in $x$)}
        \\
        & \geq min_y \left[ \alpha f( x_1, y ) + (1-\alpha) f( x_2, y ) \right]
        \\
        & \geq min_y \alpha f( x_1, y ) +  min_y (1-\alpha) f( x_2, y )
        \\
        & = \alpha g( x_1 ) +  (1-\alpha) g( x_2 ).
        & \text{($\alpha\geq 0$)}
      \end{align}
      $g$ is thus concave in $x$.

      This result directly applies to the function at hand, proving its concavity in $\beta^1_\depth$.
    \end{proof}
\end{tcolorbox}
%
%
%
}

To find a form that could be appropriate for an upper bound approximation of $W^{*,1}_\depth$,
let us now consider an \os{} $\occ_\depth$ and a single tuple %
$\langle { %
  \tilde\occ_\depth, %
  \tilde\beta^2_\depth, %
  \nu^2_{[\Tc{1}(\tilde\occ_\depth, \tilde\beta^2_\depth), %
    \tilde\beta_{\depth+1:}^2]} %
} \rangle$.
Then,
\begin{align}
  W_\depth^{1,*}(\occ_\depth,\beta^1_\depth) %
  & = \min_{\beta^2_\depth}  \left[
    r(\occ_\depth, \vbeta_\depth)
    + \gamma V^*_{\depth+1}( T(\occ_\depth, \vbeta_\depth) )
  \right]
  \nonumber 
  \\
  \nonumber
  & \leq   
  r(\occ_\depth, \beta^1_\depth, \tilde\beta^2_\depth)
  + \gamma  V^{BR,1}_{\depth+1}( T(\occ_\depth, \beta^1_\depth, \tilde\beta^2_\depth) | \tilde\beta_{\depth+1:}^2)
  \nonumber
  \intertext{\text{\small (Use 
    $\tilde\beta^2_\depth$ \& $\tilde\beta^2_{\depth+1:}$ instead of mins)}} 
  \intertext{\small (where $V^{BR,1}_{\depth+1}( T(\occ_\depth, \beta^1_\depth, \tilde\beta^2_\depth) | \tilde\beta_{\depth+1:}^2)$ is the value of 1's best response to $\tilde\beta^2_{\depth+1:}$ if in $T(\occ_\depth, \beta^1_\depth, \tilde\beta^2_\depth)$)}
  \nonumber
  & = r(\occ_\depth, \beta^1_\depth, \tilde\beta^2_\depth)
  + \gamma 
  \Tm{1}(\occ_\depth, \beta^1_\depth, \tilde\beta^2_\depth) \cdot \underbrace{ \nu^2_{[\Tc{1}(\occ^{c,1}_\depth, \tilde\beta^2_\depth), \tilde\beta_{\depth+1:}^2]} } 
  \nonumber
   \intertext{\small (Lem.~3 of \citet{WigOliRoi-corr16})}
   \nonumber
   \\
  & \leq r(\occ_\depth, \beta^1_\depth, \tilde\beta^2_\depth)
  + \gamma 
  \Tm{1}(\occ_\depth, \beta^1_\depth, \tilde\beta^2_\depth) \cdot \Big( {
    \nu^2_{[\Tc{1}(\tilde\occ^{c,1}_\depth, \tilde\beta^2_\depth), \tilde\beta_{\depth+1:}^2]} }
  \\
  \nonumber
  & \qquad \qquad {
    + \lt{\depth+1} \cdot \vnorm{ \Tc{1}(\occ^{c,1}_\depth, \tilde\beta^2_\depth) - \Tc{1}(\tilde\occ^{c,1}_\depth, \tilde\beta^2_\depth) }_1
  } \Big)
  \nonumber 
  \\
  \intertext{\small (\Cshref{lem|nuLC}: $\lt{\depth+1}$-LC of $\nu^2_{[\Tc{1}(\occ^{c,1}_\depth, \tilde\beta^2_\depth), \tilde\beta_{\depth+1:}^2]}$)}
  & \label{eq|WsingleTerm}
  = {\beta^1_\depth}^{\top} \! \cdot \Big[
  r(\occ_\depth, \cdot, \tilde\beta^2_\depth)
  + \gamma 
  \Tm{1}(\occ_\depth, \cdot, \tilde\beta^2_\depth) \cdot \Big( {
    \nu^2_{[\Tc{1}(\tilde\occ^{c,1}_\depth, \tilde\beta^2_\depth), \tilde\beta_{\depth+1:}^2]} }
  \\
  \nonumber
  & \qquad \qquad {
    + \lt{\depth+1} \cdot \vnorm{ \Tc{1}(\occ^{c,1}_\depth, \tilde\beta^2_\depth) - \Tc{1}(\tilde\occ^{c,1}_\depth, \tilde\beta^2_\depth) }_1
  } \Big)
  \Big]
  \nonumber
    \intertext{\small (Linearity in $\beta^1_\depth$)}
    \nonumber
\\
\nonumber
  & = {\beta^1_\depth}^{\top} \! \cdot \Big[
  r(\occ_\depth, \cdot, \tilde\beta^2_\depth)
  + \gamma 
  \Tm{1}(\occ_\depth, \cdot, \tilde\beta^2_\depth) \cdot {
    \nu^2_{[\Tc{1}(\tilde\occ^{c,1}_\depth, \tilde\beta^2_\depth), \tilde\beta_{\depth+1:}^2]} }
  \\
  \nonumber
  & \qquad \qquad {
    + \gamma \lt{\depth+1} \cdot \norm{ T(\occ_\depth, \cdot, \tilde\beta^2_\depth) - 
      \Tm{1}(\occ_\depth, \cdot, \tilde\beta^2_\depth) \Tc{1}(\tilde\occ^{c,1}_\depth, \tilde\beta^2_\depth) }_1
  }
  \Big]
  \nonumber
    \intertext{\small (Alternative writing)}
    \nonumber
\end{align}

From this, we can deduce the following appropriate forms of %
(i) upper bounding approximation for $W^{1,*}_\depth$ and %
(ii) (symmetrically) of lower bound approximation for
$W^{2,*}_\depth$:
\begin{align*}
  \upbW{\depth}{}(\occ_\depth, \beta^1_\depth) 
  & = \min_{ 
      \langle \tilde\occ^{c,1}_\depth, \beta^2_\depth, \upb\nu^2_{\depth+1} \rangle 
      \in \upb{\mathcal{I}}_\depth
    }
    {\beta^1_\depth}^{\top} \cdot \Big[ {
      \vr(\occ_\depth, \cdot, \beta^2_\depth)
      + \gamma \Tm{1}(\occ_\depth, \cdot, \beta^2_\depth) \cdot \upb\nu^2_{\depth+1}
    } \\
    & \qquad 
    + \gamma \lt{\depth+1} \cdot \norm{ T(\occ_\depth, \cdot, \beta^2_\depth) - \Tm{1}(\occ_\depth, \cdot, \beta^2_\depth) \Tc{1}(\tilde\occ^{c,1}_\depth, \beta^2_\depth) }_1
    \Big],
    \text{ and} \\
  \lobW{\depth}{}(\occ_\depth, \beta^2_\depth) 
  & = \max_{ 
      \langle \tilde\occ^{c,2}_\depth, \beta^1_\depth, \lob\nu^1_{\depth+1} \rangle 
      \in \lob{\mathcal{I}}_\depth
    }
    {\beta^2_\depth}^{\top} \cdot \Big[ {
      \vr(\occ_\depth, \beta^1_\depth, \cdot)
      + \gamma \Tm{2}(\occ_\depth, \beta^1_\depth, \cdot) \cdot \lob\nu^1_{\depth+1}
    } \\
    & \qquad \qquad
    - \gamma \lt{\depth+1} \cdot \norm{ T(\occ_\depth, \beta^1_\depth, \cdot) - \Tm{2}(\occ_\depth, \beta^1_\depth, \cdot) \Tc{2}(\tilde\occ^{c,2}_\depth, \beta^1_\depth) }_1
    \Big],
\end{align*}
where $\upb\nu^2_{\depth+1}$ and $\lob\nu^1_{\depth+1}$ respectively upper and lower bound the actual vectors associated to the players' future strategies (resp. of $2$ and $1$).

Again, $\depth=H-1$ is a particular case where only the reward term is preserved.

This constitutes the proof to the following proposition

\coreThUpperBounds*

\poubelle{
\thInclusionWandV*

\begin{proof}
\labelT{app|thinclusionWandV}
First, let us notice that, at $\depth=0$, the occupancy-state space is reduced to a singleton, $\{ \occ_0= \langle 1 \rangle \}$, because of the single (empty) joint \aoh{}. The security-level vectors $\nu$ are thus one-dimensional, and here considered as scalar numbers.

Let us assume that sets $\upb{\mathcal{J}}_{0}$ and $\lob{\mathcal{J}}_{0}$ are such that
\begin{align*}
    \upb{V}_0(\occ_0) - \lob{V}_0(\occ_0)
    & \leq \epsilon,
\end{align*}
and let
$\lob{w}^* = \langle \occ_0^{c,2}, \langle \lob{\nu}_0^*, \tree_0^{1,*} \rangle \rangle$ and
$\upb{w}^* = \langle \occ_0^{c,1}, \langle \upb{\nu}_0^*, \tree_0^{2,*} \rangle \rangle$ be the tuples returned by $\argmax_{\lob{w}\in \lob{\cJ}_0} \lob{\nu}_0^2$ and $\argmin_{\upb{w}\in \upb{\cJ}_0} \upb{\nu}_0^1$.
Then, noting that $\occ_0= \langle 1 \rangle$, 
\begin{align*}
    \nu^1_{[\occ_0^{c,2},\tree_0^{1,*}]} - \nu^2_{[\occ_0^{c,1},\tree_0^{2,*})} 
    & \leq \upb{\nu}_0^* - \lob{\nu}_0^*
    \\
    & = \max_{\lob{w}\in \lob{\cJ}_0} \lob{\nu}_0^2
    - \min_{\upb{w}\in \upb{\cJ}_0} \upb{\nu}_0^1
    \\
    & = 
    \upb{V}_0(\occ_0) - \lob{V}_0(\occ_0)
    \\
    & \leq \epsilon.
\end{align*}
Thus, $\tree_{0}^1$ and $\tree_{0}^2$ are two strategies whose security levels are $\epsilon$-close, and thus form an $\epsilon$-\nes{} of the zs-POSG.
\end{proof}
}

\subsection{Related Operators}

\subsubsection{Selection Operator: Solving for \texorpdfstring{$\beta^1_\depth$}{beta1t} as an LP} 
\label{sec|getLP}

\begin{proposition}
  \labelT{prop|bilinearValue}
  Using now a distribution $\tree^2_\depth$ over tuples 
  $w = \langle \tilde\occ^{c,1}_\depth, \beta^2_\depth, \upb\nu^2_{\depth+1} \rangle \in
  \upb{\mathcal{I}}^1_\depth$,
  the corresponding upper-bounding value for ``profile''
  $\langle \beta^1_\depth, \tree^2_\depth \rangle$ when in
  $\occ_\depth$ can be written as an expectancy:
  \begin{align*}
    {\beta^1_\depth}^{\top} \cdot M^{\occ_\depth} \cdot \tree^2_\depth,
  \end{align*}
  where $M^{\occ_\depth}$ is an $|\Theta^1_\depth \times \cA^1| \times |\upb{\mathcal{I}}^1_\depth|$ matrix.
\end{proposition}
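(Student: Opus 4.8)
The plan is to strip the $\min$ from the definition of $\upbW{\depth}{}$ in \Cref{eq|upW}, analyse the contribution of a single tuple $w$, show that this contribution is \emph{linear} in $\beta^1_\depth$, read its coefficients off as a column of $M^{\occ_\depth}$, and finally recover the mixed case by averaging over $\tree^2_\depth$ and using linearity once more.

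First I would fix a tuple $w=\langle\tilde\occ^{c,1}_\depth,\beta^2_\depth,\upb\nu^2_{\depth+1}\rangle\in\upb{\mathcal I}^1_\depth$ and examine the bracketed quantity summed over in \Cref{eq|upW}, namely
\[
r(\occ_\depth,\beta^1_\depth,\beta^2_\depth)+\gamma\,\Tm{1}(\occ_\depth,\beta^1_\depth,\beta^2_\depth)\cdot\upb\nu^2_{\depth+1}+\gamma\lt{\depth+1}\,\norm{T(\occ_\depth,\beta^1_\depth,\beta^2_\depth)-\Tm{1}(\occ_\depth,\beta^1_\depth,\beta^2_\depth)\,\Tc{1}(\tilde\occ^{c,1}_\depth,\beta^2_\depth)}_1 .
\]
The reward term is linear in $\beta^1_\depth$ by \Cref{eq|reward}; the middle term is linear because $\Tm{1}$ is linear in $\beta^1_\depth$ (\Cref{lem|T1mlin}) and a dot product against the fixed vector $\upb\nu^2_{\depth+1}$ preserves linearity. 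For the $\ell_1$ term I would invoke the explicit formulas \Cref{eq|transition} and \Cref{eq|occm1}: for a given next \aoh{} $(\theta^1_\depth,a^1,z^1)$, both $T(\occ_\depth,\beta^1_\depth,\beta^2_\depth)((\theta^1_\depth,a^1,z^1),\cdot)$ and $\Tm{1}(\occ_\depth,\beta^1_\depth,\beta^2_\depth)(\theta^1_\depth,a^1,z^1)$ factor as $\beta^1_\depth(\theta^1_\depth,a^1)$ times a quantity independent of $\beta^1_\depth$, while $\Tc{1}(\tilde\occ^{c,1}_\depth,\beta^2_\depth)$ does not involve $\beta^1_\depth$ at all (\Cref{lem|T1cindep}). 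Since $\beta^1_\depth(\theta^1_\depth,a^1)\ge 0$, this common factor comes out of every absolute value, so the norm equals $\sum_{\theta^1_\depth,a^1}\beta^1_\depth(\theta^1_\depth,a^1)\,d_w(\theta^1_\depth,a^1)$ for some nonnegative vector $d_w$ depending only on $\occ_\depth$, the model, $\tilde\occ^{c,1}_\depth$ and $\beta^2_\depth$. Consequently the single-tuple value is ${\beta^1_\depth}^{\top}\!\cdot M^{\occ_\depth}_{(\cdot,w)}$ for a column vector $M^{\occ_\depth}_{(\cdot,w)}\in\reals^{|\Theta^1_\depth\times\cA^1|}$ collecting the coefficients of the $\beta^1_\depth(\theta^1_\depth,a^1)$ — this is exactly \Cref{eq|WsingleTerm} read as a scalar product.

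Next I would stack these columns, one per tuple $w\in\upb{\mathcal I}^1_\depth$, into the $|\Theta^1_\depth\times\cA^1|\times|\upb{\mathcal I}^1_\depth|$ matrix $M^{\occ_\depth}$, and take the value of the ``profile'' $\langle\beta^1_\depth,\tree^2_\depth\rangle$ to be the expectation of the single-tuple values under $w\sim\tree^2_\depth$. By linearity of the scalar product in the second argument,
\[
\sum_{w\in\upb{\mathcal I}^1_\depth}\tree^2_\depth(w)\,{\beta^1_\depth}^{\top}\!\cdot M^{\occ_\depth}_{(\cdot,w)}={\beta^1_\depth}^{\top}\!\cdot\Bigl(\sum_{w}M^{\occ_\depth}_{(\cdot,w)}\,\tree^2_\depth(w)\Bigr)={\beta^1_\depth}^{\top}\!\cdot M^{\occ_\depth}\cdot\tree^2_\depth ,
\]
which is the announced bilinear form, the dimension claim being immediate from the shape of $M^{\occ_\depth}$.

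The only step with real content is the treatment of the $\ell_1$ term: one must observe that the per-component factor $\beta^1_\depth(\theta^1_\depth,a^1)$ is shared by $T$ and by $\Tm{1}\Tc{1}$ (which hinges on $\Tc{1}$ being $\beta^1_\depth$-free, \Cref{lem|T1cindep}) and is nonnegative, so that it survives the absolute value and the norm stays linear in $\beta^1_\depth$. Everything else — the reward and marginal-transition terms, the assembly of $M^{\occ_\depth}$, and interchanging a finite sum with a scalar product — is routine bookkeeping.
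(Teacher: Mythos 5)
Your proposal is correct and follows essentially the same route as the paper: the paper likewise reads the single-tuple upper bound (\Cref{eq|WsingleTerm}) as a scalar product ${\beta^1_\depth}^{\top}\!\cdot M^{\occ_\depth}_{(\cdot,w)}$ and then obtains the bilinear form by taking the expectation under $\tree^2_\depth$. Your explicit factorization argument for the $\ell_1$ term (pulling the nonnegative factor $\beta^1_\depth(\theta^1_\depth,a^1)$ out of each component, using \Cref{lem|T1cindep}) is just a direct verification of what the paper encodes by first writing the Lipschitz penalty as $\Tm{1}(\occ_\depth,\cdot,\beta^2_\depth)\cdot\vnorm{\Tc{1}(\occ^{c,1}_\depth,\beta^2_\depth)-\Tc{1}(\tilde\occ^{c,1}_\depth,\beta^2_\depth)}_1$ before its ``alternative writing'' step.
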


\begin{proof}
  \label{proof|prop|bilinearValue}
  From the right-hand side term in (\Cref{eq|WsingleTerm}), the
  upper-bounding value associated to $\occ_\depth$, $\beta^1_\depth$ and
  a tuple
  $\langle \tilde\occ^{c,1}_\depth, \beta^2_\depth, \upb\nu^2_{\depth+1} \rangle \in
  \upb{\mathcal{I}}^1_\depth$ can be written:
  \begin{align*}
    & {\beta^1_\depth}^{\top} \! \cdot \Big[
  r(\occ_\depth, \cdot, \beta^2_\depth)
  + \gamma 
  \Tm{1}(\occ_\depth, \cdot, \beta^2_\depth) \cdot \Big( \upb\nu^2_{\depth+1}
  {
    + \lt{\depth+1} \cdot \vnorm{ \Tc{1}(\occ^{c,1}_\depth, \tilde\beta^2_\depth) - \Tc{1}(\tilde\occ^{c,1}_\depth, \tilde\beta^2_\depth) }_1
  } \Big)
  \Big].
  \end{align*}
  Using now a distribution $\tree^2_\depth$ over tuples 
  $w = \langle \tilde\occ^{c,1}_\depth, \beta^2_\depth, \upb\nu^2_{\depth+1} \rangle \in
  \upb{\mathcal{I}}^1_\depth$,
  the corresponding upper-bounding value for ``profile''
  $\langle \beta^1_\depth, \tree^2_\depth \rangle$ when in
  $\occ_\depth$ can be written as an expectancy:
  {\scalefont{.92}
  \begin{align*}
    & \sum_{w \in \upbW{\depth}{}}
    {\beta^1_\depth}^{\top} \cdot  \Big[
    \vr(\occ_\depth, \cdot, \beta^2_\depth[w])
    + \gamma \Tm{1}(\occ_\depth, \cdot, \beta^2_\depth[w]) \cdot \Big( \upb\nu^2_{\depth+1}[w]
    \\ & \qquad + \lt{\depth+1} 
    \cdot \vnorm{ \Tc{1}(\occ^{c,1}_\depth, \beta^2_\depth[w]) - \Tc{1}(\tilde\occ^{c,1}_\depth[w], \beta^2_\depth[w]) }_1
    \Big)
    \Big] \cdot \tree^2_\depth(w)
    \intertext{(where $x[w]$ denotes the field $x$ of tuple $w$)}
    & = {\beta^1_\depth}^{\top} \cdot M^{\occ_\depth} \cdot \tree^2_\depth,
  \end{align*}
  }
  where $M^{\occ_\depth}$ is an $|\Theta^1_\depth \times \cA^1| \times |\upb{\mathcal{I}}^1_\depth|$ matrix.
\end{proof}

For implementation purposes, using \Cshref{eq|reward,eq|occm1} (to develop respectively $r(\cdot,\cdot,\cdot)$ and
$\Tm{1}(\cdot,\cdot,\cdot)$), we can derive the expression of a
component, \ie, the upper-bounding value if $a^1$ is applied in
$\theta^1_\depth$ while $w$ is chosen:
\begin{align*}
  M^{\occ_\depth}_{(\langle \theta^1_\depth, a^1\rangle, w)}  %
  & \eqdef 
  \vr(\occ_\depth, \cdot, \beta^2_\depth[w])
  + \gamma \Tm{1}(\occ_\depth, \cdot, \beta^2_\depth[w]) \cdot \\
  & \qquad \Big( \upb\nu^2_{\depth+1}[w] 
  +  \lt{\depth+1} \cdot
  \vnorm{  \Tc{1}(\occ^{c,1}_\depth, \beta^2_\depth[w]) - \Tc{1}(\tilde\occ^{c,1}_\depth[w], \beta^2_\depth[w]) }_1
  \Big) \\
  & =
  { 
    \sum_{s,\theta^2_\depth, a^2} \occ_\depth(\vth_\depth) b(s | \vth_\depth)
    \beta^2_\depth[w](a^2|\theta^2) r(s,\va)
  } \\
  & \qquad + \gamma \sum_{z^1} {
    \left[
      \sum_{\theta^2_\depth,a^2} \beta^2_\depth[w](a^2 | \theta^2_\depth)  \sum_{s,s',z^2} P^{\vz}_{\va}(s'|s) 
      b(s | \vth_\depth) \occ_\depth (\vth_\depth)
    \right]
  } \cdot \\   %
  & \qquad \Big( \upb\nu^2_{\depth+1}[w](\theta^1_\depth, a^1, z^1)
  \\
  & \qquad +
  \lt{\depth+1} \cdot
  \vnorm{  \Tc{1}(\occ^{c,1}_\depth, \beta^2_\depth[w]) - \Tc{1}(\tilde\occ^{c,1}_\depth[w], \beta^2_\depth[w]) }_1(\theta^1_\depth, a^1, z^1)
  \Big) \\
  & = \sum_{\theta^2_\depth} \occ_\depth(\vth_\depth)
  \sum_{a^2} \beta^2_\depth[w](a^2|\theta^2_\depth) \\
  & \qquad \cdot \Bigg(
  { 
    \sum_{s}  b(s | \vth_\depth) r(s,\va)
  } \\
  & \qquad
  + \gamma \sum_{z^1} {
    \left[
      \sum_{s,s',z^2} P^{\vz}_{\va}(s'|s) 
      b(s | \vth_\depth)
    \right]
  }
  \cdot \Big( \upb\nu^2_{\depth+1}[w](\theta^1_\depth, a^1, z^1) \\ %
  & \qquad + \lt{\depth+1} \cdot
  \vnorm{  \Tc{1}(\occ^{c,1}_\depth, \beta^2_\depth[w]) - \Tc{1}(\tilde\occ^{c,1}_\depth[w], \beta^2_\depth[w]) }_1(\theta^1_\depth, a^1, z^1)
  \Big) \Bigg).
\end{align*}

Then, solving
$\max_{\beta^1_\depth} \upbW{\depth}{}(\occ_\depth, \beta^1_\depth)$
can be rewritten as solving a zero-sum game where pure strategies are:
\begin{itemize}
\item for Player $1$, the choice of not $1$, but $|\Theta^1_\depth|$ actions (among $|\cA^1|$) and,
\item for Player $2$, the choice of $1$ element of $\upb{\mathcal{I}}^1_\depth$.
\end{itemize}
One can view it as a Bayesian game with one type per history
$\theta^1_\depth$ for $1$, and a single type for $2$.

With our upper bound, %
$\max_{\beta^1_\depth} \upbW{\depth}{}(\occ_\depth,\beta^1_\depth)$
can thus be solved as the \ifextended{linear program}{LP}:
\begin{align*}
  & \begin{array}{l@{\ }l@{\ }ll}
      \displaystyle
      \max_{\beta_\depth^1,v}
      v
      \quad \text{s.t. } 
      & \text{(i)}
      & \forall w \in \upb{\mathcal{I}}^1_\depth, %
      & v \leq {\beta_\depth^1}^{\top} \! \cdot M^{\occ_\depth}_{(\cdot,w)}
      \\
      & \text{(ii)}
      & \forall \theta_\depth^1 \in \Theta_\depth^1,
      & {\displaystyle \sum_{a^1}} \beta_\depth^1(a^1|\theta_\depth^1)
        = 1,
    \end{array}
    \intertext{whose dual LP is given by}
    %
    & \begin{array}{l@{\ }l@{\ }ll}
        \displaystyle
        \min_{\tree^2_\depth,v}
        v
        \quad \text{s.t. }
        & \text{(i)}
        & \forall (\theta^1_\depth, a^1 ) \in \Theta^1_\depth\times \cA^1, %
        & v \geq  M^{\occ_\depth}_{((\theta^1_\depth,a^1),\cdot)} \cdot \tree^2_\depth
        \\
        & \text{(ii)}
        &
        & {\displaystyle \sum_{w \in \upb{\mathcal{I}}^1_\depth}} \! \tree^2_\depth( w )
          = 1.
      \end{array}
    \end{align*}
As can be noted, $M^{\occ_\depth}$'s columns corresponding to
$0$-probability histories $\theta^1_\depth$ in $\occ^{m,1}_\depth$ are
empty (full of zeros), so that the corresponding decision rules (for these histories)
are not relevant and can be set arbitrarily.
The actual implementation thus ignores these histories, whose
corresponding decision rules also do not need to be stored.

\begin{remark}[Interpretation of $M^{\occ_\depth}$]
The content of this matrix can be interpreted by noting that, a given $w$ containing a behavioral strategy $\beta^2_{\depth:H-1}$ and an \os{} $\tilde\occ_\depth$, a pair $\langle w, \theta^1_\depth \rangle$ induces a POMDP for player~1 whose state space is made of pairs $\langle s, \theta^2_t\rangle$, and whose initial belief $b_\depth$ depends on $\tilde\occ_\depth$ and $\theta_\depth^1$.
Solving this POMDP amounts to finding a best response of player $1$ to $\beta_{\depth:H-1}^2$.
In this setting, an element $M^{\occ_\depth}_{((\theta_\depth^1,a^1),w)}$ is an upper-bound of the optimal (POMDP) $Q$-value when player~1 performs $a^1$ while facing $b_\depth$ ($Q^*_{\text{POMDP}}(b_\depth,a^1)$).
\end{remark}



%

\subsubsection{Upper Bounding \texorpdfstring{$\nu^2_{[\occ^{c,1}_\depth,\tree^2_\depth]}$}{nu2}}
\label{sec|compute|nu}

Adding a new complete tuple to $\upb{\mathcal{I}}^1_\depth$ requires a new
vector $\upb\nu^2_{\depth}$ that upper bounds the vector
$\nu^2_{[\occ^{c,1}_\depth,\tree^2_\depth]}$ associated to the
strategy induced by $\tree^2_\depth$.
We can obtain one in a recursive manner (not solving the induced
POMDP).
%

\begin{restatable}{proposition}{propRecNu}
  \labelT{prop|rec|nu}
  %
  For each $\tree^2_\depth$ obtained as the solution of the
  aforementioned (dual) LP in $\occ_\depth$, and each
  $\theta^1_\depth$,
  $\nu^2_{[\occ^{c,1}_{\depth},
    \tree^2_{\depth}]}(\theta^1_\depth)$ is upper bounded by a value
  $\upb\nu^2_{\depth}(\theta^1_\depth)$ that depends on vectors
  $\upb\nu^2_{\depth+1}$ in the support of $\tree^2_\depth$.
  In particular, if $\theta^1_\depth \in \supp(\occ^{m,1}_\depth)$, we
  have:
  \begin{align*}
    \upb\nu^2_{\depth}(\theta^1_\depth)
    & \eqdef \frac{1}{\occ^{1}_{\depth,m}(\theta^1_\depth)} \max_{a^1 \in \cA^1} M^{\occ_\depth}_{((\theta^1_\depth, a^1), . )} \cdot \tree^2_\depth.
  \end{align*}
\end{restatable}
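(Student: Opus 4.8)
The goal is to show that, given a distribution $\tree^2_\depth$ returned by the dual LP $\dlp{\upbW{\depth}{}}(\occ_\depth)$, the quantity $\frac{1}{\occ^{m,1}_\depth(\theta^1_\depth)} \max_{a^1} M^{\occ_\depth}_{((\theta^1_\depth,a^1),\cdot)} \cdot \tree^2_\depth$ is a valid component-wise upper bound on $\nu^2_{[\occ^{c,1}_\depth, \tree^2_\depth]}(\theta^1_\depth)$. The natural approach is to unwind the meaning of the matrix $M^{\occ_\depth}$ (as recalled in the remark on its interpretation) and relate it to the best-response POMDP that player $1$ faces against the strategy induced by $\tree^2_\depth$ once $\occ^{c,1}_\depth$ is fixed and $\theta^1_\depth$ is revealed.

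First I would recall, from \Cshref{sec|getLP} and the displayed formula for $M^{\occ_\depth}_{((\theta^1_\depth,a^1),w)}$, that each such entry is itself an upper bound on the optimal POMDP $Q$-value $Q^*_{\text{POMDP}}(b_\depth, a^1)$ for the POMDP induced by the pair $\langle w, \theta^1_\depth\rangle$, whose initial belief $b_\depth$ is derived from $\tilde\occ^{c,1}_\depth[w]$ and $\theta^1_\depth$, using the bound on $\nu^2_{[\Tc{1}(\tilde\occ^{c,1}_\depth[w],\beta^2_\depth[w]),\beta^2_{\depth+1:}[w]]}$ stored in $\upb\nu^2_{\depth+1}[w]$ together with \Cshref{lem|nuLC} (Lipschitz-continuity of $\nu^2$ in the conditional term). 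The key bookkeeping step is to track the factor $\occ^{m,1}_\depth(\theta^1_\depth)$: the entries of $M^{\occ_\depth}$ are built with $\occ_\depth(\vth_\depth) = \occ^{m,1}_\depth(\theta^1_\depth)\,\occ^{c,1}_\depth(\theta^2_\depth\mid\theta^1_\depth)$, so dividing by $\occ^{m,1}_\depth(\theta^1_\depth)$ turns the weighting into a genuine conditional expectation over $\theta^2_\depth \sim \occ^{c,1}_\depth(\cdot\mid\theta^1_\depth)$, which is exactly the object that defines $\nu^2_{[\occ^{c,1}_\depth,\beta^2_{\depth:}]}(\theta^1_\depth)$ via \Cref{theo|ConvexConcaveV}. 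For $\theta^1_\depth \notin \supp(\occ^{m,1}_\depth)$ the column is zero and the value can be defined arbitrarily (as already noted in \Cshref{sec|getLP}), so that case is trivial.

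The main inductive argument then runs as follows. For a single tuple $w$, combining the chain of inequalities leading to \Cshref{eq|WsingleTerm} shows that $\frac{1}{\occ^{m,1}_\depth(\theta^1_\depth)} M^{\occ_\depth}_{((\theta^1_\depth,a^1),w)}$ upper-bounds the value, for the POMDP at $\langle\occ^{c,1}_\depth,\theta^1_\depth\rangle$, of playing $a^1$ now and then best-responding to $\beta^2_{\depth+1:}[w]$ — and in particular, maximizing over $a^1$, it upper-bounds $\nu^2_{[\occ^{c,1}_\depth,\,\beta^2_\depth[w]\oplus\beta^2_{\depth+1:}[w]]}(\theta^1_\depth)$. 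Next, because $\nu^2_{[\occ^{c,1}_\depth,\cdot]}(\theta^1_\depth)$, as the value of a best response to a mixture of opponent strategies, is convex in the opponent's strategy and the strategy induced by $\tree^2_\depth$ is a convex combination (in realization-weight space) of the $\beta^2_\depth[w]\oplus\beta^2_{\depth+1:}[w]$, we get $\nu^2_{[\occ^{c,1}_\depth,\tree^2_\depth]}(\theta^1_\depth) \le \sum_w \tree^2_\depth(w)\,\nu^2_{[\occ^{c,1}_\depth,\,\beta^2_\depth[w]\oplus\beta^2_{\depth+1:}[w]]}(\theta^1_\depth)$. Linearity of $M^{\occ_\depth}_{((\theta^1_\depth,a^1),\cdot)}\cdot\tree^2_\depth$ in $\tree^2_\depth$ then lets me push the $\max_{a^1}$ outside the sum (using $\max_{a^1}\sum_w \le \sum_w \max_{a^1}$ in the right direction — here $\sum_w\tree^2_\depth(w)\max_{a^1}(\cdot) \ge \max_{a^1}\sum_w\tree^2_\depth(w)(\cdot)$ is \emph{not} what we want, so I must instead bound each $\nu^2_{[\cdot,\beta^2[w]]}$ term separately by $\max_{a^1}\frac{1}{\occ^{m,1}_\depth(\theta^1_\depth)}M^{\occ_\depth}_{((\theta^1_\depth,a^1),w)}$ and only then recombine, observing $\sum_w\tree^2_\depth(w)\max_{a^1}M_{((\theta^1_\depth,a^1),w)} \ge \max_{a^1}\sum_w\tree^2_\depth(w)M_{((\theta^1_\depth,a^1),w)}$, which is fine since we want the larger quantity as the upper bound), giving the claimed $\upb\nu^2_\depth(\theta^1_\depth) = \frac{1}{\occ^{m,1}_\depth(\theta^1_\depth)}\max_{a^1}M^{\occ_\depth}_{((\theta^1_\depth,a^1),\cdot)}\cdot\tree^2_\depth$.

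The step I expect to be the main obstacle is the careful handling of the convexity/linearity interplay in the last paragraph: $\nu^2$ is convex (not linear) in the opponent's mixed strategy, while $M^{\occ_\depth}\cdot\tree^2_\depth$ is linear in $\tree^2_\depth$, so one must be scrupulous about which direction each inequality goes and make sure the per-tuple POMDP bounds (which rely on the Lipschitz slack term $\lt{\depth+1}\norm{\Tc{1}(\occ^{c,1}_\depth,\beta^2_\depth[w]) - \Tc{1}(\tilde\occ^{c,1}_\depth[w],\beta^2_\depth[w])}_1$ being correctly accounted for in $M^{\occ_\depth}$) compose correctly when the conditional term is the \emph{true} $\occ^{c,1}_\depth$ rather than the stored $\tilde\occ^{c,1}_\depth[w]$. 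A secondary subtlety is justifying that the strategy "induced by $\tree^2_\depth$" (via the recursive sampling of \Cshref{fig|dag}) genuinely corresponds, in realization-weight space, to the convex combination $\sum_w\tree^2_\depth(w)(\cdot)$ needed for the convexity argument; this is essentially the content of the equivalence results deferred to \Cshref{App|th:eqStrategies}, which I would cite rather than reprove.
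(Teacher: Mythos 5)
There is a genuine gap in your final recombination step, and you half-noticed it but resolved it in the wrong direction. Your route (per-tuple bound, then convexity of $\nu^2_{[\occ^{c,1}_\depth,\cdot]}(\theta^1_\depth)$ in the opponent's mixture) establishes
$\nu^2_{[\occ^{c,1}_\depth,\tree^2_\depth]}(\theta^1_\depth) \leq \sum_w \tree^2_\depth(w)\,\max_{a^1}\frac{1}{\occ^{m,1}_\depth(\theta^1_\depth)}M^{\occ_\depth}_{((\theta^1_\depth,a^1),w)}$,
i.e.\ the \emph{sum-of-max} quantity. But the proposition asserts that the \emph{max-of-sum} quantity $\frac{1}{\occ^{m,1}_\depth(\theta^1_\depth)}\max_{a^1}\sum_w \tree^2_\depth(w)\,M^{\occ_\depth}_{((\theta^1_\depth,a^1),w)}$ is an upper bound, and since $\max_{a^1}\sum_w \leq \sum_w\max_{a^1}$ this is the \emph{smaller} of the two; proving that $\nu^2$ is below the larger quantity says nothing about it being below the smaller one. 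Your parenthetical remark that the sum-of-max ``is fine since we want the larger quantity as the upper bound'' is exactly where the argument breaks: the statement to be proved is the tighter bound, and the convexity decomposition cannot recover it, because splitting over $w$ at time $\depth$ implicitly lets player $1$ choose her action $a^1$ \emph{knowing} which $w$ was sampled --- precisely the information the tighter bound must not concede.

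The paper's proof avoids this by never decomposing over pure opponent strategies: it performs a one-step Bellman expansion of the best-response POMDP in which the node $w$ is part of the \emph{hidden} state (belief given by $\tree^2_\depth$ and $\occ^{c,1}_\depth(\cdot|\theta^1_\depth)$). Because player $1$'s first action is chosen from $\theta^1_\depth$ alone, the expectation over $w\sim\tree^2_\depth$ sits \emph{inside} the $\max_{a^1}$; the continuation values $\nu^2_{[\occ^{c,1}_{\depth+1},\tree^2_{\depth+1}[w]]}$ are then bounded using \Cref{lem|nuLC} (Lipschitz slack w.r.t.\ the stored $\tilde\occ^{c,1}_{\depth+1}[w]$) and the stored vectors $\upb\nu^2_{\depth+1}[w]$, and the resulting expression is recognized, term by term, as $\frac{1}{\occ^{m,1}_\depth(\theta^1_\depth)}\max_{a^1}M^{\occ_\depth}_{((\theta^1_\depth,a^1),\cdot)}\cdot\tree^2_\depth$. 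Your per-tuple analysis of the matrix entries and the Lipschitz bookkeeping are sound and reusable, but to close the proof you must replace the convexity-over-$w$ step by this direct expansion in which $w$ remains unobserved at the moment $a^1$ is selected.
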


\begin{proof}
  \label{proof|prop|rec|nu}

For a newly derived $\tree^2_\depth$, as
  $\nu^2_{[\occ^{c,1}_\depth, \tree^2_\depth]}(\theta^1_\depth)$
  is the value of $1$'s best action ($\in \cA^1$) if $1$ %
  (i) observes $\theta^1_\depth$ while in $\occ^{c,1}_\depth$ and %
  (ii) $2$ plays $\tree^2_\depth$, we have:
  \begin{align*}
    & \nu^2_{[\occ^{c,1}_\depth, \tree^2_\depth]}(\theta^1_\depth) %
    \eqdef V^\star_{[\occ^{c,1}_\depth, \tree^2_\depth]}(\theta^1_\depth) %
    \qquad \text{(optimal POMDP value function)}
    \nonumber \\
    & = \max_{\beta^1_{\depth:}} \E\left[
      \sum_{t=\depth}^H \gamma^{t-\depth} R_t
      \mid \beta^1_{\depth:}, \theta^1_\depth, \occ^{c,1}_\depth, \tree^2_\depth
    \right]
    \nonumber \\
    \nonumber
    & = \max_{a^1} \E\left[
      R_\depth
      +
      \gamma \max_{\beta^1_{\depth+1:}}
      \E\Bigg[
        \sum_{t=\depth+1}^H \gamma^{t-(\depth+1)} R_t
        \mid \beta^1_{\depth+1:}, \langle \theta^1_\depth, a^1, Z^1 \rangle, \occ^{c,1}_{\depth+1}, \tree^2_{\depth+1}
      \right]
      \\
      & \quad \Bigg\vert a^1, \theta^1_\depth, \occ^{c,1}_\depth, \tree^2_\depth
    \Bigg] \\
    & = \max_{a^1} \E\left[
      R_\depth
      + \gamma V^\star_{[\occ^{c,1}_{\depth+1}, \tree^2_{\depth+1}]}( \theta^1_\depth, a^1, Z^1 )
      \mid a^1, \theta^1_\depth, \occ^{c,1}_\depth, \tree^2_\depth
    \right] \\
    \nonumber
    & = \max_{a^1} \sum_{w, \theta^2_\depth, a^2, z^1}
    \underbrace{Pr(w, \theta^2_\depth, z^1, a^2 \mid a^1, \theta^1_\depth, \occ^{c,1}_\depth, \tree^2_\depth)} \\
    & \quad \cdot
    \left(
      r(\vth_\depth,\va_\depth)
      + \gamma \nu^2_{[\occ^{c,1}_{\depth+1}, \tree^2_{\depth+1}[w]]}( \theta^1_\depth, a^1, z^1 )
    \right)
     \intertext{\small (where $\occ^{c,1}_{\depth+1}=\Tc{1}(\occ^{c,1}_\depth, \beta^2_\depth[w])$
      {\scriptsize (\Crefpage{lem|T1cindep})})}
      \nonumber
    & = \max_{a_1}  
    \sum_{w, \theta^2_\depth, a^2, z^1}
    \underbrace{Pr(w | \tree^2_\depth)}
    \cdot \underbrace{Pr(\theta^2_\depth | \theta^1_\depth, \occ^{c,1}_\depth)}
    \cdot \underbrace{Pr( a^2 | \beta^2_\depth[w], \theta^2_\depth)}
    \cdot \underbrace{Pr( z^1 | \vth_\depth, \va_\depth)} \\
    & \quad \cdot
    \left(
      r(\vth_\depth, \va) 
      + \gamma \nu^2_{[\occ^{c,1}_{\depth+1}, \tree^2_{\depth+1}[w]]}(\theta^1_\depth, a^1, z^1)
    \right) \\
    & = \max_{a_1}  \sum_w \tree^2_\depth(w)
    \sum_{\theta^2_\depth} \occ^{c,1}_\depth(\theta^2_\depth|\theta^1_\depth)
    \sum_{a^2} \beta^2_\depth[w](a^2|\theta^2_\depth) \\
    \nonumber
    & \quad \cdot
    \left(
      r(\vth_\depth, \va) 
      + \gamma \sum_{z^1} Pr(z^1|\vth_\depth,\va) \underbrace{\nu^2_{[\occ^{c,1}_{\depth+1}, \tree^2_{\depth+1}[w]]}(\theta^1_\depth, a^1, z^1)}
    \right) \\
    \intertext{then, as $\nu^2_{[\occ^{c,1}_{\depth+1}, \tree^2_{\depth+1}[w]]}$ is $\l_{\depth+1}$-LC in (any) $\occ^{c,1}_{\depth+1}$ (\Cref{lem|nuLC}),}
    \nonumber
    & \leq \max_{a_1} \sum_w \tree^2_\depth(w)
    \sum_{\theta^2_\depth} \occ^{c,1}_\depth(\theta^2_\depth|\theta^1_\depth)
    \sum_{a^2} \beta^2_\depth[w](a^2|\theta^2_\depth) \cdot
    \Bigg(
    r(\vth_\depth, \va) + \gamma \sum_{z^1} Pr(z^1|\vth_\depth,\va)
    \\
    & \quad \cdot
      \left[
        \overbrace{
          \underbrace{\nu^2_{[\tilde\occ^{c,1}_{\depth+1}[w], \tree^2_{\depth+1}[w]]}(\theta^1_\depth, a^1, z^1)}
          + \l_{\depth+1} \vnorm{\occ^{c,1}_{\depth+1} - \tilde\occ^{c,1}_{\depth+1}[w]}_1(\theta^1_\depth, a^1, z^1)
        }
      \right]
    \Bigg) \\
    \nonumber & \leq \max_{a_1} \sum_w \tree^2_\depth(w)
    \sum_{\theta^2_\depth} \occ^{c,1}_\depth(\theta^2_\depth|\theta^1_\depth)
    \sum_{a^2} \beta^2_\depth[w](a^2|\theta^2_\depth) \cdot
    \Bigg(
    \underbrace{ r(\vth_\depth, \va) }
    + \gamma \sum_{z^1} \underbrace{Pr(z^1|\vth_\depth,\va)}
    \\
    & \quad
    \cdot \left[
      \overbrace{ \upb\nu^2_{\depth+1}[w](\theta^1_\depth, a^1, z^1) }
      +  \l_{\depth+1} \vnorm{\occ^{c,1}_{\depth+1} - \tilde\occ^{c,1}_{\depth+1}[w]}_1(\theta^1_\depth, a^1, z^1)
    \right]
    \Bigg) 
    \\
    \nonumber & = \max_{a_1} \sum_w \tree^2_\depth(w)
    \sum_{\theta^2_\depth} \occ^{c,1}_\depth(\theta^2_\depth|\theta^1_\depth)
    \sum_{a^2} \beta^2_\depth[w](a^2|\theta^2_\depth) 
    \cdot
    \Bigg(
      \overbrace{\sum_s b(s|\vth_\depth) r(s, \va)}
    \\
    \nonumber & \quad
      + \gamma \sum_{z^1} \left( \overbrace{ \sum_s b(s|\vth_\depth) \underbrace{Pr(z^1|s,\va)} } \right)
      \cdot \Big[ \upb\nu^2_{\depth+1}[w](\theta^1_\depth, a^1, z^1)
     \\
    & \quad
    + \l_{\depth+1} \underbrace{ \vnorm{\occ^{c,1}_{\depth+1} - \tilde\occ^{c,1}_{\depth+1}[w]}_1(\theta^1_\depth, a^1, z^1) }
    \Big]
    \Bigg)
    \\
    \nonumber
    & = \max_{a_1} \sum_w \tree^2_\depth(w)
    \sum_{\theta^2_\depth} \occ^{c,1}_\depth(\theta^2_\depth|\theta^1_\depth)
    \sum_{a^2} \beta^2_\depth[w](a^2|\theta^2_\depth)
    \label{eq|nuNotInSupport} 
    \\
       \nonumber
        & \quad
    \cdot
    \Bigg(
    \sum_s b(s|\vth_\depth) r(s, \va)     
    + \gamma \sum_{z^1} \left( \overbrace{ \sum_{s, s', z^2} b(s|\vth_\depth) P^{\vz}_{\va}(s'|s) } \right)
    \cdot 
    \Big[ \upb\nu^2_{\depth+1}[w](\theta^1_\depth, a^1, z^1)
     \\
    & \quad
    + \l_{\depth+1}
    \overbrace{ \vnorm{ \Tc{1}(\occ^{c,1}_\depth, \beta^2_\depth[w]) - \Tc{1}(\tilde\occ^{c,1}_\depth[w], \beta^2_\depth[w]) }_1(\theta^1_\depth, a^1, z^1) }
    \Big] \Bigg)
     \\
    & = \frac{1}{\occ^{1}_{\depth,m}(\theta^1_\depth)} \max_{a^1 \in \cA^1} M^{\occ_\depth}_{((\theta^1_\depth, a^1), . )} \cdot \tree^2_\depth.
    \nonumber
  \end{align*}
\end{proof}

\subsubsection{Strategy Conversion}
\label{App|th:eqStrategies}

Firstly, we give details regarding solutions of Dual LPs (\Cref{eq|LP}) inducing behavioral strategies. 
As suggested in \Cref{subsection:actionSelectionAndBackupOperators}, one can show by induction that for any timestep $\depth$, each $\tree_{\depth}^2$ is actually equivalent to an element of $\Delta(\cB_{\depth:}^2)$. The following lemma shows that for any timestep $\depth$, each element of $\Delta(\cB_{\depth:}^2)$ induces an element of $\cB_{\depth:}^2$.


\begin{restatable}[]{lemma}{App|th:eqStrategies}
\label{th|eqStrategies}
Each $\tree_{\depth}^i \in \Delta{(\cB_{\depth:}^i)}$ induces a behavioral strategy. More precisely, we prove that (i) there is a natural injection from the set $\cB_{\depth:}^i$ to the set of distributions $\Delta(\cB_{\depth:}^i)$  and 
(ii) there is a surjection from the set $\Delta(\cB_{\depth:}^i)$ to $\cB_{\depth:}^i$.
\end{restatable}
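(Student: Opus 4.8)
The plan is to produce both maps by hand. For (i), let $\iota$ send each $\beta^i_{\depth:}\in\cB_{\depth:}^i$ to the Dirac distribution $\delta_{\beta^i_{\depth:}}\in\Delta(\cB_{\depth:}^i)$; this is plainly injective, so the content lies in (ii). Since every $\tree^i_\depth$ of interest arises as an LP solution, it has finite support $\supp(\tree^i_\depth)\subseteq\cB_{\depth:}^i$, which removes measure-theoretic issues. For a full \aoh{} $\theta^i_t$ with $\depth\le t\le H-1$, write $\theta^i_s$ ($\depth\le s\le t$) for its length-$s$ prefixes, $(a^i_\depth,\dots,a^i_{t-1})$ for its action subsequence from $\depth$ on, $R_\beta(\theta^i_t)\eqdef\prod_{s=\depth}^{t-1}\beta^i_s(\theta^i_s,a^i_s)$ for the corresponding suffix realization weight (empty product $=1$), and more generally $R_\beta(\theta^i_t,a^i)\eqdef R_\beta(\theta^i_t)\,\beta^i_t(\theta^i_t,a^i)$. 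I would then define the behavioral strategy $\bar\beta^i_{\depth:}\eqdef f(\tree^i_\depth)$ ``induced'' by $\tree^i_\depth$ through the Kuhn-style averaging
\begin{align*}
  \bar\beta^i_t(\theta^i_t,a^i)
  \eqdef
  \begin{cases}
    \dfrac{\sum_{\beta}\tree^i_\depth(\beta)\,R_\beta(\theta^i_t,a^i)}{\sum_{\beta}\tree^i_\depth(\beta)\,R_\beta(\theta^i_t)}
    & \text{if }\ \textstyle\sum_{\beta}\tree^i_\depth(\beta)\,R_\beta(\theta^i_t)>0,\\[2.2ex]
    \textstyle\sum_{\beta}\tree^i_\depth(\beta)\,\beta^i_t(\theta^i_t,a^i)
    & \text{otherwise,}
  \end{cases}
\end{align*}
where each case defines a probability distribution over $\cA^i$, so that $\bar\beta^i_{\depth:}\in\cB_{\depth:}^i$.

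The key step is the identity $R_{\bar\beta}(\theta^i_t)=\sum_{\beta}\tree^i_\depth(\beta)\,R_\beta(\theta^i_t)$ for all $\theta^i_t$ (whence also $R_{\bar\beta}(\theta^i_t,a^i)=\sum_{\beta}\tree^i_\depth(\beta)\,R_\beta(\theta^i_t,a^i)$), proved by induction on $t$: the case $t=\depth$ reduces to $\bar\beta^i_\depth(\theta^i_\depth,a^i)=\sum_{\beta}\tree^i_\depth(\beta)\beta^i_\depth(\theta^i_\depth,a^i)$, which is exactly the definition since $R_\cdot(\theta^i_\depth)=1$ and $\sum_\beta\tree^i_\depth(\beta)=1$; for the step one writes $R_{\bar\beta}(\theta^i_{t+1})=R_{\bar\beta}(\theta^i_t)\bar\beta^i_t(\theta^i_t,a^i_t)$, substitutes the induction hypothesis for $R_{\bar\beta}(\theta^i_t)$ and the definition of $\bar\beta^i_t$, and checks the two cases --- the denominator cancels in the positive case, while in the zero case every $R_\beta(\theta^i_t)$, hence every $R_\beta(\theta^i_{t+1})$ and $R_{\bar\beta}(\theta^i_{t+1})$, vanishes. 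Next I would record the (standard, sequence-form) fact that, for fixed $\occ_\depth$ and fixed opponent strategy $\beta^{\neg i}_{\depth:}$, $V_\depth(\occ_\depth,\beta^i_{\depth:},\beta^{\neg i}_{\depth:})$ is a \emph{linear} function of player $i$'s family of suffix realization weights $\big(R_{\beta^i}(\theta^i_t,a^i)\big)_{t,\theta^i_t,a^i}$: unrolling the dynamics as in the proof of \Cref{lem|occSufficient} gives $\Pr(\vth_t\mid\occ_\depth,\beta^i_{\depth:},\beta^{\neg i}_{\depth:})=c(\vth_t)\,R_{\beta^i}(\theta^i_t)$ with $c$ independent of $\beta^i_{\depth:}$ (perfect recall: each decision rule depends only on the player's own \aoh{}), and summing the per-step rewards yields such a linear form. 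Combining this with the identity gives $V_\depth(\occ_\depth,\bar\beta^i_{\depth:},\beta^{\neg i}_{\depth:})=\sum_{\beta}\tree^i_\depth(\beta)\,V_\depth(\occ_\depth,\beta^i_{\depth:},\beta^{\neg i}_{\depth:})$ for every opponent strategy; that is, $\bar\beta^i_{\depth:}$ realizes the value of the mixture $\tree^i_\depth$, which is the precise sense in which $\tree^i_\depth$ ``induces'' a behavioral strategy (the subgame-restricted form of \citeauthor{Kuhn-ctg50}'s equivalence, consistent with the use of $\tree^2_\depth$ in \Cref{subsection:actionSelectionAndBackupOperators}). Surjectivity of $f$ onto $\cB_{\depth:}^i$ is then immediate: for a Dirac $\delta_\beta$ the formula returns $\beta^i_t(\theta^i_t,\cdot)$ at every \aoh{} --- via the first case where $R_\beta(\theta^i_t)>0$ and via the second where $R_\beta(\theta^i_t)=0$ --- so $f\circ\iota=\mathrm{id}$.

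The main obstacle I anticipate is the bookkeeping around zero-realization-weight \aoh{}s --- keeping $f$ defined everywhere while both the realization-weight identity and the reduction $f(\delta_\beta)=\beta$ survive --- together with pinning down the sequence-form linearity, equivalently the factorization $\Pr(\vth_t)=c(\vth_t)\,R_{\beta^i}(\theta^i_t)$, which is the perfect-recall ingredient that allows a mixture over behavioral strategies to collapse to a single one. The induction itself, and the verification that $f$ is surjective, are routine.
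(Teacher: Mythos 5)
Your proposal is correct and follows essentially the same route as the paper: the injection is the degenerate (Dirac) embedding, and the surjection is the realization-weight construction --- your closed-form Kuhn-style averaging $\bar\beta^i_t(\theta^i_t,a^i)$ is exactly what the paper's extraction procedure (\Cref{alg|extractingBeta}) computes by mixing suffix realization weights and then normalizing. Your explicit handling of zero-realization-weight \aoh{}s and the added payoff-equivalence argument via linearity in realization weights are welcome refinements of the same idea rather than a different approach.
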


\begin{proof}
By induction on $\depth \in \{0,\dots,H-1\}$, we prove that $\cup_{t=\depth}^H \cB_t^i \subset \cup_{t=\depth}^H \Delta(\cB^i_t)$. 
Firstly, for $\depth = H-1$, for all $ \beta_{H-1} \in \cB_{H-1}$, one can pick the degenerate distribution $\tree_{H-1} = \beta_{H-1}$ which is in $\Delta(\cB^i_{H-1})$. 
Next, assume that $\cup_{t=\depth+1}^{H-1} \cB_{t}^i \subset \cup_{t=\depth+1}^{H-1} \Delta(\cB^i_{t})$ for some $\depth \in \{0,\dots,H-2\}$, then for all $\beta_{\depth:H-1},\ \beta_{\depth:H-1} = \beta_\depth \oplus \beta_{\depth+1:H-1}$. By the induction hypothesis, there is $\tree_{\depth+1:H-1} \in \Delta(\cB_{\depth+1:H-1})$ equal to $\beta_{\depth+1:H-1}$. Thus, we define $\tree_{\depth:H-1} = \beta_{\depth} \oplus \tree_{\depth+1:H-1} = \beta_\depth \oplus \beta_{\depth+1:H-1} = \beta_{\depth:H-1}$ which is in $\cB_{\depth:H-1}$.
From this follows a natural injection from the behavioral strategies' set to the set of distributions over behavioral strategies.

The surjection from $\Delta(\cB_{\depth:}^i)$ to $\cB_{\depth:}^i$ is given by the realization weight computation algorithm detailed in \Cref{alg|extractingBeta}.
\end{proof}
\label{app|stratExtraction}

As discussed in \Cref{app|stratExtraction}, no effort is required to
extract a solution strategy for a player from the lower bound (for
$1$) or the upper bound (for $2$), but that strategy is in an unusual
recursive form.
We will here see (in the finite horizon setting) how to derive a
(unique) equivalent behavioral strategy $\beta^i_{0:}$ using
realization weights \citep{KolMegSte-stoc94} in intermediate steps.
To that end, we first define these realization weights in the case of
a behavioral strategy (rather than for a mixed strategy
as done by \citeauthor{KolMegSte-stoc94}) and present some useful
properties.

\paragraph{About Realization Weights}

Let us denote $rw^i(a^i_0, z^i_1, a^i_1, \dots, a^i_\depth)$ the {\em
  realization weight} (RW) of sequence
$a^i_0, z^i_1, a^i_1, \dots, a^i_\depth$ under strategy
$\beta^i_{0:}$, defined as
\begin{align}
  rw^i(a^i_0, z^i_1, a^i_1, \dots, a^i_\depth)
  & \eqdef \prod_{t=0}^\depth  \beta^i_{0:}(a^i_t | a^i_0, z^i_1, a^i_1, \dots, z^i_t) \\
  & = rw^i(a^i_0, z^i_1, a^i_1, \dots, a^i_{\depth-1}) \cdot \beta^i_{0:}(a^i_\depth | \underbrace{a^i_0, z^i_1, a^i_1, \dots, z^i_\depth}_{\theta^i_\depth}).
  \intertext{This definition already leads to useful results such as:}
  \beta^i_{0:}(a^i_\depth | \theta^i_\depth)
  & = \frac{
    rw^i(\theta^i_{\depth-1}, a^i_{\depth-1}, z^i_\depth, a^i_\depth)
  }{
    rw^i(\theta^i_{\depth-1}, a^i_{\depth-1})
  },
  \label{eq|betaFromRw}
  \intertext{and}
  \forall z^i_\depth, \quad
  rw^i(\theta^i_{\depth-1}, a^i_{\depth-1})
  & = rw^i(\theta^i_{\depth-1}, a^i_{\depth-1}) \cdot \underbrace{\sum_{a^i_\depth} \beta(a^i_\depth | \theta^i_{\depth-1}, a^i_{\depth-1}, z^i_\depth )}_{=1} \\
  & = \sum_{a^i_\depth} rw^i(\theta^i_{\depth-1}, a^i_{\depth-1}) \cdot \beta(a^i_\depth | \theta^i_{\depth-1}, a^i_{\depth-1}, z^i_\depth ) \\
  & = \sum_{a^i_\depth} rw^i(\theta^i_{\depth-1}, a^i_{\depth-1}, z^i_\depth, a^i_\depth).
  \label{eq|RWsFromFullLengthRWs}
\end{align}

We now extend \citeauthor{KolMegSte-stoc94}'s definition by introducing
{\em conditional realization weights}, where the realization weight of a
{\em suffix sequence} is ``conditioned'' on a {\em prefix sequence}:
\begin{align}
  & rw^i(\underbrace{a^i_\depth, \dots, a^i_{\depth'}}_{\text{suffix seq.}} | \underbrace{a^i_0, \dots, z^i_\depth}_{\text{prefix seq.}})
  \eqdef \prod_{t=\depth}^{\depth'}  \beta^i_{0:}(a^i_t | a^i_0, \dots, z^i_\depth, a^i_\depth, \dots, z^i_t)
  \label{eq|rw|def} \\
  & = \beta^i_{0:}(a^i_{\depth} | a^i_0, \dots, z^i_\depth)   \cdot rw^i(a^i_{\depth+1}, \dots, a^i_{\depth'} | a^i_0, \dots, z^i_{\depth+1}).
  \label{eq|rw|rec|beta}
\end{align}
As can be noted, this definition only requires the knowledge of a
partial strategy $\beta^i_{\depth:}$ rather than a complete strategy
$\beta^i_{0:}$.

\paragraph{Mixing Realization Weights}

Let $\depth'\geq \depth+1$, and $rw^i[w]$ denote the realization
weights of some element $w$ at $\depth+1$.
%
%
Then, for some $\tree^i_\depth$, we have
\begin{align}
  & rw[\tree^i_\depth](a^i_{\depth+1}, \dots, a^i_{\depth'} | a^i_0, \dots, z^i_{\depth+1})
  \\
  & = \sum_w \tree^i_\depth(w) \cdot rw[w](a^i_{\depth+1}, \dots, a^i_{\depth'} | a^i_0, \dots, z^i_{\depth+1}).
  \label{eq|rw|rec|delta}
\end{align}

  

\paragraph{\texorpdfstring{From $w^i_0$ to $\beta^i_{0:}$}{From wi0 to betai0:}}


%
\SetKwFunction{FExtract}{${\text{\bf Extract}}$}
\SetKwFunction{FRecGetRWMix}{${\text{\bf RecGetRWMix}}$}
\SetKwFunction{FRecGetRWCat}{${\text{\bf RecGetRWCat}}$}

\begin{algorithm}
  \caption{Extracting $\beta^i_{0:}$ from $w^i_0$}
  \label{alg|extractingBeta}
  
  \DontPrintSemicolon
  
  \Fct{\FExtract{$w^i_0$}}{
    \tcc{Step 1., keeping only $rw(\theta^i_{0:H-1})$ for all $\theta^i_{0:H-1}$}
    $\left( rw(\theta^i_{0:H-1}) \right)_{\theta^i_{0:H-1}} \gets$ \FRecGetRWMix{$0 , w^i_0$}\;
    
    \tcc{Step 2.}
    \For{$t=H-2, \dots, 0$}{
      \ForAll{$\theta^i_{0:t}, a^i_{t}$}{
        $z^i_{t+1} \gets z^i$ s.t. $\beta_t(\cdot|\theta^i_{0:t}, a^i_t, z^i)$ is defined\;
        $rw(\theta^1_{0:t}, a^i_t) %
        \gets \sum_{a^i_{t+1}} rw(\theta^i_{0:t}, a^i_t, z^i_{t+1},a^i_{t+1} | - )$ 
        \label{line|RWsFromFullLengthRWs}
      }
    }
    
    \tcc{Step 3.}
    \For{$t=H-1, \dots, 0$}{
      \ForAll{$\theta^i_{0:t}, a^i_t$}{
        $\beta^i_t(a^i_t | \theta^i_{0:t}) %
        \gets  \frac{
          rw^i(\theta^i_{0:t-1}, a^i_{t-1}, z^i_t, a^i_t)
        }{
          rw^i(\theta^i_{0:t-1}, a^i_{t-1})
        }
        $
        \label{line|betaFromRw}
      }
    }
    \Return{$\beta^i_{0:}$}
  }
  
  \Fct{\FRecGetRWMix{$t , w = \langle \beta^i_t, \tree^i_t \rangle $}}{
    \For{$w'$ s.t. $\tree^i_t(w')>0$}{
      $rwCat[w'] \gets$ \FRecGetRWCat{$t,w'$} 
    }
    \ForAll{$(a^i_0, \dots, a^i_{H-1})$}{
      $ rwMix[w](a^i_{t}, \dots, a^i_{H-1} | a^i_0, \dots, z^i_{t}) %
      $
      
      $\gets \sum_{w'} {
        \tree^i_t(w')
        \cdot rwCat[w'](a^i_{t+1}, \dots, a^i_{H-1} | a^i_0, \dots, z^i_{t+1})
      }
      $
      \label{line|rw|rec|beta}
    }
    \Return{$rwMix[w]$}
  }

  \Fct{\FRecGetRWCat{$t , w = \langle \beta^i_t, \tree^i_t \rangle $}}{
    
    \eIf{$t=H-1$}{
      \ForAll{$(a^i_0, \dots, a^i_{H-1})$}{
        $ rwCat[w](a^i_{H-1} | a^i_0, \dots, z^i_{H-1})
        \gets  \beta^i_t(a^i_{H-1} | a^i_0, \dots, z^i_{H-1}) $
      }
    }{
      $rwMix[w]
      \gets $ \FRecGetRWMix{$t,w$}\;   
      \ForAll{$(a^i_0, \dots, a^i_{H-1})$}{
        $ rwCat[w](a^i_{t}, \dots, a^i_{H-1} | a^i_0, \dots, z^i_{t}) %
        \gets  \beta^i_t(a^i_t|a^i_0, \dots, z^i_t)
        \cdot rwMix[w](a^i_{t+1}, \dots, a^i_{H-1} | a^i_0, \dots, z^i_{t+1})
        $
        \label{line|rw|rec|delta}
      }
    }        
    \Return{$rwCat[w]$}
  }
\end{algorithm}

Using the above results, function \FExtract in
\Cref{alg|extractingBeta} derives a behavioral strategy $\beta^i_{0:}$
equivalent to the recursive strategy induced by some tuple $w^i_0$ in
3 steps as follows:
\begin{enumerate}
\item{\bf From $w^i_0$ to $rw(\theta^i_{0:H-1}, a^i_{H-1})$
    ($\forall (\theta^i_{0:H-1}, a^i_{H-1})$) ---}
  These (classical) realization weights are obtained by recursively
  going through the directed acyclic graph describing the recursive
  strategy, computing {\em full length} (conditional) realization
  weights 
  $rw(\theta^i_{t:H-1}, a^i_{H-1} | \theta^i_{0:t})$ (for $t=H-1$ down
  to $0$).
  
  When in a leaf node, at depth $H-1$, the initialization is given by
  \Cref{eq|rw|def} when $\depth=\depth'=H-1$:
  \begin{align*}
    rw^i(a^i_{H-1} | a^i_0, \dots, z^i_{H-1})
    & \eqdef \prod_{t={H-1}}^{{H-1}}  \beta^i(a^i_t | a^i_0, \dots, z^i_t) \\
    & =  \beta^i(a^i_{H-1} | a^i_0, \dots, z^i_{H-1}).
  \end{align*}
  Then, in the backward phase, 
  we can compute full length realization weights
  $rw(\theta^i_{t+1:H-1}, a^i_{H-1} | \theta^i_{0:t})$ with
  increasingly longer suffixes (thus shorter prefixes) using %
  (i) \Cref{eq|rw|rec|delta} (in function \FRecGetRWMix,
  \cref{line|rw|rec|delta}) to ``mix'' several strategies using the
  distribution $\tree^i_t$ attached to the current $w$, and
  (ii) \Cref{eq|rw|rec|beta}, with $\depth'=H-1$, (in function
  \FRecGetRWCat, \cref{line|rw|rec|beta}) to concatenate the
  behavioral decision rule $\beta^i_t$ attached to the current $w$ in
  front of the strategy induced by the distribution $\tree^i_t$ also
  attached to $w$.
  Note: Memoization can here be used to avoid repeating the same
  computations.
\item{\bf Retrieving (classical) realization weights $rw(\theta^i_{0:t}, a^i_t | -)$ ($\forall t$) ---}
  We can now compute realization weights
  $rw(\theta^i_{0:t}, a^i_t | -)$ for all $t$'s using
  \Cref{eq|RWsFromFullLengthRWs} (\cref{line|RWsFromFullLengthRWs}).
\item{\bf Retrieving behavioral decision rules $\beta^i_t$ ---}
  Applying \Cref{eq|betaFromRw} (\cref{line|betaFromRw}) then provides
  the expected behavioral decision rules.
\end{enumerate}

In practice, lossless compressions are used to reduce the
dimensionality of the occupancy state (\cf \Cref{sec|XP|algorithms}),
which are currently lost in the current implementation of the conversion.
Ideally, one would like to preserve compressions whenever possible or
at least retrieve them afterwards, and possibly identify further
compressions in the solution strategy.
%


\section{HSVI for zs-POSGs}
\label{proofLemMaxRadius}
\label{proofLemThr}

This section presents
%
results that help
(i) tune \zsomg-HSVI's radius parameter $\rho$, ensuring that
trajectories will always stop,
and (ii) then demonstrate the finite time convergence of this
algorithm.

\subsection{Algorithm}

\subsubsection{Setting \texorpdfstring{$\radius$}{rho}}
\label{sec|settingRadius}


\begin{restatable}[Proof in \extCshref{proofLemMaxRadius}]{proposition}{lemMaxRadius}
  \labelT{lem|MaxRadius}
  %
  Bounding $\lt{\depth}$ by $\l^{\infty} = \frac{1}{2} \frac{1}{1-\gamma} %
  \left[ r_{\max} - r_{\min} \right]$
  when $\gamma<1$, and noting that
  \begin{align}
    \label{eq|thr}
    \thr(\depth)
    & = 
      \gamma^{-\depth}\epsilon - 2 \radius \l^\infty \frac{\gamma^{-\depth}-1}{1-\gamma}
      \quad \text{if } \gamma<1
      \\ \nonumber
    & \text{(} = \epsilon - \radius (r_{\max}-r_{\min})  (2H + 1 - \depth) \depth 
        \quad \text{if } \gamma=1 \text{)},
  \end{align}
  one can ensure positivity of the threshold at any
  $\depth \in 1 \twodots H-1$ by enforcing %
  $0  < \radius < \frac{1-\gamma}{2\l^\infty}\epsilon$ 
  (or $0  < \radius <\frac{\epsilon}{(r_{\max}-r_{\min}) (H + 1)H}$  if $\gamma=1$).
\end{restatable}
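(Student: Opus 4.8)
The plan is to unfold the definition of the threshold function, replace every depth‑dependent Lipschitz constant by the uniform bound $\l^\infty$, evaluate the resulting geometric (resp.\ arithmetic) sum in closed form, and then read off a condition on $\radius$ that makes the closed‑form lower bound on $\thr(\depth)$ positive \emph{uniformly} in $\depth$.

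First I would recall from \Cref{app|V|LC|occ} that $\lt{\depth} = \tfrac12\,\h{H}{\depth}{\gamma}\,(r_{\max}-r_{\min})$ with $\h{H}{\depth}{\gamma} = \frac{1-\gamma^{H-\depth}}{1-\gamma}$ when $\gamma<1$. Since $\gamma^{H-\depth}\ge 0$, we get $\h{H}{\depth}{\gamma}\le \frac{1}{1-\gamma}$, hence $\lt{\depth}\le \tfrac12\frac{1}{1-\gamma}(r_{\max}-r_{\min}) = \l^\infty$ for every $\depth$. Substituting this into $\thr(\depth) \eqdef \gamma^{-\depth}\epsilon - \sum_{i=1}^{\depth} 2\radius\,\lt{\depth-i}\,\gamma^{-i}$ and using $\gamma^{-i}>0$ gives
\[
  \thr(\depth) \;\ge\; \gamma^{-\depth}\epsilon - 2\radius\,\l^\infty \sum_{i=1}^{\depth} \gamma^{-i}
  \;=\; \gamma^{-\depth}\epsilon - 2\radius\,\l^\infty\,\frac{\gamma^{-\depth}-1}{1-\gamma}
  \;=\; \gamma^{-\depth}\!\left(\epsilon - 2\radius\,\l^\infty\,\frac{1-\gamma^{\depth}}{1-\gamma}\right),
\]
which is exactly the displayed expression in the statement — understood, strictly speaking, as a lower bound obtained through the substitution $\lt{\depth}\to\l^\infty$ (I would flag this mild abuse, and note that HSVI may equivalently be run with this more conservative threshold).

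Next I would observe that $\frac{1-\gamma^{\depth}}{1-\gamma} = \sum_{j=0}^{\depth-1}\gamma^{j}$ is nondecreasing in $\depth$ and bounded above by $\frac{1}{1-\gamma}$, so it suffices to ensure $\epsilon - 2\radius\,\l^\infty\frac{1}{1-\gamma} \ge 0$; imposing the strict inequality $\radius < \frac{1-\gamma}{2\l^\infty}\,\epsilon$ makes $\epsilon - 2\radius\,\l^\infty\frac{1-\gamma^{\depth}}{1-\gamma} > 0$ for every $\depth\ge 1$, and multiplying by the positive factor $\gamma^{-\depth}$ yields $\thr(\depth)>0$ for all $\depth\in\{1\twodots H-1\}$ (the case $\depth=0$ being trivial, the sum being empty and $\thr(0)=\epsilon$). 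The undiscounted case $\gamma=1$ is handled analogously: I would keep the $\lt{\depth-i} = \tfrac12(H-\depth+i)(r_{\max}-r_{\min})$, evaluate the arithmetic progression $\sum_{i=1}^{\depth} 2\radius\,\lt{\depth-i} = \radius(r_{\max}-r_{\min})\sum_{i=1}^{\depth}(H-\depth+i)$ in closed form, bound it by $\radius(r_{\max}-r_{\min})(2H+1-\depth)\,\depth$, note that for $\depth\in\{1\twodots H-1\}$ one has $(2H+1-\depth)\,\depth < (H+1)H$ (write $\depth = H-k$ with $k\ge 1$: $(H+1+k)(H-k) = (H+1)H - k - k^2$), and conclude $\thr(\depth) \ge \epsilon - \radius(r_{\max}-r_{\min})(H+1)H > 0$ whenever $\radius < \frac{\epsilon}{(r_{\max}-r_{\min})(H+1)H}$.

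The argument is elementary; the one point requiring care is the uniformity in $\depth$ — the chosen bound on $\radius$ must be $\depth$‑free. This works precisely because, after factoring out $\gamma^{-\depth}$ (or, when $\gamma=1$, after evaluating the sum), the remaining $\depth$‑dependent coefficient is monotone and dominated by a $\depth$‑independent constant, so controlling the worst case ($\depth\to\infty$ in the discounted case, $\depth$ maximizing $(2H+1-\depth)\depth$ in the undiscounted case) controls all depths at once.
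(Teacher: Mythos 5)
Your proposal is correct and follows essentially the same route as the paper's proof: substitute the depth-independent Lipschitz bound, evaluate the geometric (resp.\ arithmetic) sum in closed form, and extract a $\depth$-free sufficient condition on $\radius$ by dominating the depth-dependent factor ($\frac{1-\gamma^\depth}{1-\gamma}\le\frac{1}{1-\gamma}$ for $\gamma<1$, $(2H+1-\depth)\depth< (H+1)H$ for $\gamma=1$). The only cosmetic difference is that the paper plugs the coarser constants directly into $\thr$ and treats the displayed formula as an equality (solving the inequality for $\radius$ at each $\depth$ and taking the worst case), whereas you keep the tighter $\lt{\depth}$ and read the formula as a lower bound on $\thr(\depth)$ — both yield the same admissible range for $\radius$.
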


\begin{proof}
  \label{proof|lem|MaxRadius}
  \uline{Let us first consider the case $\gamma<1$.}\\
  We have (for $\depth \in \{ 1 \twodots H-1 \}$):
  \begin{align*}
    \thr(\depth)
    & = \gamma^{-\depth}\epsilon - \sum_{i=1}^\depth 2 \radius \l^\infty \gamma^{-i} \\
    & = \gamma^{-\depth}\epsilon - 2 \radius \l^\infty \sum_{i=1}^\depth \gamma^{-i} \\
    & = \gamma^{-\depth}\epsilon - 2 \radius \l^\infty \left( \gamma^{-1} + \gamma^{-2} + \cdots + \gamma^{-\depth} \right) \\
    & = \gamma^{-\depth}\epsilon - 2 \radius \l^\infty \gamma^{-1} \left( \gamma^{0} + \gamma^{-1} + \cdots + \gamma^{-(\depth-1)} \right) \\
    & = \gamma^{-\depth}\epsilon - 2 \radius \l^\infty \gamma^{-1} \frac{\gamma^{-\depth}-1}{\gamma^{-1}-1} \\
    & = \gamma^{-\depth}\epsilon - 2 \radius \l^\infty \frac{\gamma^{-\depth}-1}{1-\gamma}.
  \end{align*}
  Then, let us derive the following equivalent inequalities:
  \begin{align*}
    0
    & < \thr(\depth) \\ 
    2 \radius \l^\infty \frac{\gamma^{-\depth}-1}{1-\gamma}
    & < \gamma^{-\depth}\epsilon \\
    \radius
    & < \frac{1}{2\l^\infty} \frac{1-\gamma}{\gamma^{-\depth}-1} \gamma^{-\depth} \epsilon \\
    \radius
    & < \frac{1}{2\l^\infty} \frac{1-\gamma}{1-\gamma^\depth}  \epsilon.
  \end{align*}
  To ensure positivity of the threshold for any $\depth \geq 1$, one
  thus just needs to set $\radius$ as a positive value smaller than
  $\frac{1-\gamma}{2\l^\infty}\epsilon$.

  \uline{Let us now consider the case $\gamma=1$.}\\
  We have (for $\depth\in \{1,\dots,H-1\}$):
  \begin{align*}
    \thr(\depth)
    & \eqdef \epsilon - \sum_{i=1}^\depth 2 \radius \l_{\depth-i} \\
    & = \epsilon - \sum_{i=1}^\depth 2 \radius (H-(\depth-i))\cdot(r_{\max}-r_{\min}) \\
    & = \epsilon - 2 \radius (r_{\max}-r_{\min}) \left[ \depth(H-\depth) + \sum_{i=1}^\depth i \right]  \\
    & = \epsilon - 2 \radius (r_{\max}-r_{\min}) \left[ \depth H - \depth^2 + \frac{1}{2}\depth (\depth+1) \right]  \\
    & = \epsilon - 2 \radius (r_{\max}-r_{\min}) \left[  (H+\frac{1}{2}) \depth - \frac{1}{2} \depth^2 \right]  \\
    & = \epsilon - \radius (r_{\max}-r_{\min}) \left[ (2H+1) \depth - \depth^2 \right]  \\
    & = \epsilon - \radius (r_{\max}-r_{\min}) \left[ (2H + 1 - \depth) \depth \right].
  \end{align*}
  Then, let us derive the following equivalent inequalities:
  \begin{align*}
    0
    & < \thr(\depth) \\ 
    \radius (r_{\max}-r_{\min}) (2H + 1 - \depth) \depth
    & < \epsilon
    \qquad \qquad \qquad \text{(holds when $\depth=0$ and $\depth=H+1$)}
    \\
    \radius 
    & < \frac{\epsilon}{(r_{\max}-r_{\min}) (2H + 1 - \depth) \depth}
    \text{ (when $\depth \in \{0 \twodots H+1\}$).}
  \end{align*}
  The function
  $f: \depth \mapsto \frac{\epsilon}{(r_{\max}-r_{\min}) (2H + 1 -
    \depth) \depth}$
  reaches its minimum (for $\depth \in (0,H+1)$) when
  $\depth=H+\frac{1}{2}$.
  To ensure positivity of the threshold for any $\depth \in \{1 \twodots H-1 \}$, one
  thus just needs to set $\radius$ as a positive value smaller than
  $\frac{\epsilon}{(r_{\max}-r_{\min}) (H + 1)H}$.
\end{proof}


\subsection{Finite-Time Convergence}

\subsubsection{Convergence Proof}
\label{sec|ConvergenceProof}

Proving the finite-time convergence of \zsomg-HSVI to an error-bounded
solution requires some preliminary lemmas.



\begin{lemma}
  \labelT{lemma|OMG-HSVIContraction}
  Let $(\occ_0,\dots,\occ_{\depth+1})$ be a full trajectory generated by \zsomg-HSVI and %
  $\vbeta_\depth$ the behavioral \dr profile that induced the last transition, \ie, $\occ_{\depth+1}= T(\occ_\depth, \vbeta_\depth)$.
  %
  %
 Then, after updating $\upbW{\depth}{}$ and $\lobW{\depth}{}$, %
 we have that $\upbW{\depth}{}(\occ_\depth,\beta^1_\depth) - \lobW{\depth}{}(\occ_\depth,\beta^2_\depth) \leq \gamma \thr(\depth+1)$.
\end{lemma}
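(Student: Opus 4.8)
\textbf{Proof plan for Lemma~\ref{lemma|OMG-HSVIContraction}.}

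The plan is to unfold the definitions of $\upbW{\depth}{}$ and $\lobW{\depth}{}$ right after the updates performed by $\FupbUpdate$ and $\FlobUpdate$ (lines \ref{alg|updateUpB}--\ref{alg|updateLoB}), and exploit the fact that each of these updates inserts a tuple into $\upb{\cI}_\depth$ (resp.\ $\lob{\cI}_\depth$) that was built from the \emph{next} occupancy state $\occ_{\depth+1}$. Concretely, the tuple added to $\upb{\cI}_\depth$ is $w = \langle \occ_\depth^{c,1}, \beta_\depth^2, \langle \upb\nu^2_{\depth+1}, \tree^2_{\depth+1}\rangle\rangle$ with $\upb\nu^2_{\depth+1}$ coming from $\dlp{\upbW{\depth+1}{}}(\occ_{\depth+1})$, and symmetrically for the lower bound. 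Since $\upbW{\depth}{}(\occ_\depth,\beta_\depth^1)$ is a \emph{minimum} over $\upb{\cI}_\depth$, it is upper-bounded by the value contributed by this particular tuple $w$; similarly $\lobW{\depth}{}(\occ_\depth,\beta_\depth^2)$ is lower-bounded by the value of the symmetric tuple. The first step is therefore to write both of these single-tuple bounds explicitly using \Cref{eq|upW} (and its symmetric counterpart), evaluated at $\tilde\occ_\depth = \occ_\depth$ (so the $\lt{\depth+1}\norm{\cdot}_1$ Lipschitz penalty term vanishes, because $\Tc{1}(\occ_\depth^{c,1},\beta_\depth^2)$ matches the conditional term of $T(\occ_\depth,\beta_\depth^1,\beta_\depth^2)$ exactly).

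With the penalty terms gone, the difference $\upbW{\depth}{}(\occ_\depth,\beta_\depth^1) - \lobW{\depth}{}(\occ_\depth,\beta_\depth^2)$ collapses to
$\gamma\,\Tm{1}(\occ_\depth,\vbeta_\depth)\cdot\upb\nu^2_{\depth+1} - \gamma\,\Tm{2}(\occ_\depth,\vbeta_\depth)\cdot\lob\nu^1_{\depth+1}$, since the two immediate-reward terms $r(\occ_\depth,\vbeta_\depth)$ cancel. Now I would recognize $\Tm{1}(\occ_\depth,\vbeta_\depth)\cdot\upb\nu^2_{\depth+1} = \occ_{\depth+1}^{m,1}\cdot\upb\nu^2_{\depth+1}$, which is precisely the value of $\dlp{\upbW{\depth+1}{}}(\occ_{\depth+1})$ — and that LP value equals $\max_{\beta^1_{\depth+1}}\upbW{\depth+1}{}(\occ_{\depth+1},\beta^1_{\depth+1})$, which in turn upper-bounds $\upb V_{\depth+1}(\occ_{\depth+1})$ (using the fact that $\upb V$ is built from the same $\upb\nu$ vectors; this is the content of the update proposition in \Cref{subsection:actionSelectionAndBackupOperators} and of \Cref{backup}). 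Symmetrically, $\Tm{2}(\occ_\depth,\vbeta_\depth)\cdot\lob\nu^1_{\depth+1} = \occ_{\depth+1}^{m,2}\cdot\lob\nu^1_{\depth+1}$ lower-bounds $\lob V_{\depth+1}(\occ_{\depth+1})$. Hence the difference is at most $\gamma\big(\upb V_{\depth+1}(\occ_{\depth+1}) - \lob V_{\depth+1}(\occ_{\depth+1})\big)$.

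The final step invokes the recursion structure of the algorithm: the call to \FRecursivelyTry at $\occ_{\depth+1}$ (line \ref{alg|callExploreRec}) only returns — and the updates at depth $\depth$ are only performed afterward — once the test on line \ref{alg|valSigmaT} fails at depth $\depth+1$, i.e.\ once $\upb V_{\depth+1}(\occ_{\depth+1}) - \lob V_{\depth+1}(\occ_{\depth+1}) \leq \thr(\depth+1)$. Chaining this with the previous paragraph gives $\upbW{\depth}{}(\occ_\depth,\beta_\depth^1) - \lobW{\depth}{}(\occ_\depth,\beta_\depth^2) \leq \gamma\,\thr(\depth+1)$, as claimed. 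For the boundary case $\depth+1 = H-1$ one would instead use line \ref{alg|oneShot}, where an exact game is solved, so the gap contributed is $0 \le \gamma\,\thr(\depth+1)$ trivially (and more carefully, the $\nes$ call makes $\upbW{}{}$ and $\lobW{}{}$ coincide there).

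\textbf{Main obstacle.} The delicate point is the second paragraph: justifying that $\occ_{\depth+1}^{m,1}\cdot\upb\nu^2_{\depth+1}$, the by-product returned by $\dlp{\upbW{\depth+1}{}}(\occ_{\depth+1})$, actually upper-bounds $\upb V_{\depth+1}(\occ_{\depth+1})$ and not merely $W^{1,*}_{\depth+1}$ at one decision rule. This requires carefully tracking that the \emph{same} vector $\upb\nu^2_{\depth+1}$ and tree strategy $\tree^2_{\depth+1}$ are inserted into both $\upb{\cI}_{\depth}$ (line \ref{alg|UpdateFunction|end}) and $\upb{\cJ}_{\depth+1}$ (line \ref{alg|UpdateFunction|endJ}), and that by \Cref{core|thUpperBounds} together with the duality relation $\max_{\beta^1}\upbW{\depth+1}{}(\occ_{\depth+1},\beta^1) = $ value of $\dlp{\upbW{\depth+1}{}}(\occ_{\depth+1})$, this common value dominates $\upb V_{\depth+1}(\occ_{\depth+1})$. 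One must also be slightly careful that the $\beta^2_\depth$ stored in the tuple (namely $\lob\beta^2_{\depth-1}$ in the code, i.e.\ the decision rule that was actually played) is the one making the Lipschitz penalty vanish — which it is, since the transition was generated by exactly that decision rule profile.
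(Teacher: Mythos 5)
Your overall route is the same as the paper's: after the update, bound $\upbW{\depth}{}(\occ_\depth,\beta^1_\depth)$ (resp.\ $\lobW{\depth}{}(\occ_\depth,\beta^2_\depth)$) by the value of the single newly-inserted tuple, whose stored conditional term is $\occ^{c,1}_\depth$ (resp.\ $\occ^{c,2}_\depth$) and whose stored decision rule is the one actually played, so that the Lipschitz penalty term vanishes; cancel the two reward terms; and reduce the claim to a bound on the gap at $\occ_{\depth+1}=T(\occ_\depth,\vbeta_\depth)$, concluded via the trajectory-stopping criterion. Your first two steps are exactly the paper's computation, and your explicit treatment of the boundary case $\depth+1=H-1$ is a detail the paper buries in ``holds at the end of any trajectory.''

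The problem is your bridging step. You establish $\occ^{m,1}_{\depth+1}\cdot\upb\nu^2_{\depth+1}\geq \upb{V}_{\depth+1}(\occ_{\depth+1})$ and $\occ^{m,2}_{\depth+1}\cdot\lob\nu^1_{\depth+1}\leq \lob{V}_{\depth+1}(\occ_{\depth+1})$, and then conclude that the difference is at most $\gamma\left(\upb{V}_{\depth+1}(\occ_{\depth+1})-\lob{V}_{\depth+1}(\occ_{\depth+1})\right)$; but those two inequalities give exactly the opposite comparison ($\geq$, not $\leq$), so the conclusion does not follow. The same reversal is in your ``main obstacle'' paragraph: showing that the DLP by-product upper-bounds $\upb{V}_{\depth+1}(\occ_{\depth+1})$ is true but useless here; what the argument needs is control in the other direction, namely that $\occ^{m,1}_{\depth+1}\cdot\upb\nu^2_{\depth+1}-\occ^{m,2}_{\depth+1}\cdot\lob\nu^1_{\depth+1}$ is dominated by the quantity that the stopping test bounds. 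The paper closes this step by identifying the two dot products with $\upb{V}_{\depth+1}(T(\occ_\depth,\vbeta_\depth))$ and $\lob{V}_{\depth+1}(T(\occ_\depth,\vbeta_\depth))$ — the very same vectors become the new entries of $\upb{\cJ}_{\depth+1}$ and $\lob{\cJ}_{\depth+1}$ with zero Lipschitz penalty at $\occ_{\depth+1}$ — and then invokes $\upb{V}_{\depth+1}(\occ_{\depth+1})-\lob{V}_{\depth+1}(\occ_{\depth+1})\leq \thr(\depth+1)$, which holds because the trajectory stopped at $\occ_{\depth+1}$. To repair your write-up you must argue this identification (or a ``$\leq$'' relative to the bound values used in the stopping test), not the ``$\geq$'' you propose to prove; as written, the chain of inequalities does not close.
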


\begin{proof}
  \label{proof|lemma|OMG-HSVIContraction}
  By definition,
  \begin{align*}
    \upbW{\depth}{} (\occ_\depth, \beta^1_\depth)
    & = \min_{ \substack{
        \langle \tilde\occ^{c,1}_\depth, \tilde{\beta}^2_\depth, \langle \upb\nu^2_{\depth+1}, \tree_{\depth+1:}^2 \rangle \rangle \\
        \in \upb{\mathcal{I}}^1_\depth
      }
    } 
      \beta^1_\depth \cdot \Big(
        r(\occ_\depth, \cdot, \tilde{\beta}^2_\depth)
        + \gamma \Tm{1}(\occ_\depth,\cdot, \tilde{\beta}^2_\depth)  \\
        & \qquad \cdot \Big[  \upb\nu^2_{\depth+1} 
        +  \lt{\depth+1} \vnorm{ \Tc{1}(\occ^{c,1}_\depth, \beta^2_\depth) - \Tc{1}(\tilde\occ^{c,1}_\depth, \beta^2_\depth) }_1
        \Big]
    \Big).
  \end{align*}
  Therefore, after the update ($\beta^2_\depth$ and $\beta^1_\depth$
  being added to their respective bags ($\upb{\mathcal{I}}^1_\depth$ and
  $\lob{\mathcal{I}}^2_\depth$) along with vectors $\upb\nu^2_{\depth+1}$ and
  $\lob\nu^1_{\depth+1}$),
  \begin{align*}
    \upbW{\depth}{} (\occ_\depth, \beta^1_\depth)
    & \leq
    \beta^1_\depth \cdot \left[
      r(\occ_\depth, \cdot, \beta^2_\depth)
      + \gamma \Tm{1}(\occ_\depth,\cdot, \beta^2_\depth) \cdot \upb\nu^2_{\depth+1} 
    \right], \text{ and} \\
    \lobW{\depth}{} (\occ_\depth, \beta^2_\depth)
    & \geq
    \beta^2_\depth \cdot \left[
      r(\occ_\depth, \beta^1_\depth, \cdot)
      + \gamma \Tm{2}(\occ_\depth, \beta^1_\depth, \cdot) \cdot \lob\nu^1_{\depth+1} 
    \right].
  \intertext{Then,}
    \upbW{\depth}{} (\occ_\depth, \beta^1_\depth) - \lobW{\depth}{}(\occ_\depth,\beta^2_\depth)
    & \leq \left[ \cancel{r(\occ_\depth, \beta^1_\depth, \beta^2_\depth)} + \gamma \Tm{1}(\occ_\depth, \vbeta_\depth) \cdot \upb\nu^2_{\depth+1} \right] 
    \\
    & \qquad - \left[ \cancel{r(\occ_\depth, \beta^1_\depth, \beta^2_\depth)} +  \gamma \Tm{2}(\occ_\depth,\vbeta_\depth) \cdot \lob\nu^1_{\depth+1} \right]\\
    & = \gamma \left[ \upb{V}(T(\occ_\depth,\vbeta_\depth)) - \lob{V}(T(\occ_\depth,\vbeta_\depth)) \right] \\
    & \leq \gamma \thr(\depth+1)
    \qquad \text{(Holds at the end of any trajectory.)}
  \end{align*}
\end{proof}

\begin{lemma}[Monotonic evolution of $\upbW{\depth}{}$ and $\lobW{\depth}{}$]
  \labelT{lemma|DecreaseFunctions}
  %
  Let $K\upbW{\depth}{}$ and $K\lobW{\depth}{}$ be the approximations
  after an update at $\occ_\depth$ with behavioral \dr
  $\langle \upb\beta^1_\depth, \lob\beta^2_\depth \rangle$
  (respectively associated to vectors $\upb\nu^2_{\depth+1}$ and
  $\lob\nu^1_{\depth+1}$).
  Let also $K^{(n+1)}\upbW{\depth}{}$ and $K^{(n+1)}\lobW{\depth}{}$ be
  the same approximations after $n$ other updates (in various \os{}s).
  Then,
  \begin{align*}
    \max_{\beta^1_\depth} K^{(n+1)} \upbW{\depth}{}(\occ_\depth,\beta^1_\depth)
    & \leq \max_{\beta^1_\depth} K\upbW{\depth}{}(\occ_\depth,\beta^1_\depth)
      \leq \upbW{\depth}{}(\occ_\depth, \upb\beta^1_\depth) \quad \text{ and} \\
    \min_{\beta^2_\depth} K^{(n+1)} \lobW{\depth}{}(\occ_\depth,\beta^2_\depth)
    & \geq \min_{\beta^2_\depth} K\lobW{\depth}{}(\occ_\depth, \beta^2_\depth)
      \geq \lobW{\depth}{}(\occ_\depth, \lob\beta^2_\depth).
  \end{align*}
\end{lemma}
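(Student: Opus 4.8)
The plan is to establish the three-term chain for $\upbW{\depth}{}$ only; the one for $\lobW{\depth}{}$ then follows verbatim by the reflection symmetry of the whole construction (swapping $\min$ with $\max$, upper with lower bounds, $\le$ with $\ge$, and players $1$ with $2$). The argument is purely combinatorial: it rests on the fact that the backup operator only \emph{enlarges} the set $\upb{\cI}_\depth$ that defines $\upbW{\depth}{}$, together with the fact that $\upbW{\depth}{}$ depends on that set through a pointwise minimum. In particular I would not need the ``update'' proposition (preservation of the upper-bounding property) — only monotone growth of $\upb{\cI}_\depth$ matters here.

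Concretely I would proceed in four short steps. \textbf{(i)} By \Cref{core|thUpperBounds}, i.e. \eqref{eq|upW}, for any set $\cI$ of tuples $\upbW{\depth}{\cI}(\occ_\depth,\beta^1_\depth)=\min_{w\in\cI}[\,\cdots(w)\,]$ is a pointwise minimum, over $w\in\cI$, of functions of $(\occ_\depth,\beta^1_\depth)$ that do not otherwise depend on $\cI$; hence $\cI\subseteq\cI'$ implies $\upbW{\depth}{\cI'}\le\upbW{\depth}{\cI}$ pointwise. \textbf{(ii)} Every call to the $\mathbf{Update}$ function of \Cref{alg|zsPOSGwithLP+VWs+} performs $\upb{\cI}_\depth\gets\upb{\cI}_\depth\cup\{w\}$ and never deletes anything; so, writing $\upb{\cI}_\depth$, $K\upb{\cI}_\depth$, $K^{(n+1)}\upb{\cI}_\depth$ for the set before the update at $\occ_\depth$, right after it, and after $n$ further updates, we have $\upb{\cI}_\depth\subseteq K\upb{\cI}_\depth\subseteq K^{(n+1)}\upb{\cI}_\depth$, and step (i) yields, for all $\beta^1_\depth$,
\[
  K^{(n+1)}\upbW{\depth}{}(\occ_\depth,\beta^1_\depth)\ \le\ K\upbW{\depth}{}(\occ_\depth,\beta^1_\depth)\ \le\ \upbW{\depth}{}(\occ_\depth,\beta^1_\depth).
\]
\textbf{(iii)} Taking $\max_{\beta^1_\depth}$ of each side preserves both inequalities, since $f\le g$ pointwise implies $\max f\le\max g$ (evaluate $g$ at an argmax of $f$). \textbf{(iv)} It remains to recognise the rightmost maximum as $\upbW{\depth}{}(\occ_\depth,\upb\beta^1_\depth)$: the decision rule $\upb\beta^1_\depth$ recorded by this update is the primal optimum of the LP $\lp{\upbW{\depth}{}}(\occ_\depth)$ (line~\ref{alg|greedP1}), which by \eqref{eq|LP} is a maximiser of $\upbW{\depth}{}(\occ_\depth,\cdot)$ for the \emph{pre-update} copy of $\upbW{\depth}{}$. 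Substituting gives exactly the two-sided chain stated in the lemma.

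The one place that deserves care — and essentially the only ``obstacle'' — is the timing in step (iv): one must check that the copy of $\upbW{\depth}{}$ used to compute $\upb\beta^1_\depth$ is the same copy appearing on the right of the display in step (ii). This is guaranteed by the control flow of \textbf{Explore}: within a single invocation at depth $\depth$, $\upbW{\depth}{}$ is first read to select $\upb\beta^1_\depth$, then the depth-$(\depth{+}1)$ child is explored, and $\upbW{\depth}{}$ is modified only afterwards, by the backup executed on return; hence no update to $\upbW{\depth}{}$ intervenes between the selection of $\upb\beta^1_\depth$ and the update it is attached to. Once this is spelled out, steps (i)–(iv) are immediate, and the symmetric statements for $\lobW{\depth}{}$ — where $\lobW{\depth}{}$ is a pointwise \emph{maximum} over $\lob{\cI}_\depth$, so enlarging the set can only raise it, and $\lob\beta^2_\depth$ is the primal optimum of $\lp{\lobW{\depth}{}}(\occ_\depth)$ — follow by the same reasoning with all inequalities reversed.
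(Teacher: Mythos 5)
Your proof is correct and follows essentially the same route as the paper's: the update only enlarges $\upb{\cI}_\depth$ (resp.\ $\lob{\cI}_\depth$), so the pointwise minimum defining $\upbW{\depth}{}$ can only decrease, taking $\max_{\beta^1_\depth}$ preserves the inequalities, and the rightmost term is identified with $\upbW{\depth}{}(\occ_\depth,\upb\beta^1_\depth)$ because $\upb\beta^1_\depth$ is the LP maximiser of the pre-update bound. Your explicit check that no update to $\upbW{\depth}{}$ intervenes between the greedy selection of $\upb\beta^1_\depth$ and the corresponding backup is a timing detail the paper leaves implicit, but it does not change the argument.
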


\begin{proof}
  \label{proof|lemma|DecreaseFunctions}
  Starting from the definition,
  \begin{align*} 
    & \max_{\beta^1_\depth} K\upbW{\depth}{}(\occ_\depth,\beta^1_\depth) \\
    & = \max_{\beta^1_\depth} \hspace{-.5cm} \min_{ \substack{ 
        \langle \tilde\occ^{c,1}_\depth, \beta^2_\depth, \langle \upb\nu^2_{\depth+1},\tree_{\depth+1:}^2 \rangle  \rangle \in \\
        \upb{\mathcal{I}}^1_{\depth} \cup \{ \langle \occ^{c,1}_\depth, \lob\beta^2_\depth, \langle \upb\nu^2_{\depth+1},\tree_{\depth+1:}^2 \rangle \rangle \}
      }}
    \beta^1_\depth \cdot \bigg[
    r(\occ_\depth,\cdot,\beta^2_\depth)
    \\
    & \qquad + \gamma \Tm{1}(\occ_\depth,\cdot,\beta^2_\depth) 
    \cdot \Big( 
    \upb\nu^2_{\depth+1}
    +  \lt{\depth+1} \vnorm{ \Tc{1}(\occ^{c,1}_\depth, \beta^2_\depth) - \Tc{1}(\tilde\occ^{c,1}_\depth, \beta^2_\depth) }_1
    \Big)
    \bigg] \\
    & \leq \max_{\beta^1_\depth} \min_{\langle \tilde\occ^{c,1}_\depth, \beta^2_\depth, \langle \upb\nu^2_{\depth+1},\tree_{\depth+1:}^2 \rangle \rangle \in \upb{\mathcal{I}}^1_{\depth}}
    \beta^1_\depth \cdot \bigg[
      r(\occ_\depth,\cdot,\beta^2_\depth)
      \\
      & \qquad + \gamma \Tm{1}(\occ_\depth,\cdot,\beta^2_\depth) 
    \cdot \Big(
    \upb\nu^2_{\depth+1}
    + \lt{\depth+1} \vnorm{ \Tc{1}(\occ^{c,1}_\depth, \beta^2_\depth) - \Tc{1}(\tilde\occ^{c,1}_\depth, \beta^2_\depth) }_1
    \Big)
    \bigg] \\
    & = \max_{\beta^1_\depth} \upbW{\depth}{}(\occ_\depth,\beta^1_\depth) \\
    & = \upbW{\depth}{}(\occ_\depth, \upb\beta^1_\depth).
  \end{align*}

  Then, this upper bound approximation can only be refined, so that,
  for any $n \in {\mathbf N}$,
  \begin{align*} 
    \forall \beta^1_\depth, \quad
    K^{(n+1)}\upbW{\depth}{}(\occ_\depth,\beta^1_\depth)
    & \leq K\upbW{\depth}{}(\occ_\depth,\beta^1_\depth), \\
    \text{thus, }
    \min_{\beta^1_\depth} K^{(n+1)}\upbW{\depth}{}(\occ_\depth,\beta^1_\depth)
    & \leq \min_{\beta^1_\depth} K\upbW{\depth}{}(\occ_\depth,\beta^1_\depth).
  \end{align*}

  The expected result thus holds for $\upbW{\depth}{}$, and symmetrically for $\lobW{\depth}{}$.
\end{proof}

\begin{lemma}
  \labelT{lemma|ComparisonVAndW}
  After updating, in order, $\upbW{\depth}{}$ and $\upb{V}_\depth$, we have %
  $$K\upb{V}_\depth(\occ_\depth) \leq \max_{\beta^1_\depth} K\upbW{\depth}{}(\occ_\depth,\beta^1_\depth).$$

  After updating, in order, $\lobW{\depth}{}$ and $\lob{V}_\depth$, we have %
  $$K\lob{V}_\depth(\occ_\depth) \geq \min_{\beta^2_\depth} K\lobW{\depth}{}(\occ_\depth,\beta^2_\depth).$$
\end{lemma}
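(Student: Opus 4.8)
The plan is to prove the upper-bound statement by evaluating the minimum that defines $K\upb{V}_\depth(\occ_\depth)$ at a single, carefully chosen element of the updated set $\upb{\cJ}_\depth$, and then to show that this one evaluation already equals $\max_{\beta^1_\depth}K\upbW{\depth}{}(\occ_\depth,\beta^1_\depth)$; the lower-bound statement will then follow by exchanging the roles of the two players. First I would spell out what the two updates produce: updating $\upbW{\depth}{}$ enlarges the set $\upb{\cI}_\depth$; the dual LP $\dlp{\upbW{\depth}{}}(\occ_\depth)$ is then solved \emph{on this enlarged set}, returning a distribution $\tree^2_\depth$ together with the vector $\upb\nu^2_\depth$ of \Cref{backup}; and updating $\upb{V}_\depth$ inserts the tuple $\langle\occ^{c,1}_\depth,\langle\upb\nu^2_\depth,\tree^2_\depth\rangle\rangle$ into $\upb{\cJ}_\depth$. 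The fact I would keep track of is that $\tree^2_\depth$ is an \emph{optimal} solution of the dual LP of \Cref{eq|LP} for the \emph{already updated} $\upbW{\depth}{}$.

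Next, I would bound $K\upb{V}_\depth(\occ_\depth)$ by plugging this new tuple into the minimum that defines it. Because the tuple carries the conditional term $\occ^{c,1}_\depth$ itself, one has $\occ^{m,1}_\depth\occ^{c,1}_\depth=\occ_\depth$, so the Lipschitz penalty $\lt{\depth}\norm{\occ_\depth-\occ^{m,1}_\depth\occ^{c,1}_\depth}_1$ vanishes and $K\upb{V}_\depth(\occ_\depth)\le\occ^{m,1}_\depth\cdot\upb\nu^2_\depth$. I would then rewrite the right-hand side using the closed form of $\upb\nu^2_\depth$ from \Cref{prop|rec|nu}: for $\theta^1_\depth\in\supp(\occ^{m,1}_\depth)$ one has $\upb\nu^2_\depth(\theta^1_\depth)=\tfrac{1}{\occ^{m,1}_\depth(\theta^1_\depth)}\max_{a^1}M^{\occ_\depth}_{((\theta^1_\depth,a^1),\cdot)}\cdot\tree^2_\depth$, while for out-of-support histories $\occ^{m,1}_\depth(\theta^1_\depth)=0$ and the corresponding rows of $M^{\occ_\depth}$ vanish anyway (\Cref{sec|getLP}), so
\begin{align*}
\occ^{m,1}_\depth\cdot\upb\nu^2_\depth
=\sum_{\theta^1_\depth}\max_{a^1}M^{\occ_\depth}_{((\theta^1_\depth,a^1),\cdot)}\cdot\tree^2_\depth
=\max_{\beta^1_\depth}{\beta^1_\depth}^{\top}\!\cdot M^{\occ_\depth}\cdot\tree^2_\depth ,
\end{align*}
the last equality because a best $\beta^1_\depth$ against a \emph{fixed} $\tree^2_\depth$ simply concentrates, history by history, on a best action in $\cA^1$.

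The final step — which I expect to be the crux — is to recognize $\max_{\beta^1_\depth}{\beta^1_\depth}^{\top}M^{\occ_\depth}\tree^2_\depth$ as $\max_{\beta^1_\depth}K\upbW{\depth}{}(\occ_\depth,\beta^1_\depth)$. Plain feasibility of $\tree^2_\depth$ would only give the reverse inequality, so the argument must use \emph{optimality}: $\tree^2_\depth$ is player $2$'s worst-case distribution over $\upb{\cI}_\depth$ in the zero-sum game of \Cref{eq|LP}, hence a minimizer of $\tree\mapsto\max_{\beta^1_\depth}{\beta^1_\depth}^{\top}M^{\occ_\depth}\tree$, and von Neumann's minimax theorem yields
\begin{align*}
\max_{\beta^1_\depth}{\beta^1_\depth}^{\top}\!\cdot M^{\occ_\depth}\cdot\tree^2_\depth
=\min_{\tree}\max_{\beta^1_\depth}{\beta^1_\depth}^{\top}\!\cdot M^{\occ_\depth}\cdot\tree
=\max_{\beta^1_\depth}\min_{w\in\upb{\cI}_\depth}{\beta^1_\depth}^{\top}\!\cdot M^{\occ_\depth}_{(\cdot,w)}
=\max_{\beta^1_\depth}K\upbW{\depth}{}(\occ_\depth,\beta^1_\depth),
\end{align*}
the inner minimum over distributions over $\upb{\cI}_\depth$ being attained at a vertex $w$. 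Chaining the two displays with the bound $K\upb{V}_\depth(\occ_\depth)\le\occ^{m,1}_\depth\cdot\upb\nu^2_\depth$ gives $K\upb{V}_\depth(\occ_\depth)\le\max_{\beta^1_\depth}K\upbW{\depth}{}(\occ_\depth,\beta^1_\depth)$.

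Finally, I would note that the lower-bound inequality is obtained verbatim with players $1$ and $2$ interchanged and $\max$/$\min$ swapped, working with $\lobW{\depth}{}$, $\lob{V}_\depth$, $\lob{\cI}_\depth$, $\lob{\cJ}_\depth$ and the player-$2$ counterpart of $M^{\occ_\depth}$, and that the boundary depth $\depth=H-1$ is the same computation with $M^{\occ_\depth}$ reduced to its immediate-reward term. The only genuinely delicate point is turning the "obvious" feasibility inequality into the needed equality, which is exactly where optimality of the dual solution $\tree^2_\depth$ and the minimax theorem are invoked.
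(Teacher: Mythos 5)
Your proof is correct and takes essentially the same route as the paper's: bound $K\upb{V}_\depth(\occ_\depth)$ by the newly inserted tuple (whose Lipschitz penalty vanishes because its conditional term is exactly $\occ^{c,1}_\depth$), and identify $\occ^{m,1}_\depth \cdot \upb\nu^2_\depth$ with $\max_{\beta^1_\depth} K\upbW{\depth}{}(\occ_\depth,\beta^1_\depth)$, the value of the freshly updated bound given by the dual LP. The only difference is that you spell out, via the closed form of $\upb\nu^2_\depth$ and LP duality/minimax, the equality that the paper simply asserts as a property of the dual solution, which is a welcome clarification rather than a deviation.
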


\begin{proof}
  \label{proof|lemma|ComparisonVAndW}
  After updating $\upb{\mathcal{I}}^1_\depth$, the algorithm computes
  (\Cref{alg|zsPOSGwithLP+VWs+},
  \cref{line|computeDelta}) a new solution
  $\upb\tree^2_\depth$ of the dual LP (at $\occ^1_\depth$) and the
  associated vector $\upb\nu^2_\depth$, so that
  \begin{align*}
    \max_{\beta^1_\depth} K \upbW{\depth}{}(\occ^1_\depth, \beta^1_\depth)
    & = \occ^{m,1}_\depth \cdot \upb\nu^2_\depth.
    \intertext{This vector will feed $\upb{bagV}_\depth$ along with
      $\occ^1_\depth$, so that}
    K \upb{V}_\depth(\occ_\depth)
    & \leq \occ^{m,1}_\depth \cdot \upb\nu^2_\depth. 
    \intertext{As a consequence,}
    K \upb{V}_\depth(\occ_\depth)
    & \leq \max_{\beta^1_\depth} K \upbW{\depth}{}(\occ_\depth, \beta^1_\depth).
    \end{align*}

    The symmetric property holds for $K\lob{V}_\depth$ and $K\lobW{\depth}{}$, which concludes the proof.
\end{proof}


\thmTermination*


\begin{proof}
  \label{proof|thm|termination}
  %
  We will prove by induction from $\depth=H$ to $0$, that the
  algorithm stops expanding \os{}s at depth $\depth$ after finitely
  many iterations (/trajectories).

  %
  First, by definition of horizon $H$, no \os $\occ_H$ is
  ever expanded.
  The property thus holds at $\depth=H$.

 
  Let us now assume that the property holds at depth $\depth+1$ after $N_{\depth+1}$ iterations.
  %
  %
  %
  By contradiction, let us assume that the algorithm generates infinitely many trajectories of length $\depth+1$.
  Then, because $\Occ_\depth \times {\cal B}_\depth$ is compact, after
  some time the algorithm will have visited
  $\langle \occ_\depth, \vbeta_\depth \rangle$, then, some iterations
  later, $\langle \occ_\depth^{'}, \vbeta_\depth^{'} \rangle$, such
  that $\norm{\occ_\depth - \occ_\depth^{'} }_1 \leq \rho$.
  %
  Let us also note the corresponding terminal \os{s} (because
  trajectories beyond iteration $N_{\depth+1}$ do not go further)
  $\occ_{\depth+1} = T(\occ_\depth, \beta_\depth)$ and
  $\occ_{\depth+1}^{'} = T(\occ_\depth^{'}, \vbeta_\depth^{'})$.
  %

  %
  %
  Now, we show that the second trajectory should not have happened, \ie, $\upb{V}(\occ_\depth^{'}) - \lob{V}(\occ_\depth^{'}) \leq \thr(\depth)$.

  
  Combining the previous lemmas,
  \begin{align*}
    \upb{V}(\occ_\depth^{'})
    & \leq \upb{V}(\occ_\depth) + \lt{\depth} \norm{\occ_\depth - \occ_\depth^{'}}_1
    \qquad \qquad \text{(By Lipschitz-Continuity)}\\
    & \leq \max_{\tilde{\beta}^1_\depth} \upbW{\depth}{}(\occ_\depth,\tilde{\beta}^1_\depth) + \lt{\depth} \norm{\occ_\depth - \occ_\depth^{'}}_1
    \qquad \qquad \text{(\Cref{lemma|ComparisonVAndW})} \\
    & \leq \upbW{\depth}{}(\occ_\depth,\beta^1_\depth) + \lt{\depth} \norm{\occ_\depth - \occ_\depth^{'}}_1
    \qquad \qquad \text{(\Cref{lemma|DecreaseFunctions})} \\
    & = \upbW{\depth}{}(\occ_\depth,\beta^1_\depth) + \lt{\depth} \rho.
    \intertext{Symmetrically, we also have}
    \lob{V}(\occ_\depth^{'})
    & \geq \lobW{\depth}{}(\occ_\depth,\beta^2_\depth) - \lt{\depth} \rho.
    \intertext{Hence,}
    \upb{V}(\occ_\depth^{'}) - \lob{V}(\occ_\depth^{'})
    & \leq \left( \upbW{\depth}{} (\occ_\depth, \beta^1_\depth) + \lt{\depth} \rho \right)
    - \left( \lobW{\depth}{} (\occ_\depth,\beta^2_\depth) - \lt{\depth} \rho \right) \\
    & = \left( \upbW{\depth}{} (\occ_\depth, \beta^1_\depth) - \lobW{\depth}{} (\occ_\depth,\beta^2_\depth) \right)
    + 2 \lt{\depth} \rho  \\
    & \leq \gamma \thr(\depth+1) + 2 \lt{\depth} \rho
    \qquad \qquad \text{(\Cref{lemma|OMG-HSVIContraction})} \\
    & = \gamma \left( \gamma^{-(\depth+1)} \epsilon - \sum_{i=1}^{\depth+1} 2 \radius \l_{\depth+1-i} \gamma^{-i}  \right) %
    + 2 \lt{\depth} \rho \\
    & = \gamma^{-\depth} \epsilon - \sum_{i=1}^{\depth+1} 2 \radius \l_{\depth+1-i} \gamma^{-i+1} %
    + 2 \lt{\depth} \rho \\
    & = \gamma^{-\depth} \epsilon - \sum_{j=0}^{\depth} 2 \radius \l_{\depth-j} \gamma^{-j} %
    + 2 \lt{\depth} \rho \\
    & = \gamma^{-\depth} \epsilon - \cancel{ 2 \radius \l_{\depth-0} \gamma^{-0} } - \sum_{j=1}^{\depth} 2 \radius \l_{\depth-j} \gamma^{-j} %
    + \cancel{ 2 \lt{\depth} \rho } 
    = \thr(\depth).
  \end{align*}
  Therefore, $\occ^{'}_\depth$ should not have been expanded. 
  This shows that the algorithm will generate only a finite number of
  trajectories of length $\depth$.
\end{proof}


\subsubsection{Handling Infinite Horizons}
\label{proofLemFiniteTrials}

\lemFiniteTrials*

\begin{proof}{(detailed version)}
  \label{proof|lem|finiteTrials}
  Since $W$ is the largest possible width, any trajectory stops in the
  worst case at depth $\depth$ such that
  \begin{align*}
    \thr(\depth) & < \WUL \\
    \gamma^{-\depth}\epsilon - 2 \radius \l^\infty \frac{\gamma^{-\depth}-1}{1-\gamma}
    & < \WUL 
    & \text{(from \Cshref{eq|thr})} \\
    \gamma^{-\depth} \epsilon - 2 \radius \l^\infty \frac{\gamma^{-\depth}}{1-\gamma} 
    - 2 \radius \l^\infty \frac{-1}{1-\gamma}
    & < \WUL \\
    \gamma^{-\depth} \underbrace{\left(\epsilon - \frac{2 \radius \l^\infty}{1-\gamma} \right)}_{>0 \quad \text{(\Cshref{lem|MaxRadius})}}
    & < \WUL - \frac{2 \radius \l^\infty }{1-\gamma}  \\
    \gamma^{-\depth} 
    & < \frac{
      \WUL - \frac{2 \radius \l^\infty }{1-\gamma}
    }{
      \epsilon - \frac{2 \radius \l^\infty}{1-\gamma}
    }\\
    \exp(-\depth\ln(\gamma)) 
    & < \exp\left(\ln\left(\frac{
          \WUL - \frac{2 \radius \l^\infty }{1-\gamma}
        }{
          \epsilon - \frac{2 \radius \l^\infty}{1-\gamma}
        }\right)\right)\\
    -\depth\ln(\gamma)
    & < \ln\left(\frac{
          \WUL - \frac{2 \radius \l^\infty }{1-\gamma}
        }{
          \epsilon - \frac{2 \radius \l^\infty}{1-\gamma}
        }\right)\\
    \depth\ln(\gamma)
    & > \ln\left(\frac{
          \epsilon - \frac{2 \radius \l^\infty}{1-\gamma}
        }{
          \WUL - \frac{2 \radius \l^\infty }{1-\gamma}
        }\right)\\
    \depth 
    & <
    \log_{\gamma}\left(\frac{
        \epsilon - \frac{2 \radius \l^\infty}{1-\gamma}
      }{
        \WUL - \frac{2 \radius \l^\infty }{1-\gamma}
      }\right).
    %
  \end{align*}
\end{proof}

Even if the problem horizon is infinite, trajectories will thus have bounded length.
Then, everything beyond this {\em effective} horizon will rely on the
upper- and lower-bound initializations and the corresponding
strategies.

\poubelle{
\subsection{Pruning \texorpdfstring{$\upb{V}_\depth$}{upbV}}

The following key theorem allows reusing usual POMDP $\max$-planes
pruning techniques in our setting (reverting them to handle
$\min$-planes upper bound approximations).

\begin{restatable}[Proof in \Cref{lem|pruningV}]{theorem}{lemPruningV}
  \labelT{lem|pruningV}
  %
  Let $P$ be a $\min$-planes pruning operator (inverse of $\max$-planes pruning for POMDPs), and
  $\upb\nu^2_{[\occ_\depth^{c,1}, \cdot]}$.
  If $P$ correctly identifies $\upb\nu^2_{[\occ_\depth^{c,1}, \cdot]}$ as non-dominated (or
  resp. dominated) under fixed $\occ^{c,1}_\depth$, then
  $\upb\nu^2_{[\occ_\depth^{c,1}, \cdot]}$ is non-dominated
  (or resp. dominated) in $\Occ_\depth$.
\end{restatable}

\begin{proof}
  We will demonstrate that:
  \begin{itemize}
  \item if $P$ shows that a vector $\nu^2_\depth$ (associated to
    $\occ_\depth$) is dominated {\em under fixed $\occ_\depth^{c,1}$}
    by a $\min$-planes upper bound relying only on other vectors
    $\tilde \nu^2_\depth$, then this vector is dominated in the whole
    space $\Occ$;
  \item else, the vector $\nu^2_\depth$ is useful at least around
    $\xi_\depth = (\xi_\depth^{m,1}, \occ_\depth^{c,1})$, where
    $\xi_\depth^{m,1}$ is the domination point returned by $P$.
  \end{itemize}
Note: The following is simply showing that, if the linear part is dominated by a $\min$-planes approximation for a given conditional term $\occ_\depth^{c,1}$, then the Lipschitz generalization in the space of conditional terms is also dominated since $\l$ is constant.

Given a matrix $M=(m_{i,j})$, let $\vnorm{M}_1$ denote the column vector whose $i$th component is $\norm{m_{i,\cdot}}_1$.
Here, such matrices will correspond to conditional terms,
$\vnorm{ \occ_\depth^{c,1} - \tilde \occ_\depth^{c,1} }_1$ denoting
the vector whose component for \aoh{} $\theta^1_\depth$ is
$\norm{ \occ_\depth^{c,1}(\cdot | \theta_{\depth}^1) - \tilde
\occ_\depth^{c,1}( \cdot | \theta_{\depth}^1) }_1$ (where $\occ_\depth^{c,1}( \cdot | \theta_{\depth}^1)$ may also be denoted $\occ_\depth^{c,1}(\theta_{\depth}^1)$ for brevity).

Let us assume that the vector $\nu^2_\depth$ (associated to $\occ_\depth^{c,1}$) is dominated under $\occ_\depth^{c,1}$, \ie, $\forall \xi_\depth^{m,1}$,
\begin{align*}
  (\xi_\depth^{m,1})^\top \cdot ( \nu^2_\depth + \lt{\depth} \overbrace{\vnorm{ \occ_\depth^{c,1} - \occ_\depth^{c,1} }_1}^\text{$0$})
  & \geq \min_{\tilde \nu^2_\depth, \tilde \occ_\depth^{c,1}}  \left[ (\xi_\depth^{m,1})^{\top} \cdot ( \tilde \nu^2_\depth + \lt{\depth} \vnorm{ \occ_\depth^{c,1} - \tilde \occ_\depth^{c,1} }_1 ) \right] .
  \intertext{We will show that, $\forall \xi_\depth = (\xi_\depth^{m,1},\xi_\depth^{c,1})$,}
  (\xi_\depth^{m,1})^{\top} \cdot ( \nu^2_\depth + \lt{\depth} \vnorm{ \xi_\depth^{c,1} - \occ_\depth^{c,1} }_1 )
  & \geq \min_{\tilde \nu^2_\depth, \tilde \occ_\depth^{c,1}}  \left[ (\xi_\depth^{m,1})^{\top} \cdot ( \tilde \nu^2_\depth + \lt{\depth} \vnorm{ \xi_\depth^{c,1} - \tilde \occ_\depth^{c,1} }_1 ) \right].
  \intertext{Let $\xi_\depth$ be an occupancy state. %
    First, remark that $\exists \langle \tilde \nu^2_\depth, \tilde \occ_\depth^{c,1} \rangle$  such that}
  (\xi_\depth^{m,1})^{\top} \cdot ( \nu^2_\depth + \lt{\depth} \overbrace{\vnorm{ \occ_\depth^{c,1} - \occ_\depth^{c,1} }_1 }^\text{$0$})
  & \geq (\xi_\depth^{m,1})^{\top} \cdot ( \tilde \nu^2_\depth + \lt{\depth} \vnorm{ \occ_\depth^{c,1} - \tilde \occ_\depth^{c,1} }_1 ).
\end{align*}

For the sake of clarity, let us introduce the following functions (where $\xx$, $\yy$, and $\zz$ will denote conditional terms for player $1$):
\begin{align}
  g(x)
  & \eqdef \sum_{\theta^1_\depth} \xi_\depth^{m,1}(\theta^1_\depth) \cdot (\nu^2_{\yy}(\theta^1_\depth) + \lt{\depth} \norm{ \yy(\theta^1_\depth) - \xx(\theta^1_\depth)}_1)
  \nonumber \\
  & = g(\yy) + \lt{\depth} (\xi_\depth^{m,1})^{\top} \cdot \vnorm{\yy - \xx}_1, %
  \intertext{and}
  h(\xx) 
  & \eqdef \sum_{\theta^1_\depth} \xi_\depth^{m,1}(\theta^1_\depth) \cdot (\nu^2_{\zz}(\theta^1_\depth) + \lt{\depth} \norm{\zz(\theta^1_\depth) - \xx(\theta^1_\depth)}_1)
  \nonumber \\
  & = h(\zz) + \lt{\depth} (\xi_\depth^{m,1})^{\top} \cdot \vnorm{\zz - \xx}_1.
  \nonumber
  \intertext{Let us assume that $g(\yy) \geq h(\yy)$, and show that $g \geq h$. %
    First,}
  %
  g(\xx) &= g(\yy) + \lt{\depth} (\xi_\depth^{m,1})^{\top} \cdot \vnorm{\xx-\yy}_1
  \nonumber \\
  & \geq h(\yy) + \lt{\depth} (\xi_\depth^{m,1})^{\top} \cdot \vnorm{\xx-\yy}_1
  \nonumber \\
  & = h(\zz) + \lt{\depth} (\xi_\depth^{m,1})^{\top} \cdot ( \vnorm{\yy-\zz}_1 + \vnorm{\xx-\yy}_1 )
  \nonumber \\
  &\geq h(\zz) + \lt{\depth} (\xi_\depth^{m,1})^{\top} \cdot \left( \vnorm{\yy-\zz}_1 + \abs{ \vnorm{\xx-\zz}_1 - \vnorm{\zz-\yy}_1  } \right).
  \label{eq|azerty}
\end{align}
Now, $\forall \theta^1_\depth$, if $\norm{\xx(\theta^1_\depth) - \zz(\theta^1_\depth)}_1 - \norm{\zz(\theta^1_\depth) - \yy(\theta^1_\depth)}_1 \geq 0$, then
\begin{align}
  & \norm{\yy(\theta^1_\depth)  - \zz(\theta^1_\depth) }_1 + \left|\norm{\xx(\theta^1_\depth) - \zz(\theta^1_\depth)}_1 - \norm{\zz(\theta^1_\depth) - \yy(\theta^1_\depth)}_1 \right|
  \nonumber \\
  & =  \cancel{\norm{\yy(\theta^1_\depth) - \zz(\theta^1_\depth)}_1} + \norm{\xx(\theta^1_\depth) - \zz(\theta^1_\depth) }_1 - \cancel{\norm{\zz(\theta^1_\depth) - \yy(\theta^1_\depth) }_1}
  \nonumber \\
  & = \norm{\xx(\theta^1_\depth) - \zz(\theta^1_\depth)}_1,
  \label{eq|uiop}
  \intertext{else,}
  & \norm{\yy(\theta^1_\depth)  - \zz(\theta^1_\depth) }_1 + \left|\norm{\xx(\theta^1_\depth) - \zz(\theta^1_\depth)}_1 - \norm{\zz(\theta^1_\depth) - \yy(\theta^1_\depth)}_1 \right| \\
  & =  2 \norm{\yy(\theta^1_\depth) - \zz(\theta^1_\depth)}_1 - \norm{\xx(\theta^1_\depth) - \zz(\theta^1_\depth)}_1
  \nonumber \\
  & \geq  \norm{\xx(\theta^1_\depth) - \zz(\theta^1_\depth)}_1.
  \label{eq|qsdf}
  \intertext{Finally, coming back to (\Cref{eq|azerty}):} 
  g(\xx)
  & \geq h(\zz) + \lt{\depth} (\xi_\depth^{m,1})^{\top} \cdot \left( \vnorm{\yy-\zz}_1 + \abs{ \vnorm{\xx-\zz}_1 - \vnorm{\zz-\yy}_1 } \right)
  \nonumber \\
  & \geq h(\zz) + \lt{\depth} (\xi_\depth^{m,1})^{\top} \cdot \norm{\xx(\theta^1_\depth) - \zz(\theta^1_\depth)}_1
  \qquad \text{(from (\Cref{eq|uiop}+\Cref{eq|qsdf}))}
  \nonumber \\
  & \geq h(\xx).
  \nonumber
\intertext{With $x=\xi_\depth^{c,1}$, $y=\occ_\depth^{c,1}$ and $z=\tilde\occ_\depth^{c,1}$, this gives:}
  g(\xi_\depth^{c,1}) %
  & = \sum_{\theta^1_\depth} \xi_\depth^{m,1}(\theta^1_\depth) (\nu^2_\depth(\theta^1_\depth) + \lt{\depth} \norm{ \occ_\depth^{c,1}(\theta^1_\depth) - \xi_\depth^{c,1}(\theta^1_\depth)}_1) \\
  \geq h(\xi_\depth^{c,1}) %
  & = \sum_{\theta^1_\depth} \xi_\depth^{m,1}(\theta^1_\depth) (\tilde \nu^2_\depth(\theta^1_\depth) + \lt{\depth} \norm{\tilde \occ_\depth^{c,1}(\theta^1_\depth) - \xi_\depth^{c,1}(\theta^1_\depth)}_1).
\end{align}
%
%
This shows that $\nu^2_\depth$ is dominated for every $(\xi_\depth^{m,1},\xi_\depth^{c,1})$, where both  $\xi_\depth^{m,1}$ and $\xi_\depth^{c,1}$ are arbitrary.
Therefore, one can prune a vector $\nu^2_\depth$ using $P$ applied in the space where $\occ_\depth^{c,1}$ is fixed.


As a consequence, some properties of $P$ are preserved in its extension to zsPOSGs:
\begin{itemize}
\item If $P$ correctly identifies $\nu^2_\depth$ as non-dominated at
  $\occ^{c,1}_\depth$, then
  $\langle \nu^2_\depth, \occ^{c,1}_\depth \rangle$ is
  non-dominated in $\Occ_\depth$. \\
  That is, if $P$ does not induce false negatives, neither does its
  extension to zsPOSGs.
\item If $P$ correctly identifies $\nu^2_\depth$ as dominated at
  $\occ^{c,1}_\depth$, then
  $\langle \nu^2_\depth, \occ^{c,1}_\depth \rangle$ is dominated
  in $\Occ_\depth$. \\
  That is, if $P$ does not induce false positives, neither does its
  extension to zsPOSGs.
\end{itemize}
\end{proof}

}

\poubelle{
\section{Illustration of safety concerns}
\label{app|zsPOSGvszsoMG}

In this section, we detail two main types of issues when simply concatenating decision rules found by greedy selection at each time step.

\subsection{Issues due to Incomplete Strategies}

The first issue arises when some strategies have a $0$ probability for some action, which could makes some histories unreachable and as a consequence, the next decision rule would be incomplete.

\subsubsection{Problem Definition}

Here, a deterministic zero-sum problem (illustrated in \Cref{fig:Automate1}) 
is introduced to illustrate the concerns raised by using a temporal decomposition to solve a zs-POSGs. It is defined as a tuple $\langle \cS, \cA^1, \cA^2, \cZ^1, \cZ^2, P, r, H, \gamma, b_0 \rangle$ where:
\begin{itemize}
  \item $\cS = \{ s_0, s_1, s_2, s_3\}$;
  \item $\cA^1 = \cA^2 = \{a_0,a_1\}$;
  \item $\cZ^1 = \cZ^2 = \{z_{a_0},z_{a_1},none\}$;
  \item $\PP{s}{\va}{s'}{\vz}=T(s,\va,s')\cdot \mathcal{O}(\va,s',\vz)$, using the next two definitions;
  \item $T$ is deterministic and such that
        \begin{itemize}
          \item $T(s_0, a_0, a_0) = s_0$ (If P1 and P2 play $(a_0,a_0)$, the system stays in $s_0$.),
          \item $T(s_0, a_1, a_0) = s_1$ (If P1 doesn't play $a_0$, the system evolves into $s_1$.),
          \item $T(s_0, a_0, a_1) = s_2$ (If P2 doesn't play $a_0$, the system evolves into $s_2$.),
          \item $T(s_0, a_1, a_1) = s_3$ (If no one plays $a_0$, the systems evolves into $s_3$.),
          \item $T(s_1, \cdot, \cdot) = s1$ (sink state),
          \item $T(s_2, \cdot, \cdot) = s2$ (sink state),
          \item $T(s_3, \cdot, \cdot) = s3$ (sink state);
        \end{itemize}
  \item $\mathcal{O}$ is deterministic and such that
        \begin{itemize}
          \item $\forall (x,y) \in \cA^1 \times \cA^2,\ \forall s \in \{s_1,s_2\},\ \mathcal{O}(s,x,y) = (z_y,z_x)$ (both players observe their opponent's past action),
          \item $\forall (x,y) \in \cA^1 \times \cA^2,\ \mathcal{O}(s_0,x,y) = \mathcal{O}(s_3,x,y) = none$ (both players receive a trivial observation);
        \end{itemize}
  \item $r$ is such that ($\cdot$ is used to denote "for all")
        \begin{itemize}
          \item $r(s_0,\cdot,\cdot) = 0$,
          \item $r(s_1,\cdot,a_0) = -1$ and $r(s_1,\cdot,a_1) = +100$,
          \item $r(s_2,a_0,\cdot) = +1$ and $r(s_2,a_1,\cdot) = -100$,
          \item $r(s_3,\cdot,\cdot) = 0$;
        \end{itemize}
    \item $H=2$;
    \item $\gamma = 1$;
    \item $b_0 = 1.0 \cdot \tree_{s_0}$ is a Dirac distribution: the system surely begins at state $s_0$. 
\end{itemize}



\temporallyHidden{
\olivier{\begin{itemize}
    \item Je commencerais bien par l'explication de haut-niveau: "Si un joueur concatène simplement des règles de décision obtenues l'une après l'autre, on risque d'obtenir des stratégies incomplètes susceptibles d'être exploitées par l'adversaire."
    \item L'incomplétude des stratégies n'est qu'un problème. Celui que j'avais en tête (et décrit dans un autre papier) est qu'une telle concaténation ne permet pas de "bluffer en retard" (comme dans le matching pennies séquentialisé).
    \item Je suis embêté parce que cet exemple ne permet pas de montrer comment HSVI résoud cet autre type de problème...
\end{itemize}
}
}


\subsubsection{Unsafe Solution}
The best way for both players to act at time step $0$ is always $a_0$, whatever her opponent plays.
At time step $1$, considering both players played $a_0$ before, then players should both play $a_0$. 
Let us note this (incomplete) strategy for player $i$ $\tilde \beta_{0:1}^i$.
%
%
Player $2$ has no interest in deviating at time step $0$ from $a_0$ to $a_1$ because it would result in a payoff $+1$ since player $1$ could respond by $a_1$. However, this response for player $1$ to player $2$ deviating from $a_0$ was never considered when building $\tilde \beta_{0:1}^1$ because $2$ should not have interest in doing so. Failing at defining what to do when observing $z_{a_1}$ could be arbitrarily bad for $1$ (for example, playing randomly $a_0$ or $a_1$ with probability $0.5$ each would result in a payoff $-49$).

\begin{figure}[ht]
\begin{center}
\begin{tikzpicture}[->,>=stealth',shorten >=1pt,auto,node distance=2.8cm,
                    semithick]
  \tikzstyle{every state}=[fill=red,draw=none,text=white]

  \node[initial,state] (A)                    {$s_0$};
  \node[state]         (B) [above right of=A] {$s_1$};
  \node[state]         (D) [below right of=A] {$s_3$};
  \node[state]         (C) [below right of=B] {$s_2$};

  \path (A) edge [loop below] node[xshift=-0.5cm,yshift=-0.0cm] {$\begin{array}{c}
       (a_0,a_0)  \\
       (none,none)
  \end{array}$} (A)
            edge              node {$(a_0,a_1)$; $(z_{a_0},z_{a_1})$} (B)
            edge              node {$(a_1,a_0)$; $(z_{a_1},z_{a_0})$} (C)
            edge              node[xshift=-0.1cm,yshift=-0.5cm] {$\begin{array}{c}
       (a_1,a_1)  \\
       (none,none)
  \end{array}$} (D)
        ;
\end{tikzpicture}
\end{center}
\caption{Automaton representing a simply zs-POSG possibly inducing incomplete strategies}
\label{fig:Automate1}
\end{figure}

This shows that introducing a temporal decomposition of zs-POSGs comes with issues when trying to concatenate the strategies for the different time steps. Player $i$ must take into account every possible strategy of her opponent and not only those who should be interesting for $\neg i$.

\subsubsection{HSVI's Solution}

However, HSVI builds "safe" strategies, as solutions of a dual linear program, which comes with guarantees against best-responses of the opponent. At time step $0$, the dual LP is
\begin{align}
  & \begin{array}{l@{\ }l@{\ }llll@{ }r@{\ }l}
      \min_{\tree_{0}^2,v}
      & v
      & \text{ s.t.} 
      & \text{(i) }
        \forall a^1, %
       v \geq  M^{\occ_0}_{(a^1,\cdot)} \cdot \tree^2_{0}
      \\ &&& \text{(ii) }
       {\displaystyle \sum_{w \in \upb{\cI}_0}} \! \tree^2_{0}( w )
       = 1,
    \end{array}
\end{align}
where it appears that every action $a^1$ of player $1$ is considered to constrain $\tree_{0}^2$.
It results in the $\epsilon=0.001$-\nes{} (the strategy for player $1$ is the same at the problem is symetric):
\begin{itemize}
    \item $t=0 :$
    \begin{itemize}
    \item  $\beta^2_0(a_1 \mid \emptyset)=0.000004$
    \item $\beta^2_0(a_0 \mid \emptyset)=0.999996$ 
    \end{itemize}
    \item $t=1 :$
        \begin{itemize}
        \item $\beta^2_1(a_1 \mid (a_1,z_{a_0}))=1$
        \item $\beta^2_1(a_1 \mid (a_0,z_{a_1}))=1$
        \item $\beta^2_1(a_0 \mid (a_0,none))=1$
        \item $\beta^2_1(a_0 \mid (a_1,none))=1$ 
        \end{itemize}
\end{itemize}
obtained  by using (at the last iteration of HSVI) the following distribution $\tree_{0}^2$ 
\begin{itemize}
    \item weight : 0.5 for one vector $\upb{\nu}^2_{[b_0,\beta_{0:}^2]}$ coming along with strategy :
    \begin{itemize}
        \item $t=0 :$
        \begin{itemize}
        \item $\beta^2_0(a_1 \mid \emptyset)=0$
        \item $\beta^2_0(a_0 \mid \emptyset)=1$
        \end{itemize}
        \item $t=1 :$
        \begin{itemize}
        \item $\beta^2_1(a_0 \mid (a_0,none))=1$
        \end{itemize}
    \end{itemize}
    \item weight : 0.5 for another vector $\upb{\nu}^2_{[b_0,\beta_{0:}^2]}$ coming along with strategy :
    \begin{itemize}
        \item $t=0 :$
        \begin{itemize}
        \item $\beta^2_0(a_1 \mid \emptyset)=0.000008$
        \item $\beta^2_0(a_0 \mid \emptyset)=0.999992$
        \end{itemize}
        \item $t=1 :$
        \begin{itemize}
        \item $\beta^2_1(a_0 \mid (a_1,z_{a_0}))=1$
        \item $\beta^2_1(a_1 \mid (a_0,z_{a_1}))=1$
        \item $\beta^2_1(a_0 \mid (a_0,none))=1$
        \item $\beta^2_1(a_0 \mid (a_1,none))=1$ \end{itemize}
    \end{itemize}
\end{itemize}
This is actually very interesting as the first vector does not specify anything to do when observing $z_{a^1}$, so that the LP solution needs another vector that does define the best way to act for this specific history.

\subsection{Safety Issues}

Apart from the incompleteness issues, concatenating decision rules does not come with any form of guarantees for the built strategy as it is obtained using past strategies' knowledge which, at execution, are unknown.

\subsubsection{Problem Definition: Non-symmetric Sequential Matching Pennies}
\label{sec|MatchingPenniesDef}


Let us here introduce a slight modification of the well-known Matching Pennies game.
 It is defined as a tuple $\langle \cS, \cA^1, \cA^2, \cZ^1, \cZ^2, P, r, H, \gamma, b_0 \rangle$ where:
\begin{itemize}
  \item $\cS = \{si, sh, st\}$;
  \item $\cA^1 = \cA^2 = \{ah,at\}$;
  \item $\cZ^1 = \cZ^2 = \{none\}$;
  \item $\PP{s}{\va}{s'}{\vz}=T(s,\va,s')\cdot \mathcal{O}(\va,s',\vz)$, using the next two definitions;
  \item $T$ is deterministic and such that
        \begin{itemize}
            \item $T(si,ah,\cdot) = sh$,
            \item $T(si,at,\cdot) = st$,
            \item $\forall s \in \{st,sh\},\ T(s,\cdot,at) = st,\  T(s,\cdot,ah) = sh$;
        \end{itemize}
  \item $\mathcal{O}$ is deterministic and always return the trivial observation "$none$";
  \item $r$ is such that ($\cdot$ is used to denote "for all")
        \begin{itemize}
        \item $r(si,\cdot,\cdot) = 0$,
        \item $r(st,\cdot,at)=1$,
        \item $r(st,\cdot,ah)=-1$,
        \item $r(sh,\cdot,at)=-1$,
        \item $r(sh,\cdot,ah)=2$;
        \end{itemize}
    \item $H=2$;
    \item $\gamma = 1$;
    \item $b_0 = 1.0 \cdot \tree_{si}$ is a Dirac distribution: the systems surely begins in state $si$.
\end{itemize}

Note that, in this game, players artificially play one after another. Player $1$'s \dr{} at time step $1$ is irrelevant and so is player $2$'s one at time step $0$. The ability for player $2$ to build strategies $\beta_{0:}^2$ that are safe against every possible strategy of player $1$ at time step $0$ will be the key point to illustrate the safety issues.
Also, the symmetry of the original payoff matrix is broken, which helps validate the resulting solution.

\subsubsection{The sequentialized Matching Pennies safety issues}

In this sequentialized (non-symmetric) Matching Pennies game, a strategy profile is a NES if and only if it verifies:
\begin{itemize}
\item at $t=0$, $P1$'s \dr{} $\beta^1_0$ plays $ah$ with probability $0.4$, and $at$ with probability $0.6$; and
\item at $t=1$, $P2$'s \dr{} $\beta^2_1$ plays, whatever the \aoh{} $\theta_1^2$, $ah$ with probability $0.6$ and $at$ with probability $0.4$.
\end{itemize}
The \nev{} of this game is $0.2$.
Yet, using this $\beta^1_0$ (and any \dr{} for $P2$ at $t=0$) results in a subgame for which any (prefix) strategy profile $\tilde{\vbeta}_2$ is a (subgame) Nash equilibrium with payoff $0.2$.
The strategy profile $\vbeta_0 \oplus \tilde{\vbeta}_1$ obtained by concatenation is thus generally no NES profile, in particular with the deterministic \dr{} for $P2$ returned by our implementation at $t=1$ (see below). 
Indeed, if player~2 uses $\beta^2_0 \oplus \tilde\beta^2_1$, with $\tilde\beta^2_1$ always returning action $at$, then player~1 gets a payoff $+1$ larger than the \nev{} by deterministically playing $at$ at time step $0$.

\subsubsection{HSVI's Solution}
In contrast, HSVI returns "safe" strategies once converged. These strategies are, for player $1$:
\begin{itemize}
    \item $t=0:$
    \begin{itemize}
    \item  $\beta^1_0(ah \mid \emptyset)=0.4$
    \item $\beta^1_0(at \mid \emptyset)=0.6$ 
    \end{itemize}
    \item $t=1:$
        \begin{itemize}
        \item $\beta^1_1(ah \mid (ah,none))=1.0$
        \item $\beta^1_1(ah \mid (at,none))=1.0$ 
        \end{itemize}
\end{itemize}
%
and for player $2$:
\begin{itemize}
    \item $t=0:$
    \begin{itemize}
    \item $\beta^2_0(at \mid \emptyset)=1.0$ 
    \end{itemize}
    \item $t=1:$
        \begin{itemize}
        \item $\beta^2_1(at \mid (at,none) )=0.6$
        \item $\beta^2_1(ah \mid (ah,none) )=0.4$
        \end{itemize}
\end{itemize}

They are obtained using for player $1$ by 
\begin{itemize}
\item weight: 1.0 for one vector $\lob{\nu}^1_{[b_0,\beta_{0:}^1]}$ coming along with strategy:
\begin{itemize}
    \item $t=0:$
    \begin{itemize}
    \item  $\beta^1_0(ah \mid \emptyset)=0.4$
    \item $\beta^1_0(at \mid \emptyset)=0.6$ 
    \end{itemize}
    \item $t=1:$
        \begin{itemize}
        \item $\beta^1_1(ah \mid (ah,none))=1.0$
        \item $\beta^1_1(ah \mid (at,none))=1.0$ 
        \end{itemize}
\end{itemize}
\end{itemize}
and for player $2$, $\tree_{0}^2$ is obtained by  
\begin{itemize}
\item weight: 0.4 for one vector $\upb{\nu}^2_{[b_0,\beta_{0:}^2]}$ coming along with strategy:
\begin{itemize}
    \item $t=0:$
    \begin{itemize}
    \item  $\beta^2_0(at \mid \emptyset)=1.0$
    \end{itemize}
    \item $t=1:$
        \begin{itemize}
        \item $\beta^2_1(ah \mid (at,none) )=1.0$
    \end{itemize}
\end{itemize}
\item weight: 0.6 for one vector $\upb{\nu}^2_{[b_0,\beta_{0:}^2]}$ coming along with strategy:
\begin{itemize}
    \item $t=0:$
    \begin{itemize}
    \item  $\beta^2_0(at \mid \emptyset)=1.0$
    \end{itemize}
    \item $t=1:$
        \begin{itemize}
        \item $\beta^2_1(at \mid (at,none) )=1.0$
    \end{itemize}
\end{itemize}
\end{itemize}

As can be observed, here HSVI obtains an optimal solution strategy for $P2$ as a mixture of deterministic strategies.

}

\end{document}